\crefname{nlem}{Lemma}{Lemmas}
\crefname{nprop}{Proposition}{Propositions}
\crefname{ncor}{Corollary}{Corollaries}
\crefname{nthm}{Theorem}{Theorems}
\crefname{exa}{Example}{Examples}
\crefname{assumption}{Assumption}{Assumptions}
\crefname{equation}{}{}
\theoremstyle{plain}
\newtheorem{theorem}{Theorem}
\newtheorem{corollary}{Corollary}
\newtheorem{assumption}{Assumption}
\newtheorem{lemma}{Lemma}
\newtheorem{proposition}[theorem]{Proposition}
\theoremstyle{definition}
\newtheorem{condition}{Condition}
\newtheorem{definition}{Definition}
\newtheorem{remark}{Remark}
\DeclareMathOperator*{\argmin}{argmin}
\newcommand{\Perp}{\perp \! \! \! \perp}
\newcommand{\cond}{C_{\textrm{cond}}}
\newcommand{\Cmax}{C_{\max}}
\newcommand{\Cmin}{C_{\min}}
\newcommand{\mysec}[1]{Section~\ref{sec:#1}}
\newcommand{\myapp}[1]{Appendix~\ref{sec:#1}}
\newcommand{\eq}[1]{Eq.~(\ref{eq:#1})}
\newcommand{\myfig}[1]{Figure~\ref{fig:#1}}
\newcommand{\sG}{\text{sG}}
\newcommand{\sE}{\text{sE}}
\newcommand{\bX}{\mathbf{X}}
\newcommand{\bx}{\mathbf{x}}
\newcommand{\by}{\mathbf{y}}
\newcommand{\be}{\mathbf{e}}
\newcommand{\bbf}{\mathbf{f}}
\newcommand{\bbtf}{\hat{\mathbf{f}}}
\newcommand{\bg}{\mathbf{g}}
\newcommand{\btg}{\hat{\mathbf{g}}}
\newcommand{\bq}{\mathbf{q}}
\newcommand{\btq}{\hat{\mathbf{q}}}
\newcommand{\bz}{\mathbf{z}}
\newcommand{\bZ}{\mathbf{Z}}
\newcommand{\bw}{\mathbf{w}}
\newcommand{\bV}{\mathbf{V}}
\newcommand{\bv}{\mathbf{v}}
\newcommand{\bD}{\mathbf{D}}
\newcommand{\bI}{\mathbf{I}}
\newcommand{\bQ}{\mathbf{Q}}
\newcommand{\bu}{\mathbf{u}}
\newcommand{\bU}{\mathbf{U}}
\newcommand{\bR}{\mathbf{R}}
\newcommand{\bS}{\mathbf{S}}
\newcommand{\bepsilon}{\bm{\epsilon}}
\newcommand{\bbeta}{\bm{\beta}}
\newcommand{\hbbeta}{\hat{\bm{\beta}}}
\newcommand{\bSigma}{\bm{\Sigma}}
\newcommand{\bPi}{\bm{\Pi}_{\lambda}}
\newcommand{\hbSigma}{\hat{\bm{\Sigma}}_n}
\newcommand{\bOmega}{\bm{\Omega}}
\newcommand{\bDelta}{\bm{\Delta}}
\newcommand{\mE}{\mathcal{E}}
\newcommand{\mF}{\mathcal{F}}
\newcommand{\mI}{\mathcal{I}}
\newcommand{\mN}{\mathcal{N}}
\newcommand{\mR}{\mathbb{R}}
\newcommand{\mS}{\mathcal{S}}
\newcommand{\mJ}{\mathcal{J}}
\newcommand{\E}{\mathbb{E}}
\newcommand{\mone}{\mathbbm{1}}
\newcommand{\normt}[1]{\norm{#1}_2}
\newcommand{\normo}[1]{\norm{#1}_1}
\newcommand{\normi}[1]{\norm{#1}_{\infty}}
\newcommand{\diag}{\operatorname{diag}}
\newcommand{\iid}{\textrm{i.i.d.}\xspace}
\newcommand{\dist}{\sim}
\newcommand{\distiid}{\overset{\textrm{\tiny\iid}}{\dist}}
\newcommand{\distind}{\overset{\textrm{\tiny\textrm{indep}}}{\dist}}
\newcommand{\eqdist}{\stackrel{d}{=}}
\newcommand{\trimmednorm}[2]{\norm{#1}_{(#2)}}
\newcommand{\maxseig}[1]{\Lambda_{s}[#1]}
\newcommand{\grate}{r_g}
\newcommand{\gratet}{r_{\bg,2}}
\newcommand{\fratet}{r_{f,2}}
\newcommand{\bratet}{r_{\bbeta,2}}
\newcommand{\brateo}{r_{\bbeta,1}}
\newcommand{\xstar}{\bx_{\star}}
\newcommand{\ystar}{y_{\star}}
\newcommand{\vstar}{\mathbf{v}_{\star}}
\newcommand{\Pstar}{\mathcal{P}^{\star}}
\newcommand{\supp}{\text{supp}}
\newcommand{\hblambda}{\hat{\bbeta}_{R}(\lambda)}
\newcommand{\hblambdastar}{\hat{\bbeta}_{R}(\lambda_{*})}
\newcommand{\bLambda}{\mathbf{\Lambda}}
\newcommand{\snr}{\textsc{snr}}
\newcommand{\hblaslambda}{\hat{\bbeta}_{L}(\lambda)}
\newcommand{\yjm}{\hat{y}_{\textsc{jm}}}
\newcommand{\yom}{\hat{y}_{\textsc{om}}}
\newcommand{\hy}{\hat{y}}
\newcommand{\Xo}{\bX^{(1)}}
\newcommand{\yo}{\by^{(1)}}
\newcommand{\Xt}{\bX^{(2)}}
\newcommand{\yt}{\by^{(2)}}
\newcommand{\tone}{\mathbf{t}^{(1)}}
\newcommand{\zo}{\bz^{(1)}}
\newcommand{\qtext}[1]{\quad\text{#1}\quad} 
\def\balign#1\ealign{\begin{align}#1\end{align}}
\def\baligns#1\ealigns{\begin{align*}#1\end{align*}}
\def\balignat#1\ealign{\begin{alignat}#1\end{alignat}}
\def\balignats#1\ealigns{\begin{alignat*}#1\end{alignat*}}
\def\bitemize#1\eitemize{\begin{itemize}#1\end{itemize}}
\def\benumerate#1\eenumerate{\begin{enumerate}#1\end{enumerate}}
\newenvironment{talign*}
 {\csname align*\endcsname}
 {\endalign}
\newenvironment{talign}
 {\csname align\endcsname}
 {\endalign}
\def\balignst#1\ealignst{\begin{talign*}#1\end{talign*}}
\def\balignt#1\ealignt{\begin{talign}#1\end{talign}}%
\icmltitlerunning{Single Point Transductive Prediction}
\begin{document}

\twocolumn[
\icmltitle{Single Point Transductive Prediction}



\icmlsetsymbol{equal}{*}

\begin{icmlauthorlist}
\icmlauthor{Nilesh Tripuraneni}{berkeley}
\icmlauthor{Lester Mackey}{msr}
\end{icmlauthorlist}

\icmlaffiliation{berkeley}{Department of EECS, University of California, Berkeley}
\icmlaffiliation{msr}{Microsoft Research, New England}

\icmlcorrespondingauthor{Nilesh Tripuraneni}{nilesh\_tripuraneni@berkeley.edu}

\icmlkeywords{Single point, Transduction, Bias, Linear prediction, Lasso, Ridge, Semiparametric, Orthogonal moments, Orthogonal machine learning, Double / debiased machine learning}

\vskip 0.3in
]



\printAffiliationsAndNotice{}  

\setcounter{footnote}{0}

\begin{abstract}
    Standard methods in supervised learning separate training and prediction: the model is fit independently of any test points it may encounter. However, can knowledge of the next test point $\mathbf{x}_{\star}$ be exploited to improve prediction accuracy? We address this question in the context of linear prediction, showing how  techniques from semi-parametric inference can be used transductively to combat regularization bias. We first lower bound the $\mathbf{x}_{\star}$ prediction error of ridge regression and the Lasso, showing that they must incur significant bias in certain test directions. We then provide non-asymptotic upper bounds on the $\mathbf{x}_{\star}$ prediction error of two transductive prediction rules. We conclude by showing the efficacy of our methods on both synthetic and real data, highlighting the improvements single point transductive
    prediction can provide in settings with distribution shift.
\end{abstract}
\section{Introduction}\label{sec:intro}
We consider the task of prediction given independent datapoints $((y_i, \bx_i))_{i=1}^n$ from a linear model,
\begin{align}
    y_i = \bx_i^\top \bbeta_0 + \epsilon_i, \quad \E[\epsilon_i] = 0, \quad \epsilon_i \Perp \bx_i \label{eq:model}
\end{align}
in which our observed targets $\by = (y_1,\dots, y_n) \in \mR^n$ and covariates $\bX = [\bx_1, \dots, \bx_n]^\top \in \mR^{n \times p}$ are related by an unobserved parameter vector $\bbeta_0 \in \mR^p$ and noise vector $\bepsilon = (\epsilon_1, \dots, \epsilon_n) \in \mR^n$.

Most approaches to linear model prediction are \emph{inductive},  divorcing the steps of training and prediction; for example, regularized least squares methods like ridge regression \cite{hoerl1970ridge} and the Lasso \cite{tibshirani1996regression} are fit independently of any knowledge of the next target test point $\xstar$. This suggests a tantalizing \emph{transductive} question:
\textbf{can knowledge of a single test point $\bx_\star$ be leveraged to improve prediction for $\bx_\star$?}
In the random design linear model setting \cref{eq:model}, we answer this question in the affirmative.

Specifically, in \mysec{lb} we establish \textit{out-of-sample} prediction lower bounds for the popular ridge and Lasso estimators, highlighting the significant dimension-dependent bias introduced by regularization. In \mysec{ub} we demonstrate how this bias can be mitigated by presenting two classes of transductive estimators that exploit explicit knowledge of the test point $\xstar$. We provide non-asymptotic risk bounds for these estimators in the random design setting, proving that they achieve dimension-free $O(\frac{1}{n})$ $\bx_\star$-prediction risk for $n$ sufficiently large.
In \cref{sec:experiments}, we first validate our theory in simulation, demonstrating that transduction improves the prediction accuracy of the Lasso with fixed regularization even when $\bx_\star$ is drawn from the training distribution.
We then demonstrate that under distribution shift, our transductive methods outperform even the popular cross-validated Lasso, cross-validated ridge, and cross-validated elastic net estimators (which attempt to find an optimal data-dependent trade-off between bias and variance) on both synthetic data and a suite of five real datasets.
\vspace{-.3cm}
\subsection{Related Work}

Our work is inspired by two approaches to semiparametric inference: the debiased Lasso approach introduced by \citep{zhang2014confidence, van2014asymptotically, javanmard2014confidence} and the orthogonal machine learning approach of \citet{chernozhukov2017double}.
The works \citep{zhang2014confidence, van2014asymptotically, javanmard2014confidence}
obtain small-width and asympotically-valid confidence intervals (CIs) for individual model parameters $(\bbeta_{0})_j = \langle{\bbeta_0,\be_j\rangle}$ by debiasing an initial Lasso estimator \cite{tibshirani1996regression}.
The works \citep{chao2014high,cai2017confidence,athey2018approximate} each consider a more closely related problem of obtaining prediction confidence intervals using a generalization of the debiased Lasso estimator of \citet{javanmard2014confidence}.
The work of \citet{chernozhukov2017double} describes a general-purpose procedure for extracting $\sqrt{n}$-consistent and asymptotically normal target parameter estimates in the presence of nuisance parameters.
Specifically, \citet{chernozhukov2017double} construct a two-stage estimator where one initially fits first-stage estimates of nuisance parameters using arbitrary ML estimators on a first-stage data sample. In the second-stage, these first-stage estimators are used to provide estimates of the relevant model parameters using an orthogonalized method-of-moments. \citet{wager2016high} also uses generic ML procedures as regression adjustments to form efficient confidence intervals (CIs) for treatment effects. 

These pioneering works all focus on improved CI construction.
Here we show that the semiparametric techniques developed for hypothesis testing can be adapted to provide practical improvements in mean-squared prediction error.  Our resulting mean-squared error bounds complement the in-probability bounds of the aforementioned literature by controlling prediction performance across all events.

While past work on transductive regression has demonstrated both empirical and theoretical benefits over induction when many unlabeled test points are simultaneously available \cite{belkin2006manifold,alquier2012transductive,bellec2018prediction,chapelle2000transductive,cortes2007transductive,cortes2008stability}, none of these works have demonstrated a significant benefit, either empirical or theoretical, from transduction given access to only a single test point. For example, the works \cite{belkin2006manifold,chapelle2000transductive}, while theoretically motivated, provide no formal guarantees on transductive predictive performance and only show empirical benefits for large unlabeled test sets. The transductive Lasso analyses of \citet{alquier2012transductive,bellec2018prediction} provide prediction error bounds identical to those of the inductive Lasso, where only the restricted-eigenvalue constant is potentially improved by transduction. Neither analysis improves the dimension dependence of Lasso prediction in the SP setting to provide $O(1/n)$ rates. The formal analysis of  \citet{cortes2007transductive,cortes2008stability} only guarantees small error when the number of unlabeled test points is large. Our aim is to develop single point transductive prediction procedures that improve upon the standard inductive approaches both in theory and in practice.

Our approach also bears some resemblance to semi-supervised learning (SSL) -- improving the predictive power of an inductive learner by observing additional unlabelled examples \citep[see, e.g.,][]{zhu2005semi,bellec2018prediction}. Conventionally, SSL benefits from access to a large pool of unlabeled points drawn from the same distribution as the training data.
In contrast, our procedures receive access to only a single arbitrary test point $\xstar$ (we make no assumption about its distribution), and our aim is accurate prediction for that point.
We are unaware of SSL results that benefit significantly from access to single unlabeled point $\xstar$. 
\subsection{Problem Setup}
Our principal aim in this work is to understand the \emph{$\xstar$ prediction risk},
\begin{align}
    \hspace{-.2mm}\mathcal{R}(\xstar, \hy) = \E[(\ystar-\hy)^2] - \sigma_{\epsilon}^2 =  \E[(\hy-\langle \xstar, \bbeta_0 \rangle)^2]
    \label{eq:prediction-risk}
\end{align}
of an estimator $\hat{y}$ of the unobserved test response $\ystar = \xstar^\top \bbeta_0+ \epsilon_\star$. 
Here, $\epsilon_\star$ is independent of $\xstar$ with variance $\sigma_{\epsilon}^2$.
We exclude the additive noise $\sigma^2_{\epsilon}$ from our risk definition, as it is irreducible for any estimator. 
Importantly, to accommodate non-stationary learning settings, we consider $\xstar$ to be fixed and arbitrary; in particular,  $\xstar$ need not be drawn from the training distribution. 
Hereafter, we will make use of several assumptions which are standard in the random design linear regression literature.
\begin{assumption}[Well-specified Model]
  The data $(\bX, \by)$ is generated from the model \cref{eq:model}.
  \label{assump:well_spec}
\end{assumption}

\begin{assumption}[Bounded Covariance]
The covariate vectors have common covariance $\bSigma = \E[\bx_i\bx_i^\top]$ with $\bSigma_{ii} \leq 1/2$, $\sigma_{\max}(\bSigma) \leq C_{\max}$ and $\sigma_{\min}(\bSigma) \geq C_{\min}$. We further define the precision matrix $\bOmega = \bSigma^{-1}$ and condition number $\cond = \Cmax/\Cmin$. \label{assump:cov}
\end{assumption}

\begin{assumption}[Sub-Gaussian Design]
  Each covariate vector $\bSigma^{-1/2} \bx_i$ is sub-Gaussian with parameter $\kappa \geq 1$, in the sense that, $
      \E[\exp(\bv^\top \bx_i)] \leq \exp \left( {\kappa^2 \Vert \bSigma^{1/2} \bv \Vert^2}{/2} \right)
  $
  \label{assump:design}.
\end{assumption}

\begin{assumption}[Sub-Gaussian Noise]
  The noise $\epsilon_i$ is sub-Gaussian with variance parameter $\sigma_{\epsilon}^2$.
  \label{assump:noise1}
\end{assumption}
Throughout, we use bold lower-case letters (e.g., $\bx$) to refer to vectors and bold upper-case letters to refer to matrices (e.g., $\bX$). We define $[p] = \{1, \hdots, p \}$ and $p \vee n = \max(p,n)$. Vectors or matrices subscripted with an index set $S$ indicate the subvector or submatrix supported on $S$. The expression $s_{\bbeta_0}$ indicates the number of non-zero elements in $\bbeta_0$,  $\supp(\bbeta_0) = \{ j : (\bbeta_0)_j \neq 0 \}$ and $\mathbb{B}_0(s)$ refers to the set of $s$-sparse vectors in $\mR^p$.
We use $\gtrsim$, $\lesssim$, and $\asymp$ to denote greater than, less than, and equal to up to a constant that is independent of $p$ and $n$. 

\section{Lower Bounds for Regularized Prediction}\label{sec:lb}

We begin by providing lower bounds on the $\bx_\star$ prediction risk of 
Lasso and ridge regression; the corresponding predictions take the form $\hat{y} = \langle \bx_\star, \hat{\bbeta}\rangle$ for a regularized  estimate $\hat{\bbeta}$ of the unknown vector $\bbeta_0$.

\subsection{Lower Bounds for Ridge Regression Prediction}\label{sec:lb_ridge}
We first consider the $\bx_\star$ prediction risk of the ridge estimator $\hblambda \triangleq \argmin_{\bbeta} \normt{\by-\bX \bbeta}^2 + \lambda \normt{\bbeta}^2$ with regularization parameter $\lambda > 0$. In the asymptotic high-dimensional limit (with $n, p \to \infty$) and assuming the training distribution equals the test distribution, \citet{dobriban2018high} compute the predictive risk of the ridge estimator in
 a dense random effects model. By contrast, we provide a non-asymptotic lower bound which does not impose any distributional assumptions on $\xstar$ or on the underlying parameter vector $\bbeta_0$. \cref{thm:ridge_lb}, proved in \cref{sec:app_ridge_lb}, isolates the error in the ridge estimator due to bias for any choice of regularizer $\lambda$.
\begin{theorem}
\label{thm:ridge_lb}
    Under \cref{assump:well_spec},
    suppose $\bx_i \distiid \mN(0, \bI_p)$ with independent noise $\bepsilon \sim \mN(0, \bI_n \sigma_{\epsilon}^2)$. If $n \geq p \geq 20$, 
    
\vspace{-2em}
    \begin{talign}
        \E&[\langle \xstar, \hblambda - \bbeta_0 \rangle^2] \geq  \\ &\frac{\normt{\bbeta_0}^2}{\sigma_{\epsilon}^2}  \cdot \frac{n}{4}  \left(\frac{\lambda/n}{\lambda/n+7} \right)^2 \cdot \normt{\xstar}^2 \cdot \frac{ \sigma_{\epsilon}^2}{n} \cdot \cos(\xstar, \bbeta_0)^2.
    \end{talign}
\vspace{-2em}
\end{theorem}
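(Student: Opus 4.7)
The plan is to isolate the ridge bias via its closed form, drop the nonnegative variance contribution, and then exploit the rotational symmetry of the iid standard-Gaussian design to reduce the problem to a scalar estimate of an expected resolvent trace of a Wishart matrix.

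First I would write $\hblambda = (\bX^\top \bX + \lambda \bI)^{-1}\bX^\top(\bX\bbeta_0 + \bepsilon)$, so that, conditionally on $\bX$, the estimator is Gaussian with mean $\bbeta_0 - \lambda(\bX^\top\bX+\lambda\bI)^{-1}\bbeta_0$. Dropping the conditional-variance contribution in the bias-variance decomposition of $\E[\langle\xstar,\hblambda-\bbeta_0\rangle^2\mid\bX]$ and then applying Jensen's inequality over the randomness in $\bX$ gives
\[
\E[\langle\xstar,\hblambda-\bbeta_0\rangle^2] \;\geq\; \lambda^2\bigl(\xstar^\top\,\E[(\bX^\top\bX+\lambda\bI)^{-1}]\,\bbeta_0\bigr)^2.
\]

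Next I would leverage that $\bX\bO \eqdist \bX$ for every orthogonal $\bO \in \mR^{p\times p}$ when $\bx_i \distiid \mN(0, \bI_p)$. This forces $\E[(\bX^\top\bX+\lambda\bI)^{-1}]$ to commute with every orthogonal matrix, and hence to equal $c\,\bI_p$ with $c = \tfrac{1}{p}\,\E[\mathrm{tr}((\bX^\top\bX+\lambda\bI)^{-1})] = \tfrac{1}{p}\sum_i \E[1/(\mu_i+\lambda)]$, where the $\mu_i$ are the eigenvalues of $\bX^\top\bX$. The bound then collapses to $\lambda^2 c^2 \langle\xstar,\bbeta_0\rangle^2$, and because $\langle\xstar,\bbeta_0\rangle^2 = \normt{\xstar}^2\normt{\bbeta_0}^2\cos(\xstar,\bbeta_0)^2$ and $\tfrac{\normt{\bbeta_0}^2}{\sigma_\epsilon^2}\cdot\tfrac{n}{4}\cdot\tfrac{\sigma_\epsilon^2}{n} = \tfrac{\normt{\bbeta_0}^2}{4}$, it suffices to prove $\lambda c \;\geq\; \tfrac{1}{2}\,\tfrac{\lambda/n}{\lambda/n+7}$.

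For this last quantitative step I would use the crude bound $c \geq \E[1/(\mu_{\max}+\lambda)]$, obtained by replacing each $\mu_i$ with $\mu_{\max} = \sigma_1(\bX)^2$, and then invoke a Davidson--Szarek-type tail bound $\Pr(\sigma_1(\bX) \geq \sqrt{n}+\sqrt{p}+t) \leq e^{-t^2/2}$ to produce an event of probability at least $\tfrac{1}{2}$ on which $\mu_{\max} \leq 7n$. Restricting the expectation to that event gives $c \geq \tfrac{1}{2(7n+\lambda)}$, and squaring yields the advertised factor of $\tfrac{1}{4}$. The main obstacle I anticipate is precisely this constant-chasing: picking $t$ small enough that the failure event carries mass at most $\tfrac12$ while simultaneously ensuring that $(\sqrt{n}+\sqrt{p}+t)^2 \leq 4n+4t\sqrt{n}+t^2$ fits into $7n$ uniformly for every $n\geq p\geq 20$. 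The rest — closed-form bias, Jensen, rotational invariance, and dominance by the largest eigenvalue — is conceptually routine.
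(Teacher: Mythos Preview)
Your proposal is correct and essentially coincides with the paper's argument: the paper also drops the variance term, applies Jensen to pull the expectation inside the squared bias, uses rotational invariance of the isotropic Gaussian design to reduce $\E[\lambda(\bX^\top\bX+\lambda\bI)^{-1}]$ to a scalar multiple of the identity (phrased there via the unordered-eigenvalue distribution of the Wishart, which is exactly your $c$), and then bounds that scalar below by restricting to a high-probability event on which the spectral norm of $\hbSigma$ is at most $7$. The only cosmetic difference is the specific spectral-norm tail bound invoked (the paper cites the $\normt{\hbSigma-\bI}\leq 2\epsilon+\epsilon^2$ form with $\delta=\tfrac12\sqrt{p/n}$ rather than Davidson--Szarek), but the constant-chasing you anticipate works out in either formulation once $n\geq p\geq 20$.
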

Notably, the dimension-free term $\normt{\xstar}^2 \cdot \frac{\sigma_{\epsilon}^2}{n}$ in this bound coincides with the $\bx_\star$ risk of the ordinary least squares (OLS) estimator in this setting. The remaining multiplicative factor indicates that the ridge risk can be substantially larger if the regularization strength $\lambda$ is too large.
In fact, our next result shows that, surprisingly, over-regularization can result even when $\lambda$ is tuned to minimize  held-out prediction error over the training population.
The same undesirable outcome results when $\lambda$ is selected to minimize $\ell_2$ estimation error; the proof can be found in \cref{sec:app_ridge_opt_lb}.
\begin{corollary}
    \label{cor:ridge_opt_lb}
    Under the conditions of \cref{thm:ridge_lb}, if $\tilde{\bx} \eqdist \bx_1$ and $\tilde{\bx}$ is independent of  $(\bX, \by)$, then for $\snr \triangleq \normt{\bbeta_0}^2/\sigma_{\epsilon}^2$,
    \begin{talign}
    &\lambda_* \triangleq \argmin_{\lambda} \E[\langle \tilde{\bx}, \hblambda - \bbeta_0 \rangle^2] = \\
    & \argmin_{\lambda} \E[\Vert \hblambda-\bbeta_0 \Vert_2^2] 
    = \frac{p}{\snr}, \ \text{and, for $n \geq \frac{1}{6} \frac{p}{\snr}$,}\\
        &\E[\langle \xstar, \hblambdastar - \bbeta_0 \rangle^2] \geq \frac{p^2}{n \snr} \cdot \normt{\xstar}^2 \cdot\frac{\sigma_{\epsilon}^2}{n} \cdot\frac{\cos(\xstar, \bbeta_0)^2}{784}. 
    \end{talign}
    \vspace{-2em}
\end{corollary}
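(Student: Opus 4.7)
The plan is to prove the corollary in two stages: first identify the optimal regularizer $\lambda_*$ by an exact risk computation, then substitute $\lambda_*$ into \cref{thm:ridge_lb} and simplify.

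For the first stage, I would exploit the independence of $\tilde{\bx} \sim \mN(0,\bI_p)$ from the training data: conditioning on $\hblambda$ and taking the expectation over $\tilde{\bx}$ first yields $\E[\langle \tilde{\bx}, \hblambda - \bbeta_0\rangle^2] = \E[\Vert \hblambda - \bbeta_0\Vert_2^2]$, which already justifies the equality of the two claimed $\argmin$'s. Setting $\bS = \bX^\top \bX$ and expanding $\hblambda = (\bS+\lambda\bI)^{-1}\bS\bbeta_0 + (\bS+\lambda\bI)^{-1}\bX^\top \bepsilon$, the standard bias--variance decomposition gives
\begin{talign*}
\E[\Vert \hblambda - \bbeta_0\Vert_2^2] = \lambda^2 \bbeta_0^\top \E[(\bS+\lambda\bI)^{-2}] \bbeta_0 + \sigma_{\epsilon}^2 \E[\operatorname{tr}((\bS+\lambda\bI)^{-2}\bS)].
\end{talign*}
Because the isotropic Gaussian design makes the distribution of $\bS$ invariant under orthogonal conjugation, $\E[(\bS+\lambda\bI)^{-2}] = \frac{1}{p}\E[\operatorname{tr}((\bS+\lambda\bI)^{-2})]\bI_p$, and the bias reduces to $\frac{\lambda^2\normt{\bbeta_0}^2}{p}\E[\operatorname{tr}((\bS+\lambda\bI)^{-2})]$.

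Writing the total risk as $\E\sum_{i=1}^p \frac{\lambda^2 \normt{\bbeta_0}^2/p + \sigma_{\epsilon}^2 \mu_i}{(\mu_i+\lambda)^2}$ with $\{\mu_i\}$ the eigenvalues of $\bS$, I would differentiate each summand in $\lambda$ and observe
\begin{talign*}
\frac{d}{d\lambda}\frac{\lambda^2 c + d\mu}{(\mu+\lambda)^2} = \frac{2\mu(\lambda c - d)}{(\mu+\lambda)^3},
\end{talign*}
which vanishes precisely at $\lambda = d/c$, \emph{independently} of $\mu$. Hence every summand, and therefore the sum, is minimized at the common point $\lambda_* = p\sigma_{\epsilon}^2/\normt{\bbeta_0}^2 = p/\snr$. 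This uniform-in-$\mu$ minimization is the crux of the argument; without it one could only sandwich the optimum between quantities depending on extremal eigenvalues of $\bS$.

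For the second stage, I would plug $\lambda_* = p/\snr$ directly into \cref{thm:ridge_lb}. The prefactor $\normt{\bbeta_0}^2/\sigma_{\epsilon}^2$ becomes $\snr$, while $\lambda_*/n = p/(n\snr)$. The hypothesis $n \geq p/(6\snr)$ forces $\lambda_*/n \leq 6$ and hence $\lambda_*/n + 7 \leq 14$, so $\frac{\lambda_*/n}{\lambda_*/n + 7} \geq \frac{\lambda_*/n}{14} = \frac{p}{14\,n\,\snr}$. Squaring and combining with the prefactor $\snr \cdot n/4$ yields $\frac{p^2}{784\, n \,\snr}$, which produces the stated bound after multiplying by the dimension-free factor $\normt{\xstar}^2 (\sigma_{\epsilon}^2/n)\cos(\xstar,\bbeta_0)^2$. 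The main obstacle I anticipate is verifying the uniform-in-$\mu$ minimization cleanly enough to pin down $\lambda_*$ exactly; once $\lambda_* = p/\snr$ is in hand, the remainder is bookkeeping and a direct appeal to \cref{thm:ridge_lb}.
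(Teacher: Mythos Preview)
Your proposal is correct and follows essentially the same approach as the paper: the computation of $\lambda_*$ via the observation that each eigenvalue summand $\frac{\lambda^2 c + d\mu}{(\mu+\lambda)^2}$ is minimized at the common point $\lambda = d/c$ is exactly the content of the paper's \cref{lem:ridge_lambdamin} (which instead factors the common term $\E[z/(z+\lambda/n)^3]$ out of the derivative), and both then substitute $\lambda_*$ into \cref{thm:ridge_lb}. Your second-stage simplification---bounding $\lambda_*/n + 7 \leq 14$ directly from $n \geq \frac{1}{6}\frac{p}{\snr}$---is in fact more transparent than the paper's route through a quadratic in $\sqrt{n}$, but both arrive at the same constant $784 = 4\cdot 14^2$.
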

Several insights can be gathered from the previous results.
First, the expression $\E[\langle \tilde{\bx}, \hblambda - \bbeta_0 \rangle^2]$ minimized in \cref{cor:ridge_opt_lb} is the expected prediction risk $\E [(\tilde{\by}-\tilde{\bx}^\top \hblambda )^2]- \sigma_\epsilon^2$ for a new datapoint $(\tilde{\bx},\tilde{\by})$ drawn from the training distribution. This is the population  analog of held-out validation error or cross-validation error that is often minimized to select $\lambda$ in practice. Second, in the setting of \cref{cor:ridge_opt_lb}, 
taking $\snr=\frac{1}{6} \frac{p}{n}$ yields
\begin{talign}
\E[\langle \xstar, \hblambdastar - \bbeta_0 \rangle^2] \geq p \cdot \normt{\xstar}^2 \cdot\frac{\sigma_{\epsilon}^2}{n} \cdot\frac{3\cos(\xstar, \bbeta_0)^2}{392} .
\end{talign}
More generally,
if we take $\cos(\xstar, \bbeta_0)^2 = \Theta(1)$, $\snr = o(\frac{p^2}{n})$ and $\snr \geq \frac{1}{6} \frac{p}{n}$ then, 
\begin{talign}
    \E[\langle \xstar, \hblambdastar - \bbeta_0 \rangle^2] \geq \omega( \normt{\xstar}^2 \cdot\frac{\sigma_{\epsilon}^2}{n} ).
\end{talign}
If $\lambda$ is optimized for estimation error or for prediction error with respect to the training distribution, 
the ridge estimator must incur much larger test error then the OLS estimator in some test directions. Such behavior can be viewed as a symptom of over-regularization -- the choice $\lambda_*$ is optimized for the training distribution and cannot be targeted to provide uniformly good performance over all $\xstar$. In \cref{sec:ub} we show how transductive techniques can improve prediction in this regime.

The chief difficulty in lower-bounding the $\xstar$ prediction risk in \cref{thm:ridge_lb} lies in controlling the expectation over the design $\bX$, which enters nonlinearly into the prediction risk. Our proof circumvents this difficulty in two steps. First, the isotropy and independence properties of Wishart matrices are used to reduce the computation to that of a 1-dimensional expectation with respect to the unordered eigenvalues of $\bX$. Second, in the regime $n \geq p$, the sharp concentration of Gaussian random matrices in spectral norm is exploited to essentially approximate $\frac{1}{n} \bX^\top \bX \approx \bI_p$. 
\subsection{Lower Bounds for Lasso Prediction}\label{sec:lb_lasso}
We next provide a strong lower bound on the out-of-sample prediction error of the Lasso estimator $\hblaslambda \triangleq \argmin_{\bbeta} \frac{1}{2n} \normt{\by - \bX \bbeta}^2 + \lambda \normo{\bbeta}$ with regularization parameter $\lambda > 0$.
There has been extensive work \citep[see, e.g.,][]{raskutti2011minimax} establishing minimax lower bounds for the in-sample prediction error and parameter estimation error of any procedure given data from a sparse linear model. However, our focus is on out-of-sample prediction risk for a specific procedure, the Lasso. The point $\xstar$ need not be one of the training points (in-sample) nor even be drawn from the same distribution as the covariates. 
\cref{thm:lasso_lb}, proved in \cref{sec:app_lb_lasso}, establishes that a well-regularized Lasso program suffers significant biases even in a simple problem setting with i.i.d.\ Gaussian covariates and noise.\footnote{A yet tighter lower bound is available if, instead of being fixed, $\xstar$ follows an arbitrary distribution, and the expectation is taken over $\xstar$ as well. See the proof for details.}

\begin{theorem}  \label{thm:lasso_lb}
    Under \cref{assump:well_spec}, fix  $s \geq 0$, and let $\bx_i \distiid \mN(0, \bI_p)$ with independent noise $\bepsilon \sim \mN(0, \bI_n \sigma_{\epsilon}^2)$. If 
    $\lambda \geq (8+2\sqrt{2}) \sigma_{\epsilon} \sqrt{\log (2ep)/n}$ and $p \geq 20$,\footnote{The cutoff at $20$ is arbitrary and can be decreased.} then there exist 
    universal constants $c_{1:3}$ such that for all $n \geq c_1 s^2 \log (2ep)$,
    \begin{align}
    &c_3 \lambda^2 \trimmednorm{\xstar}{s}^2 \geq 
    \sup_{\bbeta_0 \in \mathbb{B}_0(s)} \E[\langle \xstar, \hblaslambda - \bbeta_0\rangle^2] \\
    &\geq  
    \sup_{\bbeta_0 \in \mathbb{B}_0(s), \normi{\bbeta_0} 
    \leq \lambda} \E[\langle \xstar, \hblaslambda - \bbeta_0\rangle^2]
    \geq c_2 \lambda^2 \trimmednorm{\xstar}{s}^2 
     \end{align}
     where
     the \emph{trimmed norm} $\trimmednorm{\xstar}{s}$ is the sum of the magnitudes of the $s$ largest magnitude entries of $\xstar$. 
\end{theorem}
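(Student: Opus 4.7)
The strategy for the lower bound is to exhibit a specific hard $\bbeta_0^\star \in \mathbb{B}_0(s)$ with $\normi{\bbeta_0^\star} \leq \lambda$ that is maximally aligned with $\xstar$, and then to show that the Lasso outputs the zero vector with at least constant probability on this instance. Let $S^\star$ index the $s$ largest-magnitude coordinates of $\xstar$ and set $\bbeta_0^\star = (\lambda/2)\sum_{j \in S^\star} \sgn(x_{\star,j})\be_j$. On the event $\{\hblaslambda = 0\}$,
\[
\langle \xstar, \hblaslambda - \bbeta_0^\star\rangle = -\tfrac{\lambda}{2}\sum_{j \in S^\star}|x_{\star,j}| = -\tfrac{\lambda}{2}\,\trimmednorm{\xstar}{s},
\]
so the squared loss is exactly $(\lambda^2/4)\trimmednorm{\xstar}{s}^2$; the lower bound then follows by lower-bounding the probability of this event by a constant.

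To certify $\hblaslambda = 0$ I will verify the Lasso KKT condition $\normi{\bX^\top\by/n} \leq \lambda$. Expanding $\by = \bX\bbeta_0^\star + \bepsilon$,
\[
\tfrac{1}{n}\bX^\top\by \;=\; \bbeta_0^\star \;+\; \bigl(\tfrac{1}{n}\bX^\top\bX - \bI_p\bigr)\bbeta_0^\star \;+\; \tfrac{1}{n}\bX^\top\bepsilon,
\]
so it suffices to show that these three summands together have $\ell_\infty$ norm below $\lambda$. The first has $\normi{\cdot} = \lambda/2$ exactly; the noise term $\normi{\bX^\top\bepsilon/n}$ is $O(\sigma_\epsilon\sqrt{\log(2ep)/n})$ and hence at most $\lambda/8$ by the stipulated lower bound on $\lambda$. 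For the cross term, I split coordinates into $j \in S^\star$ and $j \notin S^\star$; in each case the $j$-th entry, conditioned on $\bX_{S^\star\setminus\{j\}}$, is a centered Gaussian with variance $O(s\lambda^2/n)$ (plus a deterministic diagonal piece of size $O(\lambda/\sqrt{n})$ when $j \in S^\star$). A union bound over the $p$ coordinates controls the cross term by $O(\lambda\sqrt{s\log(2ep)/n})$, which is $\leq \lambda/8$ once $n \gtrsim s\log(2ep)$. Combining by triangle inequality gives $\normi{\bX^\top\by/n} \leq \lambda$ with probability $\geq 1/2$, so $\E[\langle \xstar, \hblaslambda - \bbeta_0^\star\rangle^2] \geq \tfrac{1}{2}\cdot\tfrac{\lambda^2}{4}\trimmednorm{\xstar}{s}^2$, yielding $c_2 = 1/8$.

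For the matching upper bound, $\bSigma = \bI_p$ trivially satisfies the population $\ell_\infty$-irrepresentable condition, and a Wainwright-style primal-dual-witness analysis transfers this to the sample level under $n \geq c_1 s^2\log(2ep)$, yielding with high probability that $\supp(\hblaslambda) \subseteq \supp(\bbeta_0)$ together with $\normi{\hblaslambda - \bbeta_0} \lesssim \lambda$. Hence, for $S = \supp(\bbeta_0)$ with $|S| \leq s$,
\[
|\langle \xstar, \hblaslambda - \bbeta_0\rangle| \;\leq\; \normo{(\xstar)_S}\,\normi{\hblaslambda - \bbeta_0} \;\lesssim\; \lambda\,\trimmednorm{\xstar}{s},
\]
and crude Lasso $\ell_1$-error bounds handle the complementary low-probability event.

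The main obstacle is the uniform $\ell_\infty$ control of the Lasso error $\hblaslambda - \bbeta_0$ needed for the upper bound, which relies on the primal-dual witness / sample-irrepresentable analysis and is responsible for the $n \geq c_1 s^2\log(2ep)$ threshold (the KKT-based lower bound on its own needs only $n \gtrsim s\log p$). A secondary technicality is handling the cross term for $j \in S^\star$, where $\bX_j$ participates on both sides of $\bX^\top\bX$: I would isolate the diagonal contribution $(\normt{\bX_j}^2/n - 1)(\bbeta_0^\star)_j$ and treat the off-diagonal remainder as a conditional Gaussian given $\bX_{S^\star\setminus\{j\}}$, so that a single union bound across all coordinates then closes the argument.
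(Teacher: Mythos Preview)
Your proposal is correct, and the overall structure—an explicit hard $\bbeta_0$ aligned with the top-$s$ coordinates of $\xstar$ for the lower bound, and $\ell_\infty$ control of the Lasso error for the upper bound—matches the paper in spirit. The details, however, differ in interesting ways.

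\textbf{Lower bound.} Your argument is genuinely more elementary than the paper's. The paper chooses $|(\bbeta_0)_i|=\lambda$ on the top-$s$ support and does \emph{not} force $\hblaslambda=0$; instead it invokes a support-recovery event $\mS=\{S_{\hblaslambda}\subseteq S_{\bbeta_0}\}$ (probability $\geq 1/2$), then controls the \emph{coordinate-wise bias} $\E[\bI[\mS](\hblaslambda-\bbeta_0)_i]$ via the KKT identity $\hblaslambda-\bbeta_0=(\bI-\hbSigma)(\hblaslambda-\bbeta_0)+\bX^\top\bepsilon/n-\lambda\bv$, bounding $\E[|\bDelta_i|]\lesssim \sqrt{\log p/n}\cdot\E[\|\hblaslambda-\bbeta_0\|_1]\lesssim \lambda s\sqrt{\log p/n}$. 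This last step is what forces $n\gtrsim s^2\log p$ in the paper's lower bound. Your route—take $\|\bbeta_0^\star\|_\infty=\lambda/2$ and verify the KKT condition $\|\bX^\top\by/n\|_\infty\leq\lambda$ so that $\hblaslambda=0$ exactly—avoids the debiased-Lasso machinery entirely and, as you note, needs only $n\gtrsim s\log p$. The paper's approach has the advantage of generalizing to random $\xstar$ via the maximum $s$-sparse eigenvalue of $\E[\xstar\xstar^\top]$, but for the deterministic $\xstar$ stated in \cref{thm:lasso_lb} your argument is cleaner.

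\textbf{Upper bound.} Here the roles are nearly reversed. The paper does \emph{not} use support recovery; instead it applies H\"older with the trimmed norm and its dual,
\[
\langle \xstar,\hblaslambda-\bbeta_0\rangle^2 \;\leq\; \trimmednorm{\xstar}{s}^2 \,\max\!\bigl(\tfrac{\|\hblaslambda-\bbeta_0\|_1}{s},\,\|\hblaslambda-\bbeta_0\|_\infty\bigr)^2,
\]
and bounds $\E[\|\hblaslambda-\bbeta_0\|_1^2]\lesssim \lambda^2 s^2$ (standard) and $\E[\|\hblaslambda-\bbeta_0\|_\infty^2]\lesssim \lambda^2$ via the van de Geer identity $\|\hblaslambda-\bbeta_0\|_\infty \leq \|\bX^\top\bepsilon/n\|_\infty+\|\hbSigma-\bI\|_\infty\|\hblaslambda-\bbeta_0\|_1+\lambda/2$. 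Your route instead first establishes $\supp(\hblaslambda)\subseteq S$ and then uses the restricted H\"older step $|\langle(\xstar)_S,(\hblaslambda-\bbeta_0)_S\rangle|\leq\|(\xstar)_S\|_1\|\hblaslambda-\bbeta_0\|_\infty$, with the $\ell_\infty$ bound coming from the primal--dual witness expression $(\hblaslambda-\bbeta_0)_S=(\bX_S^\top\bX_S/n)^{-1}(\bX_S^\top\bepsilon/n-\lambda\hat z_S)$ and the fact that $\|(\bX_S^\top\bX_S/n)^{-1}\|_\infty\leq 1+O(s/\sqrt{n})$ under the identity design. Both routes arrive at the same $n\gtrsim s^2\log p$ threshold for the $\ell_\infty$ control; the paper's dual-norm device is a bit more streamlined since it sidesteps the separate support-recovery argument, while yours is closer to the classical Wainwright analysis.
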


In practice we will always be interested in a known $\xstar$ direction, but the next result clarifies the dependence of our Lasso lower bound on sparsity for  worst-case test directions $\xstar$ (see \cref{sec:app_lb_lqball} for the proof):
\begin{corollary}
    \label{cor:lb_lqball}
    In the setting of Theorem \ref{thm:lasso_lb},
    for $q \in [1,\infty]$, 
    \begin{align}
     & \sup_{\norm{\xstar}_q=1} \sup_{\bbeta_0 \in \mathbb{B}_0(s)} 
     \E[\langle \xstar, \hblaslambda - \bbeta_0\rangle^2] 
     \geq c_2 \lambda^2 s^{2-2/q}. 
    \end{align}
\end{corollary}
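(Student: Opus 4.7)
The corollary is a direct consequence of the trimmed-norm lower bound in \cref{thm:lasso_lb}, once we identify the worst-case test direction inside the $\ell_q$ ball. The plan is therefore just to supply a specific $\xstar$ whose trimmed norm of order $s$ attains the rate $s^{1-1/q}$, and then invoke the lower bound on prediction error proved in \cref{thm:lasso_lb} for that $\xstar$.

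Concretely, for $q \in [1,\infty]$ and $s \geq 1$, I would take the flat test direction
\[
\xstar \;=\; s^{-1/q}\sum_{j=1}^{s} \be_j,
\]
so that $s$ coordinates equal $s^{-1/q}$ and the rest are zero. Then $\|\xstar\|_q = (s\cdot s^{-1})^{1/q} = 1$ (with the convention $s^{-1/\infty}=1$ handling $q=\infty$), so $\xstar$ is a feasible point in the outer supremum. Since all $s$ nonzero coordinates are equal, the trimmed norm satisfies
\[
\trimmednorm{\xstar}{s} \;=\; s \cdot s^{-1/q} \;=\; s^{1-1/q}.
\]
Applying \cref{thm:lasso_lb} to this $\xstar$ yields some $\bbeta_0 \in \mathbb{B}_0(s)$ with $\|\bbeta_0\|_\infty \le \lambda$ such that $\E[\langle \xstar, \hblaslambda - \bbeta_0\rangle^2] \geq c_2 \lambda^2 \trimmednorm{\xstar}{s}^2 = c_2 \lambda^2 s^{2-2/q}$, and bounding the double supremum below by this single choice of $(\xstar,\bbeta_0)$ proves the corollary.

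There is no real obstacle here: \cref{thm:lasso_lb} has already done the work of establishing a $\trimmednorm{\cdot}{s}^2$ lower bound for \emph{every} fixed $\xstar$, so the corollary reduces to a one-line optimization of the trimmed norm over the $\ell_q$ ball. The boundary cases $q=1$ (where $s^{2-2/q}=1$, consistent with e.g.\ $\xstar=\be_1$) and $q=\infty$ (where $s^{2-2/q}=s^2$, with all-ones $\xstar$ on $s$ coordinates) are automatically covered by the same flat construction. The only mild care required is to check that the constructed $\xstar$ lies in the hypothesis set of \cref{thm:lasso_lb}, which it trivially does since that theorem imposes no restriction on $\xstar$ other than being fixed.
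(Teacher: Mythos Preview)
Your proposal is correct and takes essentially the same approach as the paper: the paper's proof simply notes that $\sup_{\norm{\xstar}_q=1}\trimmednorm{\xstar}{s}^2 \geq s^{2-2/q}$ and invokes \cref{thm:lasso_lb}, and you have supplied the explicit maximizer (the flat $s$-sparse vector) that witnesses this inequality.
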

We make several comments regarding these results.
First, \cref{thm:lasso_lb} yields an $\xstar$-specific lower bound -- showing that given any potential direction $\xstar$ there will exist an underlying $s$-sparse parameter $\bbeta_0$ for which the Lasso performs poorly. Morever, the magnitude of error suffered by the Lasso scales both with the regularization strength $\lambda$ and the norm of $\bx_\star$ along its top $s$ coordinates. Second, the constraint on the regularization parameter in Theorem \ref{thm:lasso_lb}, $\lambda \gtrsim \sigma_\epsilon\sqrt{\log p/n}$, is a sufficient and standard choice to obtain consistent estimates with the Lasso (see \citet[Ch. 7]{wainwright2017highdim} for example). Third, simplifying to the case of $q=2$, we see that \cref{cor:lb_lqball} implies the Lasso must incur worst-case $\xstar$ prediction error $\gtrsim \frac{\sigma_\epsilon^2 s \log p}{n}$, matching upper bounds for Lasso prediction error \citep[Example 7.14]{wainwright2017highdim}. In particular such a bound is not dimension-free, possessing a dependence on $s \log p$, even though the Lasso is only required to predict well along a \textit{single} direction.

The proof of Theorem \ref{thm:lasso_lb} uses two key ideas.
First, in this benign setting, we can show that $\hblaslambda$ has support strictly contained in the support of $\bbeta_0$ with at least constant probability. We then adapt ideas from the study of debiased lasso estimation in \citep{javanmard2014confidence} to sharply characterize the coordinate-wise bias of the Lasso estimator along the support of $\bbeta_0$; in particular we show that a worst-case $\bbeta_0$ can match the signs of the $s$ largest elements of $\xstar$ and have magnitude $\lambda$ on each non-zero coordinate.
Thus the bias induced by regularization can coherently sum across the $s$ coordinates in the support of $\bbeta_0$.  A similar lower bound follows by choosing $\bbeta_0$ to match the signs of $\bx_\star$ on any subset of size $s$.
This sign alignment between $\bx_\star$ and $\bbeta_0$ is also explored in the independent and concurrent work of \citep[Thm. 2.2]{bellec2019biasing}.

\section{Upper Bounds for Transductive Prediction} \label{sec:ub}
Having established that regularization can lead to excessive prediction bias, we now introduce two classes of estimators which can mitigate this bias using knowledge of the single test direction $\xstar$.
While our presentation focuses on the prediction risk \cref{eq:prediction-risk}, which features an expectation over $\hat{y}$, our proofs in the appendix 
also provide identical high probability upper bounds on $(\hy-\langle \xstar, \bbeta_0 \rangle)^2$. Throughout this section, the $O(\cdot)$ masks constants depending only on $\kappa, \Cmin, \Cmax, \cond$.
\subsection{Javanmard-Montanari (JM)-style Estimator}
\label{sec:jm_ub}
Our first approach to single point transductive prediction is inspired by the debiased Lasso estimator of \citet{javanmard2014confidence} which was to designed to construct confidence intervals for individual model parameters $(\bbeta_{0})_j$.
For prediction in the $\xstar$ direction,
we will consider the following generalization of the Javanmard-Montanari (JM) debiasing construction\footnote{In the event the constraints are not feasible we define $\bw=0$.}:
\begin{talign}
    \yjm
    &= \langle \xstar, \hat{\bbeta} \rangle + \frac{1}{n} \bw^\top \bX^\top(\by-\bX \hat{\bbeta})\qtext{for}
    \label{eq:debias_lasso}\\
    \bw &= \argmin_{\tilde{\bw}} \tilde{\bw}^\top \bSigma_n \tilde{\bw} \ \text{s.t.} \ \normi{\bSigma_n \tilde{\bw} -\xstar} \leq \lambda_{\bw}. \label{eq:jm_program}
\end{talign}
Here, $\hat{\bbeta}$ is any (ideally $\ell_1$-consistent) initial pilot estimate of $\bbeta_0$, like the estimate $\hat{\bbeta}_L(\lambda)$ returned by the Lasso.  When $\xstar = \be_j$ the estimator \cref{eq:debias_lasso} reduces exactly to the program in \citep{javanmard2014confidence}, and equivalent generalizations have been used in \citep{chao2014high, athey2018approximate, cai2017confidence} to construct prediction intervals and to estimate treatment effects.
Intuitively, $\bw$ approximately inverts the population covariance matrix along the direction defined by $\bx_\star$ (i.e., $\bw \approx \bOmega\bx_\star$). The second term in \cref{eq:debias_lasso} can be thought of as a high-dimensional one-step correction designed to remove bias from the initial prediction $\langle \xstar, \hat{\bbeta} \rangle$; see \citep{javanmard2014confidence} for more intuition on this construction. We can now state our primary guarantee for the \emph{JM-style estimator} \cref{eq:debias_lasso}; the proof is given in \cref{sec:app_ub_jm}.

\begin{theorem} \label{thm:ub_jm}
    Suppose \cref{assump:well_spec,,assump:cov,,assump:design,,assump:noise1} hold and that the transductive estimator $\yjm$ of \cref{eq:debias_lasso} is fit with regularization parameter $\lambda_{\bw} = 8a \sqrt{\cond} \kappa^2 \normt{\xstar} \sqrt{\frac{\log ( p \vee n)}{n}}$ for some $a > 0$. Then there is a universal constant $c_1$ such that if $n \geq c_1 a^2 \log(2e (p \vee n)) $,
    \begin{talign}\label{eq:jm-risk-bound}
        &\E[(\yjm - \langle \bbeta_0, \xstar \rangle)^2] \leq \\&O \left( \frac{\sigma_{\epsilon}^2 \xstar \bOmega \xstar}{n} + \brateo^2  (\lambda_{\bw}^2 + \normi{\xstar}^2 \frac{1}{(n \vee p)^{c_3}}) \right).
    \end{talign}
    for $c_3 = \frac{a^2}{4}-\frac{1}{2}$ and
    $\brateo = (\E[\Vert \hat{\bbeta}-\bbeta_0 \Vert_1^4])^{1/4}$, 
    the $\ell_1$ error of the initial estimate. Moreover, if $\lambda_{\bw} \geq \normi{\xstar}$, then $\E[(\yjm - \langle \bbeta_0, \xstar \rangle)^2] = \E[\langle\xstar, \hbbeta-\hbbeta_0\rangle^2]$. 
\end{theorem}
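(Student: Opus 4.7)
The plan is to first algebraically decompose the error and then separately control a deterministic ``bias'' piece (driven by the feasibility constraint in \cref{eq:jm_program}) and a stochastic ``variance'' piece (driven by the subgaussian noise). Substituting $\by = \bX\bbeta_0 + \bepsilon$ into \cref{eq:debias_lasso} yields
\[
\yjm - \langle \xstar, \bbeta_0\rangle \;=\; \langle \xstar - \bSigma_n \bw,\, \hat\bbeta - \bbeta_0\rangle \;+\; \tfrac{1}{n}\bw^\top \bX^\top \bepsilon,
\]
where $\bSigma_n = \tfrac{1}{n}\bX^\top\bX$. By H\"older and the feasibility constraint $\normi{\bSigma_n\bw - \xstar} \leq \lambda_\bw$, the first term is bounded in magnitude by $\lambda_\bw \normo{\hat\bbeta - \bbeta_0}$; squaring, taking expectation, and applying Cauchy--Schwarz (since $\bw$ depends on $\bX$) contributes the $\brateo^2 \lambda_\bw^2$ summand. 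Conditional on $\bX$, the second term has mean zero and conditional second moment of order $\tfrac{\sigma_\epsilon^2}{n}\bw^\top \bSigma_n \bw$ by the independence and subgaussianity of $\bepsilon$ guaranteed by \cref{assump:well_spec,,assump:noise1}.

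The central step is to show $\bw^\top \bSigma_n \bw \lesssim \xstar^\top \bOmega \xstar$ on a high-probability event. Because $\bw$ minimizes $\tilde\bw^\top \bSigma_n \tilde\bw$ over the feasible set, it suffices to exhibit a feasible reference vector with a controlled quadratic form, and the natural candidate is $\tilde\bw_0 = \bOmega\xstar$. Feasibility reduces to $\normi{(\bSigma_n - \bSigma)\bOmega\xstar} \leq \lambda_\bw$: each coordinate $\be_j^\top (\bSigma_n - \bSigma)\bOmega\xstar$ is an average of $n$ mean-zero sub-exponential variables $(\be_j^\top \bx_i)(\bx_i^\top \bOmega \xstar)$ whose sub-exponential parameters are, via \cref{assump:design,,assump:cov}, of order $\kappa^2 \sqrt{\bSigma_{jj}\,\xstar^\top\bOmega\xstar} \lesssim \kappa^2 \sqrt{\cond}\,\normt{\xstar}$. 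Applying a Bernstein inequality and union-bounding over the $p$ coordinates, the prescribed choice $\lambda_\bw = 8a\sqrt{\cond}\kappa^2\normt{\xstar}\sqrt{\log(p\vee n)/n}$ makes $\tilde\bw_0$ feasible with probability at least $1 - (p\vee n)^{-\Theta(a^2)}$; a parallel scalar concentration of $\xstar^\top\bOmega\bSigma_n\bOmega\xstar$ around its mean $\xstar^\top\bOmega\xstar$ then gives $\bw^\top \bSigma_n \bw = O(\xstar^\top\bOmega\xstar)$ and therefore the $\sigma_\epsilon^2\,\xstar^\top \bOmega \xstar / n$ summand on this good event.

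To convert the high-probability bound into the stated expected-risk bound, I would handle the complementary ``bad'' event with a crude deterministic bound. When the constraints are infeasible (or concentration fails), the convention $\bw=0$ from the footnote leaves $\yjm = \langle\xstar,\hat\bbeta\rangle$, so the error is at most $\normi{\xstar}\normo{\hat\bbeta-\bbeta_0}$ in magnitude; Cauchy--Schwarz combined with the tail probability $(p\vee n)^{-\Theta(a^2)}$ and the definition of $\brateo$ then produces the residual $\brateo^2\normi{\xstar}^2/(n\vee p)^{c_3}$ term with $c_3 = a^2/4 - 1/2$. For the ``moreover'' clause, observe that if $\lambda_\bw \geq \normi{\xstar}$ then $\tilde\bw = 0$ is feasible and attains the (nonnegative) quadratic objective $0$, so any minimizer $\bw$ must satisfy $\bX\bw = 0$; the correction term vanishes identically, $\yjm = \langle\xstar,\hat\bbeta\rangle$, and the equality of risks follows. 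I expect the main obstacle to be the coordinate-wise sub-exponential concentration for $(\bSigma_n - \bSigma)\bOmega\xstar$: one has to carefully track products of subgaussian norms and the interactions with $\cond$, $\kappa$, and $\Cmin$ so that the resulting Bernstein tail matches the prescribed form of $\lambda_\bw$ and so that the ``bad event'' contribution is small enough to be absorbed into the stated residual term.
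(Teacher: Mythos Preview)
Your proposal is correct and follows essentially the same route as the paper: the same error decomposition into $\langle \xstar-\bSigma_n\bw,\hat\bbeta-\bbeta_0\rangle + \tfrac{1}{n}\bw^\top\bX^\top\bepsilon$, the same feasibility argument for $\tilde\bw_0=\bOmega\xstar$ via coordinate-wise sub-exponential concentration of $(\bSigma_n-\bSigma)\bOmega\xstar$, and the same handling of the infeasible event through the $\bw=0$ convention. The only cosmetic difference is organizational---the paper integrates the tail bounds to get moment bounds on $\bw^\top\bSigma_n\bw$ and $\normi{\bSigma_n\bw-\xstar}^4$ directly (its Corollary~\ref{cor:jm_program_moments}), whereas you split into a good event plus a residual---but the ingredients and the resulting terms match; your treatment of the ``moreover'' clause (noting that any minimizer satisfies $\bX\bw=0$ so the correction vanishes) is in fact slightly cleaner than the paper's, which asserts $\bw=0$ without addressing possible non-uniqueness when $p>n$.
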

Intuitively, the first term in our bound \cref{eq:jm-risk-bound} can be viewed as the variance of the estimator's prediction along the direction of $\xstar$ while the second term can be thought of as the (reduced) bias of the estimator. We consider the third term to be of higher order since $a$ (and in turn $c_3$) can be chosen as a large constant. Finally, when $\lambda_{\bw} \geq \normi{\xstar}$ the error of the transductive procedure reduces to that of the pilot regression procedure.
When the Lasso is used as the pilot regression procedure we can derive the following corollary to \cref{thm:ub_jm}, also proved in \cref{sec:app_ub_jm_lasso}.

\begin{corollary} 
    \label{cor:ub_jm_lasso}
    Under the conditions of Theorem \ref{thm:ub_jm}, consider the JM-style estimator \cref{eq:debias_lasso} with pilot estimate  $\hat{\bbeta} = \hblaslambda$ with $\lambda \geq 80 \sigma_{\epsilon} \sqrt{\frac{\log(2ep/s_{\bbeta_0})}{n}}$. If $p \geq 20$, then there exist universal constants $c_1$, $c_2$ such that if $\normi{\bbeta_0}/\sigma_{\epsilon} = o(e^{c_1 n})$ and $n \geq c_2 \max \{ \frac{s_{\bbeta_0} \kappa^4}{C_{\min}}, a^2 \} \log(2e(p \vee n))$,
    \begin{talign}
        & \E[(\yjm\hspace{-.05cm} -\hspace{-.05cm} \langle \bbeta_0, \xstar \rangle)^2]\hspace{-.05cm} \leq\hspace{-.05cm} O ( \frac{\sigma_{\epsilon}^2 \xstar \bOmega \xstar}{n} \hspace{-.05cm}+\hspace{-.05cm} \lambda^2 s_{\bbeta_0}^2  (\lambda_{\bw}^2\hspace{-.05cm} + \hspace{-.05cm} \frac{\normi{\xstar}^2}{(n \vee p)^{c_3}})).
    \end{talign}
\end{corollary}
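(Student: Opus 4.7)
The plan is to reduce everything to Theorem~\ref{thm:ub_jm} by establishing the single quantitative fact that for the Lasso pilot, $\brateo = (\E[\Vert \hblaslambda - \bbeta_0\Vert_1^4])^{1/4} \lesssim s_{\bbeta_0} \lambda / C_{\min}$. Once this is in hand, substituting into the bound \eqref{eq:jm-risk-bound} of Theorem~\ref{thm:ub_jm} yields precisely the claim of the corollary (with the $C_{\min}$ absorbed into the $O(\cdot)$). So the only real work is controlling the fourth moment of the Lasso $\ell_1$ error.

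First I would invoke the standard Lasso oracle inequality: on the event $\mE_1$ that (a) the restricted eigenvalue condition holds for $\frac{1}{n}\bX^\top\bX$ over cones of the form $\{\bv : \|\bv_{S^c}\|_1 \leq 3 \|\bv_S\|_1, |S| \leq s_{\bbeta_0}\}$ with constant $\gtrsim C_{\min}$, and (b) $\lambda \geq 2\|\bX^\top\bepsilon/n\|_\infty$, one has the deterministic conclusion $\|\hblaslambda - \bbeta_0\|_1 \lesssim s_{\bbeta_0}\lambda/C_{\min}$. Under Assumptions~\ref{assump:design}--\ref{assump:noise1}, the sub-Gaussian design yields the RE condition with probability at least $1 - e^{-cn}$ provided $n \gtrsim s_{\bbeta_0}\kappa^4 \log p / C_{\min}$ (as guaranteed in the hypothesis), and a standard maximal inequality shows $\|\bX^\top\bepsilon/n\|_\infty \lesssim \sigma_{\epsilon}\sqrt{\log p / n}$ with probability at least $1 - p^{-c'}$ for any desired $c'$, so the chosen $\lambda \geq 80\sigma_{\epsilon}\sqrt{\log(2ep/s_{\bbeta_0})/n}$ dominates this noise term. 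Thus $\Pr(\mE_1^c) \leq e^{-c n}$ for a suitable $c$ that can be made as large as we wish by enlarging $c_2$.

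The main obstacle is the tail contribution $\E[\|\hblaslambda - \bbeta_0\|_1^4 \mathbf{1}_{\mE_1^c}]$, because outside $\mE_1$ there is no a priori estimation guarantee. Here I exploit the basic inequality that $\hblaslambda$ satisfies: comparing the Lasso objective at $\hblaslambda$ against $\mathbf{0}$ gives $\lambda\|\hblaslambda\|_1 \leq \frac{1}{2n}\|\by\|_2^2$, so deterministically
\[
\|\hblaslambda - \bbeta_0\|_1 \leq \|\hblaslambda\|_1 + \|\bbeta_0\|_1 \leq \tfrac{1}{2n\lambda}\|\by\|_2^2 + \|\bbeta_0\|_1 \lesssim \tfrac{1}{n\lambda}\bigl(\|\bX\bbeta_0\|_2^2 + \|\bepsilon\|_2^2\bigr) + s_{\bbeta_0}\|\bbeta_0\|_\infty.
\]
Sub-Gaussianity of $\bX$ and $\bepsilon$ yields polynomial-in-$n$ moments of $\|\bX\bbeta_0\|_2^2$ and $\|\bepsilon\|_2^2$ in terms of $\|\bbeta_0\|_\infty$ and $\sigma_{\epsilon}$, and applying Cauchy--Schwarz,
\[
\E[\|\hblaslambda - \bbeta_0\|_1^4 \mathbf{1}_{\mE_1^c}] \leq \bigl(\E[\|\hblaslambda - \bbeta_0\|_1^8]\bigr)^{1/2} \Pr(\mE_1^c)^{1/2} \lesssim \poly(n, s_{\bbeta_0}, \|\bbeta_0\|_\infty/\sigma_\epsilon) \cdot e^{-cn/2}.
\]
The condition $\|\bbeta_0\|_\infty/\sigma_\epsilon = o(e^{c_1 n})$ is precisely what is needed to guarantee that this product is dominated by $(s_{\bbeta_0}\lambda)^4$ after choosing $c_1$ small relative to $c$.

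Combining the two regimes gives $\E[\|\hblaslambda - \bbeta_0\|_1^4] \lesssim (s_{\bbeta_0}\lambda/C_{\min})^4$, so $\brateo \lesssim s_{\bbeta_0}\lambda/C_{\min}$. Plugging this into Theorem~\ref{thm:ub_jm} (whose hypotheses on $n$ and $\lambda_{\bw}$ are implied by the corollary's hypotheses) delivers the stated bound, with the $C_{\min}$ factor absorbed into the $O(\cdot)$ per the convention stated at the beginning of Section~\ref{sec:ub}. The only delicate step is matching the exponential decay of $\Pr(\mE_1^c)$ against the polynomial blow-up of the worst-case $\ell_1$ error, which is exactly what the technical assumption $\|\bbeta_0\|_\infty/\sigma_\epsilon = o(e^{c_1 n})$ is designed to handle.
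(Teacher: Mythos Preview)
Your high-level strategy matches the paper's exactly: bound $\brateo = (\E\|\hblaslambda-\bbeta_0\|_1^4)^{1/4} \lesssim s_{\bbeta_0}\lambda/C_{\min}$ and substitute into Theorem~\ref{thm:ub_jm}. The gap is in the claim that $\Pr(\mE_1^c) \leq e^{-cn}$. Your event $\mE_1$ has two components: the restricted-eigenvalue condition on $\bX$, and the noise bound $\lambda \geq 2\|\bX^\top\bepsilon/n\|_\infty$. The former indeed fails with probability $\lesssim e^{-cn/\kappa^4}$, but the latter only fails with probability of order $p^{-c'}$ for a \emph{fixed} constant $c'$ determined by the constant $80$ in $\lambda$; enlarging $c_2$ does not change this. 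In the regime $n \gg \log p$ (which the hypotheses permit), $p^{-c'} \gg e^{-cn}$, and your Cauchy--Schwarz tail term is of order $s_{\bbeta_0}^4\|\bbeta_0\|_\infty^4 \cdot p^{-c'/2}$. Since the hypothesis allows $\|\bbeta_0\|_\infty/\sigma_\epsilon$ as large as $o(e^{c_1 n})$, this term can be of order $e^{4c_1 n} p^{-c'/2}$, which diverges for any fixed universal $c_1, c' > 0$ once $n/\log p \to \infty$. So the tail contribution is not controlled.

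The paper closes this gap by defining the bad event as \emph{only} the design condition $\bX \notin \mE_n(s_{\bbeta_0},7)$ (the SRE condition of \citet{bellec2016slope}), which genuinely has probability $\lesssim e^{-cn/\kappa^4}$ (Theorem~\ref{thm:design_SRE}). On the good-design event, rather than restricting to a high-probability noise event, the paper invokes the refined Lasso analysis of \citet{bellec2016slope} (packaged as Lemma~\ref{lem:lasso_upper_sg}) which gives direct moment bounds $\E_{\bepsilon}[\|\hblaslambda-\bbeta_0\|_1^k \mid \bX]$ for all noise realizations; this is then combined with the exponential design tail via Lemma~\ref{lem:uncond_exp}. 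A minor secondary point: the paper's crude bound on the bad event compares the Lasso objective at $\bbeta_0$ rather than $\mathbf{0}$, yielding $\|\hblaslambda-\bbeta_0\|_1 \leq \frac{1}{2n\lambda}\|\bepsilon\|_2^2 + 2\|\bbeta_0\|_1$ (Lemma~\ref{lem:bad_bound}), which avoids your extra $\|\bX\bbeta_0\|_2^2$ term.
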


We make several remarks to further interpret this result.
First, to simplify the presentation of the results (and match the lower bound setting of \cref{thm:lasso_lb}) consider the setting in \cref{cor:ub_jm_lasso} with  $a \asymp 1$, $\lambda \asymp \sigma_{\epsilon} \sqrt{\log p/n}$, and $n \gtrsim s_{\bbeta_0}^2 \log p \log (p \vee n)$. Then the upper bound in \cref{thm:ub_jm} can be succinctly stated as $
      O(\frac{\sigma_{\epsilon}^2 \normt{\xstar}^2}{n}).
    $
    In short, the transductive estimator attains a dimension-free rate for sufficiently large $n$. Under the same conditions the Lasso estimator suffers a prediction error of $\Omega(\trimmednorm{\xstar}{s}^2  \frac{\sigma_{\epsilon}^2 \log p}{n})$ as \cref{thm:lasso_lb} and \cref{cor:lb_lqball} establish. 
    Thus transduction guarantees improvement over the Lasso lower bound whenever $\xstar$ satisfies the soft sparsity condition $\frac{\normt{\xstar}}{\trimmednorm{\xstar}{s}} \lesssim \sqrt{\log p}$.  Since $\xstar$ is observable, one can selectively deploy transduction based on the soft sparsity level $\frac{\normt{\xstar}}{\trimmednorm{\xstar}{s}}$ or on bounds thereof. 
    
    Second, the estimator described in  \cref{eq:debias_lasso} and \cref{eq:jm_program} is transductive in that it is tailored to an individual test-point $\xstar$. The corresponding guarantees in \cref{thm:ub_jm} and \cref{cor:ub_jm_lasso} embody a computational-statistical tradeoff. In our setting, the detrimental effects of regularization can be mitigated at the cost of extra computation: the convex program in \cref{eq:jm_program} must be solved for each new $\xstar$. Third, the condition $\normi{\bbeta_0}/\sigma_{\epsilon} = o(e^{c_1 n})$ is not used for our high-probability error bound and is only used to control prediction risk \cref{eq:prediction-risk} on the low-probability event that the (random) design matrix $\mathbf{X}$ does not satisfy a restricted eigenvalue-like condition. For comparison, note that our \cref{thm:lasso_lb} lower bound establishes substantial excess Lasso bias even when $\normi{\bbeta_0} = \lambda = o(1)$.

Finally, we highlight that \citet{cai2017confidence} have shown that the JM-style estimator with a scaled lasso base procedure and $\lambda_{\bw} \asymp \sqrt{\frac{\log p}{n}}$ produce CIs for $\xstar^\top \bbeta_0$ with minimax rate optimal length when $\xstar$ is sparsely loaded. 
Although our primary focus is in improving the mean-square prediction risk \cref{eq:prediction-risk}, we conclude this section by showing that a different setting of $\lambda_{\bw}$ yields minimax rate optimal CIs for dense $\xstar$ and simultaneously minimax rate optimal CIs for sparse and dense $\xstar$ when $\bbeta_0$ is sufficiently sparse:
\begin{proposition}
\label{prop:CI}
Under the conditions of Theorem \ref{thm:ub_jm} with $\sigma_{\epsilon}=1$, consider the JM-style estimator \cref{eq:debias_lasso} with pilot estimate  $\hat{\bbeta} = \hblaslambda$ and $\lambda = 80 \sqrt{\frac{\log(2p)}{n}}$. Fix any $C_1, C_2, C_3 > 0$, and instate the assumptions of \citet{cai2017confidence}, namely that the vector $\xstar$ satisfies $\frac{\max_j \abs{(\xstar)_j}}{\min_j \abs{(\xstar)_j}} \leq C_1$ and  $s_{\bbeta_0} \asymp p^{\gamma}$ for $0 \leq \gamma < \frac{1}{2}$. Then  for $n \gtrsim s_{\bbeta_0} \log p $ the estimator $\yjm$ \cref{eq:debias_lasso} with $\lambda_{\bw} = 8 \sqrt{\cond} \kappa^2 \frac{1}{s_{\bbeta_0} \sqrt{\log p}}\normt{\xstar}$ yields (minimax rate optimal) $1-\alpha$ confidence intervals for $\xstar^\top \bbeta_0$ of expected length
    \begin{itemize}[leftmargin=.4cm]
        \item $O(\norm{\xstar}_{\infty} \cdot s_{\bbeta_0} \sqrt{\frac{\log p}{n}})$ in the dense $\xstar$ regime where $\norm{\xstar}_0=C_3 p^{\gamma_q}$ with $2 \gamma < \gamma_q < 1$ (matching the result of \citep[Thm.~4]{cai2017confidence}). 
        \item $O(\norm{\xstar}_2 \cdot \frac{1}{\sqrt{n}})$  in the sparse $\xstar$ regime of \citep[Thm.~1]{cai2017confidence} where  $\norm{\xstar}_{0} \leq C_2 s_{\bbeta_0}$ if $n \gtrsim s_{\bbeta_0}^2 (\log p)^2$.
    \end{itemize}
    Here the $O(\cdot)$ masks constants depending only on $\kappa, C_1, C_2, C_3, \Cmin, \Cmax, \cond$.
\end{proposition}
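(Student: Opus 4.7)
The plan starts from the error decomposition that underlies the proof of \cref{thm:ub_jm}:
\begin{equation*}
\yjm - \xstar^\top \bbeta_0 = \tfrac{1}{n}\bw^\top \bX^\top \bepsilon + (\xstar - \bSigma_n \bw)^\top (\hat{\bbeta} - \bbeta_0).
\end{equation*}
Under \cref{assump:noise1}, the first term is, conditionally on $\bX$ and $\bw$, sub-Gaussian with variance proxy $\bw^\top \bSigma_n \bw/n$, while the second is bounded in absolute value by $\lambda_{\bw}\,\normo{\hat{\bbeta}-\bbeta_0}$ via the feasibility constraint of \cref{eq:jm_program}. I would build the $1-\alpha$ confidence interval as $\yjm \pm (z_{\alpha/2}\sqrt{\bw^\top \bSigma_n \bw/n} + \lambda_{\bw} B)$, where $B = O(s_{\bbeta_0}\sqrt{\log p/n})$ is a standard high-probability $\ell_1$-error bound for the Lasso pilot. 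Coverage follows by combining a sub-Gaussian tail bound for the stochastic part with the Lasso consistency event, so the task reduces to bounding the expected half-width in each regime.

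\textbf{Dense regime.} The ratio condition $\max_j \abs{(\xstar)_j}/\min_j \abs{(\xstar)_j} \leq C_1$ together with $\norm{\xstar}_0 = C_3 p^{\gamma_q}$ gives $\normt{\xstar} \asymp \normi{\xstar}\, p^{\gamma_q/2}$, so $\lambda_{\bw} \asymp \normi{\xstar}\, p^{\gamma_q/2-\gamma}/\sqrt{\log p}$. Since $\gamma_q/2-\gamma > 0$ is fixed, $\lambda_{\bw} \geq \normi{\xstar}$ for $p$ large enough, so $\tilde{\bw} = 0$ is feasible in \cref{eq:jm_program}. By the final line of \cref{thm:ub_jm}, $\yjm = \xstar^\top \hat{\bbeta}$, and I would replace the stochastic half-width by the sharper deterministic bound $\abs{\xstar^\top(\hat{\bbeta}-\bbeta_0)} \leq \normi{\xstar}\,\normo{\hat{\bbeta}-\bbeta_0} \leq \normi{\xstar} B$, yielding CI length $O(\normi{\xstar}\, s_{\bbeta_0}\sqrt{\log p/n})$.

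\textbf{Sparse regime.} Here $\normt{\xstar} \leq \sqrt{C_2 s_{\bbeta_0}}\,\normi{\xstar}$, so $\lambda_{\bw} < \normi{\xstar}$ and the debiasing is nontrivial. The crucial step is to exhibit a low-variance feasible point: I would show that $\tilde{\bw}_\star = \bOmega \xstar$ is feasible with high probability. Since $\bSigma \tilde{\bw}_\star = \xstar$, feasibility reduces to $\normi{(\bSigma_n-\bSigma)\bOmega\xstar} \leq \lambda_{\bw}$, which follows from sub-Gaussian concentration under \cref{assump:design}: for any fixed vector $\bu$, $\normi{(\bSigma_n-\bSigma)\bu} \lesssim \kappa^2 \normt{\bu}\sqrt{\log p/n}$ with high probability (Hanson--Wright-type bound plus a union bound over $p$ coordinates). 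Applied with $\bu = \bOmega\xstar$, this is $\lesssim \kappa^2 \normt{\xstar}\sqrt{\log p/n}/\Cmin$, and the sample-size condition $n \gtrsim s_{\bbeta_0}^2 (\log p)^2$ exactly matches this to $\lambda_{\bw}$. By optimality of $\bw$ in \cref{eq:jm_program} and concentration $\bSigma_n \approx \bSigma$,
\begin{equation*}
\bw^\top \bSigma_n \bw \leq \tilde{\bw}_\star^\top \bSigma_n \tilde{\bw}_\star = O(\xstar^\top \bOmega \xstar) = O(\normt{\xstar}^2/\Cmin),
\end{equation*}
while the bias half-width is $\lambda_{\bw} B \lesssim (\normt{\xstar}/(s_{\bbeta_0}\sqrt{\log p})) \cdot s_{\bbeta_0}\sqrt{\log p/n} = \normt{\xstar}/\sqrt{n}$. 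Together these give CI length $O(\normt{\xstar}/\sqrt{n})$.

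The main obstacle is the high-probability feasibility of $\bOmega\xstar$ in the sparse regime with the precise prefactor $8\sqrt{\cond}\kappa^2$ appearing in $\lambda_{\bw}$: this requires tracking constants carefully in the sub-Gaussian concentration of the $p$ coordinates of $(\bSigma_n-\bSigma)\bOmega\xstar$, which in turn relies on Hanson--Wright-style control of quadratic forms in the rows of $\bX$ under \cref{assump:design}, followed by a union bound. The remaining ingredients -- the sub-Gaussian coverage argument, the standard Lasso $\ell_1$-error bound $B$ under \cref{assump:well_spec,,assump:cov,,assump:design,,assump:noise1}, and the expected-length computation -- are routine bookkeeping given the ingredients already established in \cref{thm:ub_jm} and \cref{cor:ub_jm_lasso}.
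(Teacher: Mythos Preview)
Your proposal is correct and follows essentially the same approach as the paper's proof. Both arguments use the identical error decomposition, handle the dense regime by observing that $\lambda_{\bw} \geq \normi{\xstar}$ forces $\bw=0$ so that $\yjm = \xstar^\top\hat{\bbeta}$ and a H\"older bound gives half-width $\normi{\xstar}\,\normo{\hat{\bbeta}-\bbeta_0}$, and handle the sparse regime by establishing feasibility of $\bOmega\xstar$ via sub-Gaussian/sub-exponential concentration of the coordinates of $(\bSigma_n-\bSigma)\bOmega\xstar$ (this is the paper's \cref{lem:const_conc}), bounding the variance term through optimality and concentration of $\bw_0^\top\bSigma_n\bw_0$ about $\xstar^\top\bOmega\xstar$ (the paper's \cref{lem:obj_conc}), and bounding the bias via $\lambda_{\bw}\cdot\normo{\hat{\bbeta}-\bbeta_0} \lesssim \normt{\xstar}/\sqrt{n}$ under $n\gtrsim s_{\bbeta_0}^2(\log p)^2$; the ``main obstacle'' you flag is exactly what these two lemmas resolve.
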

The proof can be found in \cref{sec:app_CI}.

\subsection{Orthogonal Moment (OM) Estimators}
\label{sec:om_ub}
Our second approach to single point transductive  prediction is inspired by orthogonal moment (OM) estimation~\citep{chernozhukov2017double}. OM estimators are commonly used to estimate single parameters of interest (like a treatment effect) in the presence of high-dimensional or nonparametric nuisance.
To connect our problem to this semiparametric world, we first frame the task of prediction in the $\xstar$ direction as one of estimating a single parameter, $\theta_0 = \xstar^\top \bbeta_0 $. 
Consider the linear model equation \cref{eq:model}
\begin{talign}
    & y_i = \bx_i^\top \bbeta_0 + \epsilon_i = ((\bU^{-1})^\top \bx_i)^\top \bU \bbeta_0 + \epsilon_i 
\end{talign}
with a data reparametrization defined by the matrix $
\bU = \Vert \xstar \Vert_2 \cdot \begin{bmatrix} \bu_1 \\
\bR
\end{bmatrix}
$ for $\frac{\xstar}{\Vert \xstar \Vert_2} = \bu_1$ so that  $\be_1^\top \bU \bbeta_0 = \xstar^\top \bbeta_0 = \theta_0$. 
Here, the matrix $\bR \in \mR^{(p-1) \times p}$ has orthonormal rows which span the subspace orthogonal to $\bu_1$ -- these are obtained as the non-$\bu_1$ eigenvectors of the projector matrix $\bI_p - \bu_1 \bu_1^\top$. This induces the  data reparametrization $\bx' = [t, \bz] =  (\bU^{-1})^\top \bx$. In the reparametrized basis, the linear model becomes, 
\begin{talign}
    & y_i = \theta_0 t_i + \bz_i^\top \bbf_0 + \epsilon_i, \quad \quad t_i = \bg_0(\bz_i) + \eta_i, \\
    \quad\quad
    & \bq_0(\bz_i) \triangleq \theta_0\bg_0(\bz_i) + \bz_i^\top \bbf_0
    \label{eq:oml_model}
\end{talign}
where we have introduced convenient auxiliary equations in terms of $\bg_0(\bz_i)\triangleq \E[t_i \mid \bz_i]$.

To estimate $\theta_0 = \xstar^\top \bbeta_0$ in the presence of the unknown nuisance parameters $\bbf_0, \bg_0, \bq_0$, 
we introduce a thresholded-variant of the two-stage method of moments estimator proposed in \citep{chernozhukov2017double}. 
The method of moments takes as input a moment function $m$ of both data and parameters that uniquely identifies the target parameter of interest.
Our reparameterized model form \cref{eq:oml_model} gives us access to two different \emph{Neyman orthogonal} moment functions described \citep{chernozhukov2017double}:
\begin{talign}
\textbf{$\bbf$ moments: }
& m(t_i, y_i, \theta, \bz_i^\top \bbf, \bg(\bz_i)) = \\ & (y_i-t_i\theta - \bz_i^\top \bbf)(t_i-\bg(\bz_i)) \label{eq:first_f_moment} \\
\textbf{$\bq$ moments: }
& m(t_i, y_i, \theta, \bq(\bz_i), \bg(\bz_i)) = \\ 
& (y_i-\bq(\bz_i) - \theta(t_i-\bg(\bz_i)))(t_i- \bg(\bz_i)). \label{eq:first_q_moment}
\end{talign}
These orthogonal moment equations enable the accurate estimation of a target parameter $\theta_0$ in the presence of high-dimensional or nonparametric nuisance parameters (in this case $\bbf_0$ and $\bg_0$).
We focus our theoretical analysis and present description on the set of $\bbf$ moments since the analysis is similar for the $\bq$, although we investigate the practical utility of both in \cref{sec:experiments}. 

Our OM proposal to estimate $\theta_0$ now proceeds as follows.  We first split our original dataset of $n$ points into two\footnote{In practice, we use $K$-fold cross-fitting to increase the sample-efficiency of the scheme as in \citep{chernozhukov2017double}; for simplicity of presentation, we defer the description of this slight modification to \cref{sec:expts_oml}.} disjoint, equal-sized folds $(\Xo, \yo) = \{ (\bx_i, y_i) : i \in \{ 1, \hdots , \frac{n}{2} \} \}$ and $(\Xt, \yt) = \{ (\bx_i, y_i) : i \in \{ \frac{n}{2}+1, \hdots , n \} \}$.
Then,
\begin{itemize}[leftmargin=.5cm]
    \vspace{-0.2cm}
    \item The first fold $(\Xo, \yo)$ is used to run two \textit{first-stage} regressions. We estimate
    $\bbeta_0$ by linearly regressing $\yo$ onto $\Xo$ to produce $\hat{\bbeta}$; this provides an estimator of $\bbf_0$ as $\be_{-1}^\top \bU \hat{\bbeta}=\bbtf$.
    Second we estimate $\bg_0$ by regressing $\tone$ onto $\zo$ to produce a regression model $\btg(\cdot) : \mR^{p-1} \to \mR$. 
    Any arbitrary linear or non-linear regression procedure can be used to fit $\btg(\cdot)$.
    \vspace{-0.1cm}
\item Then, we estimate $\E[\eta_1^2]$ as $\mu_2 = \frac{1}{n/2} \sum_{i=\frac{n}{2}+1}^{n} t_i(t_i-\btg(\bz_i))$ where the sum is taken over the second fold of data in $(\Xt, \yt)$; crucially $(t_i, \bz_i)$ are independent of $\btg(\cdot)$ in this expression. \vspace{-0.1cm}
\item If $\mu_2 \leq \tau$ for a threshold $\tau$ we simply output $\yom = \xstar^\top \hat{\bbeta}$. If $\mu_2 \geq \tau$ we estimate $\theta_0$ by solving the empirical moment equation:
\begin{talign}
    & \sum_{i=\frac{n}{2}+1}^{n} m(t_i, y_i, \yom, \bz_i^\top \bbtf, \btg(\bz_i)) =  0 \implies \\
    & \yom = \frac{\frac{1}{n/2} \sum_{i=\frac{n}{2}+1}^{n} (y_i - \bz^\top_i \bbtf)(t_i-\btg(\bz_i))}{\mu_2} \label{eq:om-estimator}
\end{talign}
where the sum is taken over the second fold of data in $(\Xt, \yt)$ and $m$ is defined in \cref{eq:first_f_moment}.
\vspace{-0.3cm}
\end{itemize}
If we had oracle access to the underlying $\bbf_0$ and $\bg_0$, solving the population moment condition  $\E_{t_1, y_1, \bz_1}[m(t_1, y_1, \theta, \bz_1^\top \bbf_0, \bg_0(\bz_1))] = 0$ for $\theta$ would exactly yield $\theta_0=\xstar^\top \bbeta_0$. In practice, we first construct estimates $\bbtf$ and $\btg$ of the unknown nuisance parameters to serve as surrogates for $\bbf_0$ and $\bg_0$ and then solve an empirical version of the aforementioned moment condition to extract $\yom$. A key property of the moments in \cref{eq:first_f_moment} is their Neyman orthogonality: they satisfy $\E [\nabla_{\bz_1^\top \bbf} m(t_1, y_1, \theta_0, \bz_1^\top \bbf_0, \bg_0(\bz_1))] = 0$ and $\E [\nabla_{\bg(\bz_1)} [ m(t_1, y_1, \theta_0, \bz_1^\top \bbf_0, \bg_0(\bz_1))] = 0$. Thus the solution of the empirical moment equations is first-order insensitive to errors arising from using $\bbtf, \btg$ in place of $\bbf_0$ and $\bg_0$. Data splitting is further used to create independence across the two stages of the procedure. 
In the context of testing linearly-constrained hypotheses of the parameter $\bbeta_0$, \citet{zhu2018linear} propose a two-stage OM test statistic based on the transformed $f$ moments introduced above; they do not use cross-fitting and specifically employ adaptive Dantzig-like selectors to estimate $\bbf_0$ and $\bg_0$.
Finally, the thresholding step allows us to control the variance increase that might arise from $\mu_2$ being too small and thereby enables our non-asymptotic prediction risk bounds. Before presenting the analysis of the OM estimator \cref{eq:om-estimator} we introduce another condition\footnote{This assumption is not essential to our result and could be replaced by assuming $\eta_i$ satisfies $\E[\eta_i | \bz_i] = 0$ and is almost surely (w.r.t. to $\bz_i$) sub-Gaussian with a uniformly (w.r.t. to $\bz_i$) bounded variance parameter.}:
\begin{assumption}
    The noise $\eta_i$ is independent of $\bz_i$.
    \label{assump:noise2}
    \vspace{-.2cm}
\end{assumption}
Recall $\btg$ is evaluated on the (independent) second fold data $\bz$. We now obtain our central guarantee for the OM estimator (proved in \cref{sec:app_oml_ub}).
\begin{theorem} \label{thm:oml_ub}
    Let \cref{assump:well_spec,,assump:cov,,assump:design,,assump:noise1,,assump:noise2} hold, and assume that $\bg_0(\bz_i) = \bg_0^\top \bz_i$ in \cref{eq:oml_model} for $\bg_0 = \argmin_{\bg} \E[(t_1-\bz_1^\top \bg)^2]$. 
    Then the thresholded orthogonal ML estimator $\yom$ of \cref{eq:om-estimator} with $\tau = \frac{1}{4}\sigma_{\eta}^2$ satisfies
    \begin{talign}
        & \E[(\yom-\xstar^\top \bbeta_0)^2] \leq \\
        &\normt{\xstar}^2 \left[ O(\frac{ \sigma_\epsilon^2}{\sigma_{\eta}^2 n}) +
        O(\frac{\bratet^2 \gratet^2}{(\sigma_{\eta}^2)^2})  + O(\frac{\bratet^2 \sigma_\eta^2 +   \gratet^2 \sigma_{\epsilon}^2}{(\sigma_{\eta}^2)^2 n})  \right] \label{eq:om_main_guarantee}
    \end{talign}
    where $\bratet =  (\E[\Vert 
\hat{\bbeta}-\bbeta_0 \Vert_2^4])^{1/4}$ and $\gratet = (\E [ (\btg(\bz_n)-\bg_0(\bz_n))^4 ])^{1/4}$  denote the expected prediction errors of the first-stage estimators. 
\end{theorem}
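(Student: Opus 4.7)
The plan is to condition on the first fold so that $\bbtf$, $\btg$, and $\mu_2$ become deterministic, and then exploit that the second-fold samples used in \cref{eq:om-estimator} are fresh i.i.d.\ draws independent of the first fold. I would split the analysis according to whether $\mu_2 \geq \tau = \sigma_\eta^2/4$ or not. Substituting $y_i = \theta_0 t_i + \bz_i^\top \bbf_0 + \epsilon_i$ and $t_i - \btg(\bz_i) = \eta_i + \delta_i$ (with $\delta_i := \bg_0^\top \bz_i - \btg(\bz_i)$) into the numerator of \cref{eq:om-estimator} gives, on $\{\mu_2 \geq \tau\}$,
\begin{equation*}
\yom - \theta_0 \;=\; \frac{1}{\mu_2} \cdot \frac{2}{n}\sum_{i=n/2+1}^{n} \left[(\bbf_0-\bbtf)^\top \bz_i + \epsilon_i\right]\left[\delta_i + \eta_i\right],
\end{equation*}
which expands into four empirical averages $T_1+T_2+T_3+T_4$ with summands $(\bbf_0-\bbtf)^\top \bz_i \eta_i$, $(\bbf_0-\bbtf)^\top \bz_i\,\delta_i$, $\epsilon_i \eta_i$, and $\epsilon_i \delta_i$. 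The reparametrization identity $\|\bbf_0-\bbtf\|_2 \leq \|\bU(\bbeta_0-\hat{\bbeta})\|_2 = \|\xstar\|_2 \|\bbeta_0-\hat{\bbeta}\|_2$ (from the block structure of $\bU$) translates $\bratet$ into an $L^4$ bound on $\|\bbf_0-\bbtf\|_2$ at the cost of the $\|\xstar\|_2$ prefactor appearing in the theorem.

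For the good event I would bound each $\E[T_j^2]$ using the conditional independence from data splitting. The terms $T_1$, $T_3$, $T_4$ have zero conditional mean (via \cref{assump:noise2} and $\epsilon_i \Perp (\bx_i, \eta_i)$), so their second moments are $O(1/n)$ times conditional variances bounded by $C_{\max} \sigma_\eta^2 \|\bbf_0-\bbtf\|_2^2$, $\sigma_\epsilon^2 \sigma_\eta^2$, and $\sigma_\epsilon^2 \E[\delta_1^2 \mid \text{first}]$ respectively; combined with $\mu_2^{-2} \leq 16/\sigma_\eta^4$ these produce the $\|\xstar\|_2^2 \sigma_\epsilon^2/(\sigma_\eta^2 n)$ and $\|\xstar\|_2^2(\bratet^2 \sigma_\eta^2 + \gratet^2 \sigma_\epsilon^2)/(\sigma_\eta^4 n)$ contributions. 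The key Neyman-orthogonality term is $T_2$: writing $a_i b_i := (\bbf_0-\bbtf)^\top \bz_i \cdot \delta_i$, a conditional second-moment decomposition gives $\E[T_2^2 \mid \text{first}] \leq \tfrac{4}{n}\E[a_1^2 b_1^2 \mid \text{first}] + 4(\E[a_1 b_1 \mid \text{first}])^2$, with the squared conditional mean dominant. Applying Cauchy--Schwarz conditionally yields $(\E[a_1 b_1 \mid \text{first}])^2 \leq C_{\max} \|\bbf_0-\bbtf\|_2^2 \cdot \E[\delta_1^2 \mid \text{first}]$, and then applying Cauchy--Schwarz marginally together with conditional Jensen on the $\delta$-factor produces $\E[T_2^2] \lesssim \|\xstar\|_2^2 \bratet^2 \gratet^2$, yielding the middle $O(\|\xstar\|_2^2 \bratet^2 \gratet^2/\sigma_\eta^4)$ term after dividing by $\mu_2^2$.

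On the bad event $\{\mu_2 < \tau\}$ the fallback $\yom = \xstar^\top \hat{\bbeta}$ incurs squared error at most $\|\xstar\|_2^2 \|\hat{\bbeta}-\bbeta_0\|_2^2$, so by Cauchy--Schwarz $\E[\|\hat{\bbeta}-\bbeta_0\|_2^2 \mone\{\mu_2 < \tau\}] \leq \bratet^2 \sqrt{\Pr[\mu_2 < \tau]}$. Since $\E[\mu_2 \mid \text{first}]$ equals $\sigma_\eta^2$ plus a bias term linear in $\delta$ (controlled by $\gratet$) and $\Var(\mu_2 \mid \text{first}) = O((\sigma_\eta^4 + \gratet^4)/n)$, Chebyshev makes $\Pr[\mu_2 < \tau]$ small enough that this contribution is absorbed into the three listed terms. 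The main obstacle I anticipate is the $T_2$ bound: the two nuisance errors enter in opposite slots of an inner product, so one must combine the $L^4$ moments $\bratet$ and $\gratet$ via a nested Cauchy--Schwarz (one conditional on the first fold, one marginal) to land cleanly on the product $\bratet\gratet$ rather than a weighted sum of squares. A secondary subtlety is that $1/\mu_2$ is itself random, and it is the threshold $\tau = \sigma_\eta^2/4$ that makes both the $\{\mu_2 \geq \tau\}$ bound and the bad-event Chebyshev argument tractable without introducing spurious higher-order terms.
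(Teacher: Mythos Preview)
Your decomposition on the good event $\{\mu_2\ge\tau\}$ is essentially the paper's argument: the four terms $T_1,\dots,T_4$ are exactly the paper's $B_1,C,A,B_2$ (arising from a Taylor expansion of the moment function at $(\bbf_0,\bg_0)$), and your treatment of each term---including the nested Cauchy--Schwarz for $T_2$---matches the paper's. The reparametrization identity $\|\bbf_0-\bbtf\|_2\le\|\xstar\|_2\|\hat\bbeta-\bbeta_0\|_2$ is also correct.

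The gap is in the bad-event analysis. Chebyshev on $\mu_2$ only yields $\Pr[\mu_2<\tau]=O(1/n)$, so after Cauchy--Schwarz the fallback contributes $\|\xstar\|_2^2\,\bratet^2\,\sqrt{\Pr[\mu_2<\tau]}=O\bigl(\|\xstar\|_2^2\,\bratet^2/\sqrt{n}\bigr)$. This is \emph{not} dominated by any of the three stated terms: the only $\bratet^2$ term in the bound is $\|\xstar\|_2^2\,\bratet^2\sigma_\eta^2/(\sigma_\eta^4 n)$, which is a factor $\sqrt{n}$ smaller. The paper instead proves $\Pr[\mu_2<\tau]=O(1/n^2)$ by splitting $t_i(t_i-\btg(\bz_i))$ into a centered sub-exponential part $\eta_i^2+\bg_0^\top\bz_i\,\eta_i$ (handled by a sub-exponential tail bound) and a nuisance part $\bg_0^\top\bz_i\cdot\delta_i$ handled by a \emph{fourth-moment} Markov/Marcinkiewicz--Zygmund inequality, which delivers the needed $1/n^2$ rate. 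There is a second, related issue with your conditional Chebyshev: the conditional mean $\E[\mu_2\mid\text{first fold}]=\sigma_\eta^2+\E[\bg_0^\top\bz_1\,\delta_1\mid\text{first fold}]$ need not exceed $\tau$ on first-fold realizations where $\btg$ is poor, so you would additionally have to control the event $\{\E[\delta_1^2\mid\text{first fold}]\text{ large}\}$, and doing so via Markov again falls short of the required rate. Upgrading Chebyshev to a fourth-moment tail bound, as the paper does, fixes both problems simultaneously.
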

Since we are interested in the case where $\hat{\bbeta}$ and $\btg(\cdot)$ have small error (i.e., $\bratet = \gratet = o(1)$), the first term in \cref{eq:om_main_guarantee} can be interpreted as the variance of the estimator's prediction along the direction of $\xstar$, while the remaining terms represent the reduced bias of the estimator. We first instantiate this result in the setting where both $\bbeta_0$ and $\bg_0$ are estimated using ridge regression (see \cref{sec:app_oml_ub_ridge} for the corresponding proof).
\begin{corollary}[OM Ridge]
\label{cor:oml_ub_ridge}
    Assume $\normi{\bbeta_0}/\sigma_{\epsilon} = O(1)$. In the setting of \cref{thm:oml_ub}, suppose $\hat{\bbeta}$ and $\hat{\bg}(\bz_i) = \hat\bg^\top \bz_i$ are fit with the ridge estimator with regularization parameters $\lambda_{\bbeta}$ and $\lambda_{\bg}$ respectively. Then there exist universal constants $c_{1:5}$ such that if $ p \geq 20$, $c_1 \frac{n^2 \Cmin}{p \cond} e^{-n c_2/\kappa^4 \cond^2} \leq \lambda_{\bbeta} \leq c_3 \left(\cond \Cmax n \right)^{1/3}$, and $c_4 \frac{n^2 \Cmin}{p \cond} e^{-n c_2/\kappa^4 \cond^2} \leq \lambda_{\bg} \leq p \left(\frac{\Cmax \normt{\xstar}^2}{\cond} \frac{n}{p}  \sigma_{\eta}^4 \right)^{1/3}$ for $n \geq c_5 \kappa^4 \cond^2 p$,
    \begin{talign}
         & \E[(\yom-\xstar^\top \bbeta_0)^2] \\
         &\leq   \normt{\xstar}^2 \left[ O(\frac{ \sigma_\epsilon^2}{\sigma_{\eta}^2 n}) +  O(\frac{p^2}{(\sigma_{\eta}^2)^2 n^2}) + O(\frac{p (\sigma_{\eta}^2 +  \sigma_{\epsilon}^2)}{(\sigma_{\eta}^2)^2 n^2}) \right].
    \end{talign}
\end{corollary}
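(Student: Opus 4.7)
\textbf{Proof proposal for Corollary \ref{cor:oml_ub_ridge}.}
The plan is to invoke \cref{thm:oml_ub} directly and then to bound the two statistical rates $\bratet$ and $\gratet$ when both first-stage regressions are performed by ridge. Since the design is sub-Gaussian and the reparametrized noise $\eta_i$ is sub-Gaussian with $\E[\eta_i \mid \bz_i]=0$, the conditional expectation $\bg_0(\bz_i)$ is linear in $\bz_i$, so the linearity assumption in \cref{thm:oml_ub} is satisfied. The hypothesis $\normi{\bbeta_0}/\sigma_\epsilon = O(1)$ gives $\normt{\bbeta_0}^2 = O(p\sigma_\epsilon^2)$, and the change of basis by the orthogonal matrix $\bU/\normt{\xstar}$ yields comparable control on $\bbf_0$ and $\bg_0$ in terms of $\sigma_\eta, \Cmin, \Cmax$, and $\normt{\xstar}$. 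After plugging the resulting rates into \cref{eq:om_main_guarantee} and observing the cancellation of the $\normt{\xstar}^2$ prefactor against the $1/\normt{\xstar}^2$ appearing in the $\gratet$ analysis, the three terms in the claimed bound emerge.

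The heart of the argument is a fourth-moment bound for ridge. Using the closed form $\hat{\bbeta}-\bbeta_0 = (\bX^\top\bX + \lambda_{\bbeta} \bI)^{-1}(\bX^\top\bepsilon - \lambda_{\bbeta}\bbeta_0)$ I would decompose
\baligns
\E\bigl[\normt{\hat{\bbeta}-\bbeta_0}^4\bigr]
= \E\bigl[\normt{\hat{\bbeta}-\bbeta_0}^4 \mone_{\mE}\bigr] + \E\bigl[\normt{\hat{\bbeta}-\bbeta_0}^4 \mone_{\mE^c}\bigr]
\ealigns
where $\mE = \{\sigma_{\min}(\tfrac{1}{n}\bX^\top\bX) \geq \Cmin/2\}$. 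On $\mE$, the usual bias/variance decomposition combined with sub-Gaussian concentration of $\normt{\bX^\top\bepsilon}$ gives $\E[\normt{\hat{\bbeta}-\bbeta_0}^4 \mone_{\mE}]^{1/2} \lesssim \tfrac{\sigma_\epsilon^2 p}{n} + \tfrac{\lambda_{\bbeta}^2\normt{\bbeta_0}^2}{n^2}$, and here the upper bound $\lambda_{\bbeta} \leq c_3(\cond \Cmax n)^{1/3}$ ensures this is dominated by $O(\sigma_\epsilon^2 p/n)$. By standard sub-Gaussian concentration for sample covariances of sub-Gaussian vectors, $P(\mE^c) \leq e^{-c_2 n/(\kappa^4\cond^2)}$ once $n \gtrsim \kappa^4 \cond^2 p$. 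On $\mE^c$, I would use only the universal ridge bound $\normt{\hat\bbeta}^2 \leq \normt{\by}^2/\lambda_{\bbeta}$ together with $\normt{\by}^2 \lesssim n(\sigma_\epsilon^2 + \Cmax\normt{\bbeta_0}^2)$ (in expectation, with sub-exponential tails) to bound $\E[\normt{\hat{\bbeta}-\bbeta_0}^4 \mone_{\mE^c}]$ via Cauchy--Schwarz by $O((np)^4/\lambda_{\bbeta}^2) \cdot e^{-c_2 n/(2\kappa^4\cond^2)}$. Demanding that this bad-event contribution be of lower order than $(\sigma_\epsilon^2 p/n)^2$ is precisely the source of the unusual lower bound $\lambda_{\bbeta} \gtrsim \tfrac{n^2 \Cmin}{p\cond} e^{-c_2 n/(\kappa^4\cond^2)}$. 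Combining both events yields $\bratet^2 = O(\sigma_\epsilon^2 p/n)$.

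For $\gratet$, independence of the second-fold covariate $\bz_n$ from the first-stage estimate $\hat\bg$ is the crucial structural fact. Conditionally on $\hat\bg$, the quantity $(\hat\bg-\bg_0)^\top \bz_n$ is sub-Gaussian with variance parameter $(\hat\bg-\bg_0)^\top \bSigma_{\bz}(\hat\bg-\bg_0)$, where $\bSigma_{\bz} = \bR\bSigma\bR^\top/\normt{\xstar}^2$ has operator norm at most $\Cmax/\normt{\xstar}^2$. Hence $\gratet^2 \lesssim \kappa^2 \Cmax/\normt{\xstar}^2 \cdot (\E[\normt{\hat\bg-\bg_0}^4])^{1/2}$, and the fourth-moment bound on $\normt{\hat\bg-\bg_0}$ proceeds exactly as in the $\bbeta$ case, with $\sigma_\epsilon$ replaced by $\sigma_\eta$, $\normt{\bbeta_0}^2$ replaced by the analogously bounded $\normt{\bg_0}^2$, the design being $\bZ^{(1)}$, and the same good-event/bad-event split producing the constraints on $\lambda_{\bg}$.

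The main obstacle is the fourth-moment control on the bad event, since the operator-norm lower bound on $\tfrac{1}{n}\bX^\top\bX$ can fail and $\normt{\hat\bbeta-\bbeta_0}^2$ can then be as large as $\normt{\by}^2/\lambda_{\bbeta}$. This forces the subtle interplay between $\lambda_{\bbeta}$ (which bounds the blow-up) and the exponentially small failure probability (which amortizes it), giving the lower bound on $\lambda_{\bbeta}$ in the statement. Once this is done carefully, the remainder is bookkeeping: substitute $\bratet^2 = O(\sigma_\epsilon^2 p/n)$ and $\gratet^2 = O(\sigma_\eta^2 p/n)/\normt{\xstar}^2 \cdot \kappa^2 \Cmax$ into \cref{eq:om_main_guarantee}, let the $\normt{\xstar}^2$ prefactor cancel the $1/\normt{\xstar}^2$ in $\gratet^2$ where appropriate, and collect the three displayed terms.
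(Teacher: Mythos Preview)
Your proposal is correct and follows essentially the same route as the paper: invoke \cref{thm:oml_ub}, then bound $\bratet^2$ and $\gratet^2$ by a fourth-moment ridge analysis that splits on the good event $\mE=\{\sigma_{\min}(\hbSigma)\geq \Cmin/2\}$, uses the bias--variance decomposition on $\mE$, a crude $\normt{\hat\bbeta}^2\lesssim \normt{\bepsilon}^2/\lambda+\normt{\bbeta_0}^2$ bound on $\mE^c$, and balances the latter against the exponential probability of $\mE^c$ to obtain the stated lower bound on the regularization parameters. The paper packages this as a standalone lemma (its \cref{lem:ridge_total_ub}, supported by \cref{lem:ridge_uncond} and \cref{lem:bad_ridge_bound}) and handles the $\normt{\xstar}$ scaling by normalizing $\xstar$ to unit norm up front rather than tracking it through $\gratet$ as you do, but the content is the same.

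One small caution: your sentence deriving linearity of $\bg_0$ from sub-Gaussianity and $\E[\eta_i\mid\bz_i]=0$ is not quite right as written (sub-Gaussianity alone does not force the conditional mean to be linear); in the paper's setup linearity is part of the hypotheses of \cref{thm:oml_ub} (and is consistent with \cref{assump:noise2}), so you should simply cite it rather than argue it.
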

Similarly, when $\bbeta_0$ and $\bg_0$ are estimated using the Lasso we conclude the following (proved in \cref{sec:app_oml_ub_ridge}).
\begin{corollary}[OM Lasso]
\label{cor:oml_ub_lasso}
 In the setting of \cref{thm:oml_ub}, suppose $\hat{\bbeta}$ and $\hat{\bg}(\bz_i) = \hat\bg^\top \bz_i$ are fit with the Lasso with regularization parameters $\lambda_{\bbeta} \geq 80 \sigma_{\epsilon} \sqrt{\log(2ep/s_{\bbeta_0})/n}$ and $\lambda_{\bg} \geq 80 \sigma_{\eta} \sqrt{\log(2ep/s_{\bg})/n}$ respectively. If $p \geq 20$, $s_{\bbeta_0}=\norm{\bbeta_0}_0$, and $s_{\bg_0}=\norm{\bg_0}_0$, then there exist universal constants $c_1, c_2$ such that if $\normi{\bbeta_0}/\sigma_{\epsilon}=o(e^{c_1 n})$, then for $n \geq \frac{c_1 \kappa^4}{C_{\min}} \max \{ s_{\bbeta_0}, s_{\bg} \} \log(2ep)$,
    \begin{talign}
         & \E[(\yom-\xstar^\top \bbeta_0)^2] \leq  \\
         & \normt{\xstar}^2 \left[ O(\frac{ \sigma_\epsilon^2}{\sigma_{\eta}^2 n}) +  O(\frac{\lambda_{\bbeta}^2 \lambda_{\bg}^2 s_{\bbeta_0} s_{\bg_0}}{(\sigma_{\eta}^2)^2}) + O(\frac{\lambda_{\bbeta}^2 s_{\bbeta_0} \sigma_{\eta}^2 + \lambda_{\bg}^2 s_{\bg_0} \sigma_{\epsilon}^2}{(\sigma_{\eta}^2)^2 n}) \right].
    \end{talign}
\end{corollary}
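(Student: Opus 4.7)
The natural plan is to apply Theorem \ref{thm:oml_ub} directly: it already provides the bound \cref{eq:om_main_guarantee} in terms of $\bratet = (\E[\Vert \hat{\bbeta}-\bbeta_0 \Vert_2^4])^{1/4}$ and $\gratet = (\E[(\btg(\bz_n)-\bg_0(\bz_n))^4])^{1/4}$. Hence the entire corollary reduces to establishing, under the stated assumptions,
\begin{talign*}
    \bratet^2 \lesssim s_{\bbeta_0}\, \lambda_{\bbeta}^2 \qtext{and} \gratet^2 \lesssim s_{\bg_0}\, \lambda_{\bg}^2,
\end{talign*}
with implicit constants depending only on $\kappa, C_{\min}, C_{\max}, \cond$; substituting these into \cref{eq:om_main_guarantee} yields exactly the stated bound.

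\emph{Bounding $\bratet$.} Under the prescribed choice of $\lambda_{\bbeta}$, the standard sub-Gaussian Lasso analysis (e.g., \citet[Ch.~7]{wainwright2017highdim}) furnishes an event $E_\bbeta$ of probability $\geq 1 - e^{-cn}$, valid under the sample-size condition $n \gtrsim s_{\bbeta_0}\kappa^4 \log p/ C_{\min}$, on which a restricted-eigenvalue condition with modulus $\asymp C_{\min}$ holds simultaneously with $\normi{\bX^\top \bepsilon/n} \leq \lambda_{\bbeta}/2$, so that $\Vert \hat{\bbeta}-\bbeta_0\Vert_2^2 \lesssim s_{\bbeta_0} \lambda_{\bbeta}^2 / C_{\min}$. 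To promote this in-probability bound to a fourth-moment bound, I split the expectation: on $E_\bbeta^c$, the basic inequality $\frac{1}{2n}\Vert \by -\bX\hat{\bbeta}\Vert_2^2 + \lambda_{\bbeta} \Vert \hat{\bbeta}\Vert_1 \leq \frac{1}{2n}\Vert \by\Vert_2^2$ yields a deterministic worst-case bound that is polynomial in $n$ and in $\normi{\bbeta_0}, \sigma_\epsilon$. Since $\mathbb{P}(E_\bbeta^c) \leq e^{-c n}$ and the hypothesis $\normi{\bbeta_0}/\sigma_\epsilon = o(e^{c_1 n})$ with sufficiently small $c_1$ ensures that this tail contribution is negligible, we inherit $\bratet^2 \lesssim s_{\bbeta_0} \lambda_{\bbeta}^2 / C_{\min}$.

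\emph{Bounding $\gratet$.} The reparametrized covariate $\bz_i$ is a linear image of $\bx_i$, so by \cref{assump:design,assump:cov} it is sub-Gaussian with parameter $O(\kappa)$ and its covariance $\bSigma_{\bz}$ has spectrum in $[\Omega(C_{\min}), O(C_{\max})]$ (with the $\Vert\xstar\Vert_2$ scaling of $\bU$ tracked through to the $\Vert \xstar\Vert_2^2$ prefactor already present in \cref{eq:om_main_guarantee}). Because $\bz_n$ lies in the held-out second fold, it is independent of $\hat{\bg}$, so conditioning on the first fold and integrating over $\bz_n$ first, the standard moment inequality for sub-Gaussian quadratic forms gives
\begin{talign*}
    \E_{\bz_n}\!\left[((\hat{\bg}-\bg_0)^\top \bz_n)^4 \,\big|\, \hat{\bg}\right] \lesssim \kappa^4 \bigl((\hat{\bg}-\bg_0)^\top \bSigma_{\bz} (\hat{\bg}-\bg_0)\bigr)^2 \lesssim \kappa^4 C_{\max}^2 \Vert \hat{\bg}-\bg_0\Vert_2^4.
\end{talign*}
Since $t_i = \bg_0^\top \bz_i + \eta_i$ is itself a sparse sub-Gaussian linear model with noise parameter $\sigma_\eta$ and support $s_{\bg_0}$, the same argument as in the previous paragraph (with $\sigma_\epsilon$ replaced by $\sigma_\eta$ and $s_{\bbeta_0}$ by $s_{\bg_0}$) bounds $(\E\Vert \hat{\bg}-\bg_0\Vert_2^4)^{1/2} \lesssim s_{\bg_0} \lambda_{\bg}^2 / C_{\min}$, yielding $\gratet^2 \lesssim s_{\bg_0} \lambda_{\bg}^2$ up to the desired universal constants.

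\emph{Main obstacle.} The only non-routine step is the $L^4$ promotion of the Lasso estimation error: high-probability Lasso bounds are classical, but a priori the fourth moment could be dominated by tail events. The hypothesis $\normi{\bbeta_0}/\sigma_\epsilon = o(e^{c_1 n})$ is tailored precisely to absorb this tail, mirroring the analogous condition in \cref{cor:ub_jm_lasso}. The remaining bookkeeping --- verifying that the rotated design $\bz_i$ inherits sub-Gaussianity and comparable covariance constants from $\bx_i$, and that the $\Vert \xstar\Vert_2$ scaling in the definition of $\bU$ collapses cleanly into the $\Vert \xstar\Vert_2^2$ prefactor of \cref{eq:om_main_guarantee} --- is routine but must be tracked carefully so that the final constants depend only on $\kappa, C_{\min}, C_{\max}, \cond$ as claimed.
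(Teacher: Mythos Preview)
Your proposal is correct and follows essentially the same route as the paper: apply Theorem~\ref{thm:oml_ub}, then bound $\bratet^2 \lesssim s_{\bbeta_0}\lambda_{\bbeta}^2$ and $\gratet^2 \lesssim s_{\bg_0}\lambda_{\bg}^2$ by combining a high-probability Lasso $\ell_2$ bound on a good design event with a crude bound on its complement, using the exponential decay of the bad-event probability and the hypothesis $\normi{\bbeta_0}/\sigma_\epsilon = o(e^{c_1 n})$ to kill the tail. The paper executes this via its Lemmas~\ref{lem:lasso_upper_sg} and~\ref{lem:uncond_exp} (which integrate the Bellec--Lecu\'e--Tsybakov tail bound rather than a single good-event split, but to the same effect), and handles $\gratet$ identically after noting that the rotated design $\bz$ inherits the SRE condition. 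One small point you gloss over: for the $\bg$ regression there is no hypothesis on $\normi{\bg_0}$ in the statement, so to absorb that tail term the paper invokes the a priori bound $\normt{\bg_0}\leq\sqrt{\cond}$ from the precision-matrix identity (Remark~\ref{rmk:precision_rows}); you should make this explicit rather than folding it into ``the same argument as in the previous paragraph.''
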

We make several comments regarding the aforementioned results.
First, \cref{thm:oml_ub} possesses a double-robustness property. In order for the dominant bias term $O(\bratet^2 \gratet^2)$ to be small, it is sufficient for \textit{either} $\bbeta_0$ or $\bg_0$ to be estimated at a fast rate or \textit{both} to be estimated at a slow rate. As before, the estimator is transductive and adapted to predicting along the direction $\xstar$. Second, in the case of ridge regression, to match the lower bound of \cref{cor:ridge_opt_lb}, consider the setting where $n = \Omega( p^2)$, $\snr = o(\frac{p^2}{n})$, $\cos(\xstar, \bbeta_0)^2=\Theta(1)$ and $\snr \gtrsim  \frac{p}{n}$. Then, the upper bound\footnote{Note that in this regime, $\sqrt{\snr} = \normt{\bbeta_0}/\sigma_{\epsilon}=o(1)$ and hence the condition $\normi{\bbeta_0}/\sigma_{\epsilon}=O(1)$ in \cref{cor:oml_ub_ridge} is satisfied.} can be simplified to
    $
        O(\normt{\xstar}^2 \frac{\sigma_{\epsilon}^2}{n})
    $.
    By contrast, \cref{cor:ridge_opt_lb} shows the error of the optimally-tuned ridge estimator is lower bounded by $\omega(\normt{\xstar}^2 \frac{\sigma_{\epsilon}^2}{n})$; for example, the error is $\Omega(p \normt{\xstar}^2 \frac{\sigma_{\epsilon}^2}{n})$ when $\snr = 
\frac{1}{6}\frac{p}{n}$. Hence, the performance of the ridge estimator can be significantly worse then its transductive counterpart. Third, if we consider the setting of \cref{cor:oml_ub_lasso} where $n \gtrsim s_{\bbeta_0} s_{\bg_0} (\log p)^2$ while we take $\lambda_{\bbeta} \asymp \sigma_{\epsilon} \sqrt{\log p/n}$ and $\lambda_{\bg} \asymp \sigma_{\eta} \sqrt{\log p/n}$, the error of the OML estimator 
attains the fast, dimension-free $O(\normt{\xstar}^2 \frac{\sigma_{\epsilon}^2}{n})$ rate. On the other hand, \cref{cor:lb_lqball} shows the Lasso suffers prediction error $\Omega(\trimmednorm{\xstar}{s}^2  \frac{\sigma_{\epsilon}^2 \log p}{n})$, and hence again strict improvement is possible over the baseline when $\frac{\norm{\xstar}_2}{\trimmednorm{\xstar}{s}} \lesssim \sqrt{\log p}$. Finally, although \cref{thm:oml_ub} makes stronger assumptions on the design of $\bX$ than the JM-style estimator introduced in \cref{eq:jm_program} and \cref{eq:debias_lasso}, one of the primary benefits of the OM framework is its flexibility. All that is required for the algorithm are ``black-box'' estimates of $\bg_0$ and $\bbeta_0$ which can be obtained from more general ML procedures than the Lasso.

\section{Experiments}\label{sec:experiments}
We complement our theoretical analysis with a series of numerical experiments highlighting the failure modes of standard inductive prediction. In \cref{sec:excess_bias,sec:dist_shift}, error bars represent $\pm 1$ standard error of the mean computed over 20 independent problem instances. 
We provide complete experimental set-up details in \cref{sec:expts} and code replicating all experiments at \url{https://github.com/nileshtrip/SPTransducPredCode}.

\subsection{Excess Lasso Bias without Distribution Shift}
\label{sec:excess_bias}
We construct problem instances for Lasso estimation by independently generating $\bx_i \sim \mN(0, \bI_p)$,  $\epsilon_i \sim \mN(0,1)$, and $(\bbeta_0)_j \sim \mN(0,1)$ for $j$ less then the desired sparsity level $s_{\bbeta_0}$ while $(\bbeta_0)_j=0$ otherwise. 
We fit the Lasso estimator, JM-style estimator with Lasso pilot, and the OM $f$-moment estimator with Lasso first-stage estimators.
We set all hyperparameters to their theoretically-motivated values.
\begin{figure}[!ht]
\centering
\begin{minipage}[c]{.49\linewidth}
\includegraphics[width=\linewidth]{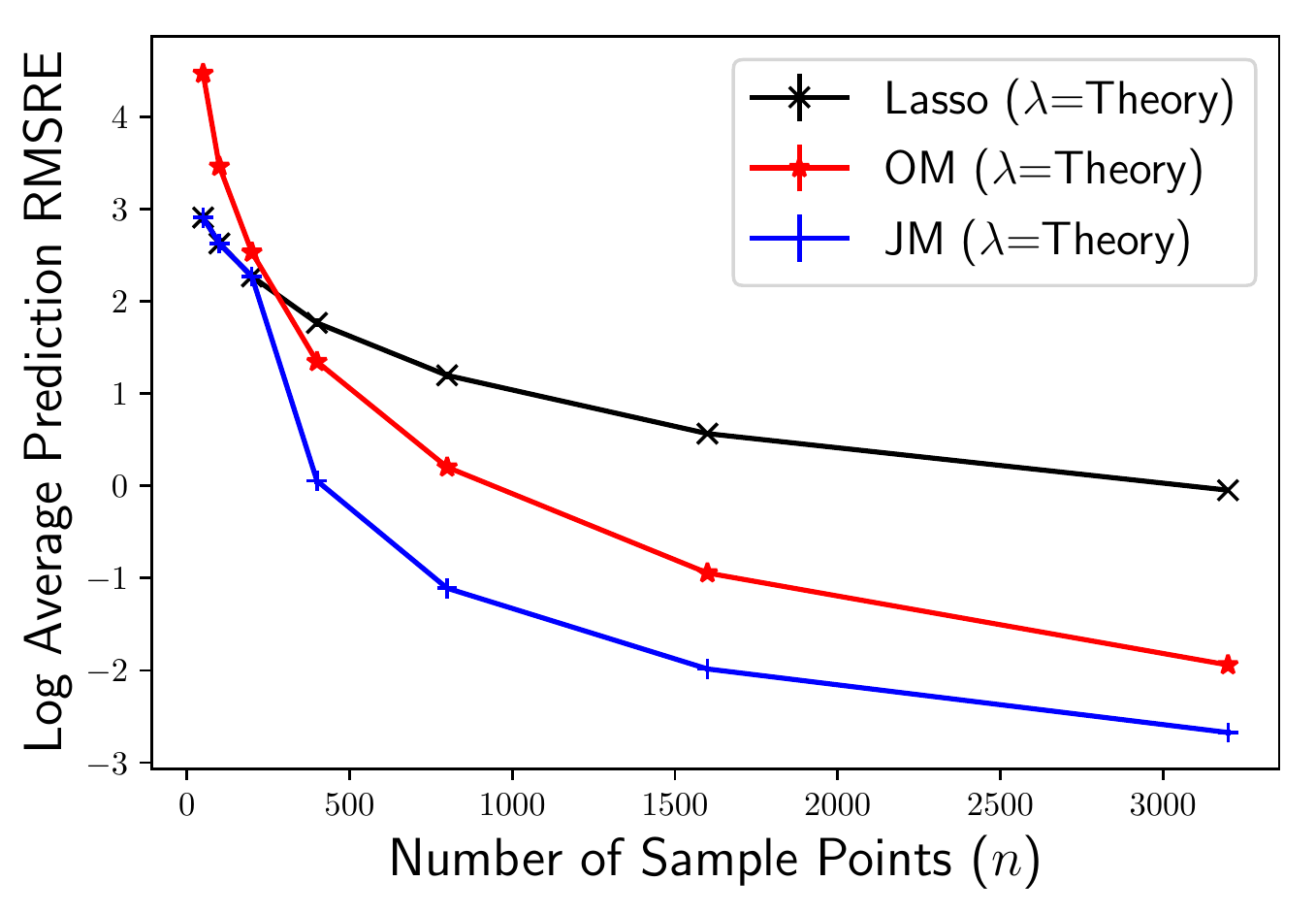}
\end{minipage}
\begin{minipage}[c]{.49\linewidth}
\includegraphics[width=\linewidth]{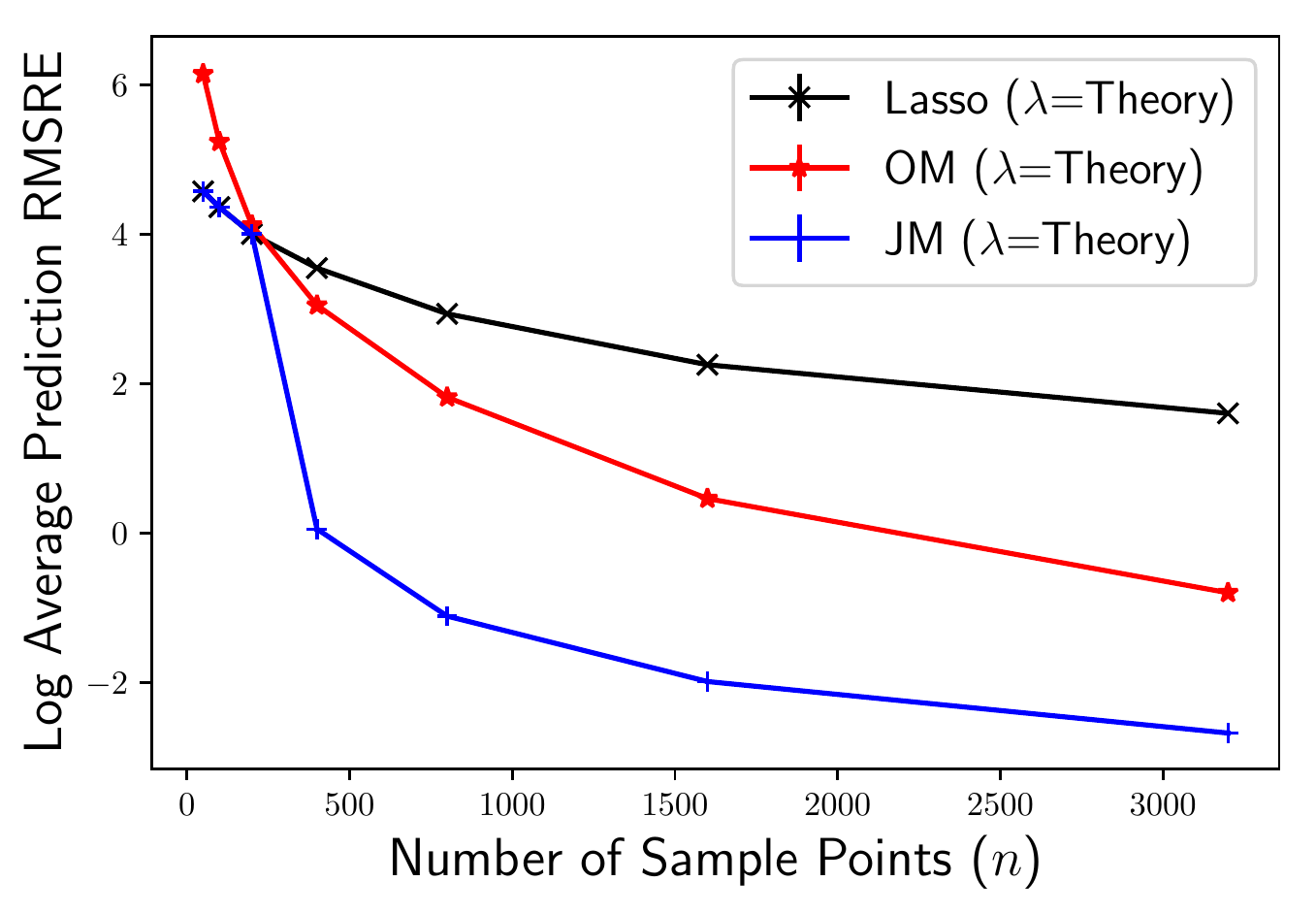}
\end{minipage}
\caption{Lasso vs. OM and JM Lasso prediction without distribution shift.
Hyperparameters are set according to theory (see \cref{sec:excess_bias}). Left: $p=200, s_{\bbeta_0}=20$. Right: $p=200, s_{\bbeta_0}=100$. 
}
\label{fig:synthetic_theory}
\end{figure}
As \myfig{synthetic_theory} demonstrates, both transductive methods significantly reduce the prediction risk of the Lasso estimator when the hyperparameters are calibrated to their theoretical values, even for a dense $\bbeta_0$ (where $\frac{p}{s_{\bbeta_0}} = 2$).

\subsection{Benefits of Transduction under Distribution Shift}\label{sec:dist_shift}
The no distribution shift simulations of \cref{sec:excess_bias} corroborate the theoretical results of \cref{cor:ub_jm_lasso,,cor:oml_ub_lasso}. However, since our transductive estimators are tailored to each individual test point $\xstar$, we expect these methods to provide an even greater gain when the test distribution deviates from the training distribution.

In \myfig{synthetic_distribution_shift_ridge1}, we consider two cases where the test distribution is either mean-shifted or covariance-shifted from the training distribution and evaluate the ridge estimator with the optimal regularization parameter for the training distribution, $\lambda_* = \frac{p \sigma_{\epsilon}^2}{\normt{\bbeta_0}^2}$. 
We independently generated $\bx_i \sim \mN(0, \bI_p)$,  $\epsilon_i \sim \mN(0,1)$, and $\bbeta_0 \sim \mN(0,\frac{1}{\sqrt{p}} \bI_p)$.
In the case with a mean-shifted test distribution, we generated $\xstar \sim \mN(10 \bbeta_0, \bI_p)$ for each problem instance while the covariance-shifted test distribution was generated by taking $\xstar \sim \mN(0, 100 \bbeta_0 \bbeta_0^\top)$. The plots in \myfig{synthetic_distribution_shift_ridge1} show the OM estimator with $\lambda_*$-ridge pilot provides significant gains over the baseline  $\lambda_*$-ridge estimator. 

\begin{figure}[!ht]
\centering
\begin{minipage}[c]{.49\linewidth}
\includegraphics[width=\linewidth]{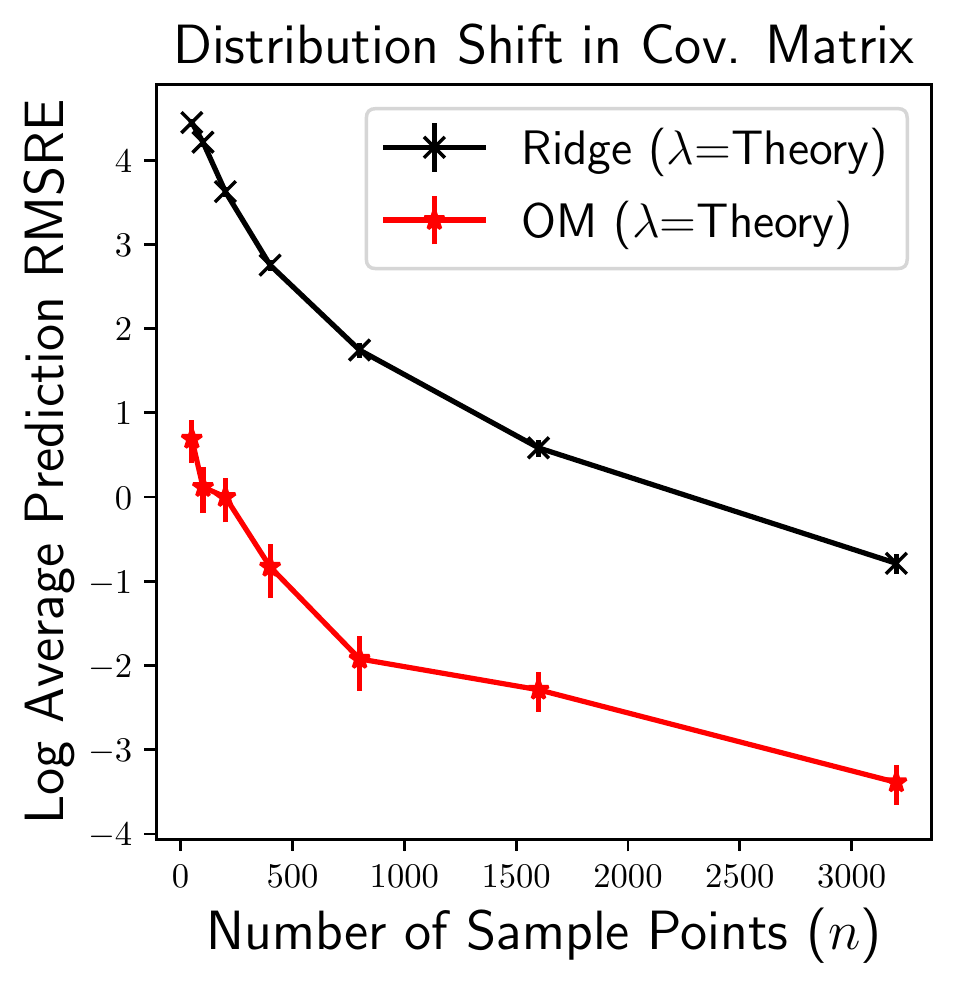}
\end{minipage}
\begin{minipage}[c]{.49\linewidth}
\includegraphics[width=\linewidth]{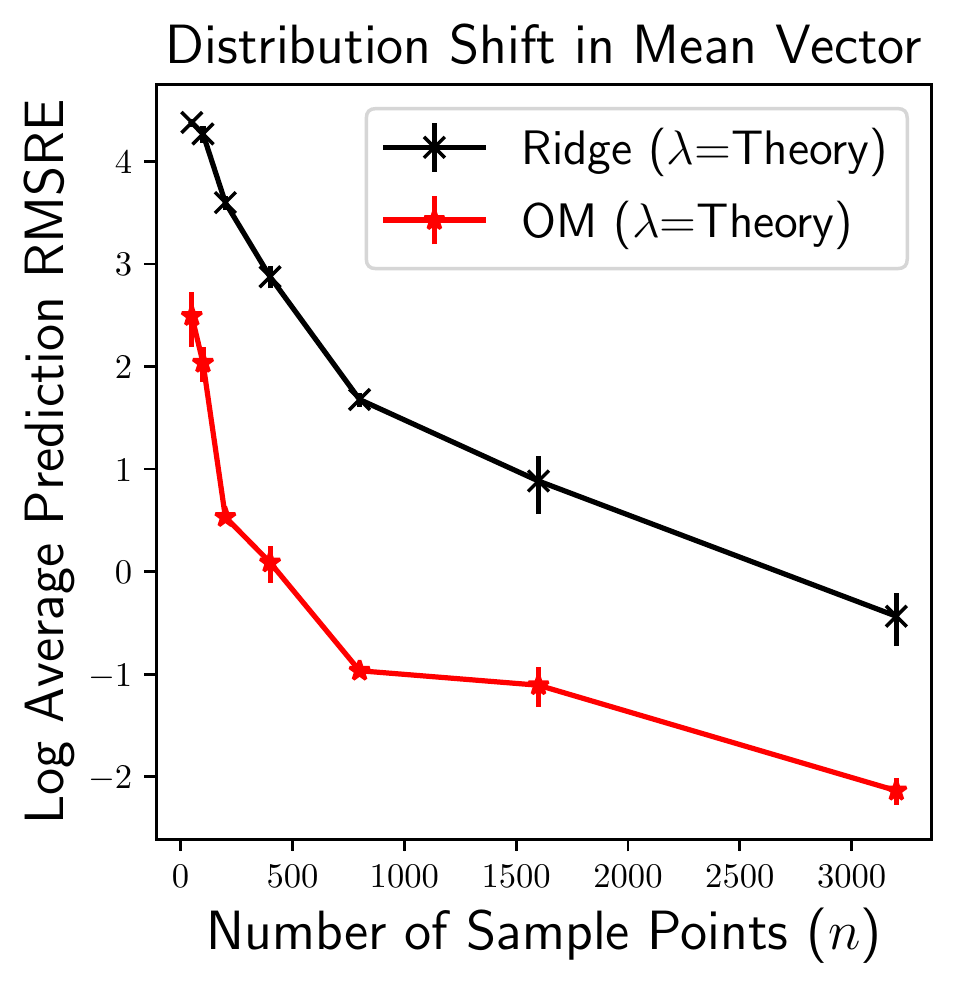}
\end{minipage}
\caption{Ridge vs. OM ridge prediction ($p=200$) under train-test distribution shift. 
Hyperparameters are set according to theory. 
}
\label{fig:synthetic_distribution_shift_ridge1}
\end{figure}

\begin{figure}[!ht]
\centering
\begin{minipage}[c]{.49\linewidth}
\includegraphics[width=\linewidth]{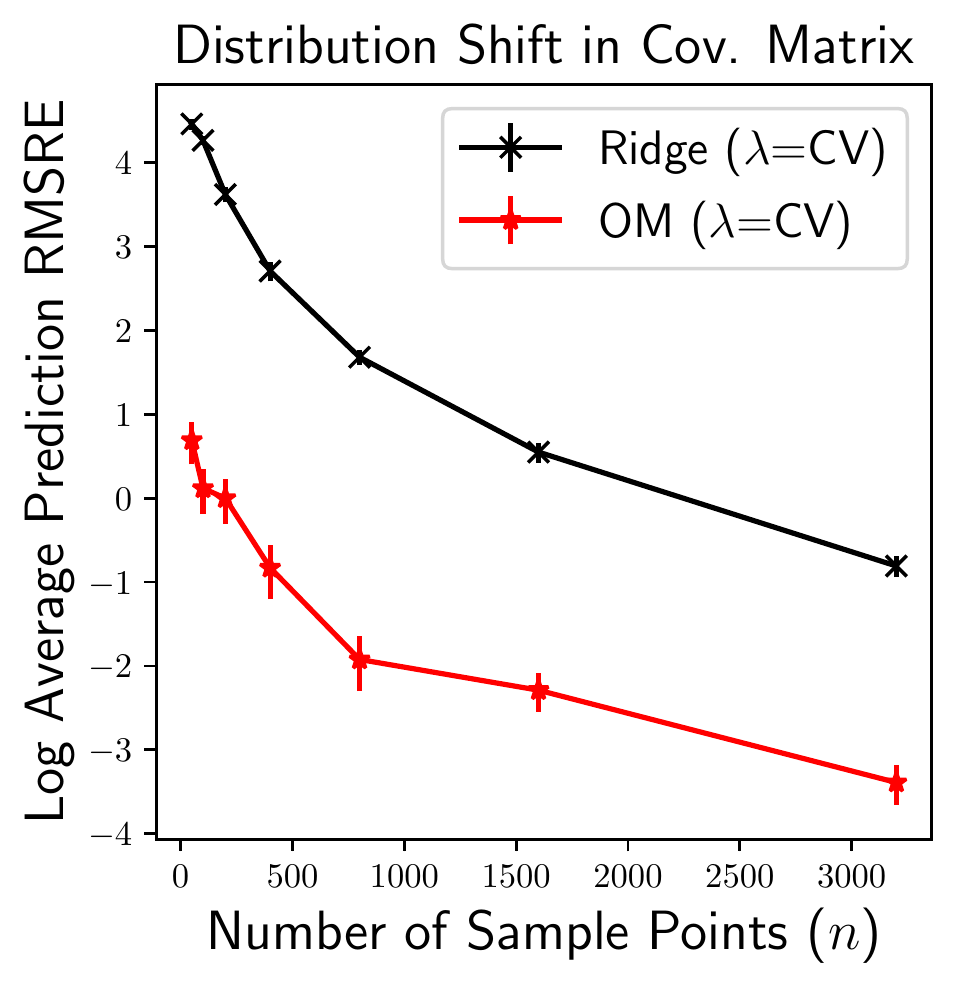}
\end{minipage}
\begin{minipage}[c]{.49\linewidth}
\includegraphics[width=\linewidth]{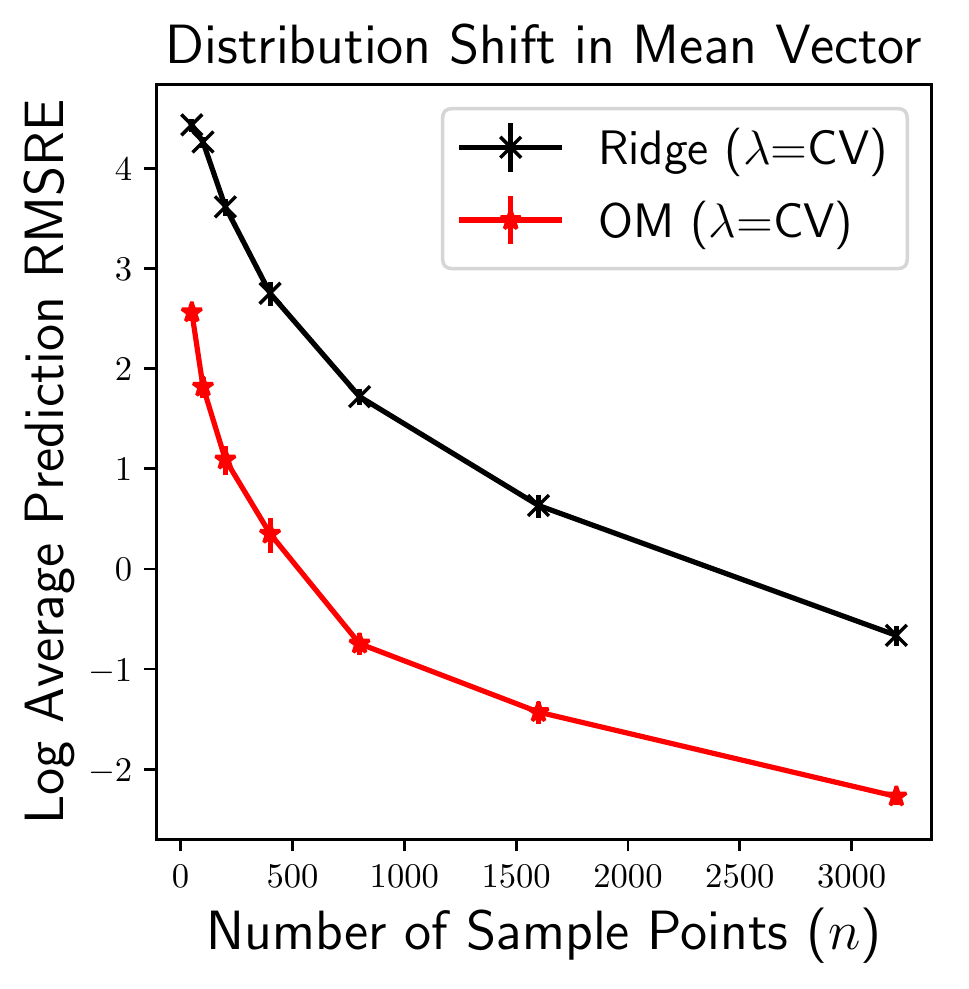}
\end{minipage}
\caption{Ridge vs. OM ridge prediction ($p=200$) under train-test distribution shift. 
Hyperparameters are set according to CV. 
}
\label{fig:synthetic_distribution_shift_ridge2}
\end{figure}

\begin{figure}[!ht]
\centering
\begin{minipage}[c]{.49\linewidth}
\includegraphics[width=\linewidth]{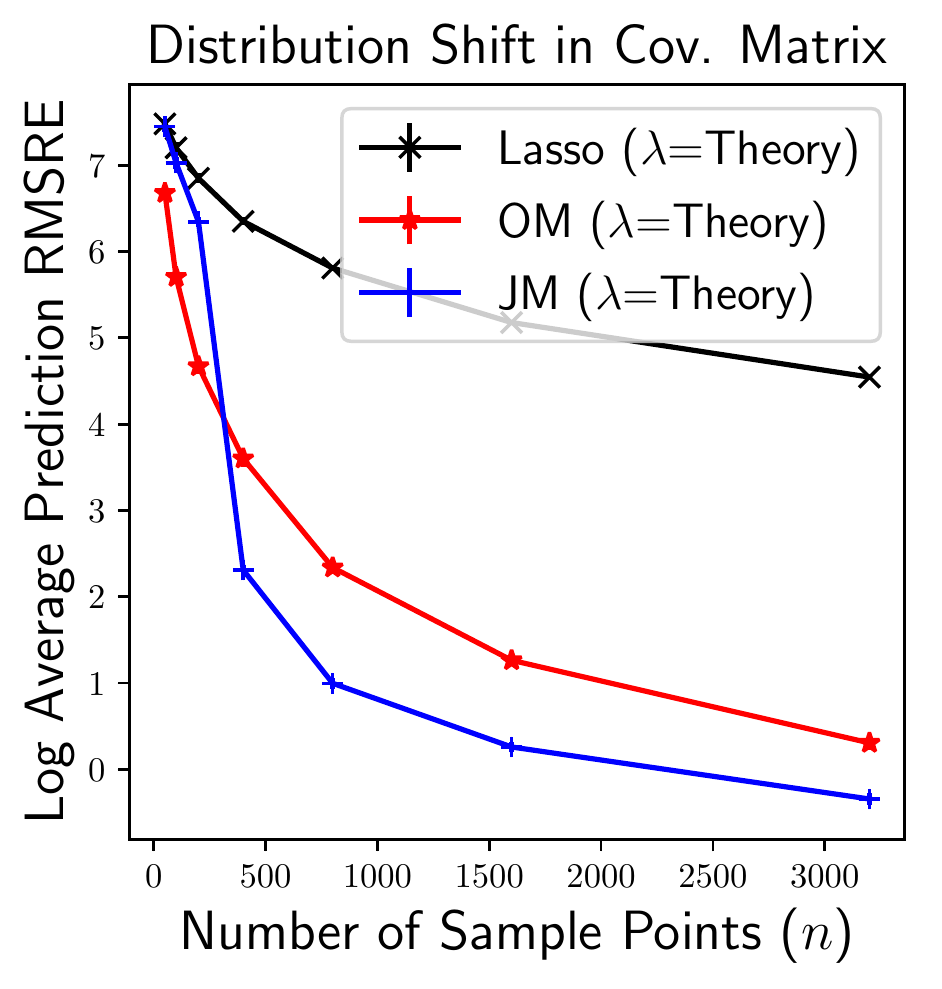}
\end{minipage}
\begin{minipage}[c]{.49\linewidth}
\includegraphics[width=\linewidth]{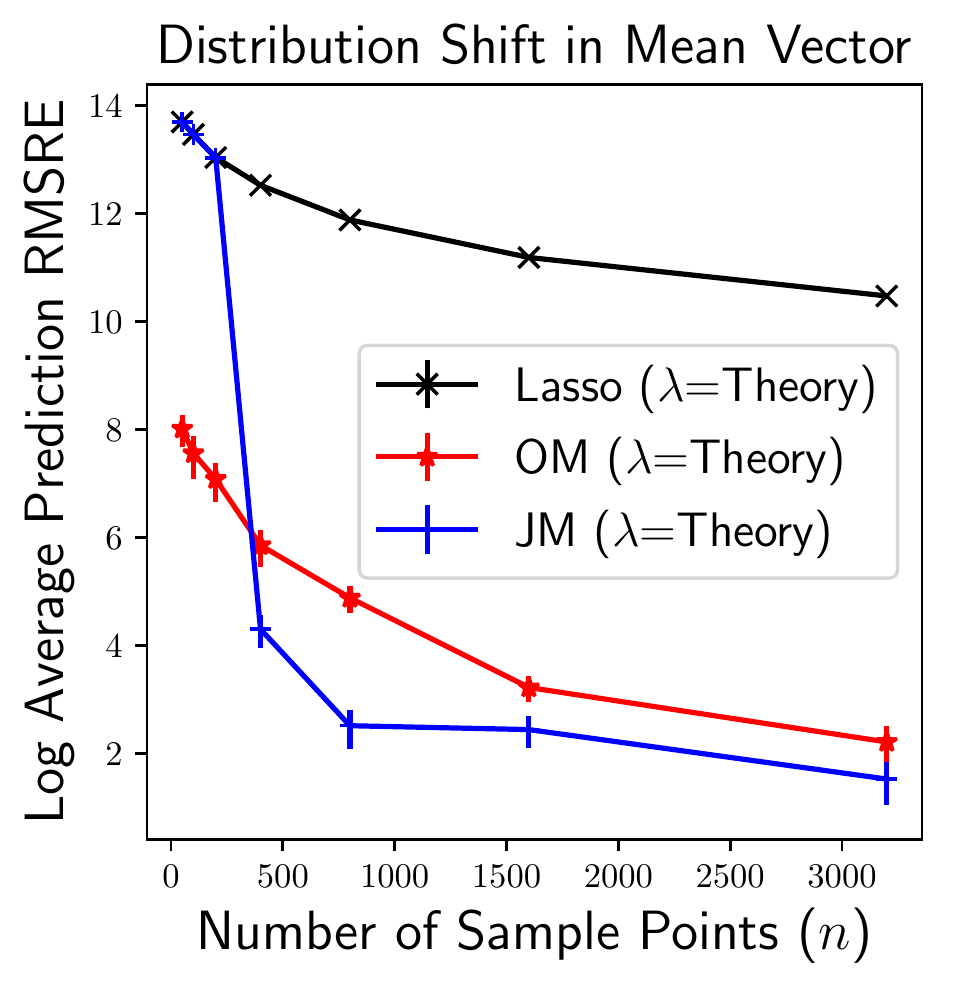}
\end{minipage}
\caption{Lasso vs. OM and JM Lasso prediction ($p=200$) under mean ($s_{\bbeta_0}=100$) or covariance ($s_{\bbeta_0}=20$) train-test distribution shifts. Hyperparameters are set according to theory. 
}
\label{fig:synthetic_distribution_shift1}
\end{figure}

\begin{figure}[!ht]
\centering
\begin{minipage}[c]{.49\linewidth}
\includegraphics[width=\linewidth]{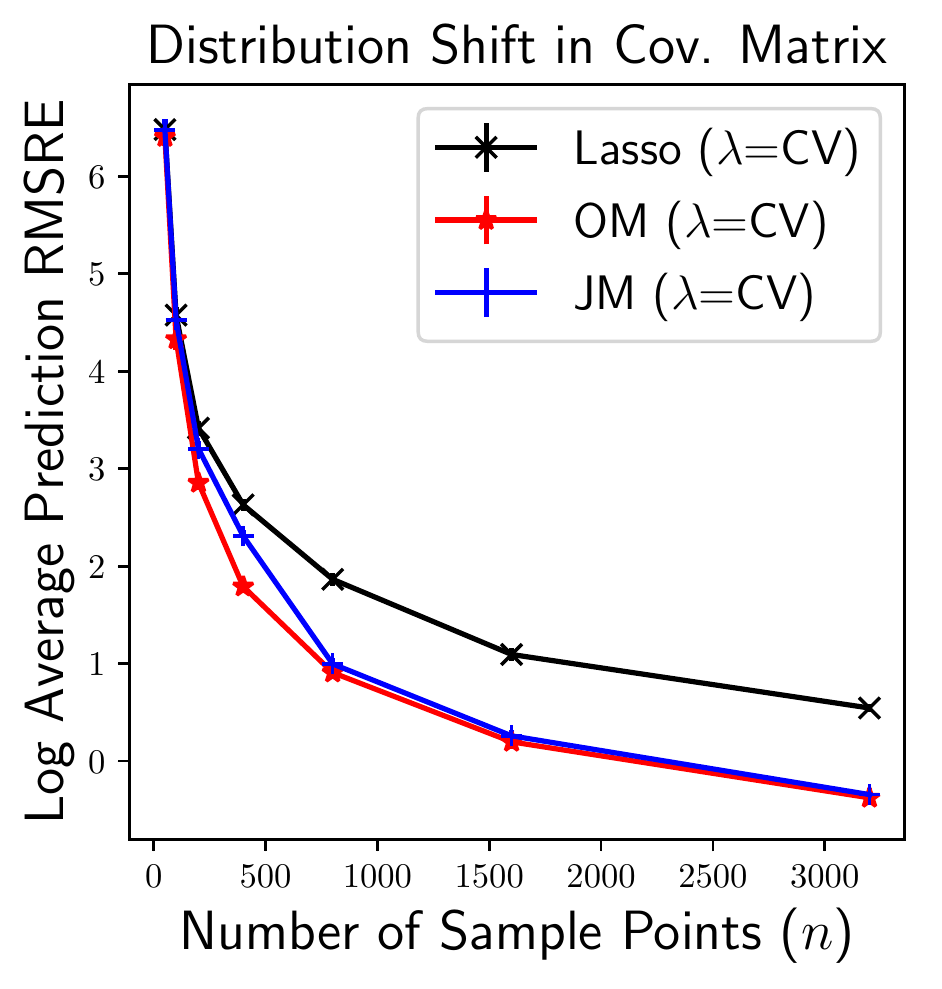}
\end{minipage}
\begin{minipage}[c]{.49\linewidth}
\includegraphics[width=\linewidth]{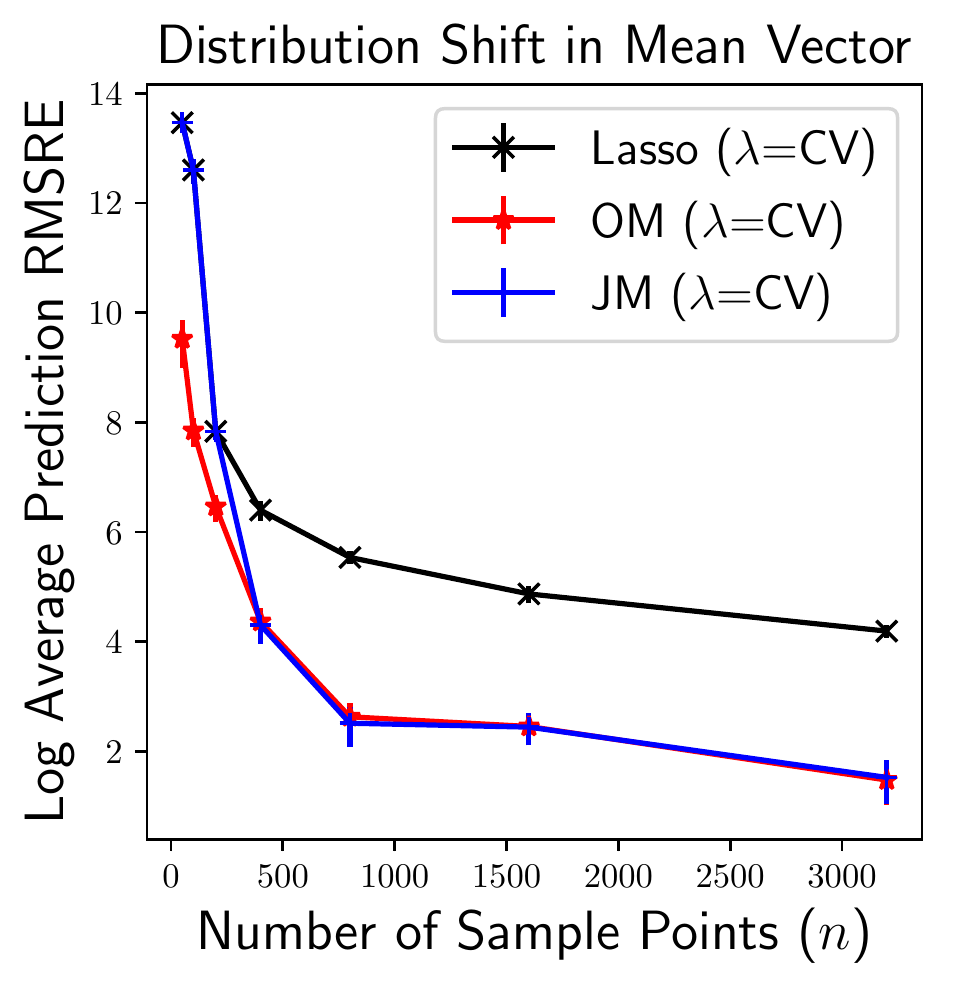}
\end{minipage}
\caption{Lasso vs. OM and JM Lasso prediction ($p=200$) under mean ($s_{\bbeta_0}=100$) or covariance ($s_{\bbeta_0}=20$) train-test distribution shifts. Hyperparameters are set according to CV. 
}
\label{fig:synthetic_distribution_shift2}
\end{figure}

In \myfig{synthetic_distribution_shift1} we also consider two cases where the test distribution is shifted for Lasso estimation but otherwise identical to the previous set-up in \cref{sec:excess_bias}. For covariance shifting, we generated $(\xstar)_i \distind \mN(0, 100)$ for $i \in \supp(\bbeta_0)$ and $(\xstar)_i = 0$ otherwise for each problem instance. For mean shifting, we generated $\xstar \sim \mN(10 \bbeta_0, \bI_p)$ for each problem instance. The first and second plots in \myfig{synthetic_distribution_shift1} show the transductive effect of the OM and JM estimators improves prediction risk with respect to the Lasso when the regularization hyperparameters are selected via theory. 

We also note that \myfig{synthetic_distribution_shift_ridge2} and \myfig{synthetic_distribution_shift2} compares CV-tuned ridge or Lasso to OM and JM with CV-tuned base procedures---showing the benefit of transduction in this practical setting where regularization hyperparameters are chosen by CV. As the first and second plots in \myfig{synthetic_distribution_shift_ridge2} show, selecting $\lambda$ via CV leads to over-regularization of the ridge estimator, and the transductive methods provide substantial gains over the base ridge estimator. In the case of the Lasso, the first and second plots in \myfig{synthetic_distribution_shift2} show the residual bias of the CV Lasso also causes it to incur significant error in its test predictions, while the transductive methods provide substantial gains by adapting to each $\bx_\star$.
\subsection{Improving Cross-validated Prediction}\label{sec:real_data}
\begin{table*}[!bth]
\centering
\caption{Test set RMSE of OLS; CV-tuned ridge, Lasso, and elastic net; OM and JM transductive CV-tuned ridge, Lasso, and elastic net; and prior transductive approaches (TD Lasso, Ridge, and KNN) on real-world datasets. All hyperparameters are set via CV. 
Error bars represent a delta method interval based on $\pm$1 standard error of the mean squared error over the test set.}
\resizebox{1.0\textwidth}{!}{
\begin{tabular}{@{}llllll@{}}
\toprule
\textbf{Method}           & \textbf{Wine}         & \textbf{Parkinson}               & \textbf{Fire}         & \textbf{Fertility}              & \textbf{Triazines} (no shift)        \\ \toprule
OLS              & 1.0118$\pm$0.0156 & 12.7916$\pm$0.1486  & 82.7147$\pm$35.5141 & 0.3988$\pm$0.0657 & 0.1716$\pm$0.037  \\
\midrule
Ridge            & 0.9936$\pm$0.0155  & 12.5267$\pm$0.1448 & 82.3462$\pm$35.5955 & 0.399$\pm$0.0665 & 0.1469$\pm$0.0285 \\
OM $f$ (Ridge)   & 0.9883$\pm$0.0154  & 12.4686$\pm$0.1439 & 82.3522$\pm$35.5519 & 0.3987$\pm$0.0655 & \textbf{0.1446$\pm$0.029} \\
OM $q$ (Ridge)   & \textbf{0.7696$\pm$0.0145} & \textbf{12.0891$\pm$0.1366} & \textbf{81.9794$\pm$35.7872} & \textbf{0.3977$\pm$0.0653} & 0.1507$\pm$0.0242 \\
\midrule
Lasso            & 0.9812$\pm$0.0155 & 12.2535$\pm$0.1356  & 82.0656$\pm$36.0321 & 0.4092$\pm$0.0716 & 0.1482$\pm$0.0237 \\
 JM (Lasso)       & 1.0118$\pm$0.0156 & 12.7916$\pm$0.1486 & 82.7147$\pm$35.5141 & 0.3988$\pm$0.0657 & 0.173$\pm$0.0367  \\
OM $f$ (Lasso)   & 0.9473$\pm$0.0152  & \textbf{11.869$\pm$0.1339} & \textbf{81.794$\pm$35.5699}  & 0.398$\pm$0.0665 & \textbf{0.1444$\pm$0.0239} \\
OM $q$ (Lasso)   & \textbf{0.7691$\pm$0.0144} & 11.8692$\pm$0.1339  & 81.811$\pm$35.5637 & \textbf{0.3976$\pm$0.0656} & 0.1479$\pm$0.0226 \\
\midrule
Elastic          & 0.9652$\pm$0.0154 & 12.2535$\pm$0.1356 & 81.8428$\pm$35.8333 & 0.4092$\pm$0.0716 & 0.1495$\pm$0.0238 \\
OM $f$ (Elastic) & 0.9507$\pm$0.0152 & \textbf{11.8369$\pm$0.1338} & \textbf{81.7719$\pm$35.6166}  & 0.398$\pm$0.0655 & \textbf{0.1445$\pm$0.024}  \\
OM $q$ (Elastic) & \textbf{0.7693$\pm$0.0145}  & 11.8658$\pm$0.1341  & 81.803$\pm$35.6485 & \textbf{0.3976$\pm$0.0657} & 0.147$\pm$0.0228  \\ \bottomrule
TD Lasso \citep{alquier2012transductive}          & 0.9813$\pm$0.0154 & 12.2535$\pm$0.1358 & 82.0657$\pm$36.0320 & 0.4092$\pm$0.0716 & 0.1483$\pm$0.0237 \\
TD Ridge \citep{chapelle2000transductive} & 0.8411$\pm$0.0004 & 12.2534$\pm$0.0021 & 82.0664$\pm$2.567  & 0.4089$\pm$0.0128 & 0.1735$\pm$0.0004  \\
TD KNN \citep{cortes2007transductive} & 0.8345$\pm$0.0153  & 12.3326$\pm$0.1447  & 81.9467$\pm$35.8340 & \textbf{0.3845$\pm$0.0760} & 0.1510$\pm$0.0240  \\ \bottomrule
\end{tabular}
}
\label{table:1}
\end{table*}
Motivated by our findings on synthetic data, we next report the performance of our methods on 5 real datasets with and without distribution shift. 
We also include the popular elastic net estimator as a base regression procedure alongside ridge and the Lasso. All hyperparameters are selected by CV.
For the OM estimators we exploited the flexibility of the framework by including a suite of methods for the auxiliary $\bg$ regressions: Lasso estimation, random forest regression, and a $\bg=0$ baseline. Amongst these, we select the method with the least estimated asymptotic variance, which can be done in a data-dependent way \textit{without} introducing any extra hyperparameters into the implementation.  
The $\bbf$ and $\bq$ regressions were always fit with Lasso, ridge,  or elastic net estimation.
See \myapp{expts} for further details on the methodology and datasets from the UCI dataset repository \citep{Dua:2019}. 

In \cref{table:1} we see that the OM estimators generically provide gains over the CV Lasso, CV ridge, and CV elastic net on datasets with intrinsic distribution shift and perform comparably on a dataset without explicit distribution shift.
On Wine, we see a substantial performance gain from 0.96-0.99 RMSE without transduction to 0.77  with OM $q$ transduction.
The gains on other datasets are smaller but notable as they represent consistent improvements over the de facto standard of CV prediction.

We also report the performance of ordinary least squares (OLS) which produces an unbiased estimate of the entire parameter vector $\bbeta_0$. OLS fares worse than most methods on each dataset due to an increase in variance.  In contrast, our proposed transductive procedures limit the variance introduced by targeting a single parameter of interest, $\langle \bx_\star,\bbeta_0\rangle$.

Finally, we evaluated three existing transductive prediction methods---the transductive Lasso (TD Lasso) of \cite{alquier2012transductive,bellec2018prediction}, transductive ridge regression (TD Ridge) \cite{chapelle2000transductive}, and transductive ridge regression with local (kernel) neighbor labelling (TD KNN) \cite{cortes2007transductive}---on each dataset, tuning all hyperparameters via CV. 
TD Lasso does not significantly improve upon the Lasso baseline on any dataset. TD Ridge only improves upon the baselines on Wine but is outperformed by OM $q$. TD KNN also underperforms OM $q$ on every dataset except Fertility.  

\section{Discussion and Future Work}\label{sec:concl}
We presented two single point transductive prediction procedures that, given advanced knowledge of a test point, can significantly improve the prediction error of an inductive learner.
We provided theoretical guarantees for these procedures and demonstrated their practical utility, especially under distribution shift, on synthetic and  real data. Promising directions for future work include improving our OM debiasing techniques using higher-order orthogonal moments \citep{mackey2017orthogonal} and exploring the utility of these debiasing techniques for other regularizers (e.g., group Lasso \citep{yuan2006model} penalties) and models such as generalized linear models and kernel machines.

\bibliography{betterdebiasing.bib}

\begin{thebibliography}{34}
\providecommand{\natexlab}[1]{#1}
\providecommand{\url}[1]{\texttt{#1}}
\expandafter\ifx\csname urlstyle\endcsname\relax
  \providecommand{\doi}[1]{doi: #1}\else
  \providecommand{\doi}{doi: \begingroup \urlstyle{rm}\Url}\fi

\bibitem[Alquier \& Hebiri(2012)Alquier and Hebiri]{alquier2012transductive}
Alquier, P. and Hebiri, M.
\newblock Transductive versions of the lasso and the dantzig selector.
\newblock \emph{Journal of Statistical Planning and Inference}, 142\penalty0
  (9):\penalty0 2485--2500, 2012.

\bibitem[Athey et~al.(2018)Athey, Imbens, and Wager]{athey2018approximate}
Athey, S., Imbens, G.~W., and Wager, S.
\newblock Approximate residual balancing: debiased inference of average
  treatment effects in high dimensions.
\newblock \emph{Journal of the Royal Statistical Society: Series B (Statistical
  Methodology)}, 80\penalty0 (4):\penalty0 597--623, 2018.

\bibitem[Belkin et~al.(2006)Belkin, Niyogi, and Sindhwani]{belkin2006manifold}
Belkin, M., Niyogi, P., and Sindhwani, V.
\newblock Manifold regularization: A geometric framework for learning from
  labeled and unlabeled examples.
\newblock \emph{Journal of machine learning research}, 7\penalty0
  (Nov):\penalty0 2399--2434, 2006.

\bibitem[Bellec \& Zhang(2019)Bellec and Zhang]{bellec2019biasing}
Bellec, P.~C. and Zhang, C.-H.
\newblock De-biasing the lasso with degrees-of-freedom adjustment.
\newblock \emph{arXiv preprint arXiv:1902.08885}, 2019.

\bibitem[Bellec et~al.(2016)Bellec, Lecu{\'e}, and Tsybakov]{bellec2016slope}
Bellec, P.~C., Lecu{\'e}, G., and Tsybakov, A.~B.
\newblock Slope meets lasso: improved oracle bounds and optimality.
\newblock \emph{arXiv preprint arXiv:1605.08651}, 2016.

\bibitem[Bellec et~al.(2018)Bellec, Dalalyan, Grappin, Paris,
  et~al.]{bellec2018prediction}
Bellec, P.~C., Dalalyan, A.~S., Grappin, E., Paris, Q., et~al.
\newblock On the prediction loss of the lasso in the partially labeled setting.
\newblock \emph{Electronic Journal of Statistics}, 12\penalty0 (2):\penalty0
  3443--3472, 2018.

\bibitem[Bickel et~al.(2009)Bickel, Ritov, Tsybakov,
  et~al.]{bickel2009simultaneous}
Bickel, P.~J., Ritov, Y., Tsybakov, A.~B., et~al.
\newblock Simultaneous analysis of lasso and dantzig selector.
\newblock \emph{The Annals of Statistics}, 37\penalty0 (4):\penalty0
  1705--1732, 2009.

\bibitem[Bishop et~al.(2018)Bishop, Moral, and Niclas]{wish}
Bishop, A.~N., Moral, P.~D., and Niclas, A.
\newblock An introduction to wishart matrix moments.
\newblock \emph{Foundations and TrendsÂ® in Machine Learning}, 11\penalty0
  (2):\penalty0 97--218, 2018.
\newblock ISSN 1935-8237.
\newblock \doi{10.1561/2200000072}.
\newblock URL \url{http://dx.doi.org/10.1561/2200000072}.

\bibitem[Cai \& Guo(2017)Cai and Guo]{cai2017confidence}
Cai, T.~T. and Guo, Z.
\newblock Confidence intervals for high-dimensional linear regression: Minimax
  rates and adaptivity.
\newblock \emph{The Annals of statistics}, 45\penalty0 (2):\penalty0 615--646,
  2017.

\bibitem[Chao et~al.(2014)Chao, Ning, and Liu]{chao2014high}
Chao, S.-K., Ning, Y., and Liu, H.
\newblock On high dimensional post-regularization prediction intervals, 2014.

\bibitem[Chapelle et~al.(2000)Chapelle, Vapnik, and
  Weston]{chapelle2000transductive}
Chapelle, O., Vapnik, V., and Weston, J.
\newblock Transductive inference for estimating values of functions.
\newblock In \emph{Advances in Neural Information Processing Systems}, pp.\
  421--427, 2000.

\bibitem[Chen et~al.(2016)Chen, Monfort, Liu, and Ziebart]{chen2016robust}
Chen, X., Monfort, M., Liu, A., and Ziebart, B.~D.
\newblock Robust covariate shift regression.
\newblock In \emph{Artificial Intelligence and Statistics}, pp.\  1270--1279,
  2016.

\bibitem[Chernozhukov et~al.(2017)Chernozhukov, Chetverikov, Demirer, Duflo,
  Hansen, Newey, Robins, et~al.]{chernozhukov2017double}
Chernozhukov, V., Chetverikov, D., Demirer, M., Duflo, E., Hansen, C., Newey,
  W., Robins, J., et~al.
\newblock Double/debiased machine learning for treatment and causal parameters.
\newblock Technical report, 2017.

\bibitem[Cortes \& Mohri(2007)Cortes and Mohri]{cortes2007transductive}
Cortes, C. and Mohri, M.
\newblock On transductive regression.
\newblock In \emph{Advances in Neural Information Processing Systems}, pp.\
  305--312, 2007.

\bibitem[Cortes et~al.(2008)Cortes, Mohri, Pechyony, and
  Rastogi]{cortes2008stability}
Cortes, C., Mohri, M., Pechyony, D., and Rastogi, A.
\newblock Stability of transductive regression algorithms.
\newblock In \emph{Proceedings of the 25th international conference on Machine
  learning}, pp.\  176--183, 2008.

\bibitem[Diamond \& Boyd(2016)Diamond and Boyd]{cvxpy}
Diamond, S. and Boyd, S.
\newblock {CVXPY}: A {P}ython-embedded modeling language for convex
  optimization.
\newblock \emph{Journal of Machine Learning Research}, 17\penalty0
  (83):\penalty0 1--5, 2016.

\bibitem[Dobriban et~al.(2018)Dobriban, Wager, et~al.]{dobriban2018high}
Dobriban, E., Wager, S., et~al.
\newblock High-dimensional asymptotics of prediction: Ridge regression and
  classification.
\newblock \emph{The Annals of Statistics}, 46\penalty0 (1):\penalty0 247--279,
  2018.

\bibitem[Dua \& Graff(2017)Dua and Graff]{Dua:2019}
Dua, D. and Graff, C.
\newblock {UCI} machine learning repository, 2017.
\newblock URL \url{http://archive.ics.uci.edu/ml}.

\bibitem[Hoerl \& Kennard(1970)Hoerl and Kennard]{hoerl1970ridge}
Hoerl, A.~E. and Kennard, R.~W.
\newblock Ridge regression: Biased estimation for nonorthogonal problems.
\newblock \emph{Technometrics}, 12\penalty0 (1):\penalty0 55--67, 1970.

\bibitem[Hsu et~al.(2012)Hsu, Kakade, and Zhang]{hsu2012random}
Hsu, D., Kakade, S.~M., and Zhang, T.
\newblock Random design analysis of ridge regression.
\newblock In \emph{Conference on learning theory}, pp.\  9--1, 2012.

\bibitem[Javanmard \& Montanari(2014)Javanmard and
  Montanari]{javanmard2014confidence}
Javanmard, A. and Montanari, A.
\newblock Confidence intervals and hypothesis testing for high-dimensional
  regression.
\newblock \emph{The Journal of Machine Learning Research}, 15\penalty0
  (1):\penalty0 2869--2909, 2014.

\bibitem[Mackey et~al.(2017)Mackey, Syrgkanis, and Zadik]{mackey2017orthogonal}
Mackey, L., Syrgkanis, V., and Zadik, I.
\newblock Orthogonal machine learning: Power and limitations.
\newblock \emph{arXiv preprint arXiv:1711.00342}, 2017.

\bibitem[Moritz et~al.(2018)Moritz, Nishihara, Wang, Tumanov, Liaw, Liang,
  Elibol, Yang, Paul, Jordan, et~al.]{moritz2018ray}
Moritz, P., Nishihara, R., Wang, S., Tumanov, A., Liaw, R., Liang, E., Elibol,
  M., Yang, Z., Paul, W., Jordan, M.~I., et~al.
\newblock Ray: A distributed framework for emerging $\{$AI$\}$ applications.
\newblock In \emph{13th $\{$USENIX$\}$ Symposium on Operating Systems Design
  and Implementation ($\{$OSDI$\}$ 18)}, pp.\  561--577, 2018.

\bibitem[Raskutti et~al.(2011)Raskutti, Wainwright, and
  Yu]{raskutti2011minimax}
Raskutti, G., Wainwright, M.~J., and Yu, B.
\newblock Minimax rates of estimation for high-dimensional linear regression
  over $\ell_q $-balls.
\newblock \emph{IEEE transactions on information theory}, 57\penalty0
  (10):\penalty0 6976--6994, 2011.

\bibitem[Tibshirani(1996)]{tibshirani1996regression}
Tibshirani, R.
\newblock Regression shrinkage and selection via the lasso.
\newblock \emph{Journal of the Royal Statistical Society: Series B
  (Methodological)}, 58\penalty0 (1):\penalty0 267--288, 1996.

\bibitem[van~de Geer(2014{\natexlab{a}})]{geer2014statistical}
van~de Geer, S.
\newblock Statistical theory for high-dimensional models, 2014{\natexlab{a}}.

\bibitem[van~de Geer(2014{\natexlab{b}})]{van2014statistical}
van~de Geer, S.
\newblock Statistical theory for high-dimensional models.
\newblock \emph{arXiv preprint arXiv:1409.8557}, 2014{\natexlab{b}}.

\bibitem[Van~de Geer et~al.(2014)Van~de Geer, B{\"u}hlmann, Ritov, Dezeure,
  et~al.]{van2014asymptotically}
Van~de Geer, S., B{\"u}hlmann, P., Ritov, Y., Dezeure, R., et~al.
\newblock On asymptotically optimal confidence regions and tests for
  high-dimensional models.
\newblock \emph{The Annals of Statistics}, 42\penalty0 (3):\penalty0
  1166--1202, 2014.

\bibitem[Wager et~al.(2016)Wager, Du, Taylor, and Tibshirani]{wager2016high}
Wager, S., Du, W., Taylor, J., and Tibshirani, R.~J.
\newblock High-dimensional regression adjustments in randomized experiments.
\newblock \emph{Proceedings of the National Academy of Sciences}, 113\penalty0
  (45):\penalty0 12673--12678, 2016.

\bibitem[Wainwright(2019)]{wainwright2017highdim}
Wainwright, M.~J.
\newblock High-dimensional statistics: A non-asymptotic viewpoint.
\newblock 2019.

\bibitem[Yuan \& Lin(2006)Yuan and Lin]{yuan2006model}
Yuan, M. and Lin, Y.
\newblock Model selection and estimation in regression with grouped variables.
\newblock \emph{Journal of the Royal Statistical Society: Series B (Statistical
  Methodology)}, 68\penalty0 (1):\penalty0 49--67, 2006.

\bibitem[Zhang \& Zhang(2014)Zhang and Zhang]{zhang2014confidence}
Zhang, C.-H. and Zhang, S.~S.
\newblock Confidence intervals for low dimensional parameters in high
  dimensional linear models.
\newblock \emph{Journal of the Royal Statistical Society: Series B (Statistical
  Methodology)}, 76\penalty0 (1):\penalty0 217--242, 2014.

\bibitem[Zhu(2005)]{zhu2005semi}
Zhu, X.~J.
\newblock Semi-supervised learning literature survey.
\newblock Technical report, University of Wisconsin-Madison Department of
  Computer Sciences, 2005.

\bibitem[Zhu \& Bradic(2018)Zhu and Bradic]{zhu2018linear}
Zhu, Y. and Bradic, J.
\newblock Linear hypothesis testing in dense high-dimensional linear models.
\newblock \emph{Journal of the American Statistical Association}, 113\penalty0
  (524):\penalty0 1583--1600, 2018.

\end{thebibliography}
\bibliographystyle{icml2020}

\newpage
\onecolumn
\appendix
\section{Notation}
We first establish several useful pieces of notation used throughout the Appendices. We will say that a mean-zero random variable $x$ is sub-gaussian, $x \sim \sG(\kappa)$, if $\E[\exp(\lambda x))] \leq \exp(\frac{\kappa^2 \lambda^2}{2})$ for all $\lambda$. We will say that a mean-zero random variable $x$ is sub-exponential, $x \sim \sE(\nu, \alpha)$, if $\E[\exp(\lambda x)] \leq \exp(\frac{\nu^2 \lambda^2}{2})$ for all $\abs{\lambda} \leq \frac{1}{\alpha}$. We will say that a mean-zero random vector is sub-gaussian, $\bx \sim \sG(\kappa)$, if $\forall \bv \in \mR^p$, $\E[\exp(\bv^\top \bx)] \leq \exp(\frac{\kappa^2 \Vert \bv \Vert_2^2}{2})$. Moreover a standard Chernoff argument shows if $x \sim \sE(\nu, \alpha)$ then $\Pr[\abs{x} \geq t] \leq 2 \exp(-\frac{1}{2} \min(\frac{t^2}{\nu^2}, \frac{t}{\alpha}))$.  

\section{Proofs for \cref{sec:lb_ridge}: Lower Bounds for Prediction with Ridge Regression}

\label{sec:app_lb_ridge}
Here we provide lower bounds on the prediction risk of the ridge regression estimator. To do so, we show that under Gaussian design and independent Gaussian noise $\bepsilon$ the ridge regression estimator can perform poorly.

Recall we define the ridge estimator as $\hblambda = \arg \min_{\bbeta} \frac{1}{2} \left( \normt{\by-\bX \bbeta}^2 + \lambda \normt{\bbeta}^2 \right)$ which implies $\hblambda = (\bX^\top \bX + \lambda \bI_p)^{-1} \bX^\top \by$. For convenience we further define $\hat{\bSigma} = \frac{\bX^\top \bX}{n}$, $\hat{\bSigma}_{\lambda} = \frac{\bX^\top \bX}{n} + \frac{\lambda}{n} \bI_p$ and $\bPi = \bI_p-(\hat{\bSigma}_{\lambda})^{-1} \hat{\bSigma}$.
Note that under \cref{assump:well_spec}, $\hblambda-\bbeta_0 = -\bPi \bbeta_0 + \hat{\bSigma}^{-1}_{\lambda} \bX^\top \bepsilon/n$, which can be thought of as a standard bias-variance decomposition for the ridge estimator.
We begin by stating a standard fact about Wishart matrices we will repeatedly use throughout this section.
\begin{proposition}
    Let $\bx_i \distiid \mN(0, \bI_p)$ for $i \in [n]$. Then the eigendecomposition of the sample covariance $\hbSigma = \frac{1}{n} \sum_{i=1}^n \bx_i  \bx_i^\top = \bV^\top \bD \bV$ satisfies the following properties:
    \begin{itemize}
        \item The orthonormal matrix $\bV$ is uniformly distributed (with respect to the Haar measure) over the orthogonal group $O(p)$.
        \item The matrices $\bV$ and $\bD$ are independent. Moreover, by isotropy, $\bD$ is equivalent in distribution to the random matrix $z \bI_p$ where $z$ is an unordered eigenvalue of $\hbSigma$.
    \end{itemize}
    \label{prop:wishart}
    \end{proposition}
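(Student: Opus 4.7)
The engine of the proof is the rotational invariance of the isotropic Gaussian design: for any fixed $\bQ \in O(p)$, the vectors $\bQ\bx_i$ are themselves i.i.d.\ $\mN(0, \bI_p)$, so $\bQ \hbSigma \bQ^\top \eqdist \hbSigma$. First, I would fix $\bQ \in O(p)$ and observe that $\bQ \hbSigma \bQ^\top = (\bV \bQ^\top)^\top \bD (\bV \bQ^\top)$; since the eigenvalues of a Wishart matrix are almost surely distinct when $n \geq p$, the eigendecomposition is unique up to sign/permutation conventions, so the distributional equality forces $\bV \bQ^\top \eqdist \bV$ for every fixed $\bQ$. This left-invariance is precisely the defining property of the Haar measure on $O(p)$, which proves the first bullet.

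For the independence claim, I would bootstrap the previous step by drawing $\bQ$ Haar-distributed on $O(p)$ independently of $\bX$. By the invariance above, $(\bV \bQ^\top, \bD)$ is the (a.s.\ unique) eigendecomposition of $\bQ \hbSigma \bQ^\top \eqdist \hbSigma$, so $(\bV \bQ^\top, \bD) \eqdist (\bV, \bD)$. Since $\bQ$ is independent of $(\bV, \bD)$ and the Haar distribution is left-invariant, $\bV \bQ^\top$ conditional on $(\bV, \bD)$ is again Haar-distributed and in particular independent of $\bD$. The distributional equality then propagates this independence and Haar marginal to $(\bV, \bD)$ itself.

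The final statement -- that $\bD \eqdist z \bI_p$ for $z$ an unordered eigenvalue -- I read as a claim of exchangeability of the diagonal entries: for any permutation matrix $P_\pi$, the rewriting $\hbSigma = (P_\pi \bV)^\top (P_\pi \bD P_\pi^\top) (P_\pi \bV)$ is again a valid eigendecomposition, and $P_\pi \bV$ remains Haar by left-invariance. Hence the joint law of the diagonal entries of $\bD$ is exchangeable, so every entry has the same marginal distribution, namely that of a uniformly chosen eigenvalue $z$ of $\hbSigma$. In particular, for any symmetric $\phi$, the expectation $\E \phi(\bD)$ reduces to a one-dimensional expectation against $z$, which is the computational shorthand the proposition encodes (and precisely what the authors will invoke to collapse the multi-dimensional Wishart average in the proof of \cref{thm:ridge_lb} to a 1-dimensional integral).

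The main care-point I foresee is in the uniqueness-of-eigendecomposition step -- one must fix a measurable convention (e.g.\ decreasing eigenvalues with first-nonzero eigenvector entry positive) to make the map $\hbSigma \mapsto (\bV, \bD)$ well-defined on the almost-sure event of distinct eigenvalues, after which the invariance arguments are automatic. A purely computational alternative, bypassing these measurability subtleties entirely, is to write the standard Wishart density in $(\bV, \bD)$ coordinates and observe directly that it factors as a density on the eigenvalues times the uniform (Haar) measure on $O(p)$; the independence, Haar-marginal, and exchangeability claims then fall out simultaneously.
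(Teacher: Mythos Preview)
The paper does not actually prove this proposition; its ``proof'' is a single sentence deferring to a reference for these standard Wishart facts. Your proposal, by contrast, supplies a correct self-contained argument via the rotational invariance $\bQ\hbSigma\bQ^\top \eqdist \hbSigma$, which is the canonical route. One terminological quibble: the map $\bV \mapsto \bV\bQ^\top$ is \emph{right}-multiplication, not left, but on the compact group $O(p)$ the Haar measure is bi-invariant, so the characterization goes through unchanged. Your reading of the awkward ``$\bD \eqdist z\bI_p$'' clause as exchangeability of the diagonal entries (hence a one-dimensional reduction for expectations of functions applied entrywise) is exactly how the paper deploys it downstream in the proof of \cref{thm:ridge_lb}, and the alternative density-factorization route you mention would work equally well.
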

\begin{proof}
    Statements and proofs of these standard facts about Wishart matrices can be found in \citet{wish}.
\end{proof}

\subsection{\cref{thm:ridge_lb}}
\label{sec:app_ridge_lb}

We now provide the proof of our primary lower bound on the prediction risk of the ridge estimator,
\begin{proof}[Proof of \cref{thm:ridge_lb}]
    The first statement follows by using \cref{lem:condx_predrisk} and taking the expectation over $\bX$, 
    \begin{talign}
    \E[\langle \xstar, \hblambda-\bbeta_0 \rangle^2] = \E[(\xstar^\top \bPi \bbeta_0)^2] + \frac{\sigma_{\epsilon}^2}{n} \xstar^\top \E[(\hat{\bSigma}_{\lambda})^{-1} \hat{\bSigma} (\hat{\bSigma}_{\lambda})^{-1}] \xstar = \E[(\xstar^\top \bPi \bbeta_0)^2] +  \sigma_{\epsilon}^2 \frac{\normt{\xstar}^2}{n} \E[\frac{z}{(z+\lambda/n)^2}] 
\end{talign}
The computation of the variance term uses the eigendecomposition of $\bSigma$ and \cref{prop:wishart},
\begin{align}
    \E[(\hat{\bSigma}_{\lambda})^{-1} \hat{\bSigma} (\hat{\bSigma}_{\lambda})^{-1}] = \E[\bV^\top \E[(\bD+\lambda/n \bI_p)^{-2} \bD] \bV] =  \E[\frac{z}{(z+\lambda/n)^2}] \bI_p.
\end{align}
We now lower bound the bias. Again by \cref{prop:wishart} and the eigendecomposition of $\hbSigma$, $\E[\bPi] = \E[\frac{\lambda/n}{z+\lambda/n}] \bI_p$. Using Jensen's inequality,
\begin{align}
    \E[(\bbeta_0^\top \bPi \xstar)^2] \geq (\bbeta_0^\top \E[\bPi] \xstar)^2 = \normt{\xstar}^2 \normt{\bbeta_0}^2 \cos(\xstar, \bbeta_0)^2 \E[(\frac{\lambda/n}{z+\lambda/n})]^2.
\end{align}
The final expectation over the unordered eigenvalue distribution can be controlled using the sharp concentration of Gaussian random matrices. Namely for $n \geq p$,  $\normt{\hbSigma-\bSigma}  \leq 2 \epsilon + \epsilon^2$ for $\epsilon = \sqrt{\frac{p}{n}} + \delta$ with probability at least $1-2\epsilon^{-n \delta^2/2}$ \citep[Theorem 6.1, Example 6.2]{wainwright2017highdim}. Taking $\delta = 1/2 \sqrt{p/n}$ and assuming that $p \geq 20$ we conclude that $\normt{\hat{\bSigma}-\bSigma}  \leq 6 \sqrt{\frac{p}{n}}$ with probability at least $\frac{1}{2}$ -- let $\mE$ denote this event. Note that by the Weyl inequalities, on the event $\mE$, all of the eigenvalues of $\hat{\bSigma}$ are uniformly close to the eigenvalues of $\bSigma$. Hence if $n \geq p$, on $\mE$ we must have that $\hbSigma \preceq 7 \bI_p$, and hence the unordered eigenvalue $z \leq 7$ as well. Thus it follows that $(\E[\frac{1}{\lambda/n+z}])^2 \geq (\E[\frac{1}{\lambda/n+z} \bI[\mE]])^2 \geq (\E[\frac{1}{\lambda/n+7} \bI[\mE]])^2 \geq \frac{1}{4} \frac{1}{(\lambda/n+7)^2}$. Combining the expressions yields the conclusion.
\end{proof}

\subsection{\cref{cor:ridge_opt_lb}}
\label{sec:app_ridge_opt_lb}
We now prove \cref{cor:ridge_opt_lb}.

\begin{proof}[Proof of \cref{cor:ridge_opt_lb}]
The expression for $\lambda_* = \arg \min_{\lambda} \E[\Vert \hblambda-\bbeta_0 \Vert_2^2]$ can be computed using \cref{lem:ridge_lambdamin}. Since, $\arg \min_{\lambda} \E[\Vert \tilde{y}-\tilde{\bx}^\top \hblambda \Vert_2^2 = \E[\Vert \hblambda-\bbeta_0 \Vert_2^2] + \sigma_{\epsilon}^2$, equality of the minimizers follows for both expressions.

Define $\snr = \frac{\normt{\bbeta_0}^2}{\sigma_{\epsilon}^2}$ and $a=\sqrt{\frac{4 C}{ \snr}}$. If, in addition, $n \geq a^2$ and $\lambda \geq \frac{7a n}{\sqrt{n}-a}$, we claim,
\begin{talign}
        \E[\langle \xstar, \hblambda-\bbeta_0 \rangle^2] \geq C \cos(\xstar, \bbeta_0)^2\cdot \normt{\xstar}^2 \cdot\frac{\sigma_{\epsilon}^2}{n}.
\end{talign}
This lower bound follows by simply rearranging the lower bound from \cref{thm:ridge_lb} -- some algebraic manipulation give the conditions that $\frac{\lambda/n}{\lambda/n+7} \geq \frac{a}{\sqrt{n}} \implies \lambda \geq \frac{a}{\sqrt{n}} (\lambda+7n) \implies \lambda (1-\frac{a}{\sqrt{n}}) \geq 7 a \sqrt{n} \implies \lambda \geq \frac{7a \sqrt{n}}{1-\frac{a}{\sqrt{n}}} \implies \lambda \geq \frac{7an}{\sqrt{n}-a}$.

After defining $\lambda_{*} = p/\snr=b$ the previous inequality over $\lambda_*$ to achieve the desired conclusion, can be rearranged to $b(\sqrt{n}-a) \geq 7an \implies n -\frac{b}{7a} \sqrt{n} + \frac{b}{7} \implies n-\frac{b}{7a} \sqrt{n}+\frac{b}{7} \leq 0$. The corresponding quadratic equation in $\sqrt{n}$ has roots $r_{+} = \frac{1}{14} \left(\frac{b}{a}+\frac{\sqrt{b^2-28 a^2 b}}{a}\right), r_{-} = \frac{1}{14} \left(\frac{b}{a}-\frac{\sqrt{b^2-28 a^2 b}}{a}\right)$. In order to ensure both roots are real we must have $b \geq 28a^2 \implies p \geq 120 C$. The condition that $r_{-} \leq \sqrt{n} \leq r_+$ can be equivalently expressed as, 
\begin{align}
    \abs{\sqrt{n}-\frac{1}{14} \frac{b}{a}} \leq \frac{\sqrt{b^2-28a^2b}}{a} \iff \\
    \abs{\sqrt{n}-\frac{1}{14} \frac{p}{\sqrt{4C \snr}}} \leq \sqrt{\frac{p^2}{4C\snr} - 28 \frac{p}{\snr}}.
\end{align}
Defining $C$ such that $\sqrt{n}-\frac{1}{14} \frac{p}{\sqrt{4C \snr}} = 0 \implies C = \frac{p^2}{784 n \snr}$. The remaining condition simplifies as, $\sqrt{\frac{p^2}{4C\snr} - 28 \frac{p}{\snr}} \geq 0 \implies 196 n-28\frac{p}{\snr} \geq 0 \implies n \geq \frac{1}{7} \frac{p}{\snr}$. The condition $p \geq 120 C \implies n \geq \frac{1}{6} \frac{p}{\snr}$. Accordingly, under these conditions,
\begin{talign}
    \E[\langle \xstar, \hblambda-\bbeta_0 \rangle^2] \geq \frac{\cos(\xstar, \bbeta_0)^2}{784} \frac{p^2}{n \snr} \normt{\xstar}^2 \cdot\frac{\sigma_{\epsilon}^2}{n}
\end{talign}
\end{proof} 

We first compute the (conditional on $\bX$) prediction risk of this estimator alongst $\xstar$ as,
\begin{lemma}
    \label{lem:condx_predrisk}
    Let the independent noise distribution be Gaussian, $\bepsilon \sim \mN(0, \bI_n \sigma_{\epsilon}^2)$, and \cref{assump:well_spec} hold. Then,
    \begin{align}
    \E[\langle \xstar, \hblambda-\bbeta_0 \rangle^2 | \bX] = (\xstar^\top \bPi \bbeta_0)^2 + \sigma_{\epsilon}^2 \xstar^\top (\hat{\bSigma}_{\lambda})^{-1} \hat{\bSigma} (\hat{\bSigma}_{\lambda})^{-1} \xstar/n
    \end{align}
\end{lemma}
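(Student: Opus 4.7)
The plan is to leverage the bias--variance decomposition of the ridge estimator that is explicitly recorded just before the lemma, namely $\hblambda - \bbeta_0 = -\bPi \bbeta_0 + \hat{\bSigma}_{\lambda}^{-1} \bX^\top \bepsilon/n$. Projecting this identity onto $\xstar$ yields
\[
\langle \xstar, \hblambda - \bbeta_0\rangle = -\xstar^\top \bPi \bbeta_0 + \xstar^\top \hat{\bSigma}_{\lambda}^{-1} \bX^\top \bepsilon / n,
\]
a sum of an $\bX$-measurable ``bias'' term and a ``variance'' term that is linear in $\bepsilon$.

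Next, I would square this identity and take the conditional expectation given $\bX$. The cross term drops out because $\bepsilon$ is independent of $\bX$ and mean-zero (by \cref{assump:well_spec} together with \cref{eq:model}), so $\E[\bepsilon \mid \bX] = 0$. The squared bias $(\xstar^\top \bPi \bbeta_0)^2$ is $\bX$-measurable and therefore passes through the conditional expectation unchanged, delivering the first term in the lemma.

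For the variance contribution, I would expand the square into the scalar quadratic form $\xstar^\top \hat{\bSigma}_{\lambda}^{-1} \bX^\top \bepsilon \bepsilon^\top \bX \hat{\bSigma}_{\lambda}^{-1} \xstar / n^2$, substitute $\E[\bepsilon \bepsilon^\top \mid \bX] = \sigma_\epsilon^2 \bI_n$, and collapse $\bX^\top \bX / n = \hat{\bSigma}$ to obtain $(\sigma_\epsilon^2/n)\,\xstar^\top \hat{\bSigma}_{\lambda}^{-1} \hat{\bSigma} \hat{\bSigma}_{\lambda}^{-1} \xstar$. Adding this to the bias term yields the claimed identity. This is essentially a one-line calculation once the decomposition of $\hblambda-\bbeta_0$ is in hand, so there is no real obstacle; the only things worth noting are that Gaussianity of $\bepsilon$ is not actually used (only its first two conditional moments) and that $\xstar$ is treated as deterministic under the problem setup and so commutes freely past the conditional expectation.
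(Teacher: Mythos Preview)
Your proposal is correct and follows exactly the same argument as the paper's proof, which simply invokes the bias--variance decomposition $\hblambda-\bbeta_0 = -\bPi \bbeta_0 + \hat{\bSigma}^{-1}_{\lambda} \bX^\top \bepsilon/n$, squares, and takes the expectation over the mean-zero $\bepsilon$. Your additional observation that only the first two conditional moments of $\bepsilon$ are used (not Gaussianity) is accurate and worth keeping.
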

\begin{proof}
    Using the standard bias-variance decomposition $\hblambda-\bbeta_0 = -\bPi \bbeta_0 + \hat{\bSigma}^{-1}_{\lambda} \bX^\top \bepsilon/n$, squaring and taking the expectation over $\bepsilon$ (which is mean-zero) gives the result.
\end{proof}

We now calculate the optimal choice of the ridge parameter $\lambda$ to minimize the parameter error in the $\ell_2$ distance.
\begin{lemma}
    Under \cref{assump:well_spec}, let $\bx_i \distiid \mN(0, \bI_p)$ with independent noise $\bepsilon \sim \mN(0, \bI_n \sigma_{\epsilon}^2)$. Then,
    \begin{align}
        \E \left[\normt{\hblambda-\bbeta_0}^2 \right] = \normt{\bbeta_0}^2 \E[(\frac{\lambda/n}{z+\lambda/n})^2] + \frac{\sigma_{\epsilon}^2 p}{n} \E[\frac{z}{(z+\lambda/n)^2}]
    \end{align}
    and the optimal $\lambda_* = \arg \min_{\lambda} \E \left[\normt{\hblambda-\bbeta_0}^2 \right]$, is $\lambda_*/p = \frac{\sigma_{\epsilon}^2}{\normt{\bbeta_0}^2}$.
    \label{lem:ridge_lambdamin}
\end{lemma}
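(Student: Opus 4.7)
The plan is to decompose the expected squared error via the standard bias-variance split, then integrate out the design using the Wishart structure from Proposition~\ref{prop:wishart}, and finally differentiate the resulting scalar-valued expression in $\lambda$.

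First I would write $\hblambda-\bbeta_0 = -\bPi \bbeta_0 + \hat{\bSigma}_{\lambda}^{-1} \bX^\top \bepsilon/n$, take the conditional expectation over $\bepsilon$ (cross term vanishes since $\bepsilon$ is mean-zero and independent of $\bX$), and get
\balignst
\E[\normt{\hblambda-\bbeta_0}^2 \mid \bX] = \bbeta_0^\top \bPi^2 \bbeta_0 + \frac{\sigma_\epsilon^2}{n}\operatorname{tr}\!\bigl(\hat{\bSigma}_\lambda^{-1}\hat{\bSigma}\hat{\bSigma}_\lambda^{-1}\bigr).
\ealignst
Next I would use $\hbSigma = \bV^\top \bD \bV$, noting that $\bPi = \bV^\top \operatorname{diag}\!\bigl((\lambda/n)/(d_i+\lambda/n)\bigr)\bV$ and similarly $\hat{\bSigma}_\lambda^{-1}\hat{\bSigma}\hat{\bSigma}_\lambda^{-1} = \bV^\top \operatorname{diag}\!\bigl(d_i/(d_i+\lambda/n)^2\bigr)\bV$. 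By Proposition~\ref{prop:wishart} the orthogonal factor $\bV$ is independent of $\bD$ and Haar-distributed, so conditioning on $\bD$ and averaging over $\bV$ yields $\E_{\bV}[\bV^\top \bM \bV\mid \bD] = (\operatorname{tr}\bM/p)\bI_p$ for any diagonal $\bM$. Taking the remaining expectation over $\bD$ and using the fact that the empirical average $\frac{1}{p}\sum_i f(d_i)$ equals $\E[f(z)]$ for a uniformly-drawn eigenvalue $z$, I obtain $\E[\bPi^2] = \E[(\lambda/n)^2/(z+\lambda/n)^2]\,\bI_p$ and $\E[\hat{\bSigma}_\lambda^{-1}\hat{\bSigma}\hat{\bSigma}_\lambda^{-1}] = \E[z/(z+\lambda/n)^2]\,\bI_p$. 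Plugging back and using $\operatorname{tr}(\bI_p)=p$ gives exactly the claimed identity.

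For the optimization step, set $\alpha = \lambda/n$ and let $F(\alpha) := \normt{\bbeta_0}^2\E[\alpha^2/(z+\alpha)^2] + (\sigma_\epsilon^2 p/n)\E[z/(z+\alpha)^2]$. Differentiating under the expectation (justified by dominated convergence since $z\ge 0$ and $\alpha>0$), I would compute
\balignst
F'(\alpha) = 2\E\!\left[\frac{z}{(z+\alpha)^3}\right]\bigl(\normt{\bbeta_0}^2 \alpha - \sigma_\epsilon^2 p/n\bigr),
\ealignst
since the inner $\alpha$ factors out of the expectation after combining terms. The prefactor $\E[z/(z+\alpha)^3]$ is strictly positive for $\alpha>0$, so $F'$ has a unique sign change, yielding $\alpha_* = \sigma_\epsilon^2 p/(n\normt{\bbeta_0}^2)$ and hence $\lambda_*/p = \sigma_\epsilon^2/\normt{\bbeta_0}^2$. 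A brief second-derivative check (or the observation that $F'$ changes from negative to positive) confirms this is the minimizer.

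The main obstacle is the bookkeeping around Proposition~\ref{prop:wishart}: the statement $\bD \eqdist z\bI_p$ must be read as an identity after averaging against Haar-distributed $\bV$, so one has to carefully apply Haar-conjugation of a diagonal matrix to $(\operatorname{tr}\cdot/p)\bI_p$ before invoking the unordered-eigenvalue notation. Everything else reduces to scalar calculus, and the derivative computation cleanly separates the bias and variance contributions so that the optimal $\lambda_*$ falls out as a ratio of $\sigma_\epsilon^2$ and $\normt{\bbeta_0}^2$ independent of the (intractable) distribution of $z$.
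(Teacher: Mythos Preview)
Your proposal is correct and follows essentially the same approach as the paper: bias--variance decomposition via $\hblambda-\bbeta_0 = -\bPi\bbeta_0 + \hat{\bSigma}_\lambda^{-1}\bX^\top\bepsilon/n$, reduction to scalar expectations in the unordered eigenvalue $z$ via the Wishart structure of Proposition~\ref{prop:wishart}, and direct differentiation in $\lambda$ to extract $\lambda_*$. Your treatment of the Haar-averaging step (making explicit that $\E_{\bV}[\bV^\top\bM\bV\mid\bD]=(\operatorname{tr}\bM/p)\bI_p$) and the sign-change argument for the minimizer are in fact slightly more careful than the paper's presentation, but the underlying argument is the same.
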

\begin{proof}
    We first compute the (expected) mean-squared error. Using \cref{lem:condx_predrisk}, summing over $\xstar=\be_i$, and taking a further expectation over $\bX$ we have that, 
    \begin{align}
        \E \left[\normt{\hblambda-\bbeta_0}^2 \right] = \E \left[\sum_{i=1}^{p} (\be_i^\top \bPi \bbeta_0)^2 \right] + \frac{\sigma_{\epsilon}^2}{n} \E \left [\sum_{i=1}^p \be_i^\top (\hat{\bSigma}_{\lambda})^{-1} \hat{\bSigma} (\hat{\bSigma}_{\lambda})^{-1} \be_i \right]
    \end{align}
    The computation of both the bias and variance terms exploits \cref{prop:wishart} along with the eigendecomposition of $\hbSigma$. For the bias term,
    \begin{align}
        \E[\sum_{i=1}^{p} (\be_i^\top \bPi \bbeta_0)^2] = \E[\bbeta_0^\top \bPi^2 \bbeta_0] = \E[\bbeta_0^\top \bV^\top (\E[\bI_p-2(\bD+\lambda \bI_p)^{-1} \bD + (\bD+\lambda \bI_p)^{-2} \bD^2]) \bV \bbeta_0] = \normt{\bbeta_0}^2 v
    \end{align}
    where $v = \E[(\frac{\lambda/n}{\lambda/n+z})^2]$. Similarly for the variance term,
    \begin{align}
    \frac{\sigma_{\epsilon}^2}{n} \E[\sum_{i=1}^p \be_i^\top (\hat{\bSigma}_{\lambda})^{-1} \hat{\bSigma} (\hat{\bSigma}_{\lambda})^{-1} \be_i] = \frac{\sigma_{\epsilon}^2}{n} \E[\Tr[(\hat{\bSigma}_{\lambda})^{-1} \hat{\bSigma} (\hat{\bSigma}_{\lambda})^{-1}]] = \frac{\sigma_{\epsilon}^2}{n} \E[\Tr[\bV \E[w] \bI_p \bV^\top]] = \frac{\sigma_{\epsilon}^2 p}{n} \E[w]
    \end{align}
    where $\E[w]=\E[\frac{z}{(z+\lambda/n)^2}]$.
    Combining we have that, 
    \begin{align}
        \E \left[\normt{\hblambda-\bbeta_0}^2 \right] = \normt{\bbeta_0}^2 \E[(\frac{\lambda/n}{z+\lambda/n})^2] + \frac{\sigma_{\epsilon}^2 p}{n} \E[\frac{z}{(z+\lambda/n)^2}].
    \end{align}
    In general this expression is a complicated function of $\lambda$, however conveniently, 
    \begin{align}
    \frac{d}{d \lambda} \E \left[ \normt{\hblambda-\bbeta_0}^2 \right] = 2\lambda n \normt{\bbeta_0}^2  \E[\frac{z}{(z+\lambda n)^3}] - 2 n^2 \frac{\sigma_{\epsilon}^2 p}{n} \E[\frac{z}{(\lambda n+z )^3}] \implies \lambda_*/p = \frac{ \sigma_{\epsilon}^2}{\normt{\bbeta_0}^2}.
\end{align}

\end{proof}

\section{Proofs for \cref{sec:lb_lasso}: Lower Bounds for Prediction with the Lasso}
Here we provide lower bounds on the prediction risk of the Lasso estimator.
In order to do so we will exhibit a benign instance of the design matrix for which for the Lasso performs poorly. 

\subsection{\cref{thm:lasso_lb}}

\label{sec:app_lb_lasso}

We begin by stating a more general version of \cref{thm:lasso_lb} and provide its proof

\begin{theorem}  \label{thm:lasso_lb_app}
    Under \cref{assump:well_spec}, fix any $s \geq 0$, and let $\bx_i \distiid \mN(0, \bI_p)$ with independent noise $\bepsilon \sim \mN(0, \bI_n \sigma_{\epsilon}^2)$. Then, if $\hblaslambda$ denotes the solution of the Lasso program,
    with regularization parameter chosen as $\lambda \geq (8+2\sqrt{2}) \sigma_{\epsilon} \sqrt{\log (2ep)/n}$, and $p \geq 20$, there exist 
    universal constants $c_1, c_2, c_3$ such that for all $n \geq c_1 s^2 \log (2ep)$ and for fixed $\xstar \sim \Pstar$ independently of $\bX, \bepsilon$,
    \begin{align}
    \sup_{\bbeta_0 \in \mathbb{B}_0(s)} \E[\langle \xstar, \hblaslambda - \bbeta_0\rangle^2] 
    \geq  \sup_{\substack{\bbeta_0 \in \mathbb{B}_0(s),\\ \normi{\bbeta_0} 
    \leq \lambda}} \E[\langle \xstar, \hblaslambda - \bbeta_0\rangle^2] \geq c_2 \lambda^2 \maxseig{\E[\xstar \xstar^\top]} \geq c_2 \lambda^2 \trimmednorm{\E[\xstar]}{s}^2 
     \end{align}
     where
     the \emph{trimmed norm} $\trimmednorm{\xstar}{s}$ is the sum of the magnitudes of the $s$ largest magnitude entries of $\xstar$ and $\maxseig{\E[\xstar \xstar^\top]}$ is the maximum s-sparse eigenvalue of $\E[\xstar \xstar^\top]$. Moreover, for deterministic $\xstar$,
     \begin{align}
         \sup_{\bbeta_0 \in \mathbb{B}_0(s)} \E[\langle \xstar, \hblaslambda - \bbeta_0\rangle^2] \leq c_3 \lambda^2 \trimmednorm{\E[\xstar]}{s}^2  
     \end{align}
\end{theorem}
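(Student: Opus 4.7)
The plan is to exhibit a worst-case $\bbeta_0$ tailored to $\xstar$ and exploit a debiased-Lasso style representation of $\hblaslambda$ on its active set to show that the regularization bias aligns coherently with $\xstar$. Concretely, I would let $S$ denote the $s$ coordinates of $\E[\xstar]$ of largest magnitude and set $(\bbeta_0)_j = \lambda \cdot \sgn(\E[\xstar]_j)$ for $j \in S$ and $(\bbeta_0)_j = 0$ otherwise, so that $\langle \E[\xstar], \sgn(\bbeta_0)\rangle = \trimmednorm{\E[\xstar]}{s}$. This $\bbeta_0$ satisfies $\normi{\bbeta_0} = \lambda$ (so the restricted sup in the theorem sees it), and the $-\lambda\,\sgn(\hblaslambda)$ term appearing in the KKT conditions will contribute a bias of exactly the desired magnitude once contracted against $\xstar$.

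First I would establish support containment. Under Gaussian design with $n \gtrsim s^2 \log p$, standard concentration hands us mutual incoherence, restricted eigenvalues, and column-norm bounds with high probability, and since $\min_{j\in S}|(\bbeta_0)_j| = \lambda$ sits above the Lasso signal-strength threshold, the primal-dual witness construction produces, on an event $\mathcal{E}$ of at least constant probability, a unique solution satisfying $\supp(\hblaslambda) \subseteq S$ with $\sgn(\hblaslambda_S) = \sgn(\bbeta_0)_S$. Obtaining constant rather than merely high probability is essential because the subsequent bias computation is taken under expectation.

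On $\mathcal{E}$ the Lasso KKT conditions reduce to the oracle-restricted system
\begin{equation*}
(\hblaslambda - \bbeta_0)_S \;=\; -\lambda \left(\tfrac{1}{n}\bX_S^\top \bX_S\right)^{-1} \sgn(\bbeta_0)_S \;+\; \left(\tfrac{1}{n}\bX_S^\top \bX_S\right)^{-1} \tfrac{1}{n}\bX_S^\top \bepsilon
\end{equation*}
with $(\hblaslambda - \bbeta_0)_{S^c} = 0$. Since $\tfrac{1}{n}\bX_S^\top \bX_S$ concentrates at $\bI_s$ at rate $\sqrt{s/n}$ and the noise term is mean-zero of order $\sqrt{s/n}\,\sigma_\epsilon$, the leading contribution to $(\hblaslambda-\bbeta_0)_S$ is $-\lambda\,\sgn(\bbeta_0)_S$. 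Contracting against $\xstar$, using independence of $\xstar$ from $(\bX,\bepsilon)$, and $\sgn(\bbeta_0)_S = \sgn(\E[\xstar]_S)$, gives an approximate identity $\E[\langle \xstar, \hblaslambda - \bbeta_0\rangle \mathbf{1}_{\mathcal{E}}] \approx -\lambda\,\trimmednorm{\E[\xstar]}{s} \cdot \Pr(\mathcal{E})$. Applying conditional Jensen ($\E[Z^2 \mathbf{1}_{\mathcal{E}}] \geq \E[Z \mathbf{1}_{\mathcal{E}}]^2 / \Pr(\mathcal{E})$) then yields the $\lambda^2 \trimmednorm{\E[\xstar]}{s}^2$ lower bound; the sharper $\lambda^2 \maxseig{\E[\xstar\xstar^\top]}$ is obtained by instead choosing $\bbeta_0$ with support on the top $s$-sparse eigenvector of $\E[\xstar\xstar^\top]$ (rescaled to keep $\normi{\bbeta_0} \le \lambda$), yielding $\E[(\xstar^\top(\hblaslambda - \bbeta_0))^2] \gtrsim \lambda^2 \maxseig{\E[\xstar\xstar^\top]}$ by the same mechanism.

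The matching upper bound for deterministic $\xstar$ is more routine. Letting $T$ be the top-$s$ indices of $\xstar$, I split $\langle \xstar, \hblaslambda - \bbeta_0\rangle = \langle \xstar_T, \hblaslambda - \bbeta_0\rangle + \langle \xstar_{T^c}, \hblaslambda - \bbeta_0\rangle$. On the standard Lasso good event one has $\normo{\hblaslambda - \bbeta_0} \lesssim s\lambda$ and $\normi{\hblaslambda - \bbeta_0} \lesssim \lambda$, so H\"older bounds the first piece by $\trimmednorm{\xstar}{s} \cdot \lambda$ and the second by $\|\xstar_{T^c}\|_\infty \cdot s\lambda \leq (\trimmednorm{\xstar}{s}/s) \cdot s\lambda = \lambda\,\trimmednorm{\xstar}{s}$, since by definition of $T$ the entries outside $T$ are bounded in magnitude by the smallest entry in $T$, which is at most the average $\trimmednorm{\xstar}{s}/s$. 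The bad-event contribution is controlled using $\normi{\bbeta_0} \leq \lambda$ together with a crude fourth-moment bound on $\normt{\hblaslambda}$ from the Lasso's basic inequality. The principal technical obstacle is ensuring the constant-probability support-recovery event $\mathcal{E}$ holds in the weak regime $n \gtrsim s^2 \log p$ and, simultaneously, that the Gram-matrix and noise fluctuations in the restricted KKT system are small enough in $L^2$ so as not to swamp the coherent bias when summed against $\xstar_S$; this is precisely the debiased-Lasso calculation of \citet{javanmard2014confidence}, adapted here to an arbitrary direction $\xstar$ rather than a single coordinate $\be_j$.
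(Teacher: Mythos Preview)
Your overall strategy is right, but the sign-recovery step is a genuine gap. With $|(\bbeta_0)_j| = \lambda$ the signal sits \emph{at}, not above, the primal--dual witness threshold: the oracle solution on $S$ satisfies $(\hblaslambda)_j \approx (\bbeta_0)_j - \lambda\big[(\hat\Sigma_{SS})^{-1}\sgn(\bbeta_0)_S\big]_j + \text{noise} \approx 0 + \text{noise}$, so its sign is uncontrolled and the event $\{\sgn(\hblaslambda_S) = \sgn(\bbeta_0)_S\}$ has probability roughly $2^{-s}$, not a constant. You cannot inflate $|(\bbeta_0)_j|$ to force sign recovery because the theorem's restricted supremum is over $\normi{\bbeta_0}\le\lambda$. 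Consequently your oracle-restricted KKT identity, which substitutes $\sgn(\bbeta_0)_S$ for the subgradient, is not valid on any event of constant probability.

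The paper avoids this by requiring only support containment $S_{\hblaslambda}\subseteq S$ (which needs no minimum-signal condition) and by using the \emph{full} debiased-Lasso KKT representation $(\hblaslambda-\bbeta_0)_i = \bDelta_i + \bZ_i - \lambda v_i$ with $\bDelta = (\bI-\hat\Sigma)(\hblaslambda-\bbeta_0)$ and $\bZ = \tfrac{1}{n}\bX^\top\bepsilon$, rather than the restricted system with $(\hat\Sigma_{SS})^{-1}$. The sign issue is then handled by a case split on each $i\in S$: if $(\hblaslambda)_i > 0$ the subgradient is $v_i=+1$ and the display gives $(\hblaslambda-\bbeta_0)_i \le |\bDelta_i|+|\bZ_i|-\lambda$; if $(\hblaslambda)_i \le 0$ then trivially $(\hblaslambda-\bbeta_0)_i \le -(\bbeta_0)_i = -\lambda$. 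Either way $\E\big[\mathbf{1}_\mS(\hblaslambda-\bbeta_0)_i\big] \le \E[|\bDelta_i|+|\bZ_i|] - \lambda\Pr(\mS)$, and the two fluctuation terms are $o(\lambda)$ once $n\gtrsim s^2\log p$. Your upper-bound argument via the top-$s$ split is fine and equivalent to the paper's use of the dual trimmed norm $\max(\normo{\cdot}/s,\normi{\cdot})$.
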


\begin{proof}[Proof of \cref{thm:lasso_lb} and \cref{thm:lasso_lb_app}] 

Let $\vstar$ denote the maximum $s$-sparse eigenvector of $\E[\xstar \xstar^\top]$ (which is normalized as have $\normt{\bv} = \trimmednorm{\bv}{s} = 1$) and $\maxseig{\E[\xstar \xstar^\top]}$ its corresponding eigenvalue. We begin by restricting $\bbeta_0$ to have support on these $s$ coordinates of $\vstar$, denoted by $S$; we subsequently choose the magnitude of the elements $\bbeta_0$. Now  under the conditions of the result, we can guarantee support recovery of the Lasso solution, $S_{\hat{\bbeta}_L} \subseteq S_{\bbeta_0} \equiv S$, with probability at least $\frac{1}{2}$ by Proposition \ref{prop:uncond_supp_rec}. Denote this event by $\mS$. 

Thus, for this choice of $\bbeta_0$,
\begin{align}
    & \E[\langle \xstar, \hblaslambda - \bbeta_0\rangle^2] \geq \E[\langle (\xstar)_S, (\hblaslambda - \bbeta_0)_S \rangle^2 \bI[\mS]] = \E[\langle (\xstar)_S, \bI[\mS] (\hblaslambda - \bbeta_0)_S \rangle^2]  \\
    &\geq \langle \E[\bI[\mS] (\hblaslambda - \bbeta_0)_S], \E[\xstar \xstar^\top]_{S} \E[\bI[\mS] (\hblaslambda - \bbeta_0)_S \rangle 
        \label{eq:lb_1}
\end{align}
using Jensen's inequality and independence of $\xstar$ and $\hblaslambda$ in the inequality.

We now focus on characterizing the bias of the Lasso solution $\hblaslambda$ on the coordinates contained in $S$ (in fact using properties of the debiased Lasso estimator). Consider a single coordinate $i \in S$, and without loss of generality assume that $(\xstar)_i > 0$, in which case we choose $(\bbeta_0)_i >0$. We will argue that the magnitude of $(\bbeta_0)_i$ can be chosen so that $\E[(\hblaslambda-\bbeta_0)_i] < c < 0$ for appropriate $c$ under the conditions of the theorem. Note that under our assumptions $\kappa=C_{\max}=C_{\min}=1$ for the following.

Recall, since $\by = \bX \bbeta_0 + \bepsilon$, from the KKT conditions applied to the Lasso objective we have that, 
  \begin{align}
      & \frac{1}{n} \bX^\top (\bX^\top \hblaslambda-\by) + \lambda \bv = 0, \quad \bv \in \partial \left(\Vert \hblaslambda \Vert_1\right) \implies \\
      & \underbrace{(\bI-\hbSigma) (\hblaslambda-\bbeta_0)}_{\bDelta} + \underbrace{\frac{1}{n} \bX^\top \bepsilon}_{\bZ} - \lambda \bv = \hblaslambda -\bbeta_0 
  \end{align}

We can now use this relation to control the coordinate-wise Lasso bias,
\begin{align}
     & \E[\bI[\mS] (\hblaslambda - \bbeta_0)_i] = \E[ (\hblaslambda - \bbeta_0)_i \bI[\mS \cap \{ (\hblaslambda)_i >0 \} ] + \E[ (\hblaslambda - \bbeta_0)_i \bI[\mS \cap \{ (\hblaslambda)_i \leq 0 \} ] = \\
     &  \E[ (\bZ+\bDelta-\lambda \bv)_i \bI[\mS \cap \{ (\hblaslambda)_i >0 \} ] + \E[ (\hblaslambda - \bbeta_0)_i \bI[\mS \cap \{ (\hblaslambda)_i \leq 0 \} ] \leq \\
     & \E[\abs{\bZ_i}+\abs{\bDelta_i}] - \lambda \E[\bI[\mS \cap \{ (\hblaslambda)_i >0 \}] - (\bbeta_0)_i \E[\bI[\mS \cap \{ (\hblaslambda)_i \leq 0 \}] \leq \\
     & \E[\abs{\bZ_i}+\abs{\bDelta_i}] - \min(\lambda, (\bbeta_0)_i) \underbrace{\Pr[\mS]}_{\geq 1/2}.
\end{align}
At this point we fix the magnitude of $(\bbeta_0)_i =  \lambda$ for $i \in S$. 
We can now bound the expectations of our first two terms. For the first term $\bZ_i = \frac{1}{n} \be_i^\top \bX^\top \bepsilon$ where $\bepsilon \sim \mathcal{N}(0, \sigma_{\epsilon}^2 \bI_n)$ and $\bv= \bX \be_i \sim \mathcal{N}(0, \sigma_{\epsilon}^2 \bI_n)$ independently of $\bepsilon$. Thus,
\begin{align}
    \E[\abs{\bZ_i}] \leq \frac{1}{n} \sqrt{\E[(\bv^\top \bepsilon)^2]} = \frac{\sigma}{\sqrt{n}}.
\end{align}
For the second term,
\begin{align}
    \E[\abs{\bDelta_i}] \leq \sqrt{\E[\Vert (\hbSigma - \bI) \be_i \Vert_{\infty}^2} \sqrt{\E[\Vert \hbbeta_L -\bbeta_0 \Vert_1^2]}
\end{align}
From the proof of Lemma \ref{lem:const_conc}, with $\xstar = \be_i$ and $\bOmega=\bI$, we have that $\Pr[\Vert (\hbSigma - \bI) \be_i \Vert_{\infty} \geq t] \leq 2 p \cdot \exp(-\frac{n}{2} \min((\frac{t}{\kappa'})^2,\frac{t}{\kappa'}))$ where $\kappa' = 8$. Note for $n \geq (a/\kappa')^2 \log p$, $a \sqrt{\frac{\log p}{n}} \leq \kappa'$. Defining $A=\Vert (\hbSigma - \bI) \be_i \Vert_{\infty}$, 
\begin{align}
    & \E[A^2] = \int_0^{\infty} 2 t \Pr[A > t] \leq 4 \left[\int_0^{a \sqrt{\log p/n}} t \cdot 1 + \int_{a \sqrt{\log p/n}}^{\kappa'} p \cdot t \exp(-\frac{n}{2} \left(\frac{t}{\kappa'}\right)^2) + \int_{\kappa'}^{\infty} p \cdot t \exp \left(-\frac{n}{2} \frac{t}{\kappa'} \right) \right] \notag  \\
    & \leq 4 \left[\frac{a^2}{2} \frac{\log p}{n} + \frac{\kappa'^2 p^{1-\frac{a^2}{2\kappa'^2}}}{n} + \frac{2 \kappa'^2 e^{-n/2}(2+n) p}{n^2}\right] \leq \left(8 \kappa'^2+\frac{20 \kappa'^2}{p \log p \cdot n} \right) \frac{\log p}{n} \leq 9 \kappa'^2 \frac{\log p}{n}. 
\end{align}
where the last sequence of inequalities follows by choosing $a=2\kappa'$, assuming $n \geq \max\{4 \log p, 2\}$, and then assuming $p \geq 20$. Using Lemma \ref{lem:lasso_consist_g} and \ref{lem:uncond_exp} we have that, 
\begin{align}
    \E[\Vert \hblaslambda -\bbeta_0 \Vert_1^2]  \leq  \left(\frac{49 \lambda s_{\bbeta_0}}{4}  \right)^2 + \left(\frac{49}{8} \frac{(8+2 \sqrt{2})\sigma_{\epsilon}}{ \sqrt{n}} \right)^2 + \left( \frac{\sigma_{\epsilon}^4}{\lambda_*^2} + 2^4 s_{\bbeta_0}^2 \lambda^2 \right) \left(2e^{-c/2 \cdot n} \right)
\end{align}
using our choice of $\abs{(\bbeta_0)_i} = \lambda$ for each of the $s$ non-zero coordinates in $\bbeta_0$ (so $\normo{\bbeta_0} \leq s_{\bbeta_0} \lambda$). Here $\lambda_*$ is the lower bound on $\lambda$ from the Theorem statement. Under the assumption that $n \geq c_1 s_{\bbeta_0}^2 \log (2ep)$ and $p \geq 20$, there exists $c_1$ such that $\left( \frac{\sigma_{\epsilon}^4}{\lambda_*^2} + 2^4 s_{\bbeta_0}^2 \lambda^2 \right) \left(2e^{-c_2/2 \cdot n} \right) \leq (8+2\sqrt{2})^2 \sigma_{\epsilon}^2/n + 2^5 \lambda^2 s_{\bbeta_0}^2$. Once again using $p \geq 20$ and that $\lambda \geq \lambda_*$ we have that,
\begin{align}
    \E[\Vert \hblaslambda -\bbeta_0 \Vert_1^2]  \leq 300 \lambda^2 s_{\bbeta_0}^2.
\end{align}

Assembling, we conclude that,
\begin{align}
    & \E[\bI[\mS] (\hblaslambda - \bbeta_0)_i] \leq 
    \E[\abs{\bZ_i}+\abs{\bDelta_i}] - \min(\lambda, (\bbeta_0)_i) \underbrace{\Pr[\mS]}_{\geq 1/2} \leq \frac{\sigma_{\epsilon}}{\sqrt{n}} + 300 \lambda s_{\bbeta_0} \sqrt{\frac{\log(2ep)}{n}} - \frac{1}{2} \lambda \leq -\frac{2}{5} \lambda \notag.
\end{align}

The last inequality holds using that $\lambda \geq \lambda_*$ and $n \geq c_1 s_{\bbeta_0}^2 \log(2ep)$ for sufficiently large $c_1$.

 
 This allows us to conclude that $(\bv_{\star}^\top \E[\bI[\mS] (\hblaslambda - \bbeta_0)_S])^2 \geq c^2 \lambda^2 \trimmednorm{\bv_{\star}}{s}^2 \geq c^2 \lambda^2$.  Finally if we consider a spectral decomposition of $\E[\xstar \xstar^\top]_{S}$ we can conclude that, $\langle \E[\bI[\mS] (\hblaslambda - \bbeta_0)_S], \E[\xstar \xstar^\top]_{S} \E[\bI[\mS] (\hblaslambda - \bbeta_0)_S \rangle \geq \maxseig{\E[\xstar \xstar^\top]} (\bv_{\star}^\top (\E[\bI[\mS] (\hblaslambda - \bbeta_0)_S]])^2$, which yields the desired conclusion after combining with \cref{eq:lb_1}.
The final inequality in the display, $\maxseig{\E[\xstar \xstar^\top]} \geq  \trimmednorm{\E[\xstar]}{s}^2$ follows by Jensen's inequality and the variational characterization of the $s$-sparse eigenvalues. The claim for fixed deterministic $\xstar$ follows immediately from this result.

To show tightness of the upper bound for deterministic $\xstar$, we first apply the Holder inequality on the top-s norm and its dual (see \cref{prop:dual_k}) to see that, 
\begin{align}
    \E[\langle \xstar, \hblaslambda-\bbeta_0 \rangle^2] \leq \trimmednorm{\xstar}{s}^2 \E \left[ \max \left(\frac{\normo{\hblaslambda-\bbeta_0}}{s_{\bbeta_0}}, \normi{\hblaslambda-\bbeta_0} \right)^2 \right]
\end{align}
Since for $a, b  \geq0$ , $\max(a,b)^2 \leq 2(a^2+b^2)$ it suffices to bound the expectation of each term individually. From the previous computations we recall that $\E[\Vert \hblaslambda -\bbeta_0 \Vert_1^2]  \leq 300 \lambda^2 s_{\bbeta_0}^2$. Finally by appealing to \cref{lem:normibound} and similar computations to before, we have that,
\begin{align}
    & \E[\normi{\hblaslambda-\bbeta_0}^2] \leq 30 \left(\E[(\normi{\bX^\top \bepsilon}/n)^2] + \sqrt{\E[\normi{\bSigma_n-\bI_d}^4]} \sqrt{\E[\normo{\hblaslambda-\bbeta_0}^4]} + (\frac{\lambda}{2})^2 \right) \leq \\
    & O(\lambda_*^2) + O( (\sqrt{\log(2ep)/n} \cdot \lambda s_{\bbeta_0})^2) + O(\lambda^2) \leq O( \lambda^2),
\end{align}
using once again that $\lambda \geq \lambda_*$ and that $n \geq c_1 s_{\bbeta_0}^2 \log(2ep)$ for sufficiently large $c_1$. Recall we define $\bZ_i = \frac{1}{n} \be_i^\top \bX^\top \bepsilon$ where $\bepsilon \sim \mathcal{N}(0, \sigma_{\epsilon}^2 \bI_n)$ and $\bv= \bX \be_i \sim \mathcal{N}(0, \sigma_{\epsilon}^2 \bI_n)$ independently of $\bepsilon$. Hence appealing to \cref{lem:prod_conc} and using a union bound, 
\begin{align}
    \Pr[\max_i \abs{\bZ}_i \geq t] \leq 2p 
    \exp(-\frac{n}{2} \min((t/\kappa)^2, t/\kappa)) \implies \E[(\normi{\bX^\top \bepsilon}/n)^2] \leq O \left( \left(\sigma_{\epsilon} \sqrt{\frac{\log p}{n}} \right)^2 \right)\leq O( \lambda^2)
\end{align}
for $\kappa=8 \sigma_{\epsilon}^2$ by integrating the tail bound using similar computations to before when $n \geq c_1 \log p$ for large-enough constant $c_1$. Combining these results shows that,
\begin{align}
    \E[\langle \xstar, \hblambda-\bbeta_0 \rangle^2] \leq c_3 \trimmednorm{\xstar}{s}^2 \lambda^2
\end{align}
for some large-enough $c_3$.
\end{proof}

\subsection{\cref{cor:lb_lqball} and Supporting Lemmas}
\label{sec:app_lb_lqball}
We now provide a short proof of the supporting corollary.
\begin{proof}[Proof of \cref{cor:lb_lqball}]
This follows from \cref{thm:lasso_lb} since for a fixed $\xstar$ we have that $\E[\xstar]=\xstar$ and $\sup_{ \norm{\xstar}_q = 1} \norm{\xstar}_{(s)}^2 \geq s^{2-2/q}$.
\end{proof}
The construction of this lower bound utilizes a support recovery result which requires the following conditions on the sample design matrix $\bX \in \mathbb{R}^{n \times p}$,

\begin{condition}
    (Lower Eigenvalue on Support). The smallest eigenvalue of the sample covariance sub-matrix indexed by $S$ is bounded below:
        \begin{align}
            \sigma_{\min} \left( \frac{\bX_{S}^\top \bX_{S}}{n} \right) \geq c_{\min} > 0
        \end{align}
        \label{cond:lower_eig}
\end{condition}

\begin{condition}
    (Mutual Incoherence). There exists some $\alpha \in [0,1)$ such that
        \begin{align}
            \max_{j \in S^c} \norm{(\bX_{S}^\top \bX_{S})^{-1} \bX_{S}^\top \bX \be_j}_1 \leq \alpha
        \end{align}
        \label{cond:mut_incoh}
\end{condition}

\begin{condition}
    (Column Normalization). There exists some $C$ such that
        \begin{align}
            \max_{j=1, \hdots, p} \norm{\bX \be_j}_2/\sqrt{n} \leq C
        \end{align}
        \label{cond:col_norm}
\end{condition}
Importantly all of these conditions can be verified w.h.p when $n \gtrsim s_{\bbeta_0} \log p$ for covariates $\bx_i \sim \mathcal{N}(0, \bI_p)$ using standard matrix concentration arguments. To state our first lower bound it is also convenient to define $\Pi_{S^\bot}(\bX) = \bI_n-\bX_{S}(\bX_{S}^\top \bX_{S})^{-1} \bX_{S}^\top$, which is a type of orthogonal projection matrix.

Given these conditions we can state a conditional (on $\bX$) support recovery result,
\begin{proposition}
    \label{prop:cond_supp_rec}
    Let Conditions \eqref{cond:lower_eig}, \eqref{cond:mut_incoh} and \eqref{cond:col_norm} hold for the sample covariance matrix $\bX$, the independent noise distribution be Gaussian, $\bepsilon \sim \mN(0, \bI_n \sigma_{\epsilon}^2)$, and \cref{assump:well_spec} hold (with $s_{\bbeta_0}$-sparse underlying parameter $\bbeta_0$). Then, for any choice of regularization parameter $\lambda = \frac{2 C \sigma}{1-\alpha} \sqrt{\frac{2 \log(p-s_{\bbeta_0})}{n}} + \delta$ for $\delta > 0$, the support of $\hblaslambda$ is strictly contained in the support of $\bbeta_0$:
    \begin{align}
        S_{\hblaslambda} \subseteq S_{\bbeta_0}
    \end{align}
    with probability at least $1-4e^{-n\delta^2/2}$.
\end{proposition}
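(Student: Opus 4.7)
The plan is to invoke Wainwright's primal-dual witness (PDW) construction. First I would build a candidate Lasso solution $\tilde{\bbeta}$ supported on $S \equiv \supp(\bbeta_0)$ by (i) setting $\tilde{\bbeta}_{S^c} = 0$ and (ii) taking $\tilde{\bbeta}_S$ to be the unique minimizer of the oracle restricted program $\argmin_{\bbeta_S} \frac{1}{2n}\normt{\by - \bX_S \bbeta_S}^2 + \lambda \normo{\bbeta_S}$, with uniqueness guaranteed deterministically by Condition \ref{cond:lower_eig}. The KKT conditions on $S$ produce a subgradient $\hat{\bz}_S$ with $\normi{\hat{\bz}_S} \leq 1$. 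The standard PDW lemma then says that if we can extend $\hat{\bz}_S$ to a full vector $\hat{\bz}$ whose $S^c$ block satisfies the \emph{strict} inequality $\normi{\hat{\bz}_{S^c}} < 1$, then $\tilde{\bbeta}$ is in fact the unique optimum of the full Lasso program, which immediately yields $S_{\hblaslambda} \subseteq S$ as desired.

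Next I would derive an explicit formula for the required extension by substituting the restricted stationarity condition $\bX_S^\top(\bX_S \tilde{\bbeta}_S - \by)/n + \lambda \hat{\bz}_S = 0$ into the full stationarity condition $\bX_{S^c}^\top(\bX \tilde{\bbeta} - \by)/n + \lambda \hat{\bz}_{S^c} = 0$. Using $\by = \bX_S (\bbeta_0)_S + \bepsilon$, this rearranges to
\begin{align}
    \hat{\bz}_{S^c} = \bX_{S^c}^\top \bX_S (\bX_S^\top \bX_S)^{-1} \hat{\bz}_S + \tfrac{1}{n\lambda}\bX_{S^c}^\top \Pi_{S^{\perp}}(\bX) \bepsilon.
\end{align}
The first, deterministic term has $\ell_\infty$ norm at most $\alpha \normi{\hat{\bz}_S} \leq \alpha$ by the mutual incoherence Condition \ref{cond:mut_incoh}. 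Thus strict dual feasibility reduces to showing that the $\ell_\infty$ norm of the stochastic residual $\frac{1}{n\lambda}\bX_{S^c}^\top \Pi_{S^\perp}(\bX)\bepsilon$ is strictly less than $1-\alpha$.

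Finally I would close with a conditional Gaussian tail bound. Conditional on $\bX$, the $j$-th coordinate $[\bX_{S^c}^\top \Pi_{S^\perp}(\bX) \bepsilon]_j$ is a centered Gaussian whose variance is bounded by $\sigma_{\epsilon}^2 \normt{\bX \be_j}^2 \leq n C^2 \sigma_{\epsilon}^2$, using non-expansivity of the projection $\Pi_{S^\perp}$ together with Condition \ref{cond:col_norm}. A standard Gaussian tail inequality combined with a union bound over the $p - s_{\bbeta_0}$ off-support coordinates yields
\begin{align}
    \Pr\!\left[\tfrac{1}{n\lambda}\normi{\bX_{S^c}^\top \Pi_{S^\perp}(\bX)\bepsilon} \geq 1 - \alpha \,\Big|\, \bX\right] \leq 2(p - s_{\bbeta_0}) \exp\!\left(-\tfrac{n \lambda^2 (1-\alpha)^2}{2 C^2 \sigma_{\epsilon}^2}\right).
\end{align}
Plugging in $\lambda = \frac{2 C \sigma_{\epsilon}}{1-\alpha}\sqrt{2 \log(p - s_{\bbeta_0})/n} + \delta$ and using the inequality $(a+b)^2 \geq a^2 + b^2$ for $a,b \geq 0$ separates the $\delta$ cushion from the leading $\sqrt{\log}$ piece, so the first half of the exponent cancels the $2(p-s_{\bbeta_0})$ prefactor while the second half leaves the stated $1 - 4 e^{-n\delta^2/2}$ guarantee (after absorbing the fixed $C, \sigma_\epsilon, (1-\alpha)$ constants). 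The only subtlety I anticipate is exactly this bookkeeping of the $\delta$-cushion, which is precisely what promotes $\leq 1-\alpha$ feasibility to the \emph{strict} inequality required to invoke the PDW uniqueness conclusion; everything else is mechanical once the PDW reduction and incoherence bookkeeping are set up.
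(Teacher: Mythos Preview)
Your proposal is correct and follows essentially the same route as the paper: the paper simply cites Wainwright's Theorem~7.21 (the primal-dual witness support recovery result) and Corollary~7.22 (the Gaussian tail bound on $\normi{\bX_{S^c}^\top \Pi_{S^\perp}(\bX)\bepsilon/n}$) as black boxes, whereas you have unpacked the content of those two results. The only point to watch is the one you already flag: when you substitute the specific $\lambda$ into the exponent you obtain $\exp\big(-\tfrac{n\delta^2(1-\alpha)^2}{2C^2\sigma_\epsilon^2}\big)$ rather than $\exp(-n\delta^2/2)$, so the stated probability bound in the proposition implicitly carries the constants from Wainwright's Corollary~7.22; your phrase ``after absorbing the fixed $C,\sigma_\epsilon,(1-\alpha)$ constants'' is doing real work there and should be made explicit if you write this up.
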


\begin{proof}
Conditions \eqref{cond:lower_eig} and \eqref{cond:mut_incoh}, and the fact that $\lambda \geq \frac{2}{1-\alpha} \norm{\bX^\top_{S^c} \Pi_{S^\bot}(\bX) \frac{\bepsilon}{n}}_{\infty}$ are sufficient show a support recovery result. Under these conditions, for all $s$-sparse $\bbeta_0$, there is a unique optimal solution to the Lagrangian Lasso program $\hblaslambda$ and the support of $\hblaslambda$, $S_{\hblaslambda}$, is contained within the support $S_{\bbeta_0}$ (no false inclusion property) \citep[Theorem 7.21]{wainwright2017highdim}. We can simplify the condition on the regularization parameter from Proposition \citep[Theorem 7.21]{wainwright2017highdim} using a standard union bound/Gaussian tail bound argument (using Assumption \ref{assump:noise1}) along with the column normalization condition (Condition \eqref{cond:col_norm}) to show that $\lambda = \frac{2 C \sigma}{1-\alpha} \sqrt{\frac{2 \log(p-s_{\bbeta_0})}{n}} + \delta$ satisfies $\lambda \geq \frac{2}{1-\alpha} \norm{\bX^\top_{S^c} \Pi_{S^\bot}(\bX) \frac{\bepsilon}{n}}_{\infty}$ with probability at least $1-4e^{-n\delta^2/2}$ (over the randomness in $\epsilon$) \citep[Corollary 7.22]{wainwright2017highdim}. Combining yields the desired conclusion.
\end{proof}

The aforementioned result holds conditional on $\bX$. However, we can verify that Conditions,
\eqref{cond:lower_eig}, \eqref{cond:mut_incoh}, \eqref{cond:col_norm} hold true w.h.p. even if we sample $\bx_i \sim \mathcal{N}(0, \bI_p)$ (see Lemma \ref{lem:design_conds}). Thus, we can show a Lasso prediction error bound that holds in expectation over all the randomness in the training data $(\bX, \bepsilon)$.

To do so we introduce the following standard result showing Conditions
\eqref{cond:lower_eig}, \eqref{cond:mut_incoh}, \eqref{cond:col_norm} can be verified w.h.p. for i.i.d. covariates from $\mathcal{N}(0, \bI_p)$. 
\begin{lemma}
    \label{lem:design_conds}
    Let $\bx_i \distiid \mN(0, \bI_p)$ for $i \in [n]$. Then there exists a universal constant $c_2$, such that for $n \geq c_2 s_{\bbeta_0} \log p$ and $p \geq 20$, Conditions \ref{cond:lower_eig}, \ref{cond:mut_incoh}, \ref{cond:col_norm} each hold with probability at least $\frac{99}{100}$.
\end{lemma}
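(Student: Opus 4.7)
The plan is to verify each of the three conditions separately via standard Gaussian matrix concentration and then take a union bound. The central simplification afforded by the i.i.d.\ $\mathcal{N}(0,\bI_p)$ design is that the columns of $\bX$ are jointly independent, so in particular $\{\bX\be_j : j \in S^c\}$ is independent of $\bX_S$. This independence structure is exactly what makes the otherwise delicate mutual incoherence condition tractable.

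For Condition \ref{cond:lower_eig}, I would invoke the standard nonasymptotic concentration of extreme singular values of an $n\times s_{\bbeta_0}$ Gaussian matrix (e.g., the Davidson--Szarek inequality, or Corollary 5.35 in Vershynin): for $n \geq C\, s_{\bbeta_0}$, the smallest singular value of $\bX_S/\sqrt{n}$ is at least $1/2$ with probability at least $1 - 2e^{-cn}$, which also yields the useful bookkeeping estimate $\|(\bX_S^\top \bX_S)^{-1}\|_{\mathrm{op}} \leq 4/n$ on the same event. Condition \ref{cond:col_norm} is even easier: each $\|\bX\be_j\|_2^2$ is $\chi^2_n$-distributed, and a standard chi-squared tail bound gives $\|\bX\be_j\|_2/\sqrt{n} \leq 1 + t$ with probability at least $1 - 2e^{-cnt^2}$. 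Taking $t = \sqrt{C\log p/n}$ and union bounding over $j\in[p]$ yields $\max_j\|\bX\be_j\|_2/\sqrt{n} \leq 2$ (say) provided $n \gtrsim \log p$.

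The main obstacle is Condition \ref{cond:mut_incoh}. Fix $j\in S^c$ and condition on $\bX_S$. Since $\bX\be_j \sim \mathcal{N}(0,\bI_n)$ is independent of $\bX_S$, the vector $w_j := (\bX_S^\top\bX_S)^{-1}\bX_S^\top\bX\be_j$ is conditionally Gaussian with covariance $\Sigma := (\bX_S^\top\bX_S)^{-1}$. On the event from Condition \ref{cond:lower_eig}, $\|\Sigma\|_{\mathrm{op}} \leq 4/n$, so $\mathbb{E}\|w_j\|_1 \leq \sqrt{s_{\bbeta_0}}\,\mathbb{E}\|w_j\|_2 \leq \sqrt{s_{\bbeta_0}\,\mathrm{tr}(\Sigma)} \lesssim s_{\bbeta_0}/\sqrt{n}$. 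Writing $w_j = \Sigma^{1/2} g$ with $g\sim\mathcal{N}(0,\bI_{s_{\bbeta_0}})$, the map $g\mapsto\|\Sigma^{1/2}g\|_1$ is $L$-Lipschitz with $L = \|\Sigma^{1/2}\|_{2\to 1} \leq \sqrt{s_{\bbeta_0}}\,\|\Sigma^{1/2}\|_{\mathrm{op}} \leq 2\sqrt{s_{\bbeta_0}/n}$. Gaussian Lipschitz concentration therefore gives $\|w_j\|_1 \leq \mathbb{E}\|w_j\|_1 + L\sqrt{2\log(1/\delta)}$ with probability $1-\delta$. Choosing $\delta \asymp 1/p$ and union bounding over $j\in S^c$ produces $\max_{j\in S^c}\|w_j\|_1 \lesssim s_{\bbeta_0}/\sqrt{n} + \sqrt{s_{\bbeta_0}\log p/n}$, which is bounded by any fixed $\alpha<1$ provided $n \gtrsim \max(s_{\bbeta_0}^2,\, s_{\bbeta_0}\log p)$ --- compatible with the sample-size requirement $n \gtrsim s_{\bbeta_0}^2\log p$ invoked in \cref{thm:lasso_lb_app}.

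Combining the three high-probability events via a union bound gives the advertised $99/100$ probability. The delicate part is Condition \ref{cond:mut_incoh}: a naive union bound over coordinates of $w_j$ (treating each as a scalar Gaussian) is loose, since the coordinates of $w_j$ are correlated through $\Sigma$; the correct control comes from the Gaussian Lipschitz inequality applied to the $\ell_1$ norm, together with the operator-norm control of $\Sigma^{1/2}$ afforded by Condition \ref{cond:lower_eig}. The key conceptual point is that independence of the off-support columns from $\bX_S$ decouples the two sources of randomness, letting us treat $\bX_S$ as essentially fixed when bounding $\|w_j\|_1$.
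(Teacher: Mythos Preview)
Your approach is the same as the paper's---verify each of the three conditions by standard Gaussian concentration and union bound---and the paper's proof is even less detailed than yours, simply citing Wainwright's book for each piece. Your treatments of Conditions~\ref{cond:lower_eig} and~\ref{cond:col_norm} are standard and match the paper's references.

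The one point worth noting is the sample-size requirement for Condition~\ref{cond:mut_incoh}. Your Gaussian-Lipschitz argument yields $\max_{j\in S^c}\|w_j\|_1 \lesssim s_{\bbeta_0}/\sqrt{n} + \sqrt{s_{\bbeta_0}\log p/n}$, hence $n \gtrsim \max(s_{\bbeta_0}^2,\, s_{\bbeta_0}\log p)$. This is in fact sharp: conditionally on $\bX_S$, $w_j\sim\mN(0,(\bX_S^\top\bX_S)^{-1})$ with diagonal entries $\asymp 1/n$, so $\E\|w_j\|_1=\sqrt{2/\pi}\sum_k\sqrt{\Sigma_{kk}}\asymp s_{\bbeta_0}/\sqrt{n}$, and no argument can bring $\alpha$ below $1$ without $n\gtrsim s_{\bbeta_0}^2$. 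The paper's proof cites a Wainwright exercise and asserts $n\gtrsim s_{\bbeta_0}\log(p-s_{\bbeta_0})$ suffices; that sample size is what Wainwright's primal--dual witness analysis needs for \emph{support recovery} directly, but the sample-level $\ell_1$ mutual-incoherence bound itself is a stronger intermediate condition requiring $n\gtrsim s_{\bbeta_0}^2$. You correctly flag this and observe it is harmless because the only place the lemma is invoked (\cref{thm:lasso_lb_app} via \cref{prop:uncond_supp_rec}) already assumes $n\gtrsim s_{\bbeta_0}^2\log p$. So your argument is complete for the paper's purposes; the discrepancy with the lemma's stated threshold appears to be a small imprecision in the paper rather than in your proof.
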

\begin{proof}
    The proofs of these follow by standard matrix concentration arguments. Condition \eqref{cond:col_norm} can be verified w.h.p. for $C=1$ (as a function of $n$)  identically to Lemma \ref{lem:column_norm} for $n \gtrsim \log p$. Condition \eqref{cond:mut_incoh} can also be verified w.h.p. for $\alpha=\frac{1}{2}$ for $n \gtrsim s_{\bbeta_0} \log(p-s_{\bbeta_0})$, see for  example \citep[Ch.7, p.221, Exercise 19]{wainwright2017highdim}. While finally, Condition \eqref{cond:lower_eig} can also be verified w.h.p. for $c_{\min}=\frac{1}{2}$ when $n \gtrsim s_{\bbeta_0}$ using standard operator norm bounds for Gaussian ensembles (see for example, \citep[Theorem 6.1, Example 6.3]{wainwright2017highdim}). 
\end{proof}
    Combining Lemma \ref{lem:design_conds} and Proposition \ref{prop:cond_supp_rec} yields the desired conclusion which we formalize below.
\begin{proposition}\label{prop:uncond_supp_rec}
    Under \cref{assump:well_spec},
    suppose $\bx_i \distiid \mN(0, \bI_p)$ with independent noise $\bepsilon \sim \mN(0, \bI_n \sigma_{\epsilon}^2)$. Then, if $\hblaslambda$ denotes the solution of the Lasso program,
    with regularization parameter chosen as $\lambda \geq 8 \sigma_{\epsilon} \sqrt{\log p/n}$, there exists a 
    universal constant $c_1$ such that for all $n \geq c_1 s_{\bbeta_0} \log p$,
    \begin{align}
        S_{\hblaslambda} \subseteq S_{\bbeta_0}
    \end{align}
    with probability at least $\frac{1}{2}$.
\end{proposition}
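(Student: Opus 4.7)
The plan is to combine the two auxiliary results already established in the appendix: \cref{lem:design_conds}, which verifies the three design conditions (Conditions \ref{cond:lower_eig}, \ref{cond:mut_incoh}, and \ref{cond:col_norm}) hold with high probability under the Gaussian ensemble, with \cref{prop:cond_supp_rec}, which establishes conditional support recovery given these design conditions. The main task is to check that the scaling $\lambda \geq 8 \sigma_{\epsilon} \sqrt{\log p/n}$ in the statement matches the requirement on $\lambda$ in \cref{prop:cond_supp_rec} when the constants $C$, $c_{\min}$, and $\alpha$ are instantiated according to the proof of \cref{lem:design_conds}.

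First, I would invoke \cref{lem:design_conds} to assert that, provided $n \geq c_2 s_{\bbeta_0} \log p$, Conditions \ref{cond:lower_eig}, \ref{cond:mut_incoh}, and \ref{cond:col_norm} all simultaneously hold with probability at least $\frac{99}{100}$, with instantiated constants $c_{\min} = \frac{1}{2}$, $\alpha = \frac{1}{2}$, and $C = 1$. Call this event $\mE_{\text{design}}$. Note that $\mE_{\text{design}}$ depends only on $\bX$, whereas the noise $\bepsilon$ is independent, so we can condition cleanly.

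Next, on the event $\mE_{\text{design}}$, \cref{prop:cond_supp_rec} applies with the requirement $\lambda \geq \frac{2 C \sigma_\epsilon}{1-\alpha}\sqrt{\frac{2 \log(p-s_{\bbeta_0})}{n}} + \delta = 4\sigma_\epsilon \sqrt{\frac{2 \log(p-s_{\bbeta_0})}{n}} + \delta$. Since $4\sqrt{2\log(p-s_{\bbeta_0})} \leq 4\sqrt{2\log p} < 8\sqrt{\log p}$, the hypothesis $\lambda \geq 8 \sigma_\epsilon \sqrt{\log p/n}$ allows the choice $\delta \gtrsim \sigma_\epsilon \sqrt{\log p/n}$, so that $\delta$ is on the order of $\sigma_\epsilon \sqrt{\log p / n}$. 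Then \cref{prop:cond_supp_rec} yields $S_{\hblaslambda} \subseteq S_{\bbeta_0}$ with conditional probability at least $1 - 4 e^{-n\delta^2/2}$ given $\bX$. For $n \geq c_1 s_{\bbeta_0} \log p$ with $c_1$ taken sufficiently large (relative to the absolute constants arising above), we have $n \delta^2/2 \gtrsim \log p$, so $4 e^{-n\delta^2/2} \leq \frac{1}{100}$.

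Finally, a straightforward union bound combines the failure probability of the design event, $\Pr[\mE_{\text{design}}^c] \leq \frac{1}{100}$, with the conditional failure probability of the support recovery step (bounded by $\frac{1}{100}$ uniformly on $\mE_{\text{design}}$). Thus the overall probability that $S_{\hblaslambda} \subseteq S_{\bbeta_0}$ is at least $1 - \frac{2}{100} \geq \frac{1}{2}$, yielding the claim. The main (minor) obstacle is just bookkeeping the constants: one must enlarge $c_1$ enough so that it simultaneously absorbs the lower bound $c_2$ from \cref{lem:design_conds} and forces $n\delta^2/2$ to dominate $\log 400$.
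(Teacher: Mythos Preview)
Your proposal is correct and mirrors the paper's own proof, which simply says to combine \cref{prop:cond_supp_rec} and \cref{lem:design_conds} via a union bound using the independence of $\bX$ and $\bepsilon$; you have merely filled in the constant-chasing details. One tiny slip: \cref{lem:design_conds} guarantees \emph{each} condition with probability $\frac{99}{100}$, so the simultaneous event $\mE_{\text{design}}$ holds with probability at least $\frac{97}{100}$ rather than $\frac{99}{100}$, but this is immaterial for the final $\frac{1}{2}$ bound.
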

\begin{proof}
    The proof follows using the independence of $\bepsilon$ and $\bX$, by combining the results of Proposition \ref{prop:cond_supp_rec} and Lemma \ref{lem:design_conds} with a union bound (and taking $n$ sufficiently large).
\end{proof}

We next state a useful supremum norm bound applicable to the Lasso under random design from \citet{geer2014statistical},
\begin{lemma}[Lemma 2.5.1 in \citet{geer2014statistical}]
\label{lem:normibound}
    Under \cref{assump:well_spec}, if $\hblaslambda$ denotes the solution of the Lasso program,
    with regularization parameter chosen as $\lambda$,
    \begin{align}
        \normi{\hblaslambda - \bbeta_0} \leq \normi{\bOmega \bX^\top \bepsilon}/n + \normo{\bOmega} \left(\normi{\bSigma_n -\bI_d} \normo{\hblaslambda-\bbeta_0} + \frac{\lambda}{2} \right)
    \end{align}
    for $\bOmega = \bSigma^{-1}$.
\end{lemma}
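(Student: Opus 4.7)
The plan is to derive the claimed bound by inverting the KKT stationarity condition for the Lasso through the \emph{population} precision matrix $\bOmega$, which converts the noisy empirical linear system $\bSigma_n(\hblaslambda-\bbeta_0)=\cdots$ into a perturbed copy of the identity. First, I would apply first-order optimality to the objective $\frac{1}{2n}\|\by-\bX\bbeta\|_2^2+\lambda\|\bbeta\|_1$. This yields the stationarity condition $\frac{1}{n}\bX^\top(\bX\hblaslambda-\by)+\lambda\bv=\bO$ for some subgradient $\bv\in\partial\|\hblaslambda\|_1$, in particular $\normi{\bv}\le 1$. Substituting $\by=\bX\bbeta_0+\bepsilon$ and collecting terms gives the fundamental identity
\begin{equation*}
\bSigma_n(\hblaslambda-\bbeta_0) \;=\; \tfrac{1}{n}\bX^\top\bepsilon \;-\; \lambda\bv.
\end{equation*}

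Next, I would left-multiply both sides by $\bOmega$ and use $\bOmega\bSigma=\bI_d$ to split $\bOmega\bSigma_n=\bI_d+\bOmega(\bSigma_n-\bSigma)$, producing
\begin{equation*}
(\hblaslambda-\bbeta_0) \;=\; \tfrac{1}{n}\bOmega\bX^\top\bepsilon \;-\; \lambda\,\bOmega\bv \;-\; \bOmega(\bSigma_n-\bSigma)(\hblaslambda-\bbeta_0).
\end{equation*}
This is the usual one-step ``linearization'' at the heart of the debiased-Lasso analysis: when the empirical Gram matrix is close to its population limit, $\bOmega\bSigma_n\approx\bI_d$, so $\hblaslambda-\bbeta_0$ is approximately $\bOmega\bX^\top\bepsilon/n-\lambda\bOmega\bv$ plus a second-order remainder driven by $\bSigma_n-\bSigma$.

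The final step is to take the $\ell_\infty$ norm and apply the triangle inequality term-by-term. The first term is exactly $\normi{\bOmega\bX^\top\bepsilon}/n$. For the subgradient term, I use the operator bound $\normi{\bOmega\bv}\le\normo{\bOmega}\normi{\bv}\le\normo{\bOmega}$ (using symmetry of $\bOmega$ so that its $\ell_1\!\to\!\ell_1$ and $\ell_\infty\!\to\!\ell_\infty$ operator norms coincide), which contributes $\lambda\normo{\bOmega}/2$ up to the formulation-dependent factor in the Lasso penalty. For the remainder, a nested Hölder argument gives
\begin{equation*}
\normi{\bOmega(\bSigma_n-\bSigma)(\hblaslambda-\bbeta_0)} \le \normo{\bOmega}\cdot\normi{\bSigma_n-\bSigma}\cdot\normo{\hblaslambda-\bbeta_0},
\end{equation*}
where $\normi{\bSigma_n-\bSigma}$ is interpreted entry-wise (in the application of the lemma, the design has $\bSigma=\bI_d$, which is why the statement reads $\bSigma_n-\bI_d$). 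Combining the three bounds yields exactly the claim.

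The main obstacle is really bookkeeping rather than ideas: one has to be careful that the matrix norms in the intermediate Hölder steps genuinely combine into the matrix $\ell_1$-operator norm of $\bOmega$ and the entry-wise max of $\bSigma_n-\bSigma$ (so that the factor multiplying $\normo{\hblaslambda-\bbeta_0}$ is as small as claimed), and to track the exact constant on the penalty term $\lambda\bv$ according to the chosen Lasso normalization. Once those conventions are fixed, no further estimation is required and the lemma is an immediate consequence of the one-step identity above.
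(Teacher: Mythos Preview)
Your proposal is correct and is precisely the standard argument: KKT stationarity gives $\bSigma_n(\hblaslambda-\bbeta_0)=\tfrac{1}{n}\bX^\top\bepsilon-\lambda\bv$ (or $\tfrac{\lambda}{2}\bv$ under the alternative normalization, as you note), premultiplication by $\bOmega$ linearizes via $\bOmega\bSigma_n=\bI+\bOmega(\bSigma_n-\bSigma)$, and the three $\ell_\infty$ bounds follow from the operator/H\"older inequalities exactly as you wrote. The paper does not supply its own proof of this lemma---it simply cites it as Lemma~2.5.1 of \citet{geer2014statistical}---so there is nothing further to compare; your derivation is the canonical one and matches the cited source.
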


Finally, we state a useful (and standard fact) from convex analysis.
\begin{proposition}
    \label{prop:dual_k}
    If $\trimmednorm{\bx}{k}$ denotes the top-$k$ norm, the sum of the magnitudes of the $s$ largest magnitude entries of $\bx$, then its dual norm is $\norm{\bx}_{(k), *} = \max(\normo{\bx}/k, \normi{\bx})$.
\end{proposition}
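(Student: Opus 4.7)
The plan is to derive the dual of the top-$k$ norm from the inf-convolution representation
\begin{equation*}
\trimmednorm{\by}{k} = \inf\{k\normi{\bu} + \normo{\bv} : \bu+\bv = \by\},
\end{equation*}
which exhibits $\trimmednorm{\cdot}{k}$ as a norm whose unit ball is the convex hull of the $\ell_\infty$-ball of radius $1/k$ and the $\ell_1$-ball of radius $1$. Once this identity is in hand, the dual-norm formula drops out by standard polar duality.

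First I would verify the inf-convolution identity. For the $\leq$ direction, exhibit the soft-thresholded decomposition: order the entries of $\by$ so $|y_{(1)}|\geq\cdots\geq|y_{(p)}|$, pick a threshold $t\in[|y_{(k+1)}|,|y_{(k)}|]$ with the convention $|y_{(p+1)}|:=0$, and set $u_i=\sgn(y_i)\min(|y_i|,t)$ and $\bv=\by-\bu$. Then $\normi{\bu}=t$, the residual $\bv$ is supported on the top-$k$ coordinates of $\by$ with magnitudes $|y_{(i)}|-t$, so $k\normi{\bu}+\normo{\bv}=\sum_{i=1}^{k}|y_{(i)}|=\trimmednorm{\by}{k}$. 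For the $\geq$ direction, use the triangle inequality for the top-$k$ norm (which is a norm, being the support function of a symmetric convex set) together with the two elementary bounds $\trimmednorm{\bu}{k}\leq k\normi{\bu}$ and $\trimmednorm{\bv}{k}\leq\normo{\bv}$.

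Next I would invoke the standard fact that if a norm satisfies $\norm{\by}_{h}=\inf\{\norm{\bu}_a+\norm{\bv}_b:\bu+\bv=\by\}$ for two norms $\norm{\cdot}_a$ and $\norm{\cdot}_b$, then its unit ball equals $\mathrm{conv}(B_a\cup B_b)$, whose polar is $B_{a,*}\cap B_{b,*}$; hence $\norm{\bx}_{h,*}=\max(\norm{\bx}_{a,*},\norm{\bx}_{b,*})$. Applying this with $\norm{\bu}_a=k\normi{\bu}$ (dual norm $\normo{\cdot}/k$) and $\norm{\bv}_b=\normo{\bv}$ (dual norm $\normi{\cdot}$) yields $\trimmednorm{\bx}{k,*}=\max(\normo{\bx}/k,\normi{\bx})$.

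The only mildly delicate step is the $\leq$ half of the inf-convolution identity when the magnitudes of $\by$ have ties at the $k$-th position; this is handled by choosing any $t$ in the closed interval $[|y_{(k+1)}|,|y_{(k)}|]$. If a more self-contained presentation is preferred, one could skip the inf-convolution step entirely: the upper bound $\trimmednorm{\bx}{k,*}\leq\max(\normo{\bx}/k,\normi{\bx})$ follows by applying $\langle\bx,\by\rangle\leq\normo{\bx}\normi{\bu}+\normi{\bx}\normo{\bv}$ to the soft-thresholded decomposition; the matching lower bound is attained by testing against $\by=\sgn(x_{j^*})\be_{j^*}$ with $j^*\in\argmax_i|x_i|$ when $\normi{\bx}\geq\normo{\bx}/k$, and against $\by=\sgn(\bx)/k$ (which has $\trimmednorm{\by}{k}=1$ since $\norm{\bx}_0\geq k$ in this regime) otherwise.
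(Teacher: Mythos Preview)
Your proof is correct. Both the inf-convolution route and the direct argument you sketch at the end are valid, and your handling of ties at the $k$-th position is fine (indeed, choosing $t=|y_{(k)}|$ always works, as the tied entries contribute zero to $\normo{\bv}$ and the count of strictly-larger entries adjusts accordingly). The paper itself gives no proof of this proposition: it simply records it as ``a useful (and standard fact) from convex analysis'' and moves on, so your write-up supplies details the paper deliberately omits. If you want to trim, the self-contained version in your last paragraph is already a complete proof and avoids invoking the general inf-convolution/polar machinery.
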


\section{Proofs for \cref{sec:jm_ub}: Javanmard-Montanari (JM)-style Estimator}
In this section we provide the proof of the prediction risk bounds for the JM-style estimator. 

\subsection{\cref{thm:ub_jm}}
\label{sec:app_ub_jm}
We provide the proof of \cref{thm:ub_jm}.

\begin{proof}[Proof of \cref{thm:ub_jm}]
Recall that we will use $\brateo = (\E_{\bX, \bepsilon}[\Vert \hat{\bbeta}-\bbeta_0 \Vert_1^4])^{1/4}$.
This estimator admits the error decomposition,
\begin{align}
    \yjm - 
    \langle \xstar, \bbeta_0 \rangle = \frac{1}{n}\bw^\top \bX^\top \bepsilon + \langle \xstar-\bSigma_n \bw , \hat{\bbeta}-\bbeta_0 \rangle
\end{align}
and hence,
\begin{align}
    \E_{\bX, \bepsilon}[(\yjm - 
    \langle \xstar, \bbeta_0 \rangle)^2] \leq 2 \left(\E_{\bX, \bepsilon}[(\frac{1}{n}\bw^\top \bX^\top \bepsilon)^2]+\E_{\bX, \bepsilon}[\langle \xstar-\bSigma_n \bw , \hat{\bbeta}-\bbeta_0 \rangle^2] \right)
\end{align}
The first term can be thought of as the variance contribution while the second is the contribution due to bias. 
For the variance term, we begin by evaluating the expectation over $\bepsilon$. Using independence (w.r.t. to $\bX$) and sub-gaussianity of $\bepsilon$,
    \begin{align}
        \E_{\bX, \bepsilon}[(\frac{1}{n}\bw^\top \bX^\top \bepsilon)^2] = \frac{1}{n} \E_{\bX} \E_{\bepsilon}[ (\sum_{i=1}^n \bw^\top \bx_i \epsilon_i)^2 | \bX] = \frac{\sigma_{\epsilon}^2}{n} \E_{\bX} [\bw^\top \bSigma_n \bw]
    \end{align}
    Now using \cref{cor:jm_program_moments} and defining $\kappa_1' = 8 \kappa^2/\Cmin \normt{\xstar}^2$ we have that,
\begin{align}
    \E_{\bX} [\bw^\top \bSigma_n \bw] \leq \xstar^\top \bOmega \xstar + \frac{3 \kappa_1'}{\sqrt{n}}.
\end{align}
using the condition $n \geq 2$.  Turning to the bias term, the Holder and Cauchy-Schwarz inequalities give, 
$\E_{\bX, \bepsilon} [
\langle \xstar-\hbSigma \bw, \hat{\bbeta} - \bbeta_0 \rangle^2] \leq \E_{\bX, \bepsilon} [
\normi{ \xstar-\hbSigma \bw}^2 \normo{\hat{\bbeta} - \bbeta_0}^2] \leq \sqrt{\E_{\bX}[\normi{\xstar-\hbSigma \bw}^4]\E_{\bX, \bepsilon}[\normo{ \hat{\bbeta} - \bbeta_0}^4]}$.

We begin by evaluating the first expectation $\E_{\bX}[\normi{ \xstar-\hbSigma \bw}^4]$ which follows from Corollary \ref{cor:jm_program_moments},
\begin{align}
    & \sqrt{\E_{\bX}[\normi{ \xstar-\hbSigma \bw}^4]} \leq  \lambda_{\bw}^2 + \sqrt{2} \normi{\xstar}^2 (p \vee n)^{-c_3}
\end{align}
for $n \geq a^2 \log (p \vee n)$ and $c_3=a^2/4-\frac{1}{2}$ with $\kappa'_2 = 8 \kappa^2 \sqrt{\cond} \normt{\xstar}$. By definition of the base estimation procedure we can assemble to obtain the desired error is bounded by,
\begin{align}
    & \leq O( \frac{\sigma_{\epsilon}^2 \xstar \bOmega \xstar}{n} + \frac{\sigma_{\epsilon}^2 \kappa_1'}{n^{3/2}} + \brateo^2 ( (\lambda_{\bw}^2+\normi{\xstar}^2 (p \vee n)^{-c_3})) 
\end{align}
where $\lambda_{\bw} = a k^2 \kappa_2' \sqrt{\frac{\log(p \vee n)}{n}}$.

For the second claim note by \cref{cor:jm_program_moments}, that $\bw=0$ and hence we can write the error of the estimator as,
\begin{align}
    \yjm - 
    \langle \xstar, \bbeta_0 \rangle = \langle \xstar, \hbbeta-\bbeta_0 \rangle \implies \E_{\bX, \epsilon} [(\yjm - 
    \langle \xstar, \bbeta_0 \rangle)^2] = \E_{\bX, \bepsilon}[\langle \xstar, \hbbeta-\bbeta_0 \rangle^2].
\end{align}
\end{proof}

We can now instantiate the result of the previous theorem in the setting where the Lasso estimator is used as the base-regression procedure. 

\subsection{\cref{prop:CI}}
\label{sec:app_CI}

We now connect our results to the problem of constructing CIs in sparse linear regression -- namely the results in \citet{cai2017confidence}. We first define formally what it means for a set $S$ to be a $1-\alpha$ CI in this context -- namely that for all $\bbeta_0$, $\lim \inf_{n, p \to \infty} \Pr_{\bbeta_0}[\xstar^\top \bbeta_0 \in S] \geq 1-\alpha$.

\begin{proof}[Proof of \cref{prop:CI}]

Before beginning, we first recall the tail bound in \citet[Theorem 4.2]{bellec2016slope},
    which provides that, 
    \begin{align}
        \Vert \hblaslambda-\bbeta_0 \Vert_q \leq \frac{49}{8} \left (\frac{\log(1/\delta_0)}{s \log(1/\delta(\lambda))} \vee \frac{1}{\phi^2_0} \right) \lambda s^{1/q}
    \end{align}
    with probability at least $1-\delta_0/2$,
    where $\delta(\lambda) = \exp(-(\frac{\lambda \sqrt{n}}{(8+2\sqrt{2}) \sigma}))$ for all design matrices in $\bX \in \mE_{n}(s, 7)$ where $\phi_0^2 = \phi^2_{SRE}(s, 7)$. Note by \cref{thm:design_SRE} we have that under our design assumptions $\bX \in \mE_n(s, 7)$ with probability at least $1-3 \exp(-cn/\kappa^4)$ for $n \gtrsim s \log p$. Hence taking $q=1$ and $\lambda \asymp  \sqrt{\log p/n}$, $s \log(1/\delta(\lambda)) \asymp s \exp(-c \sqrt{\log p}) \asymp \exp(\gamma \log p -c \sqrt{\log p})$ for $0 \leq \gamma < \frac{1}{2}$. Hence, for $\delta_0 \asymp p^{-\gamma/2}$, $\frac{\log(1/\delta_0)}{s \log(1/\delta(\lambda))} \to 0$. Accordingly, for sufficiently large $p$, we \begin{align}
        \Vert \hblaslambda-\bbeta_0 \Vert_1 \leq K_1 s\sqrt{\frac{\log p}{n}} 
    \end{align}
    with probability at least $1- O(\exp(-c n))-O(p^{-\gamma/2})$. Define the set $S_1 = [\xstar^\top \hblaslambda + \normi{\xstar} K s_{\bbeta_0} \sqrt{\frac{\log p}{n}}, \xstar^\top \hblaslambda - \normi{\xstar} K s_{\bbeta_0} \sqrt{\frac{\log p}{n}}]$ for future reference.
    
     In the case of the dense loading regime we have that $\frac{\normi{\xstar}}{\normt{\xstar}} \asymp p^{-\gamma_q/2}$, and take $\lambda_{\bw} = 8 \sqrt{\cond} \kappa^2 \frac{1}{s \sqrt{\log p}}\normt{\xstar}$. This choice of satisfies $\lambda_{\bw} \asymp \underbrace{\frac{p^{\gamma_q/2-\gamma}}{\sqrt{\log p}}}_{\to \infty} \normi{\xstar}$. Hence by the definition of the JM program, for sufficiently large $p$, its minimizer is $\bw = 0$ almost surely as argued in the proof of \cref{thm:ub_jm}  -- in which case $\yjm = \xstar^\top \hblaslambda$ almost surely. Hence in this regime, $S_1 = [\yjm + \normi{\xstar} K_1 s_{\bbeta_0} \sqrt{\frac{\log p}{n}}, \yjm - \normi{\xstar} K_1 s_{\bbeta_0} \sqrt{\frac{\log p}{n}}]$ provides valid coverage by the previous arguments.
     
     To show the second claim consider the set $S_2 = [\yjm + 1.01/\sqrt{n} z_{\alpha/2} \normt{\xstar} \sqrt{\bw^\top \bSigma_n \bw} + \sqrt{n}, \yjm + 1.01/\sqrt{n} z_{\alpha/2} \normt{\xstar} \sqrt{\bw^\top \bSigma_n \bw} + K_2/\sqrt{n}]$, and note
     fom the proof of \cref{thm:ub_jm} we can see 
    \begin{align}
    \abs{\yjm - 
    \langle \xstar, \bbeta_0 \rangle} = \abs{\frac{1}{n}\bw^\top \bX^\top \bepsilon + \langle \xstar-\bSigma_n \bw , \hat{\bbeta}-\bbeta_0 \rangle}
    \end{align}
    using the results in therein that $\abs{\langle \xstar-\bSigma_n \bw , \hat{\bbeta}-\bbeta_0 \rangle} \leq \normo{\hat{\bbeta}-\bbeta_0} \normi{ \xstar-\bSigma_n \bw} \lesssim s \sqrt{\log p/n} \cdot \normt{\xstar} \frac{1}{s \sqrt{\log p}} \leq K_2 \normt{\xstar} \frac{1}{\sqrt{n}}$ with probability at least $1-O(\exp(-cn)-O(p^{-1})$ with the aforementioned choice of $a$ in the regime $n \gtrsim s^2 (\log p)^2$ (which implies $a \gtrsim 1$). 
    Conditionally on $\bX$ we then have that $\frac{1}{n} \bw^\top \bX^\top \bepsilon \sim \mathcal{N}(0, \frac{1}{n} \bw^\top \bSigma_n \bw)$.
    Combining these results with a union bound show thats $\lim \inf_{n, p} \Pr[\xstar^\top \bbeta_0 \in S_2] \to 1-\alpha$ with as $n, p \to \infty$. Finally, since by  \cref{lem:obj_conc} we have that $\sqrt{\bw^\top \bSigma_n \bw} \leq 1.01 \sqrt{\xstar^\top \bOmega \xstar}$ with probability at least $1-\exp(-c n)$, and \cref{cor:jm_program_moments}, we $\E[\bw^\top \bSigma_n \bw] \leq \xstar \bOmega \xstar + O(\frac{1}{\sqrt{n}})$ we can see that in the regime $n \gtrsim s^2 (\log p)^2$ the interval $S_2$ indeed has expected length $O(\frac{\normt{\xstar}}{\sqrt{n}})$ which is optimal in this regime.
\end{proof}

\subsection{\cref{cor:ub_jm_lasso} and Supporting Lemmas}
\label{sec:app_ub_jm_lasso}
We provide the proof of the \cref{cor:ub_jm_lasso}.
\begin{proof}[Proof of \cref{cor:ub_jm_lasso}]
The second expectation $\E_{\bX, \bepsilon}[\Vert \hblaslambda-\bbeta_0 \Vert_1^4]$ can be evaluated using Lemmas \ref{lem:lasso_upper_sg} and \ref{lem:uncond_exp} from which we find,
\begin{align}
    r_{\beta,1}^2 = \sqrt{\E_{\bX, \bepsilon}[\Vert \hblaslambda - \bbeta_0 \Vert_1^4]} \leq  O\left(\frac{\lambda_{\bbeta_0} s_{\bbeta_0}}{C_{\min}}  \right)^2 + O\left(\frac{\sigma_{\epsilon}}{\sqrt{n}} \right)^2  + O(\left(\frac{\sigma_{\epsilon}^{4}}{\lambda_{\bbeta}^{2}} + \Vert \bbeta_0 \Vert_1^{2} \right)\left(e^{-\frac{c}{4 \kappa^4} n}\right))
\end{align}
Assuming $p \geq 20$ and $n \geq c_2 \frac{\kappa^4}{C_{\min}} s \log(2ep)$, there exists sufficiently large $c_2$ such that 
$\left(\frac{\sigma_{\epsilon}^{4}}{\lambda_{\bbeta}^{2}} + \Vert \bbeta_0 \Vert_1^{2} \right)\left(\sqrt{2} e^{-n \frac{c}{4\kappa^4}}\right) \leq O(\frac{\sigma_{\epsilon}^2}{n} + \normo{\bbeta_0}^2 e^{-n c/(4 \kappa^4)}) \leq O(\frac{\sigma_{\epsilon}^2}{n})$ since $\normi{\bbeta_0}/\sigma_{\epsilon} = o(e^{c_1 n})$ for some sufficiently small $c_1$. Thus we have $r_{\beta, 1}^2 \leq O \left( \frac{\lambda_{\bbeta}^2 s_{\bbeta_0}^2}{\Cmin^2} + \frac{\sigma_{\epsilon}^2}{n} \right) = O(\frac{\lambda_{\bbeta}^2 s_{\bbeta_0}^2}{\Cmin^2})$ due to the lower bound on $\lambda_{\bbeta}$. Combining with \cref{thm:ub_jm} gives the result,
\begin{talign}
    O\left( \frac{\sigma_{\epsilon}^2 \xstar \bOmega \xstar}{n} + \left( \frac{\lambda_{\bbeta}^2 s_{\bbeta_0}^2}{\Cmin^2} \right) (\lambda_{\bw}^2 + \normi{\xstar}^2 (p \vee n)^{-c_3}) \right)
\end{talign}
\end{proof}

Here we collect several useful lemmas which follow from standard concentration arguments useful both in the analysis of the upper bound on the JM estimator and in the Lasso lower bound.

To begin we show the convex program defining the JM estimator is feasible with high probability. For convenience we define the event $\mathcal{F}(a)$ to be the event that the convex program defining the JM estimator in \cref{eq:jm_program} with choice of regularization parameter $\lambda_{\bw} = a \sqrt{\log p/n}$ possesses $\bw_0 =\bOmega \bx_\star$ as a feasible point.

\begin{lemma}
    Let Assumption \ref{assump:cov} and \ref{assump:design} hold for the design $\bX$ and assume $n \geq a^2 \log (p \vee n)$ with $\kappa_2' = 8 \kappa^2 \sqrt{\cond} \normt{\xstar}$. If $\xstar \in \mR^p$ then for $\bw_0 = \bOmega \xstar$,
    \begin{align}
        \Pr[\normi{\hbSigma \bw_0 - \xstar } \geq a  \kappa_2' \sqrt{\log (p \vee n)/n}] \leq 2(p \vee n)^{-c_2}
    \end{align}
    for $c_2 = \frac{a^2}{2}-1$. Hence the convex program in \cref{eq:jm_program} with regularization parameter $\lambda_{\bw} = a \kappa_2' \sqrt{\frac{\log (p \vee n)}{n}}$ admits $\bw_0$ as a feasible point with probability at least $1-2(p \vee n)^{-c_2}$.
    \label{lem:const_conc}
\end{lemma}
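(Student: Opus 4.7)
The plan is to reduce the claim to a coordinate-wise Bernstein inequality and apply a union bound over $j \in [p]$. The essential observation is that $\bSigma \bw_0 = \bSigma \bOmega \xstar = \xstar$, so
\begin{equation*}
    \hbSigma \bw_0 - \xstar = (\hbSigma - \bSigma) \bw_0 = \frac{1}{n} \sum_{i=1}^n \left( \bx_i \bx_i^\top \bw_0 - \bSigma \bw_0 \right).
\end{equation*}
The $j$-th coordinate is the i.i.d., centered normalized sum $Z_j \triangleq \frac{1}{n} \sum_{i=1}^n \xi_i^{(j)}$, where $\xi_i^{(j)} \triangleq (\be_j^\top \bx_i)(\bw_0^\top \bx_i) - \E[(\be_j^\top \bx_i)(\bw_0^\top \bx_i)]$.

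Next I would establish a sub-exponential tail bound on each $\xi_i^{(j)}$ using \cref{assump:cov,assump:design}. Writing $\bx_i = \bSigma^{1/2}(\bSigma^{-1/2}\bx_i)$, the scalar $\be_j^\top \bx_i$ is sub-Gaussian with parameter $\kappa \normt{\bSigma^{1/2}\be_j} = \kappa\sqrt{\bSigma_{jj}} \leq \kappa\sqrt{\Cmax}$, while $\bw_0^\top \bx_i = (\bSigma^{-1/2}\xstar)^\top(\bSigma^{-1/2}\bx_i)$ is sub-Gaussian with parameter $\kappa \normt{\bSigma^{-1/2}\xstar} \leq \kappa\normt{\xstar}/\sqrt{\Cmin}$. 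Either via polarization ($4ab = (a+b)^2 - (a-b)^2$), reducing to centered sub-Gaussian squares, or via the standard $\psi_2$-to-$\psi_1$ product rule, this yields $\xi_i^{(j)} \sim \sE(\nu,\alpha)$ with $\nu,\alpha \leq 8\kappa^2\sqrt{\cond}\normt{\xstar} = \kappa_2'$, where the precise constant $8$ is obtained after tracking the universal constant in the product rule.

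Finally I would apply Bernstein's inequality to each $Z_j$ and take a union bound. With $t = a\kappa_2'\sqrt{\log(p\vee n)/n}$, the hypothesis $n \geq a^2\log(p\vee n)$ implies $t/\alpha \leq a\sqrt{\log(p\vee n)/n} \leq 1$, placing us in the sub-Gaussian branch of Bernstein, so
\begin{equation*}
    \Pr[\abs{Z_j} \geq t] \leq 2\exp\left(-\tfrac{nt^2}{2\nu^2}\right) \leq 2(p\vee n)^{-a^2/2}.
\end{equation*}
A union bound over $j\in[p]$ multiplies by $p \leq p\vee n$, producing the stated tail $2(p\vee n)^{-(a^2/2 - 1)} = 2(p\vee n)^{-c_2}$. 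The feasibility conclusion is then immediate: on this event the constraint $\normi{\hbSigma\bw_0 - \xstar}\leq \lambda_{\bw}$ is satisfied for $\lambda_{\bw} = a\kappa_2'\sqrt{\log(p\vee n)/n}$, so $\bw_0$ lies in the feasible set of \cref{eq:jm_program}.

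The probabilistic content is completely routine; the only real obstacle is the bookkeeping of universal constants so that the $8$ in $\kappa_2'$ matches the exact sub-Gaussian-to-sub-exponential product-rule constant and the $\tfrac{1}{2}$ in $c_2 = a^2/2 - 1$ emerges cleanly from the prefactor in the paper's $\sE(\nu,\alpha)$ Chernoff bound combined with the union bound over $p$ coordinates.
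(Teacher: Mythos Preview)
The proposal is correct and follows essentially the same route as the paper: decompose $\hbSigma\bw_0 - \xstar$ coordinate-wise into centered i.i.d.\ sums of products of two sub-Gaussians, use the product rule (the paper's \cref{lem:prod_conc}) to obtain a sub-exponential bound with parameter $\kappa_2' = 8\kappa^2\sqrt{\cond}\normt{\xstar}$, then apply the Bernstein/sub-exponential tail bound and union bound over $j\in[p]$, with the hypothesis $n\geq a^2\log(p\vee n)$ ensuring the sub-Gaussian branch applies. Your bookkeeping of the constants matches the paper's.
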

\begin{proof}
    This follows from a standard concentration argument for sub-exponential random variables. Throughout we will use $\tilde{\bx}_{\ell} = \bOmega^{1/2} \bx_{\ell}$. Consider some $j \in [p]$ and define $z_{\ell}^{j}= \be_j^\top \bOmega^{1/2} \tilde{\bx}_\ell \cdot \tilde{\bx}_\ell^\top \bSigma^{1/2} \xstar  -  \be_j^\top \xstar$ which satisfies $\E[z^{j}_{\ell}]=0$, is independent over $\ell \in [n]$, and for which $\be_j^\top (\hbSigma \bw_0 - \xstar) = \frac{1}{n} \sum_{j=1}^n z_j^{\ell}$.  Since $\be_j^\top \bOmega^{1/2} \tilde{\bx}_\ell  \sim \sG(\kappa \Vert \be_j^\top \bOmega^{1/2} \Vert_2) \sim \sG(\kappa /\sqrt{C_{\min}})$, and $(\xstar)^\top \bSigma^{1/2} \tilde{\bx}_\ell  \sim \sG(\kappa \Vert  \bSigma^{1/2} \xstar \Vert_2) \sim \sG(\kappa \sqrt{C_{\max}} \normt{\xstar})$, $z^j_\ell$ is a mean-zero $\sE(8 \kappa^2 \sqrt{\cond} \normt{\xstar},  8 \kappa^2 \sqrt{\cond} \normt{\xstar})$ r.v. by Lemma \ref{lem:prod_conc}.
    Defining $\kappa_2'=8 \kappa^2 \sqrt{C_{\max}/C_{\min}} \normt{\xstar}$, applying the tail bound for sub-exponential random variables, and taking a union bound over the $p$ coordinates implies that,
    \begin{align}
        \Pr[\normi{\bSigma_n \bw_0 - \xstar} \geq t] \leq \Pr[\norm{\bSigma_n \bw_0 - \xstar}_{\infty} \geq t] \leq 2 p \exp [-\frac{n}{2} \min( (t/\kappa_2')^2, t/\kappa_2'))].
    \end{align}
    Choosing $t = a \kappa_2' \sqrt{\log (p \vee n)/n}$, assuming $n \geq a^2 \log (p \vee n)$, gives the conclusion 
    \begin{align}
      \Pr[\normi{\bSigma_n \bw_0 - \xstar} \geq a  \kappa_2' \sqrt{\log (p \vee n)/n}] \leq 2 (p \vee n)^{-a^2/2+1}
    \end{align}
    and the conclusion follows.
\end{proof}

We can now provide a similar concentration argument to bound the objective of the JM program. 

\begin{lemma}
    Let Assumption \ref{assump:cov} and \ref{assump:design} hold for the design $\bX$. Let $\bw$ be the solution of the convex program in \cref{eq:jm_program} with regularization parameter set as $\lambda_{\bw}$. If $\xstar \in \mR^p$, then,
    \begin{align}
        \Pr[\bw^\top \bSigma_n \bw \geq \xstar \bOmega \xstar + t ] \leq 2 \exp [-n/2 \min( (t/\kappa_1')^2, t/\kappa_1')]
    \end{align}
    \label{lem:obj_conc}
    for $\kappa_1'^2 = 8 \kappa^2/C_{\min} \normt{\xstar}^2$.
\end{lemma}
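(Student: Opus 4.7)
The plan is to exhibit a natural feasible candidate for the JM program whose objective concentrates around $\xstar^\top \bOmega \xstar$, and then upper bound the minimizer's objective by the candidate's via optimality of $\bw$. Concretely, take $\bw_0 = \bOmega \xstar$. By the preceding \cref{lem:const_conc}, $\bw_0$ satisfies the JM constraint $\normi{\bSigma_n \bw_0 - \xstar} \leq \lambda_{\bw}$ with high probability. Whenever the program is feasible, the minimizer satisfies $\bw^\top \bSigma_n \bw \leq \bw_0^\top \bSigma_n \bw_0$; when it is infeasible, the paper's convention sets $\bw=0$ so $\bw^\top \bSigma_n \bw = 0$, which automatically satisfies the tail event. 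Thus it suffices to prove the concentration statement for $\bw_0^\top \bSigma_n \bw_0$.

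The next step is to observe that
\begin{align}
\bw_0^\top \bSigma_n \bw_0 = \xstar^\top \bOmega \bSigma_n \bOmega \xstar = \frac{1}{n}\sum_{i=1}^n Z_i^2, \qquad Z_i := \xstar^\top \bOmega \bx_i.
\end{align}
Since $\E[\bx_i \bx_i^\top]=\bSigma$, each summand has mean $\E[Z_i^2]=\xstar^\top\bOmega\xstar$, so this is exactly the quantity whose deviation we wish to bound. Writing $Z_i = (\bSigma^{1/2}\bOmega\xstar)^\top(\bSigma^{-1/2}\bx_i)$, \cref{assump:design} implies that $Z_i$ is sub-Gaussian with parameter $\kappa\,\normt{\bOmega^{1/2}\xstar} \leq \kappa\,\normt{\xstar}/\sqrt{\Cmin}$.

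I would then apply Lemma~\ref{lem:prod_conc} (used analogously in the proof of \cref{lem:const_conc}) with $X_1=X_2=Z_i$ to conclude that $Z_i^2 - \E[Z_i^2]$ is a mean-zero sub-exponential random variable with parameters governed by $8\kappa^2\normt{\xstar}^2/\Cmin$, i.e.\ $\kappa_1'$ as defined. Because the $Z_i$'s are i.i.d.\ across $i$, the standard Bernstein tail bound for averages of sub-exponential variables (in the form recorded in the notation section of the appendix) yields
\begin{align}
\Pr\!\left[\frac{1}{n}\sum_{i=1}^n (Z_i^2 - \E[Z_i^2]) \geq t\right] \leq 2\exp\!\left(-\tfrac{n}{2}\min\bigl((t/\kappa_1')^2,\; t/\kappa_1'\bigr)\right),
\end{align}
which is precisely the claimed bound once combined with the optimality argument in the first paragraph.

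The main obstacle is purely bookkeeping: carefully tracking the sub-exponential parameters so they collapse to the stated constant $\kappa_1'$ (rather than $(\kappa_1')^2$, which is the ambiguity introduced by Lemma~\ref{lem:prod_conc}'s bilinear scaling), and making sure the infeasibility event is absorbed cleanly by the $\bw=0$ fallback rather than contributing an extra union-bound term. Everything else is a direct application of the sub-Gaussian/sub-exponential calculus already developed in the earlier lemmas of this appendix.
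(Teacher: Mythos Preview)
Your proposal is correct and mirrors the paper's proof essentially step for step: take $\bw_0=\bOmega\xstar$, use optimality to bound $\bw^\top\bSigma_n\bw$ by $\bw_0^\top\bSigma_n\bw_0=\frac{1}{n}\sum_i Z_i^2$ with $Z_i=\xstar^\top\bOmega\bx_i$, invoke \cref{lem:prod_conc} on the sub-Gaussian $Z_i$ to get sub-exponential $Z_i^2-\xstar^\top\bOmega\xstar$, and apply the Bernstein-type tail bound. The only wording slip is that the relevant event is ``$\bw_0$ is feasible'' (the event $\mF(a)$) rather than ``the program is feasible''; the paper handles the complement $\mF(a)^c$ by the $\bw=0$ convention exactly as you anticipated.
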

\begin{proof}
    The argument once again follows from a  standard concentration argument for sub-exponential random variables.
    Considering, \begin{align}
        & (\xstar \bOmega)^\top \bSigma_n \bOmega \xstar =  [(\xstar \bOmega)^\top \bSigma_n \bOmega \xstar - \xstar \bOmega \xstar]  + \xstar \bOmega \xstar = \frac{1}{n} \sum_{j=1}^{n} (z_j^2 - \xstar \bOmega \xstar) + \xstar \bOmega \xstar
    \end{align}
    where $z_j = \xstar^\top \bOmega \bx_j$ is mean-zero with $s_j \sim \sG(\kappa \Vert \bOmega^{1/2} \xstar \Vert_2) \sim \sG(\kappa/\sqrt{C_{\min}} \normt{\xstar})$. Since  $\E[z_j^2] = \xstar \bOmega \xstar$, Lemma \ref{lem:prod_conc} implies $z_j^2-\xstar \bOmega \xstar \sim \sE(8 \kappa^2/C_{\min} \normt{\xstar}^2, 8 \kappa^2/C_{\min} \normt{\xstar}^2)$ and is mean-zero.
    The sub-exponential tail bound gives,
    \begin{align}
        \Pr[\frac{1}{n} \sum_{\ell=j}^{n} z_j^2 \geq \xstar \bOmega \xstar + t] \leq \exp [-n/2 \min( (t/\kappa_1')^2, t/\kappa_1')] \notag
    \end{align}
    where $\kappa_1' = 8 \kappa^2/C_{\min} \normt{\xstar}^2$.
    Hence, since on the event $\mF(a)$, we have that $\bw^\top \bSigma_n \bw \leq  (\xstar \bOmega)^\top \bSigma_n \bOmega \xstar$ (recall $\bw_0 = \bOmega \xstar$ is feasible on $\mF(a)$),
    \begin{align}
        & \Pr[\bw^\top \bSigma_n \bw \geq \xstar \bOmega \xstar +t] \leq \Pr[\{ \bw^\top \bSigma_n \bw \geq \xstar \bOmega \xstar +t \} \cap \mF(a)] +\Pr[ \{ \bw^\top \bSigma_n \bw \geq \xstar \bOmega \xstar +t \} \cap \mF(a)^c] \notag  \\
        & \leq \Pr[\frac{1}{n} \sum_{\ell=j}^{n} z_j^2 \geq \xstar \bOmega \xstar + t] + 0 \leq \exp [-n/2 \min( (t/\kappa_1')^2, t/\kappa_1'))],  \notag
    \end{align}
    since by definition on the event $\mF(a)^c$ the convex program outputs $\bw=\mathbf{0}$ and $\xstar \bOmega \xstar \geq 1/C_{\max} > 0$.
\end{proof}

Finally we can easily convert these tail bounds into moment bounds,

\begin{corollary}
 Let Assumption \ref{assump:cov} and \ref{assump:design} hold for the design $\bX$. Let $\bw$ be the solution of the convex program in \cref{eq:jm_program} with regularization parameter set as $\lambda_{\bw} = a \kappa_2' \sqrt{\frac{\log (p \vee n)}{n}}$. If $\xstar \in \mR^p$, then,
    \begin{align}
        \E[\bw^\top \hbSigma \bw] \leq \xstar \bOmega \xstar + \frac{3 \kappa_1'}{\sqrt{n}}
    \end{align}
    for $\kappa_1' = 8 \kappa^2/C_{\min} \normt{\xstar}^2$ and assuming $n \geq a^2 \log (p \vee n)$,
    \begin{align}
         \sqrt{\E[\normi{\hbSigma \bw - \xstar}^4]}\leq \lambda_{\bw}^2 + \sqrt{2} \normi{\xstar}^2 (p \vee n)^{-c_2}
    \end{align}
    for $c_2 =a^2/4-1/2$ with $\kappa'_2 = 8 \kappa^2 \sqrt{\cond} \normt{\xstar}$. Moreover if $\lambda_{\bw} \geq \normi{\xstar}$ then $\bw = 0$ almost surely.
    \label{cor:jm_program_moments}
\end{corollary}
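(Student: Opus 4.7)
The plan is to derive each of the three claims from the structure of the convex program \cref{eq:jm_program} together with the sub-exponential tail bounds already established in \cref{lem:obj_conc} and \cref{lem:const_conc}.

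For the first-moment bound, set $X = \bw^\top \hbSigma \bw - \xstar^\top \bOmega \xstar$. Then $\E[\bw^\top \hbSigma \bw] \leq \xstar^\top \bOmega \xstar + \E[X_+] = \xstar^\top \bOmega \xstar + \int_0^\infty \Pr[X \geq t]\,dt$. I would split the tail integral at $t = \kappa_1'$, matching the crossover in the mixed tail $2\exp[-\tfrac{n}{2}\min((t/\kappa_1')^2, t/\kappa_1')]$ supplied by \cref{lem:obj_conc}. The sub-Gaussian piece contributes at most $2\int_0^{\infty} \exp(-n t^2/(2\kappa_1'^2))\,dt = \kappa_1'\sqrt{2\pi/n}$, while the sub-exponential piece contributes at most $(4\kappa_1'/n)e^{-n/2}$. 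Combining and using $\sqrt{2\pi} < 3$ (the residual exponential term is negligible for $n \geq 2$) yields the advertised $3\kappa_1'/\sqrt{n}$ slack.

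For the fourth-moment bound I would partition on the feasibility of \cref{eq:jm_program}. Whenever the program is feasible, its minimizer $\bw$ satisfies $\normi{\hbSigma \bw - \xstar} \leq \lambda_{\bw}$ by the constraint; whenever it is infeasible, $\bw = 0$ by convention and $\normi{\hbSigma \bw - \xstar} = \normi{\xstar}$. Since the event $\mathcal{F}(a)$ of \cref{lem:const_conc} is contained in the feasibility event, the infeasibility probability is bounded by $2(p\vee n)^{-(a^2/2-1)}$ whenever $n \geq a^2\log(p\vee n)$. This gives $\E[\normi{\hbSigma\bw - \xstar}^4] \leq \lambda_{\bw}^4 + 2\normi{\xstar}^4 (p\vee n)^{-(a^2/2-1)}$, and applying the subadditivity $\sqrt{x+y} \leq \sqrt{x} + \sqrt{y}$ together with the identity $(p\vee n)^{-(a^2/2-1)/2} = (p\vee n)^{-c_2}$ for $c_2 = a^2/4 - 1/2$ produces the stated inequality.

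The third claim is immediate: if $\lambda_{\bw} \geq \normi{\xstar}$ then $\tilde{\bw} = 0$ is feasible since $\normi{\hbSigma\cdot 0 - \xstar} = \normi{\xstar} \leq \lambda_{\bw}$, and attains objective value $0$. Because $\hbSigma \succeq 0$ makes the objective non-negative, $\bw = 0$ is a minimizer. The only real obstacle is bookkeeping—tracking constants through the tail integral in the first claim and ensuring the exponent $c_2$ transfers correctly through the square root in the second—so the argument is essentially mechanical given the two preceding lemmas and the feasibility structure of the program.
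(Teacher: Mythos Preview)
Your proposal is correct and follows essentially the same route as the paper: integrate the sub-exponential tail from \cref{lem:obj_conc} (splitting at $t=\kappa_1'$) for the first claim, partition on feasibility and invoke \cref{lem:const_conc} for the second, and observe that $\tilde\bw=0$ is feasible with zero objective for the third. One small bookkeeping caveat: using the factor of $2$ from the statement of \cref{lem:obj_conc} makes your constant slightly overshoot $3\kappa_1'/\sqrt{n}$ for $n\in\{2,3\}$; the paper's proof instead uses the one-sided bound (without the $2$) derived inside that lemma's proof, which is what makes the constant $3$ work for all $n\geq 2$.
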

\begin{proof}
    Using Lemma \ref{lem:obj_conc} we have that, 
    \begin{align}
    & \E[\bw^\top \hbSigma \bw] = \xstar \bOmega \xstar + \int_0^{\infty} \Pr[\bw^\top \bSigma_n \bw \geq \xstar \bOmega \xstar + t ]  \\
    & \leq \xstar \bOmega \xstar + \int_{0}^{\kappa_1'} [\exp [-n/2  (t/ \kappa_1')^2] dt + \int_{\kappa_1'}^{\infty} [\exp [-n/2  (t/\kappa_1')] dt \\
    & \leq \xstar \bOmega \xstar + \frac{ 2 \kappa_1'}{\sqrt{n}} + \frac{2 \kappa_1' e^{-n/2}}{n} \leq \xstar \bOmega \xstar + \frac{3 \kappa'_1}{\sqrt{n}}
    \end{align}
    which holds for $n \geq 2$.
    
    Similarly, directly applying Lemma \ref{lem:const_conc} we obtain,
    \begin{align}
     & \E[\normi{\hbSigma \bw - \xstar }^4] = \E[\normi{\hbSigma \bw - \xstar }^4 \mone[\mF(a)]] + \E[\normi{\hbSigma \bw - \xstar }^4 \mone[\mF^{c}(a)]] \leq \\ & \lambda_{\bw}^4 + \normi{\xstar}^4 \Pr[\mF^c(a)] \leq \lambda_{\bw}^4 + 2 \normi{\xstar}^4 (p \vee n)^{-c_2}
    \end{align}
    $c_2 =a^2/2-1$, since the convex program outputs $\bw = 0$ on the event $\mathcal{F}^c(a)$. The first conclusion follows using subadditivity of $\sqrt{\cdot}$.
    
    For the second statement note the convex program in \cref{eq:jm_program} always admits $\bw=0$ as a feasible point under the condition $\lambda_{\bw} \geq \normi{\xstar}$, in which case $\bw = 0$ is a global minima of the objective since $\hbSigma$ is p.s.d.
\end{proof}

\section{Proofs for \cref{sec:om_ub}: Orthogonal Moment Estimators}
\label{sec:app_ub_om}
We begin by providing the consistency proofs for the orthogonal moment estimators introduced in \cref{sec:om_ub}. However, first we make a remark which relates the assumptions on the design we make to the properties of the noise variable $\eta$.

\begin{remark}
    Under the random design assumption on $\bx$, if we consider $\bx' = [t, \bz] = (\bU^{-1})^\top \bx$, then by Assumption \ref{assump:design},
    $\bg_0 = \arg \min_{g} \E_{\bX}[(t-\bz^\top \bg_0)^2]$ can be thought of as the best linear approximator interpreted in the regression framework. Hence it can also be related to the precision matrix and residual variance as:
    \begin{align}
        \bOmega_{t, \cdot} = \frac{(1, -\bg_0)}{\sigma_{\eta}^2}.
    \end{align}
    In this setting, we have that $\E[\eta^2] = \bSigma_{tt}-\bg_0^\top \bSigma_{\bz \bz} \bg_0 \geq 0$. Moreover from the variational characterization of the minimum eigenvalue we also have that $\E[\eta^2] \geq \Cmin/\normt{\xstar}$. Thus $\norm{\bg_0}_2^2 \leq \frac{\bSigma_{tt}}{C_{\min}}\leq \cond/\normt{\xstar}$ and $\E[\eta^2] \leq \bSigma_{tt} \leq \Cmax /\normt{\xstar}$. Moreover, the treatment noise $\eta$ is also a sub-Gaussian random variable, since $\eta = t-\bz^\top \bg_0 = (1, -\bg_0)^\top \bx'$. Recall by Assumption \ref{assump:cov} that $\E[(\bx^\top \bv)^p] \leq \kappa^{2p} \Vert \bSigma^{1/2} \bv \Vert_2^{2p}$ while $\eta = (1, -\bg_0)^\top \bx'$. Thus we have that $\E[\eta^{2p}] = \kappa^{2p} \Cmax^p (1+\normt{\bg_0}^2)^p/\normt{\xstar}^{2p} \leq O((\kappa
    ^2 \cond \Cmax/\normt{\xstar}^2)^p)$. Similarly $\E[(\bz^\top \bg_0)^{2p}] \leq (\kappa^2 \normt{\bg_0}^2 \Cmax/\normt{\xstar}^2)^p$. 
    \label{rmk:precision_rows}
\end{remark}

\subsection{\cref{thm:oml_ub}}
\label{sec:app_oml_ub}
We now present the Proof of \cref{thm:oml_ub}.

\begin{proof}[Proof of \cref{thm:oml_ub}]
To begin we rescale the $\xstar$ such that is has unit-norm (and restore the scaling in the final statement of the proof). In order to calculate the mean-squared error of our prediction $\E[(\yom-\xstar^\top \bbeta_0)^2]$, it is convenient to organize the calculation in an error expansion in terms of the moment function $m$. For convenience we define the following (held-out) prediction errors $\bDelta_{f}(\bz_i) = \bz_i^\top (\bbtf-\bbf_0)$, and $\bDelta_{g}(\bz_i) = \btg(\bz_i)-\bg_0(\bz_i)$ of $\bbf$ and $\bg( \cdot )$ which are trained on first-stage data but evaluated against the second-stage data. Note that as assumed in the Theorem, $\bg_0(\bz) = \bz^\top \bg_0$.
Also note the moment equations only depend on $\bbf$ and $\bg( \cdot )$ implicitly through the evaluations $\bz^\top \bbf$ and $\bg(\bz)$, so derivatives of the moment expressions with respect to $\bz^\top \bbf$ and $\bg(\bz)$, refer to derivatives with respect to scalar. Recall the sums of the empirical moment equation here only range over the second fold of data, while $\bbtf$ and $\btg$ are fit on the first fold. The empirical moment equations can be expanded (exactly) as,
    \begin{align}
        \underbrace{\frac{1}{n/2} \sum_{i=1}^{n/2} \nabla_{\theta} m(t_i, y_i, \theta_0, \bz_i^\top\bbtf, \btg(\bz_i))}_{J} (\theta_0-\yom) = \frac{1}{n/2} \sum_{i=1}^{n/2} m(t_i, y_i, \theta_0, \bz_i^\top \bbtf, \btg(\bz_i))
    \end{align}
    since by definition $\frac{1}{n/2} \sum_{i=1}^{n/2} m(t_i, y_i, \yom, \bz_i^\top \bbtf, \btg(\bz_i)) = 0$. Then we further have that, 
    \begin{align}
         & \frac{1}{n/2} \sum_{i=1}^{n/2} m(t_i, y_i, \theta_0, \bz_i^\top \bbtf, \btg(\bz_i)) = \underbrace{\frac{1}{n/2} \sum_{i=1}^{n/2} \nabla m(t_i, y_i, \theta_0, \bz_i^\top \bbf_0, \bg_0(\bz_i))}_{A} + \\ 
         & \underbrace{\frac{1}{n/2} \sum_{i=1}^{n/2} \nabla_{\bz^\top \bbf} m(t_i, y_i, \theta_0, \bz_i^\top \bbf_0, \bg_0(\bz_i))^\top(\bDelta_f)}_{B_1} + \\
         & \underbrace{\frac{1}{n/2} \sum_{i=1}^{n/2} \nabla_{\bg(\bz)} m(t_i, y_i, \theta_0, \bz_i^\top \bbf_0, \bg_0(\bz_i))^\top(\bDelta_g)}_{B_2} + \\
         & \underbrace{\frac{1}{n/2} \sum_{i=1}^{n/2} \nabla_{\bz^\top \bbf, \bg(\bz)} m(t_i, y_i, \theta_0, \bz_i^\top \bbf_0, \bg_0(\bz_i)) [\bDelta_f, \bDelta_g]}_{C} 
    \end{align}
We first turn to controlling the moments of $A, B_1, B_2, C$. We use as convenient shorthand $\zeta = \kappa^2 \Cmax$. Similarly we also use $\fratet = (\E[\bDelta_f(\bz)^4])^{1/4}$.
\begin{enumerate}
    \item For $A=\frac{1}{n/2} \sum_{i=1}^{n/2} \eta_i \epsilon_i$, note that $\E[m(t_i, y_i, \theta_0, \bz_i^\top \bbf_0, \bg_0(\bz_i)) | \bz_i ]=0$ so it follows that, 
    \begin{align}
        \E[A^2] = O(\frac{1}{n} \E[\eta^2 \epsilon^2]) = \frac{1}{n} \sigma_{\epsilon}^2 \sigma_{\eta}^2
    \end{align}
    \item For $B_1=\frac{1}{n/2} \sum_{i=1
}^{n/2} \bDelta_f(\bz_i) \eta_i$. Note $\E[\nabla_{\bz^\top \bbf} m(t_i, y_i, \theta_0, \bz_i^\top \bbf_0, \bg_0(\bz_i)) | \bz_i] = 0$ since $\E[\eta_i | \bz_i]=0$. So we have using sub-gaussianity of the random vector $\bz$,  sub-gaussianity of $\eta$ and independence that, 
    \begin{align}
        \E[B_1^2] = O(\frac{1}{n} \E[(\bDelta_f(\bz))^2 \eta^2]) \leq O(\frac{1}{n} \fratet^2 \sigma_{\eta}^2)
    \end{align}
    \item For $B_2 = \sum_{i=1}^{n} \bDelta_g(\bz_i) \epsilon_i$. Note $\E[\nabla_{\bg(\bz)} m(t_i, y_i, \theta_0, \bz_i^\top \bbf_0, \bg_0(\bz_i)) | \bz_i] = 0$ using independence of $\epsilon_i$ and the fact $\E[\epsilon_i]=0$. Once again using independence,
    \begin{align}
        \E[B_2^2] = \frac{1}{n} \E[\epsilon^2 (\bDelta_g(\bz))^2] \leq O(\frac{1}{n} \sigma_{\epsilon}^2 r_g^2)
    \end{align}
    \item For $C=\frac{1}{n}  \sum_{i=1}^n \bDelta_g(\bz_i) \bDelta_f(\bz_i)$. Note that in general for the remainder term $\E[\nabla_{\bz^\top \bbf, \bg(\bz)} m(t_i, y_i, \theta_0, \bz_i^\top \bbf_0, \bg_0(\bz_i)) | \bz_i] \neq 0$; however in some cases we can exploit unless we can exploit \textit{unconditional} orthogonality: $\E[\nabla_{\bz^\top \bbf, \bg(\bz)} m(t_i, y_i, \theta_0, \bz_i^\top \bbf_0, \bg_0(\bz_i))] = 0$ to obtain an improved rate although this is not mentioned in the main text. 
    \begin{itemize}
    \item In the absence of unconditional orthogonality, we have by the Cauchy-Schwarz inequality that,
    \begin{align}
        \E[C^2] \leq O(\sqrt{\E[(\bDelta_g(\bz))^4]} \sqrt{\E[ (\bDelta_f(\bz))^4]}) \leq O(\fratet^2 \gratet^2)
    \end{align}
    \item In the presence of unconditional orthogonality we have that,
    \begin{align}
        \E[C^2] = \frac{1}{n} \fratet^2 \gratet^2
    \end{align}
    as before using Cauchy-Schwarz but cancelling the cross-terms.
    \end{itemize}
\end{enumerate}

Now we can amalgamate our results. Before doing so, note that $\fratet \leq \zeta \bratet$ since in the description of the algorithm the estimator $\bf$ is defined by rotating an estimate of $\bbeta_0$ in the base regression procedure (and consistency of the (held-out) prediction error is preserved under orthogonal rotations).

First define the event $\mJ = \{ J \leq \frac{1}{4} \sigma_\eta^2 \}$. For the orthogonal estimator defined in the algorithm, on the event $\mJ$, the estimator will output the estimate from the first-stage base regression using $\yom = \xstar^\top \bbeta_0$. So introducing the indicator of this event, and using Cauchy-Schwarz, we have that, 
\begin{align}
    & \E[(\yom-\theta_0)^2] =   \left [\E[(\yom-\theta_0)^2 \mathbbm{1}(\mJ)] + \sqrt{\E[\norm{\bDelta_{\beta}(\xstar)}_2^4]}\sqrt{\Pr[\mJ]} \right] \\ 
    & \leq \left[ O(\frac{\E[A^2+B_1^2+B_2^2+C^2]}{(\sigma_{\eta}^2)^2}) + O(\bratet^2) \sqrt{O(((\frac{\xi}{\sigma_{\eta}^2})^4 + \frac{ \xi^2}{(\sigma_{\eta}^2)^4} \grate^4)  \cdot \frac{1}{n^2})} \right] \\
    & 
    \leq \normt{\xstar}^2 \Big [ O(\frac{\sigma_\eta^2 \sigma_\epsilon^2 + \zeta \bratet^2 \sigma_\eta^2 + \grate^2 \sigma_{\epsilon}^2}{(\sigma_{\eta}^2)^2 n}) + O((((\frac{\xi}{\sigma_{\eta}^2})^2 + \frac{ \xi}{(\sigma_{\eta}^2)^2} \gratet^2)  \cdot \frac{1}{n}) \cdot \bratet^2) \ + \\
    & \begin{cases}
    & O(\frac{\zeta^2 \bratet^2 \gratet^2}{(\sigma_{\eta}^2)^2 n}) \quad \text{with unconditional orthogonality} \\ 
    & O(\frac{\zeta^2 \bratet^2 \gratet^2}{(\sigma_{\eta}^2)^2} \quad \text{without unconditional orthogonality}
    \end{cases} 
\end{align}
where $\Pr[\mJ]$ is computed using \cref{lem:oml_threshold}. If we consider the case without unconditional orthogonality, and assume since $\Cmax \geq \sigma_{\eta}^2 \geq \Cmin$, the above results simplifies (ignoring conditioning-dependent factors) to the theorem statement,
\begin{align}
\normt{\xstar}^2 \left[ O(\frac{ \sigma_\epsilon^2}{\sigma_{\eta}^2 n}) + O(\frac{\bratet^2 \gratet^2}{(\sigma_{\eta}^2})^2)  + O(\frac{\bratet^2 \sigma_\eta^2 +   \grate^2 \sigma_{\epsilon}^2}{(\sigma_{\eta}^2)^2 n}) \right]
\end{align}
\end{proof}

\begin{lemma}\label{lem:oml_threshold}
 Let Assumptions \ref{assump:cov}, \ref{assump:design}, and \ref{assump:noise2} hold and suppose $\bg_0(\bz) = \bz^\top \bg_0$ in \cref{eq:oml_model}. Defining $J = \frac{1}{n} \sum_{i=1}^{n} J_i = \frac{1}{n} \sum_{i=1}^n t_i(t_i-\btg(\bz_i))$ as in the description of first-order OM estimator with $\tau \leq \frac{1}{4} \E[\eta^2]$, then,
 \begin{align}
     \Pr[\frac{1}{n} \sum_{i=1}^n J_i \leq \tau] \leq O(((\frac{\xi}{\sigma_{\eta}^2})^4 + \frac{ \xi^2}{(\sigma_{\eta}^2)^4} \gratet^4)  \cdot \frac{1}{n^2})
 \end{align}
 where $\xi = \cond \Cmax \kappa^2$ and $\zeta = \kappa^2 \Cmax$ and $\gratet = (\E[\normt{\bDelta_g(\bz)}^4])^{1/4}$.
\end{lemma}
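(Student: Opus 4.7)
The plan is to treat $J$ as a sample mean whose expectation sits close to $\sigma_\eta^2$, and then apply a fourth-moment Markov inequality, exploiting the slack $\sigma_\eta^2 - \tau \geq \tfrac{3}{4}\sigma_\eta^2$ so the tail probability picks up an extra factor of $(\sigma_\eta^2)^{-4}$.

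First I would compute $\E[J \mid \btg]$. Writing $t_i - \btg(\bz_i) = \eta_i - \bDelta_g(\bz_i)$ and $t_i = \bz_i^\top \bg_0 + \eta_i$, and using Assumption~\ref{assump:noise2} (so $\E[\eta_i \mid \bz_i] = 0$, $\E[\eta_i^2] = \sigma_\eta^2$) reduces each summand to
\[
\E[J_i \mid \bz_i, \btg] = \sigma_\eta^2 - \bg_0(\bz_i)\,\bDelta_g(\bz_i).
\]
Integrating out $\bz_i$ and applying Cauchy--Schwarz with the bound $\E[\bg_0(\bz)^2] = O(\zeta)$ from Remark~\ref{rmk:precision_rows} (where $\zeta = \kappa^2 \Cmax$) gives $|\E[J] - \sigma_\eta^2| \lesssim \sqrt{\zeta}\,\gratet$.

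Next I would bound $\E[(J - \sigma_\eta^2)^4]$ by the triangle decomposition
\[
\E[(J - \sigma_\eta^2)^4] \lesssim \E[(J - \E[J \mid \btg])^4] + \E[(\E[J \mid \btg] - \sigma_\eta^2)^4].
\]
The bias term is $O(\zeta^2 \gratet^4)$ by the same Cauchy--Schwarz estimate. For the stochastic term, sample splitting makes the summands $J_i - \E[J_i \mid \btg]$ i.i.d.\ and mean-zero conditionally on $\btg$, so Rosenthal's inequality gives
\[
\E[(J - \E[J \mid \btg])^4 \mid \btg] \lesssim \tfrac{1}{n^2}\bigl(\E[(J_1 - \E[J_1\mid\btg])^2\mid\btg]\bigr)^2 + \tfrac{1}{n^3}\E[(J_1 - \E[J_1\mid\btg])^4\mid\btg].
\]
The per-sample moments can be controlled by the sub-Gaussianity of $t$ and $\eta$ (together with the $\btg$-measurable factor $\bDelta_g(\bz)$), producing a dominant $O(\xi^2/n^2)$ scaling in the variance-squared term for $\xi = \cond \Cmax \kappa^2$.

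Finishing with Markov gives
\[
\Pr[J \leq \tau] \leq \Pr[|J - \sigma_\eta^2| \geq \tfrac{3}{4}\sigma_\eta^2] \leq \frac{\E[(J-\sigma_\eta^2)^4]}{(3\sigma_\eta^2/4)^4} \lesssim \frac{(\xi/\sigma_\eta^2)^4 + (\xi^2/(\sigma_\eta^2)^4)\,\gratet^4}{n^2},
\]
which matches the claim. The main obstacle will be the fourth-moment bookkeeping: carefully tracking the sub-Gaussian/Orlicz constants for products like $t\eta$ and $t\,\bDelta_g(\bz)$ (where $\btg$ is frozen by conditioning on the first fold but $\bz$ still varies in the second), so that the $\xi^2$ appears in the variance-squared term and the $\gratet^4$ contribution cleanly absorbs the bias. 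A minor secondary check is confirming the $1/n^3$ Rosenthal remainder is dominated by the $1/n^2$ variance-squared piece, which holds as long as the per-sample fourth moment is bounded on the same scale as the second moment squared.
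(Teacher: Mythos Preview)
Your overall strategy is close in spirit to the paper's, but there is a genuine gap in how you treat the bias contribution.

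You correctly bound $\E[(\E[J\mid\btg]-\sigma_\eta^2)^4]=O(\zeta^2\gratet^4)$, and this quantity carries \emph{no} dependence on $n$: it is a deterministic offset once $\btg$ is frozen, and averaging over the $n$ second-fold points does nothing to shrink it. After the final Markov division by $(\tfrac34\sigma_\eta^2)^4$ you therefore obtain a contribution of order $\zeta^2\gratet^4/\sigma_\eta^8$, not the $\xi^2\gratet^4/((\sigma_\eta^2)^4 n^2)$ that appears in your last display. The $1/n^2$ factor on the $\gratet^4$ term is asserted but never produced by your argument; your own decomposition yields at best
\[
\Pr[J\le\tau]\ \lesssim\ \frac{\xi^4}{(\sigma_\eta^2)^4 n^2}\;+\;\frac{\zeta^2\gratet^4}{(\sigma_\eta^2)^4},
\]
which does not match the stated bound.

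The paper organizes things differently so as not to lump everything into a single fourth-moment bound on $J-\sigma_\eta^2$. It expands $J_i=a_i+b_i$ with $a_i=\eta_i^2+(\bz_i^\top\bg_0)\eta_i$ (mean $\sigma_\eta^2$, sub-exponential with parameter $O(\xi)$, and crucially \emph{free of} $\btg$) and $b_i$ collecting the $\bDelta_g$-dependent remainder. A union bound at level $\epsilon'=\tfrac18\sigma_\eta^2$ then reduces the event $\{J\le\tau\}$ to two pieces: (i) $\{\tfrac1n\sum a_i<\tau+\epsilon'\}$, handled by a sub-exponential tail inequality (exponentially small in $n$, hence dominated by $(\xi/\sigma_\eta^2)^4/n^2$), and (ii) $\{|\tfrac1n\sum b_i|>\epsilon'\}$, for which the paper invokes a fourth-moment Marcinkiewicz--Zygmund bound to obtain the claimed $O(\xi^2\gratet^4/((\sigma_\eta^2)^4 n^2))$. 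Your Rosenthal step plays the role of (i) but via moments rather than exponential tails; the decisive structural difference is that the paper isolates the $\bDelta_g$-dependent piece as its own fluctuating sum rather than leaving it as a pure conditional-mean bias with no sample-size decay.
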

\begin{proof}
To begin we rescale the $\xstar$ such that is has unit-norm (and restore the scaling in the final statement of the proof).
We begin by establishing concentration of the $J$ term which justifies the thresholding step in the estimator using a 4th-moment Markov inequality. We have that $J = \frac{1}{n} \sum_{i=1}^n \nabla_{\theta} m(t_i, y_i, \theta_0, \bz_i^\top \bbtf, \btg(\bz_i)) = \frac{1}{n} \sum_{i=1}^{n} t_i (t_i-\btg(\bz_i)) = \frac{1}{n} \sum_{i=1}^n J_i$. Note that we assume $t_i = \bz_i^\top \bg_0+\eta_i$. Then, for an individual term we have that,
    \begin{align}
    & J_i = (\bz_i^\top \bg_0 + \eta_i)(\bDelta_g(\bz_i) + \eta_i) = \underbrace{\eta_i^2  + \bz_i^\top \bg_0 \eta_i}_{a_i} + \underbrace{\bz_i^\top \bg_0 (\bDelta_g(\bz_i))}_{b_i}
    \end{align}
Recall by Remark \ref{rmk:precision_rows}, that $\eta = (1, -\bg_0)^\top \bx'$, and that $\Vert \bg_0 \Vert_2^2 = O(\cond)$. Using sub-gaussianity of $\bx'$ we have that  $\eta_i \sim \sE( 8\cond \kappa^2 \Cmax, 8\cond \kappa^2 \Cmax)$ by Lemma \ref{lem:prod_conc}. Similarly, $\bz_i^\top \bg_0 \eta_i \sim \sE(8 \cond \kappa^2 \Cmax, 8 \cond \kappa^2 \Cmax)$ since $\bz_i^\top \bg_0 \sim \sG(\Cmax \kappa^2 \cond)$. We introduce $\xi = \cond \Cmax \kappa^2$ and $\zeta = \kappa^2 \Cmax$.

Analyzing each term, we have that,
\begin{itemize}
\item For the first terms, $\E[\eta_i^2] = \E[\eta^2]$. Similarly for the second term, note $\E[b_i]=0$ since $\eta_i$ is conditionally (on $\bz$) mean-zero. Hence we have that each $a_i$ is mean-zero and $a_i \sim \sE(16 \xi, 16 \xi)$.
\item For the final term, note $\E[(\bz_i^\top \bg_0(\bDelta(\bz_i)))^4] \leq
O(\xi^2  r_g^4)$ by Cauchy-Schwarz.
\end{itemize}

Since, $J = \frac{1}{n} \sum_{i=1}^{n} a_i + b_i + c_i$, if $\abs{\frac{1}{n} \sum_{i=1}^{n} b_i+c_i} \leq \epsilon'$ and $\frac{1}{n} \sum_{i=1}^{n} a_i  \geq \epsilon'+\tau$ then $\frac{1}{n} \sum_{i=1}^{n} J_i > \tau$. So a union bound gives,
\begin{align}
    \Pr[\frac{1}{n} \sum_{i=1}^{n} J_i \leq \tau] \leq  \Pr[\frac{1}{n} \sum_{i=1}^{n} a_i  < +\epsilon'+\tau]  + \Pr[\abs{ \frac{1}{n} \sum_{i=1}^{n} b_i+c_i } \geq \epsilon' ]
\end{align}
    
Using a sub-exponential tail bound for the first term and the 4th-moment Marcinkiewicz–Zygmund inequality for the second we obtain,
    \begin{itemize}
        \item For the first term \begin{align} 
        \Pr[\frac{1}{n} \sum_{i=1}^{n} a_i - \E[\eta^2] \leq -(\underbrace{-\epsilon'- \tau +\E[\eta^2]}_t)] \leq O(\exp(-c n \min(\frac{t^2}{\xi^2}, \frac{t}{\xi}))
        \end{align}
        for some universal constant $c$ (that may change line to line).
        \item For the second term 
        \begin{align}  \Pr[ \abs{\frac{1}{n} \sum_{i=1}^{n} b_i} \geq +\epsilon'] \leq 
        O(\frac{\xi^2 \grate^4}{(\epsilon')^4 n^2})
        \end{align}
    \end{itemize}
    Taking $\epsilon' = \frac{1}{8} \sigma_{\eta}^2$ and $\tau \leq \frac{1}{4} \sigma_{\eta}^2$ it follows that $t \geq \frac{1}{2} \sigma_{\eta}^2$. Hence the second term can be simplified to $O(\exp(-c n \min(\frac{t^2}{\xi^2}, \frac{t}{\xi})) = O(\max (\frac{\xi^2}{\sigma_{\eta}^2}, \frac{\xi}{\sigma_{\eta}})^2 \frac{1}{n^2})$ Hence the desired bound becomes, $\Pr[\frac{1}{n} \sum_{i=1}^n J_i \leq \epsilon] \leq O(\frac{\xi^2 \grate^4}{(\sigma_{\eta}^2)^4 n^2})+ O(\exp(-c n \min(\frac{t^2}{\xi^2}, \frac{t}{\xi})) = O(((\frac{\xi}{\sigma_{\eta}^2})^4 + \frac{ \xi^2}{(\sigma_{\eta}^2)^4} \grate^4)  \cdot \frac{1}{n^2})$. 
\end{proof}
    
\subsection{\cref{cor:oml_ub_ridge,,cor:oml_ub_lasso}}
\label{sec:app_oml_ub_ridge}

We conclude the section by presenting the proofs of \cref{cor:oml_ub_ridge} and \cref{cor:oml_ub_lasso} which instantiate the OM estimators when both first-stage regressions are estimated with the Lasso.

First we prove \cref{cor:oml_ub_ridge}.
\begin{proof}[Proof of \cref{cor:oml_ub_ridge}]
    It suffices to compute $\bratet$ and $\gratet$. By using \cref{lem:ridge_total_ub},
    \begin{talign}
         & \bratet \leq O \left( \frac{\sigma_{\epsilon}^2 p}{n} \right)
    \end{talign}
    by utilizing condition on $\lambda_{\beta}$ in the theorem statement and that $\normi{\bbeta_0}/\sigma_{\epsilon} \leq O(1)$ and $n \geq \Omega(\kappa^4 \cond^2 p)$.
    Similarly, for the case of $\gratet$ in the case the estimator is parametric Lasso estimator it follows that $\gratet = (\E[(\bz^\top(\bg_0-\bg))^4])^{1/4} \leq O(\sqrt{\zeta}\E[(\Vert \bg_0 - \bg \Vert_2^4])^{1/4})$ where $\zeta = \kappa^2 \Cmax$. Similar to above we obtain that, 
    \begin{talign}
        \gratet \leq O(\frac{\sigma_{\eta}^2 p}{n})
    \end{talign}
    since we can verify that the conditions of \cref{lem:ridge_total_ub} also hold when $t$ is regressed against $\bz$ under the hypotheses of the result. In particular, note since the regression for $\bg$ is performed between $t$ and $\bz$ (which up to an orthogonal rotation is a subvector of the original covariate $\bx$ itself), the minimum eigenvalue for this regression is lower-bounded by the minimum eigenvalue of $\bX$. Moreover by \cref{rmk:precision_rows}, $\normt{\bg_0} \leq \sqrt{\cond}$.
\end{proof}
    
\begin{proof}[Proof of \cref{cor:oml_ub_lasso}]
    It suffices to compute $\bratet$ and $\gratet$. The computation for $\bratet$ is similar to the one for $\brateo$. By combining \cref{lem:lasso_upper_sg} and \cref{lem:uncond_exp}, and assuming $p \geq 20$ and $n \geq \frac{c_2 \kappa^4}{C_{\min}} s \log(2ep)$, there exists sufficiently large $c$ such that,
    \begin{talign}
         & \bratet \leq O \left( \left(\frac{ \lambda_{\beta} \sqrt{s_{\beta}}}{\Cmin^2}  \right) + O(\left( \frac{\sigma_{\epsilon}}{\sqrt{ns_{\beta}}} \right)  + (\frac{1}{n \lambda_{\beta} \sqrt{s_{\beta}}}) \right) + O \left(\frac{\sigma_{\epsilon}}{\sqrt{n}}  + \normo{\bbeta_0} e^{-n c/(8 \kappa^4)} \right) = \\
         & O \left( \frac{\lambda_{\beta} \sqrt{s_{\beta_0}}} {\Cmin^2}  \right)  + O(\normo{\bbeta_0} e^{-n c/(8 \kappa^4)}) \leq O(\frac{\lambda_{\beta} \sqrt{s_{\bbeta_0}}}{\Cmin^2})
    \end{talign}
    using the lower bound on $\lambda_{\beta}$ in the theorem statement and that $\normi{\bbeta_0}/\sigma_{\epsilon} = o(e^{c_1 n})$ for some sufficiently small $c_1$.
    Similarly, for the case of $\gratet$ in the case the estimator is parametric Lasso estimator it follows that $\gratet = (\E[(\bz^\top(\bg_0-\bg))^4])^{1/4} \leq O(\sqrt{\zeta}\E[(\Vert \bg_0 - \bg \Vert_2^4])^{1/4})$ where $\zeta = \kappa^2 \Cmax$. Similar to above we obtain that, 
    \begin{talign}
        \gratet \leq O(\frac{\lambda_g \sqrt{s_{\bg_0}}}{\Cmin^2}) + O(\normo{\bg_0}  e^{-nc/(8 \kappa^4)}) \leq O(\frac{\lambda_{\bg} \sqrt{s_{\bg_0}}}{\Cmin^2})
    \end{talign}
    since we can verify that the conditions of \cref{lem:lasso_upper_sg} and \cref{lem:uncond_exp} also hold when $t$ is regressed against $\bz$ under the hypotheses of the result. In particular, note since the regression for $\bg$ is performed between $t$ and $\bz$ (which up to an orthogonal rotation is a subvector of the original covariate $\bx$ itself), the strong-restricted eigenvalue for this regression is lower-bounded by the strong-restricted eigenvalue of $\bX$. Moreover by \cref{rmk:precision_rows}, $\normo{\bg_0} \leq s_{\bg_0} \normt{\bg_0} \leq s_{\bg_0} \sqrt{\cond}$.
\end{proof}
     
\section{Auxiliary Lemmas}\label{sec:aux_appendix}
We now introduce a standard concentration result we will repeatedly use throughout, 
\begin{lemma}
    Let $x, y$ be mean-zero random variables that are both sub-Gaussian with parameters $\kappa_1$ and $\kappa_2$ respectively. Then $z = xy-\E[xy] \sim \sE(8 \kappa_1 \kappa_2, 8 \kappa_1 \kappa_2)$.
    \label{lem:prod_conc}
\end{lemma}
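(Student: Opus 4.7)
The plan is to pass from the sub-Gaussian hypotheses on $x$ and $y$ to a sub-exponential MGF bound on the centered product $z = xy - \E[xy]$, via Cauchy--Schwarz at the level of moments followed by a Taylor expansion of the MGF.

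First, I would extract uniform moment bounds from the sub-Gaussian assumption. A standard Chernoff argument shows that $\Pr[|x| \geq t] \leq 2\exp(-t^2/(2\kappa_1^2))$, and integrating the layer-cake identity $\E[|x|^{2p}] = \int_0^\infty 2p\, t^{2p-1} \Pr[|x| \geq t]\,dt$ gives $\E[|x|^{2p}] \leq 2\, p!\,(2\kappa_1^2)^p$, with the analogous bound for $y$ in terms of $\kappa_2$. The Cauchy--Schwarz inequality then yields a product-moment bound
$$\E[|xy|^p] \leq \sqrt{\E[x^{2p}]\,\E[y^{2p}]} \leq 2\, p!\,(2\kappa_1\kappa_2)^p.$$

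Next, I would convert this into an MGF bound on $z$. Centering costs at most a factor of $2^p$ by Jensen's inequality, $\E[|z|^p] \leq 2^{p-1}(\E[|xy|^p] + |\E[xy]|^p) \leq 2^p \E[|xy|^p]$, so $\E[|z|^p] \leq 2\, p!\,(4\kappa_1\kappa_2)^p$. Since $\E[z]=0$, expanding the MGF and dropping the linear term gives
$$\E[\exp(\lambda z)] \leq 1 + \sum_{p \geq 2} \frac{\lambda^p\, \E[z^p]}{p!} \leq 1 + 2\sum_{p \geq 2} (4\lambda \kappa_1\kappa_2)^p.$$
Restricting to $|\lambda| \leq 1/(8\kappa_1\kappa_2)$ so that $|4\lambda\kappa_1\kappa_2|\leq 1/2$, I can sum the geometric tail to obtain a quadratic bound $\E[\exp(\lambda z)] \leq 1 + C\lambda^2\kappa_1^2\kappa_2^2$, and then use $1+u \leq e^u$ to compare this against $\exp(\nu^2\lambda^2/2)$ with the claimed parameters.

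The main obstacle is arithmetic rather than conceptual: the lossy $2^p$ factor from centering, combined with the $2\,p!\,(2\kappa^2)^p$ moment bound, does not immediately give exactly $\nu = \alpha = 8\kappa_1\kappa_2$. To recover the stated constants I would sharpen the argument at the endpoints---for instance, handling $p=2$ separately via the tight bound $\E[z^2] \leq \E[(xy)^2]$, and using a more careful geometric-series comparison on the boundary $|\lambda|=1/\alpha$---but the structure of the argument (Chernoff tail $\to$ moment bound $\to$ Cauchy--Schwarz $\to$ Taylor MGF) is otherwise routine.
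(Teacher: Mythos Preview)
Your approach is essentially identical to the paper's: expand the MGF as a Taylor series, bound the centered $k$-th moment by $2^{k}\E[|xy|^k]$ via Jensen, apply Cauchy--Schwarz to split $\E[|xy|^k]\leq\sqrt{\E[x^{2k}]\E[y^{2k}]}$, invoke the sub-Gaussian moment bound $\E[|x|^{2k}]\leq 2\,k!\,(2\kappa^2)^k$, and sum the resulting geometric series on $|\lambda|\leq 1/(8\kappa_1\kappa_2)$. Your worry about the exact constant is well placed---the paper's own final line $e^{(8\kappa_1\kappa_2\lambda)^2}\leq e^{(8\kappa_1\kappa_2\lambda)^2/2}$ is a slip, and both arguments as written actually deliver $\nu^2/2=64\kappa_1^2\kappa_2^2$ rather than $32\kappa_1^2\kappa_2^2$, which does not affect any downstream use.
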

\begin{proof}
    Using the dominated convergence theorem,
    \begin{align}
        & \E[e^{\lambda z}] = 1+\sum_{k=2}^{\infty} \frac{\lambda^k \E[\left(xy-\E[xy]\right)^k]}{k!} \\
        & \leq 1 + \sum_{k=2}^{\infty} \frac{\lambda^k 2^{k-1} (\E[\abs{xy}^k]+\E[\abs{xy}]^k)}{k!} \\
        & \leq 1 + \sum_{k=2}^{\infty} \frac{\lambda^k 2^{k} \sqrt{\E[x^{2k}] \E[y^{2k}]}}{k!} \\
        & \leq 1 + \sum_{k=2}^{\infty} \frac{\lambda^k 2^{k} (2\kappa_1 \kappa_2)^k (2k) \Gamma(k)}{k!} = 1 + 2 (4\lambda \kappa_1 \kappa_2)^2 \sum_{k=0}^{\infty} (4 \lambda \kappa_1 \kappa_2)^k \\
        & \leq 1+4(4\lambda \kappa_1 \kappa_2)^2 = 1+64 \lambda^2 \kappa_1^2 \kappa_2^2 \quad \text{ for } \abs{\lambda} \leq \frac{1}{8\kappa_1 \kappa_2} \\
        & \leq e^{ (\lambda \cdot 8\kappa_1 \kappa_2)^2} \leq e^{ (\lambda \cdot 8\kappa_1 \kappa_2)^2/2}
    \end{align}
\end{proof}
where we have used the fact a sub-Gaussian random variable $x$ with parameter $\kappa$ satisfies $\E[\abs{x}^k] \leq (2\kappa^2)^{k/2} k \Gamma(k/2)$ (which itself follows from integrating the sub-gaussian tail bound), along with the Cauchy-Schwarz and Jensen inequalities. 

\subsection{Random Design Matrices and Lasso Consistency}
Here we collect several useful results we use to show consistency of the Lasso estimator in the random design setting. 

Note Assumption \ref{assump:cov} ensures the population covariance for the design $\bX$ satisfies $\bSigma_{ii} \leq 1/2$, and a standard sub-exponential concentration argument establishes the result for a random design matrix under Assumption \ref{assump:design}. Accordingly, we introduce,
\begin{definition}
  The design matrix $\bX \in \mathbb{R}^{n \times p}$ if satisfies the $1$-column normalization condition if \begin{align}
      \max_{i \in [p]} \norm{\bX \be_j}_2^2/n =  \hat{\bSigma}_{ii} \leq 1 \quad 
  \end{align}
  \label{def:column_norm}
\end{definition}
and we have that, 
\begin{lemma}\label{lem:column_norm}
    Let $\kappa' = 8 \sqrt{2} \kappa$. If Assumptions \ref{assump:cov} and \ref{assump:design} hold, then
    \begin{align}
    \Pr[ \max_{i \in [p]} [(\hbSigma)_{ii} -\bSigma_{ii}] \geq  t] \leq p \exp(-\frac{n}{2} \min(\frac{t^2}{\kappa'^2}, \frac{t}{\kappa'} ))
    \end{align}
    and if $n \geq 2 a \max(\kappa'^2, \kappa') \log p$, then with probability at least $1-p^{-a}$
    \begin{align}
        \max_{i \in [p]} (\hbSigma)_{ii} \leq 1.
    \end{align}
\end{lemma}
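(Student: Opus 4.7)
The plan is to prove the first display by showing concentration of a single diagonal entry and then union-bounding over $i \in [p]$. Writing
\[
(\hbSigma)_{ii} - \bSigma_{ii} = \frac{1}{n}\sum_{k=1}^{n}\bigl(x_{k,i}^2 - \bSigma_{ii}\bigr),
\]
the right-hand side is a centered average of $n$ i.i.d.\ random variables, so the whole task reduces to verifying that each summand is sub-exponential with the right parameter and then invoking the standard Chernoff/Bernstein tail bound for averages.

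For the sub-exponential estimate, I would first deduce from Assumption \ref{assump:design} (applied with $\bv = \lambda \be_i$) that the scalar $x_{k,i}$ is sub-Gaussian with parameter $\kappa\sqrt{\bSigma_{ii}}$, which is at most $\kappa/\sqrt{2}$ by Assumption \ref{assump:cov}. Then Lemma \ref{lem:prod_conc}, applied with $x = y = x_{k,i}$, implies that the centered square $x_{k,i}^2 - \bSigma_{ii}$ is sub-exponential, with both parameters controlled by a constant multiple of $\kappa^2$; absorbing this constant gives the stated $\kappa' = 8\sqrt{2}\kappa$.

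With this in hand, the usual Chernoff bound for an average of i.i.d.\ $\sE(\kappa',\kappa')$ variables yields the single-coordinate estimate
\[
\Pr\!\left[(\hbSigma)_{ii} - \bSigma_{ii} \geq t\right] \leq \exp\!\Bigl(-\tfrac{n}{2}\min\bigl(t^2/\kappa'^2,\, t/\kappa'\bigr)\Bigr),
\]
and a union bound over $i \in [p]$ delivers the first claim of the lemma.

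For the second claim, since $\bSigma_{ii} \leq 1/2$, it suffices to guarantee $\max_i[(\hbSigma)_{ii} - \bSigma_{ii}] \leq 1/2$. Substituting $t = 1/2$ into the first display and using the hypothesis $n \geq 2a\max(\kappa'^2,\kappa')\log p$ makes the exponent dominate $(a+1)\log p$, so the probability bound collapses to $p^{-a}$ after the union bound. The only real subtlety is bookkeeping: ensuring that the sub-Gaussian parameter bound for $x_{k,i}$ and the constants in Lemma \ref{lem:prod_conc} combine to exactly the $\kappa'$ appearing in the statement. No step presents a conceptual obstacle.
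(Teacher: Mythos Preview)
Your proposal is correct and follows essentially the same approach as the paper's own proof: show each coordinate $x_{k,i}$ is sub-Gaussian with parameter $\kappa\sqrt{\bSigma_{ii}}$ via Assumption~\ref{assump:design}, apply Lemma~\ref{lem:prod_conc} to conclude the centered square is sub-exponential, invoke the standard sub-exponential tail bound for the average, union-bound over $i\in[p]$, and then specialize to $t=1/2$ using $\bSigma_{ii}\le 1/2$ for the second claim. The only delicate point is the constant bookkeeping you flag, and the paper's argument handles it in the same hand-waved manner.
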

\begin{proof}
    Note that $\bx_i = \bx^\top \be_i$ satisfies $\E[\exp(\lambda \bx_i)] \leq \exp(\lambda^2 \kappa^2 \bSigma_{ii}/2)$. For fixed $i$ we have that $(\hbSigma)_{ii} = \frac{1}{n} \sum_{i=1}^{n}(\bx_{i}^2-\bSigma_{ii})$. Since  $\bx_i \sim \sG(\kappa \sqrt{\bSigma_{ii}})$, using Lemma \ref{lem:prod_conc} along with a sub-exponential tail bound we have that, 
    \begin{align}
        \Pr[ (\hbSigma)_{ii} \geq \bSigma_{ii} + t] \leq \exp(-\frac{n}{2} \min(\frac{t^2}{\kappa'^2}, \frac{t}{\kappa'} ))
    \end{align}
    defining $\kappa' = 8 \kappa \sqrt{\bSigma_{ii}} \leq 4\sqrt{2} \kappa$.
    Since $\bSigma_{ii} \leq \frac{1}{2}$ using a union bound over the $p$ coordinates we have that $\max_{i \in [p]} (\hbSigma)_{ii} \geq 1$,
    with probability less than $p \exp(-\frac{n}{2} \min(\frac{t^2}{\kappa'^2}, \frac{t}{\kappa'}))$. If $t=\frac{1}{2}$ and $n \geq 2 a \max(\frac{\kappa'^2}{t^2}, \frac{\kappa'}{t}) \log p$ the stated conclusion holds.
\end{proof}
Similarly, although the sample covariance will not be invertible for $p > n$ we require it to be nonsingular along a restricted set of directions. To this end we
introduce the strong restricted eigenvalue condition (or SRE condition) defined in \citep[Equation 4.2]{bellec2016slope} which is most convenient for our purposes.
\begin{definition}
  Given a symmetric covariance matrix $\bQ \in \mathbb{R}^{p \times p}$ satisfying $\max_{i \in [p]} \bQ_{ii} \leq 1$, an integer $s$, and parameter $L$, the strong restricted eigenvalue of $Q$ is,
  \begin{align}
    \phi^2_{SRE}(Q, s, L) \equiv \min_{\theta} \left \{ \frac{ \langle \theta, \bQ \theta \rangle} {\norm{\theta_S}_2^2} : \theta \in \mathbb{R}^p, \norm{\theta}_1 \leq (1+L) \sqrt{s} \norm{\theta}_2 \right \}.
  \end{align}
  \label{def:SRE}
\end{definition}

In general the cone to which $\theta$ belongs in Definition \ref{def:SRE} is more constraining then the cone associated with the standard restricted eigenvalue condition of \citet{bickel2009simultaneous}. Interestingly, due to the inclusion of the 1-column normalization constraint in Definition \ref{def:SRE}, up to absolute constants, the SRE condition is equivalent to the standard RE condition (with the 1-column normalization constraint also included in its definition) \citep[Proposition 8.1]{bellec2016slope}. 

Importantly, using further equivalence with $s$-sparse eigenvalue condition, \citep[Theorem 8.3]{bellec2016slope} establishes the SRE condition holds with high probability under the sub-gaussian design assumption.

\begin{theorem}\citet[Theorem 8.3]{bellec2016slope}.
Let Assumptions \ref{assump:cov} and \ref{assump:design} hold. Then there exist absolute constants $c_1, c_2 > 0$ such that for $L \geq 0$, if $n \geq \frac{c_1 \kappa^4 (2+L)^2}{C_{\min}} s \log(2ep/s)$, then with probability at least $1-3 \exp(-c_2 n/\kappa^4)$, we have that
\begin{align}
    \max_{i \in [p]} (\hbSigma)_{ii} \leq 1 
\end{align}
and 
\begin{align}
    \phi^2_{SRE}(\hbSigma, s, L) \geq \frac{C_{\min}}{2}
\end{align}
\label{thm:design_SRE}
\end{theorem}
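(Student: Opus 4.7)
The plan is to split this into the column-normalization statement and the strong restricted eigenvalue lower bound. The column normalization piece is essentially \cref{lem:column_norm}: each diagonal entry $(\hbSigma)_{jj}$ is a normalized sum of independent sub-exponential variables, since $\bx_i^\top \be_j \sim \sG(\kappa \sqrt{\bSigma_{jj}})$ implies $(\bx_i^\top \be_j)^2 - \bSigma_{jj}$ is mean-zero sub-exponential with parameter $O(\kappa^2 \bSigma_{jj}) = O(\kappa^2)$ by \cref{lem:prod_conc}. A Bernstein tail bound at level $1/2$ followed by a union bound over $j \in [p]$ yields $\max_j (\hbSigma)_{jj} \leq 1$ with the claimed probability provided $n \gtrsim \kappa^4 \log p$, which is dominated by the stated sample complexity.

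For the SRE bound, I would use the standard two-stage reduction via sparse eigenvalue concentration. First I would establish that for any integer $k$,
\baligns
\sup_{\bv:\, \Vert \bv \Vert_0 \leq k,\ \normt{\bv} = 1} \abs{\bv^\top (\hbSigma - \bSigma) \bv} \leq C_{\min}/4
\ealigns
with probability at least $1 - 3\exp(-c_2 n/\kappa^4)$ provided $n \gtrsim \kappa^4 k \log(ep/k)/C_{\min}$. For a fixed unit $\bv$, the random variable $\bv^\top (\hbSigma - \bSigma) \bv$ is an average of $n$ centered sub-exponentials with parameter $O(\kappa^2 C_{\max})$ by \cref{lem:prod_conc}, so a Bernstein inequality gives pointwise concentration at scale $\kappa^2 \sqrt{t/n}$ with failure probability $e^{-t}$. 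Discretizing the $k$-sparse unit sphere with a $\tfrac{1}{4}$-net of cardinality at most $\binom{p}{k} 9^k \leq (9ep/k)^k$ and taking a union bound then converts this to the uniform statement, at the cost of the $k \log(ep/k)$ factor in the sample size.

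Second, I would promote the sparse eigenvalue bound to the SRE cone via the block-decomposition trick of Bickel, Ritov, and Tsybakov (as adapted in Bellec et al.). Given $\btheta$ with $\normo{\btheta} \leq (1+L)\sqrt{s}\normt{\btheta}$, partition its coordinates by decreasing magnitude into blocks $\btheta^{(0)}, \btheta^{(1)}, \ldots$ of size $ks$ each, for an integer $k = k(L) \asymp (1+L)^2$. The monotone-rearrangement bound $\normt{\btheta^{(j+1)}} \leq \normo{\btheta^{(j)}}/\sqrt{ks}$ combined with the SRE constraint forces geometric decay of $\sum_{j \geq 1} \normt{\btheta^{(j)}}$ relative to $\normt{\btheta^{(0)}}$. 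Expanding $\btheta^\top \hbSigma \btheta$ into diagonal and cross-block pieces, applying the sparse spectral bound (each $\btheta^{(j)}$ is $ks$-sparse) to each diagonal block, and controlling the cross terms via Cauchy--Schwarz then yields $\btheta^\top \hbSigma \btheta \geq (C_{\min}/2) \normt{\btheta_S}^2$ for $S$ the top-$s$ support of $\btheta$.

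The main obstacle is the calibration step: one must pick $k$ large enough as a function of $L$ that the geometric series of cross-term contamination is absorbed into the $C_{\min}/2$ slack, while simultaneously ensuring $k \log(ep/(ks)) \asymp s \log(ep/s)$ so that the final sample complexity retains the stated $(2+L)^2 s \log(2ep/s)$ dependence. A secondary subtlety is tracking the $\kappa^4$ factor in the sample size (and $1/\kappa^4$ in the failure exponent); this reflects the square of the sub-Gaussian parameter $\kappa$ entering the sub-exponential Bernstein scale of $(\bx_i^\top \bv)^2 - \bv^\top \bSigma \bv$, rather than $\kappa$ directly.
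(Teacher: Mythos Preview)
Your sketch is correct and follows the standard route. Note, however, that the paper does not prove this theorem in any detail: it is quoted directly from \citet[Theorem 8.3]{bellec2016slope}, with the only accompanying remark being that the weighted restricted eigenvalue condition established there implies the SRE condition (with adjusted constants), together with the trivial population bound $\phi^2_{SRE}(\bSigma,s,L)\geq C_{\min}$. So there is nothing to compare against beyond the citation itself.

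That said, your outline essentially reconstructs the internal argument of Bellec et al.: they too pass through a sparse-eigenvalue type bound obtained by sub-exponential concentration plus a covering argument, and then transfer it to the SRE cone. One small slip: in your calibration step you write ``$k\log(ep/(ks))\asymp s\log(ep/s)$'' when you mean $ks\log(ep/(ks))\lesssim (2+L)^2 s\log(2ep/s)$; with $k\asymp(1+L)^2$ this is immediate since $\log(ep/(ks))\leq\log(2ep/s)$, so the stated sample complexity is recovered.
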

This result follows from \citep[Theorem 8.3]{bellec2016slope}, the stated implication therein that the weighted restricted eigenvalue condition implies the strong restricted eigenvalue condition with adjusted constants, along with the fact that $\phi^2_{SRE} (\bSigma, s, L) \geq C_{\min}$.

We define the sequence of sets,
\begin{align}
    \mE_n(s, L) = \{ \bX \in \mathbb{R}^{n \times p} : \phi^2_{SRE}(\hbSigma, s, L) \geq \frac{C_{\min}}{2}, \max_{ i \in [p]} \hat{\bSigma}_{ii} \leq 1, \hat{\bSigma} = \bX^\top \bX/n \} 
\end{align}
characterizing the class of design matrices satisfying both Definitions \ref{def:column_norm} and \ref{def:SRE}. 

There are many classical results on $\ell_1/\ell_2$ consistency of the Lasso program,
\begin{align}
    \hbbeta_L = \argmin_{\beta \in \mR^p} \frac{1}{2} \Vert \by-\bX\bbeta \Vert_2^2 + \lambda \Vert \bbeta \Vert_1
\end{align}
for sparse regression (see for example \citep[Ch. 6]{van2014statistical}) when the model is specified as $\by=\bX \bbeta_0+\bepsilon$ for $\epsilon_i$ i.i.d. that are sub-Gaussian with variance parameter $\sigma^2$.
Such classical results have the confidence level of the non-asymptotic error tied directed directly to the tuning parameter. However, recently \citep{bellec2016slope}, through a more refined analysis, has obtained optimal rates for the Lasso estimator over varying confidence levels for a \textit{fixed} regularization parameter. These results allow us to provide clean upper bounds on the Lasso parameter error in expectation.

\begin{lemma}
    Let $s \in [p]$, assume that the deterministic design matrix $\bX \in \mE_{n}(s, 7)$, and let Assumption \ref{assump:noise1} hold with $\epsilon_i \sim \mN(0, \sigma^2)$. If $\hblaslambda$ denotes the Lasso estimator with $\lambda \geq (8+2 \sqrt{2}) \sigma \sqrt{\frac{\log(2ep/s)}{n}}$, $1 \leq q \leq 2$, and $\norm{\bbeta_0}_0 \leq s$ then letting $\phi_0^2 = \phi^2_{SRE}(s, 7)$,
    \begin{align}
        \E[\Vert \hblaslambda - \bbeta_0 \Vert_q^k] \leq  \left(\frac{49 \lambda s^{1/q}}{8 \phi_0^2}  \right)^k + \left(\frac{49}{8} \frac{(8+2 \sqrt{2})\sigma}{s^{1-1/q} \sqrt{n}} \right)^k \frac{k(k-1)}{2}
    \end{align}
    \label{lem:lasso_consist_g}
\end{lemma}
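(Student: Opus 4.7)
The plan is to derive the moment bound by integrating the $\delta_0$-parameterized Lasso tail inequality of \citet[Theorem~4.2]{bellec2016slope}, which is already recalled earlier in the paper in the proof of \cref{prop:CI}. For $\bX \in \mE_n(s,7)$ and the given regularization $\lambda \geq (8+2\sqrt{2}) \sigma \sqrt{\log(2ep/s)/n}$, that result states: for every $\delta_0 \in (0,1)$, with probability at least $1-\delta_0/2$,
\begin{align*}
\Vert \hblaslambda - \bbeta_0 \Vert_q \leq \frac{49}{8} \lambda s^{1/q} \left(\frac{\log(1/\delta_0)}{s \log(1/\delta(\lambda))} \vee \frac{1}{\phi_0^2}\right),
\end{align*}
where $\delta(\lambda) = \exp(-\lambda \sqrt{n}/((8+2\sqrt{2})\sigma))$. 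My plan is to convert this high-probability statement into a one-sided tail bound on $Y := \Vert \hblaslambda - \bbeta_0 \Vert_q$, then apply the layer-cake identity.

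First I would identify the crossover threshold $T_0 := \frac{49 \lambda s^{1/q}}{8\phi_0^2}$, which is precisely the value the right-hand side takes when the $1/\phi_0^2$ branch is active. For $t \leq T_0$ the tail is trivially bounded by $1$. For $t > T_0$ I would invert the active $\log(1/\delta_0)$ branch: setting the right-hand side equal to $t$ and solving for $\delta_0$ yields $\delta_0 = \exp(-\alpha t)$ with $\alpha := \frac{8 s^{1-1/q} \log(1/\delta(\lambda))}{49 \lambda} = \frac{8 s^{1-1/q} \sqrt{n}}{49 (8+2\sqrt{2}) \sigma}$, and hence $\Pr[Y > t] \leq \frac{1}{2}\exp(-\alpha t)$. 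Thus $Y$ behaves as a sub-exponential variable past the "bias plateau" $T_0$, with rate $\alpha$ chosen so that $1/\alpha$ is exactly the scale appearing in the second term of the claimed bound.

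Next I would compute the $k$th moment via the layer-cake representation $\E[Y^k] = k\int_0^\infty t^{k-1} \Pr[Y > t]\,dt$ and split the integral at $T_0$. The piece on $[0,T_0]$ contributes $T_0^k = (49 \lambda s^{1/q}/(8\phi_0^2))^k$, matching the first term on the right-hand side. For the piece on $[T_0,\infty)$, I would bound $\Pr[Y>t] \leq \tfrac12 e^{-\alpha t}$ and relate the resulting incomplete Gamma integral to $(1/\alpha)^k$ times a combinatorial constant in $k$; tracking the integration-by-parts carefully (or equivalently bounding $\int_{T_0}^\infty t^{k-1} e^{-\alpha t}\,dt$ against its extension to $[0,\infty)$) recovers the $(1/\alpha)^k \cdot k(k-1)/2$ prefactor of the second term after absorbing the factor $\tfrac12$.

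The main obstacle will be purely bookkeeping: exhibiting the crossover cleanly and tracking the $49/8$ and $(8+2\sqrt2)$ constants through the inversion so that $\alpha$ matches the expression in the lemma. There are no new probabilistic arguments needed — the randomness is entirely handled by Bellec's tail bound, and the rest is deterministic integration of $t^{k-1}e^{-\alpha t}$.
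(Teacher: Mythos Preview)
Your proposal is correct and follows essentially the same argument as the paper: both invoke \citet[Theorem~4.2]{bellec2016slope}, identify the crossover where the $1/\phi_0^2$ branch gives way to the $\log(1/\delta_0)$ branch, and integrate the resulting sub-exponential tail via the layer-cake formula. The only cosmetic difference is that the paper first rescales to a normalized variable $Z_q=\alpha\,\Vert\hblaslambda-\bbeta_0\Vert_q$ (so the exponential rate becomes $1$ and the threshold becomes $T=\alpha T_0$) before integrating, whereas you integrate directly in the original scale with rate $\alpha$; the two computations are related by a change of variables and yield the same bound.
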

\begin{proof}
    The proof follows easily by integrating the tail bound in \citet[Theorem 4.2]{bellec2016slope},
    which provides that, 
    \begin{align}
        \Vert \hblaslambda-\bbeta_0 \Vert_q \leq \frac{49}{8} \left (\frac{\log(1/\delta_0)}{s \log(1/\delta(\lambda))} \vee \frac{1}{\phi^2_0} \right) \lambda s^{1/q}
    \end{align}
    with probability at least $1-\delta_0/2$,
    where $\delta(\lambda) = \exp(-(\frac{\lambda \sqrt{n}}{(8+2\sqrt{2}) \sigma}))$, which satisfies $\delta(\lambda) \leq \frac{s}{2ep}$. Now, define $\delta_0^*$ as the smallest $\delta_0 \in (0,1)$ for which $\frac{1}{\phi_0^2} = \frac{\log(1/\delta_0)}{s \log(1/\delta(\lambda))}$, in which case $\delta_0^* = (\delta(\lambda))^{\frac{s}{\phi_0^2}}$. 
    
    Then, $Z_q = \frac{8 s \log(1/\delta(\lambda))}{49 \lambda s^{1/q}} \leq \log(1/\delta_0)$ with probability at least $1-\delta_0/2$, for all $\delta_0 \in (0, \delta_0^*]$. Equivalently, $\Pr[Z_q > t] \leq \frac{e^{-t}}{2}$ for all $t \geq T= \log(1/\delta_0^*)=\frac{s}{\phi_0^2} \log(1/\delta(\lambda))$.
   Thus,
    \begin{align}
        & \E[Z_q^k] = \int_{0}^{\infty} kt^{k-1} \Pr[Z_q > t] dt = \int_{0}^{T} kt^{k-1} + \int_{T}^{\infty} kt^{k-1}\frac{e^{-t}}{2} \leq \\
        & T^k + \int_{0}^{\infty} kt^{k-1}\frac{e^{-t}}{2} \leq T^k + \frac{k(k-1)}{2}.
    \end{align}
    which implies the conclusion,
    \begin{align}
        & \Vert \hblaslambda - \bbeta_0 \Vert_q^k \leq \left(\frac{49}{8} \frac{T \lambda s^{1/q}}{s \log(1/\delta(\lambda))} \right)^k + \left(\frac{49}{8} \frac{\lambda s^{1/q}}{s \log(1/\delta(\lambda))} \right)^k \frac{k(k-1)}{2} \leq \\
        & \left(\frac{49 \lambda s^{1/q}}{8 \phi_0^2}  \right)^k + \left(\frac{49}{8} \frac{(8+2 \sqrt{2})\sigma}{s^{1-1/q} \sqrt{n}} \right)^k \frac{k(k-1)}{2}
    \end{align}
    where $\lambda \geq (8+2 \sqrt{2}) \sigma \sqrt{\frac{\log(2ep/s)}{n}}$.
\end{proof}

Although the main results of \citet{bellec2016slope} are stated for Gaussian noise distributions, \citet[Theorem 9.1]{bellec2016slope} also provides a complementary high-probability upper bound for the empirical process $\frac{1}{n} \bepsilon^\top \bX \bu $ when $\bepsilon$ is sub-gaussian:
\begin{lemma}\citet[Theorem 9.1]{bellec2016slope}
    Let $\delta_0 \in (0,1)$, and let \cref{assump:noise1} hold (with variance parameter renamed to $\sigma^2$) and assume the deterministic design matrix $\bX \in \mR^{n \times p}$ satisfies $\max_{i \in [p]} \norm{\bX \be_i}_2/\sqrt{n} \leq 1$. Then with probability at least $1-\delta_0$, for all $u \in \mR^p$,
    \begin{align}
        \frac{1}{n} \bepsilon^\top \bX \bu \leq 40 \sigma \max \left( \sum_{j=1}^p u_j^{\sharp} \sqrt{\frac{\log(2p/j)}{n}}, \frac{\norm{\bX \bu}_2}{\sqrt{n}} \frac{\sqrt{\pi/2} + \sqrt{2 \log(1/\delta_0)}}{\sqrt{n}} \right)
    \end{align}
    \label{lem:sg_noise}
\end{lemma}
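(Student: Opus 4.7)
The plan is to leverage a Gaussian-like concentration for the linear functional $\bu \mapsto \frac{1}{n}\bepsilon^\top \bX \bu$ together with an order-statistic control of the sub-Gaussian vector $\bg \triangleq \frac{1}{n}\bX^\top \bepsilon \in \mR^p$. Morally, there are two regimes: when $\bu$ is \emph{spiky} (concentrated on a few coordinates), the bound is driven by the largest entries of $\bg$, which produces the sorted-$\ell_1$ (SLOPE-type) term; when $\bu$ is \emph{spread out}, standard sub-Gaussian concentration of $\bepsilon^\top \bX \bu$ (which is $\sigma\|\bX\bu\|_2$-sub-Gaussian) gives the Lipschitz-based term proportional to $\|\bX\bu\|_2/\sqrt{n}$.

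First I would observe that under \cref{assump:noise1}, for each fixed $\bu$ the random variable $\tfrac{1}{n}\bepsilon^\top\bX\bu$ is centered and sub-Gaussian with parameter at most $\sigma\|\bX\bu\|_2/n$; thus a one-sided sub-Gaussian tail bound gives
\[
\Pr\Bigl[\tfrac{1}{n}\bepsilon^\top\bX\bu > \tfrac{\sigma\|\bX\bu\|_2}{n}\sqrt{2\log(1/\delta_0)}\Bigr]\le\delta_0.
\]
The $\sqrt{\pi/2}$ summand in the advertised bound would be recovered by combining this deviation with the standard expectation bound $\mathbb{E}[\abs{Z}] \leq \sigma\sqrt{\pi/2}$ for a $\sigma$-sub-Gaussian $Z$, applied to the Rademacher/Gaussian symmetrization of the empirical process.

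Next I would control the sorted-$\ell_1$ contribution via coordinate-wise concentration of $\bg$. The column normalization $\|\bX\be_j\|_2/\sqrt{n}\le1$ implies each $g_j$ is $\sigma/\sqrt{n}$-sub-Gaussian, so a classical bound on ordered maxima of sub-Gaussian coordinates yields, with high probability simultaneously over $j\in[p]$,
\[
g_j^\sharp \le C\,\sigma\sqrt{\tfrac{\log(2p/j)}{n}}.
\]
By the rearrangement inequality, $\langle\bu,\bg\rangle\le\sum_{j=1}^p u_j^\sharp g_j^\sharp$, which gives the first alternative in the $\max$.

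To combine the two estimates into a single $\max$-style bound uniformly in $\bu$, I would employ a peeling/slicing argument. Split any $\bu$ into coordinates of its top-$k$ magnitudes and the remainder, with $k$ chosen adaptively according to $\|\bX\bu\|_2/\sqrt{n}$, bound the first piece by the sorted-$\ell_1$ estimate above and the second by sub-Gaussian concentration, and union bound over dyadic levels of $\|\bX\bu\|_2/\sqrt{n}$ and dyadic sparsity levels $k$. The constant $40$ emerges after tracking the constants through the sub-Gaussian tail bounds, the expectation of $\|\bg\|$ in the SLOPE norm, and the peeling union bound. The main obstacle is the last step: fusing the two regimes into a clean pointwise $\max$ over all $\bu \in \mR^p$ while keeping a single explicit constant, which is essentially the content of \citet[Theorem 9.1]{bellec2016slope} and would be imported verbatim from there rather than re-derived from scratch.
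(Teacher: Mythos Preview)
Your proposal is correct and aligned with the paper: this lemma is quoted directly from \citet[Theorem 9.1]{bellec2016slope} and the paper provides no independent proof, so your final resolution of importing the result verbatim from that reference is exactly what the paper does. Your additional sketch of the two regimes (coordinate-wise order-statistic control versus sub-Gaussian concentration of $\bepsilon^\top\bX\bu$) is reasonable intuition but goes beyond what the paper itself supplies.
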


The upper bound contains an additional, additive $\frac{\sqrt{\pi/2}}{\sqrt{n}}$ correction along with a change in absolute constants with respect to \citet[Theorem 4.1]{bellec2016slope}. Hence we trace through the proof of \citet[Theorem 4.2]{bellec2016slope} to derive a corresponding statement of \citet[Theorem 4.2]{bellec2016slope} for sub-gaussian distributions.

\begin{lemma}
    Let $s \in [p]$, $\gamma \in (0,1)$ and $\tau \in (0, 1-\gamma]$ and assume the SRE(s, $c_0$) condition holds $c_0(\gamma, \tau) = \frac{1+\gamma+\tau}{1-\gamma-\tau}$. Let $\lambda \geq \frac{40 \sigma}{\gamma} \sqrt{\frac{\log(2ep/s)}{n}}$. Then on the event in Lemma \ref{lem:sg_noise}, for $1 \leq q \leq 2$,
    \begin{align}
       \Vert \hblaslambda -\bbeta_0 \Vert_q \leq \left(\frac{C_{\gamma,\tau}(s, \lambda, \delta_0)}{\tau}\lambda s + \frac{\pi(1+\tau+\gamma)^2}{\gamma^2 \tau n \lambda}\right)^{2/q-1} \left(3(\frac{ C_{\gamma,0}(s, \lambda, \delta_0)}{1+\gamma} \lambda \sqrt{s} + \frac{\pi(1+\gamma)}{\gamma^2 \lambda \sqrt{2s} n}) \right)^{2-2/q}
    \end{align}
    where $C_{\gamma, \tau} = (1+\gamma+\tau)^2 \left(\frac{\log(1/\delta_0)}{s \log(1/\delta(\lambda))} \vee \frac{1}{\phi_0^2(s, c_0(\gamma, \tau))} \right)$.
\end{lemma}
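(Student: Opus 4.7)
The plan is to retrace the argument of \citet[Theorem 4.2]{bellec2016slope} verbatim, but with every invocation of a Gaussian concentration bound on $\frac{1}{n}\bepsilon^\top \bX \bu$ replaced by the sub-Gaussian empirical process bound of \cref{lem:sg_noise}. Since \cref{lem:sg_noise} introduces the additive correction $\sqrt{\pi/2}/\sqrt{n}$ (and absolute-constant $40$ in place of $8+2\sqrt{2}$), the only substantive change relative to Bellec--Lecu\'e--Tsybakov will be tracking how this additive correction propagates into the final inequalities; it is precisely what produces the additional $\pi(1+\tau+\gamma)^2/(\gamma^2 \tau n \lambda)$ and $\pi(1+\gamma)/(\gamma^2 \lambda \sqrt{2 s}\, n)$ terms appearing in the statement.

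\textbf{Main steps.} First, start from the Lasso basic inequality obtained from the KKT conditions:
\[
\tfrac{1}{2n}\|\bX(\hblaslambda - \bbeta_0)\|_2^2 \leq \tfrac{1}{n} \bepsilon^\top \bX(\hblaslambda - \bbeta_0) + \lambda\|\bbeta_0\|_1 - \lambda \|\hblaslambda\|_1.
\]
Next, condition on the event of \cref{lem:sg_noise} and substitute the two-case bound for the empirical process term. In the first case, $\frac{1}{n}\bepsilon^\top \bX\bu \leq 40 \sigma \sum_{j} u_j^{\sharp} \sqrt{\log(2p/j)/n}$, which under our choice $\lambda \geq \frac{40\sigma}{\gamma}\sqrt{\log(2ep/s)/n}$ dominates by a weighted sorted-$\ell_1$ quantity controlled by $\gamma\lambda\|\bu\|_1$; this is identical to the Bellec--Lecu\'e--Tsybakov treatment. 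In the second case, the empirical process is bounded by $40\sigma \|\bX\bu\|_2/\sqrt{n} \cdot (\sqrt{\pi/2}+\sqrt{2\log(1/\delta_0)})/\sqrt{n}$, which produces an additional additive contribution on the right-hand side proportional to $\sqrt{\pi/2}\|\bX\bu\|_2/(n\sqrt{n})$ after absorbing $\sqrt{2\log(1/\delta_0)}$ into $\lambda\sqrt{s}$ via the usual quadratic-inequality step.

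Then, using $\normo{\bu_{S^c}} - \normo{\bu_S} \leq \normo{\bbeta_0}-\normo{\bbeta_0+\bu}$ with $S=\supp(\bbeta_0)$, derive the cone condition $\normo{\bu_{S^c}} \leq c_0(\gamma,\tau) \normo{\bu_S}$ (after routing a $\tau\lambda\normo{\bu}$ ``slack'' term to the left-hand side), which places $\bu$ in the SRE cone; this yields $\normo{\bu}\leq (1+c_0(\gamma,\tau))\sqrt{s}\normt{\bu_S}$ and hence allows replacement of $\normt{\bX\bu}^2/n$ by $\phi_0^2(s,c_0(\gamma,\tau))\normt{\bu_S}^2$. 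Solving the resulting quadratic inequality in $\normt{\bu_S}$ (with the new $\sqrt{\pi/2}/n$ term now appearing linearly) yields the $\ell_2$ bound
\[
\normt{\hblaslambda-\bbeta_0} \leq 3\Bigl( \tfrac{C_{\gamma,0}(s,\lambda,\delta_0)}{1+\gamma}\lambda\sqrt{s} + \tfrac{\pi(1+\gamma)}{\gamma^2 \lambda \sqrt{2s}\, n}\Bigr),
\]
and substituting back into the $\tau\lambda\normo{\bu}$ slack term produces the matching $\ell_1$ bound
\[
\normo{\hblaslambda-\bbeta_0} \leq \tfrac{C_{\gamma,\tau}(s,\lambda,\delta_0)}{\tau}\lambda s + \tfrac{\pi(1+\tau+\gamma)^2}{\gamma^2 \tau n \lambda}.
\]
Finally, apply the standard log-convexity interpolation $\normq{\bu}{q} \leq \normo{\bu}^{2/q-1}\normt{\bu}^{2-2/q}$ valid for $1\leq q \leq 2$, which combines the preceding $\ell_1$ and $\ell_2$ bounds into precisely the claimed $\ell_q$ inequality.

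\textbf{Main obstacle.} The conceptually routine but technically delicate part is the quadratic-inequality step: the sub-Gaussian noise bound yields a term of the form $a\normt{\bX\bu}/\sqrt{n} + b\normt{\bX\bu}^2/n \leq \text{(linear in }\normt{\bX\bu}/\sqrt{n})$, and one must solve this carefully so that the $\sqrt{\pi/2}/\sqrt{n}$ contribution ends up as an additive $\pi/(n\lambda)$-type term in the final bound (rather than being absorbed suboptimally into $\lambda\sqrt{s}$), while simultaneously ensuring that the constants retain the $(1+\gamma)^2$ and $(1+\gamma+\tau)^2$ factors appearing inside $C_{\gamma,0}$ and $C_{\gamma,\tau}$. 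Everything else (cone decomposition, SRE substitution, interpolation between $\ell_1$ and $\ell_2$) is a word-for-word transcription of the Bellec--Lecu\'e--Tsybakov analysis.
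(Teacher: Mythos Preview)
Your proposal is correct and follows essentially the same approach as the paper: both trace through the proof of \citet[Theorem 4.2]{bellec2016slope}, replacing the Gaussian empirical-process bound by the sub-Gaussian bound of \cref{lem:sg_noise}, carrying the extra $\sqrt{\pi/2}/\sqrt{n}$ term through the two-case analysis ($F(\bu)$ versus $G(\bu)$), and finishing with the $\ell_1$--$\ell_2$ interpolation inequality. The only cosmetic difference is that the paper (following BLT) works throughout with the sorted-coefficient decomposition $\sum_{j>s} u_j^\sharp$ and the SRE cone $\|\bu\|_1 \leq (1+c_0)\sqrt{s}\|\bu\|_2$ directly, whereas you phrase the cone membership via the support-based RE inequality $\|\bu_{S^c}\|_1 \leq c_0 \|\bu_S\|_1$; since the latter implies the former this is harmless, and the paper also derives the $\ell_1$ bound first (general $\tau$) before specializing to $\tau=0$ for the $\ell_2$ bound rather than the reverse order you sketch, but the algebra is identical either way.
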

\begin{proof}
    The argument simply requires tracing through the proof of \citet[Theorem 4.2]{bellec2016slope} to accommodate the additional $O(\frac{1}{\sqrt{n}})$ term (and is nearly identical to \citet[Theorem 4.2]{bellec2016slope}), so we only highlight the important modifications.
    
    Following the proof of \citet[Theorem 4.2]{bellec2016slope} we have, 
    \begin{align}
        2 \tau \lambda \Vert \hblaslambda-\bbeta_0 \Vert_1 + 2 \Vert \bX(\hblaslambda-\bbeta_0)\Vert_2^2/n \leq \Delta^* \label{eq:basic_ineq}
    \end{align}
    where $\Delta^* = 2 \tau \lambda \Vert \hblaslambda-\bbeta_0 \Vert_1 + \frac{2}{n} \bepsilon^\top \bX (\hblaslambda-\bbeta_0) + 2 \lambda \Vert \bbeta_0 \Vert_1 - 2\lambda \Vert \hblaslambda \Vert_1$
    Letting $\bu = \hblaslambda-\bbeta_0$, we obtain
    \begin{align}
        \Delta^* \leq 2 \lambda \left( (1+\tau) \sqrt{s} \Vert \bu \Vert_2-(1-\tau) \sum_{j=s+1}^{p} u_j^{\sharp} \right) + 2 \max(F(\bu), G(\bu))
    \end{align}
    where $F(\bu) = \gamma \lambda \left( \sqrt{s} \Vert \bu \Vert_2 + \sum_{j=s+1}^p u_j^{\sharp} \right)$ and $G(\bu) = 40 \sigma ( \frac{\norm{\bX \bu}_2}{\sqrt{n}} \frac{\sqrt{\pi/2} + \sqrt{2 \log(1/\delta_0)}}{\sqrt{n}})$. By definition of $\delta(\lambda) = \exp(-(\frac{\gamma \lambda \sqrt{n}}{40 \sigma})^2)$ we have equivalently that, $G(\bu) = \left( \lambda \sqrt{s} \gamma \sqrt{\log(1/\delta_0)/(s \log(1/\delta(\lambda)))} +\frac{40 \sqrt{\pi/2} \sigma}{\sqrt{n}} \right)\Vert \bX \bu \Vert_2/\sqrt{n}$.
    
    We now consider two cases
    \begin{enumerate}
        \item $G(\bu) > F(\bu)$. Then, 
        \begin{align}
            \Vert \bu \Vert_2 \leq \left( \sqrt{\frac{\log(1/\delta_0)}{s \log(1/\delta(\lambda))}} + \frac{40 \sqrt{\pi/2} \sigma}{ \lambda \sqrt{s} \gamma \sqrt{n}} \right) \Vert \bX \bu \Vert_2/\sqrt{n} \label{eq:case1_begin}
        \end{align}
        Thus, 
        \begin{align}
            & \Delta^* \leq 2 \lambda(1+\tau) \sqrt{s} \Vert \bu \Vert_2 + 2 G(\bu)  \\
            & 2 \lambda \sqrt{s} (1+\tau+\gamma) \left( \sqrt{\frac{\log(1/\delta_0)}{s \log(1/\delta(\lambda))}} + \frac{40 \sqrt{\pi/2} \sigma}{\lambda \sqrt{s} \gamma \sqrt{n}} \right) \Vert \bX \bu \Vert_2/\sqrt{n} \leq \\
            & 2 \lambda^2 s (1+\tau+\gamma)^2 \left(\frac{\log(1/\delta_0)}{s\log(1/\delta(\lambda))}+\frac{800 \pi \sigma^2}{\lambda^2 s \gamma^2 n} \right) + \Vert \bX \bu \Vert_2^2/n = \\
            & 2 \lambda^2 s (1+\tau+\gamma)^2 (\frac{\log(1/\delta_0)}{s\log(1/\delta(\lambda))}) + \Vert \bX \bu \Vert_2^2/n + \frac{1600 \pi \sigma^2(1+\tau+\gamma)^2}{\gamma^2 n} \label{eq:case1_end}
        \end{align}
        \item $G(\bu) \leq F(\bu)$. In this case,
        \begin{align}
            \Delta^* \leq 2 \lambda \left((1+\gamma+\tau) \sqrt{s} \Vert \bu \Vert_2-(1-\gamma-\tau) \sum_{j=s+1}^{p} u_j^{\sharp} \right) = \Delta \label{eq:case2_begin}
        \end{align}
        Since $\Delta > 0$, $\bu$ belongs to the $SRE(s, c_0)$ cone and hence $\phi_0(s, c_0) \Vert \bu \Vert_2 \leq \Vert \bX \bu \Vert_2$. So,
        \begin{align}
            \Delta^* \leq \Delta \leq \frac{2(1+\gamma+\tau) \lambda \sqrt{s}}{\phi_0(s, c_0)} \Vert \bX \bu \Vert/\sqrt{n} \leq
            \left(\frac{(1+\gamma+\tau)\lambda \sqrt{s}}{\phi_0(s, c_0)}) \right)^2+\Vert \bX \bu \Vert^2/n \label{eq:case2_end}
        \end{align}
    \end{enumerate}
    
    Assembling the two cases we conclude that, 
    \begin{align}
        2 \tau \Vert \hblaslambda - \bbeta_0 \Vert_1 \leq 2 C_{\gamma, \tau}(s, \lambda, \delta_0) \lambda s + \frac{1600\pi \sigma^2 (1+\tau+\gamma)^2}{\gamma^2 n \lambda}
    \end{align}
    where $ C_{\gamma, \tau}(s, \lambda, \delta_0) = (1+\gamma+\tau)^2 \left( \frac{\log(1/\delta_0)}{s \log(1/\delta(\lambda))}\vee \frac{1}{\phi_0^2(s, c_0(\gamma, \tau))}\right)$.
    
    Turning to upper bounding $\bu$ in the $\ell_2$ norm, we specialize to $\tau=0$ and consider cases 1 and 2 from before. 
    \begin{enumerate}
        \item $G(\bu) > F(\bu)$, then using Equations \ref{eq:basic_ineq} and \ref{eq:case1_end} we have,
        \begin{align}
            \Vert \bX \bu \Vert_2^2/n \leq 2 \lambda^2 s (1+\gamma)^2 (\frac{\log(1/\delta_0)}{s\log(1/\delta(\lambda))})  + \frac{1600 \pi \sigma^2(1+\gamma)^2}{\gamma^2 n}
        \end{align}
        Combining the previous display with \cref{eq:case1_begin} we have,
        \begin{align}
            & \norm{\bu}_2 \leq \left( \sqrt{2 \lambda^2 s (1+\gamma)^2 (\frac{\log(1/\delta_0)}{s\log(1/\delta(\lambda))})} + \sqrt{\frac{1600 \pi \sigma^2(1+\gamma)^2}{\gamma^2 n}} \right) \left(\sqrt{\frac{\log(1/\delta_0)}{s \log(1/\delta(\lambda))}} + \frac{40 \sqrt{\pi/2} \sigma}{ \lambda \sqrt{s} \gamma \sqrt{n}}\right)  \\
            & = \sqrt{2 s}(1+\gamma) \lambda  \left(\frac{\log(1/\delta_0)}{s \log(1/\delta(\lambda))}\right) + \frac{1600 \pi (1+\gamma) \sigma^2}{\gamma^2 \lambda \sqrt{2s} n}  + \sqrt{\sqrt{2s}(1+\gamma) \lambda \frac{\log(1/\delta_0)}{s \log(1/\delta(\lambda))} \cdot \frac{1600\pi \sigma^2(1+\gamma)}{\gamma^2 \lambda \sqrt{2s} n}} \\
            & \leq \frac{3}{2} \left( \sqrt{2 s}(1+\gamma) \lambda  \left(\frac{\log(1/\delta_0)}{s \log(1/\delta(\lambda))}\right) + \frac{1600 \pi \sigma^2 (1+\gamma)}{\gamma^2 \lambda \sqrt{2s} n} \right)
        \end{align}
        using subadditivity of $\sqrt{\cdot}$.
        \item $G(\bu) \leq F(\bu)$. Equations \ref{eq:basic_ineq} and \ref{eq:case2_begin} implies that $\Delta \geq \Delta^* \geq 0$ a.s. Hence $\bu$ is contained in $SRE(s, \frac{1+\gamma}{1-\gamma})$, and
        \begin{align}
            \Vert \bu \Vert_2 \leq \frac{\Vert \bX \bu \Vert_2}{n \phi_0(s, \frac{1+\gamma}{1-\gamma})} \leq \frac{(1+\gamma) \lambda \sqrt{s}}{\phi_0^2(s, \frac{1+\gamma}{1-\gamma})}
        \end{align}
        using \cref{eq:basic_ineq} and \eqref{eq:case2_end}, and recalling we set $\tau=0$. Assembling these two cases we conclude,
        \begin{align}
            (1+\gamma) \Vert \hblaslambda -\bbeta_0 \Vert_2 \leq 3 \left( C_{\gamma, 0}(s, \lambda, \delta_0) \lambda\sqrt{s} + \frac{1600\pi \sigma^2 (1+\gamma)^2}{\gamma^2 \lambda \sqrt{2s} n} \right)
        \end{align}
        \end{enumerate}
        
         So using the norm interpolation inequality $\Vert \hblaslambda -\bbeta_0 \Vert_q \leq \Vert \hblaslambda -\bbeta_0 \Vert_1^{2/q-1} \Vert \hblaslambda -\bbeta_0 \Vert_2^{2-2/q}$,
        \begin{align}
            \Vert \hblaslambda -\bbeta_0 \Vert_q \leq \left(\frac{C_{\gamma,\tau}(s, \lambda, \delta_0)}{\tau}\lambda s + \frac{1600 \pi \sigma^2(1+\tau+\gamma)^2}{\gamma^2 \tau n \lambda}\right)^{2/q-1} \left(3(\frac{ C_{\gamma,0}(s, \lambda, \delta_0)}{1+\gamma} \lambda \sqrt{s} + \frac{1600 \pi \sigma^2(1+\gamma)}{\gamma^2 \lambda \sqrt{2s} n}) \right)^{2-2/q}
        \end{align}
\end{proof}

We can now derive a corresponding moment bound for error as before\footnote{for convenience we only state for the $\ell_1$ and $\ell_2$ norms an analagous result to Lemma \ref{lem:lasso_consist_g} can be derived with more computation.},
\begin{lemma}
    Let $s \in [p]$, assume that the deterministic design matrix $\bX \in \mE_{n}(s, 7)$, and let \cref{assump:noise1} hold (with variance parameter renamed to $\sigma^2$). If $\hblaslambda$ denotes the Lasso estimator with $\lambda \geq 80 \sigma \sqrt{\frac{\log(2ep/s)}{n}}$, $1 \leq q \leq 2$, and $\norm{\bbeta_0}_0 \leq s$ then letting $\phi_0^2 = \phi^2_{SRE}(s, 7)$,
    \begin{align}
        & \E[\Vert \hblaslambda - \bbeta_0 \Vert_1^k] \leq   2^{k-1} \left( \left(13 \frac{ \lambda s}{\phi_0^2}  \right)^k + \left(13 \frac{40\sigma}{\sqrt{n}} \right)^k \frac{k(k-1)}{2} + (\frac{250000}{n \lambda})^k \right) \\
        & \E[\Vert \hblaslambda - \bbeta_0 \Vert_2^k] \leq 2^{k-1} \left( \left(5 \frac{ \lambda \sqrt{s}}{\phi_0^2}  \right)^k + \left(13 \frac{40\sigma}{\sqrt{ns}} \right)^k \frac{k(k-1)}{2} + (\frac{25000}{n \lambda \sqrt{s}})^k \right)
    \end{align}
    \label{lem:lasso_upper_sg}
\end{lemma}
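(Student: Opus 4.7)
The plan is to convert the preceding high-probability bound into a moment bound by tail integration, exactly parallel to how \cref{lem:lasso_consist_g} was obtained from \citet[Theorem 4.2]{bellec2016slope} in the Gaussian case. I would first fix $\gamma = 1/2$ and $\tau = 1/4$: then $\gamma+\tau=3/4$, so $c_0(\gamma,\tau)=7$ matches the SRE requirement built into $\bX \in \mE_n(s,7)$, and the hypothesis $\lambda\ge 80\sigma\sqrt{\log(2ep/s)/n}$ is exactly $(40\sigma/\gamma)\sqrt{\log(2ep/s)/n}$, so the preceding lemma applies. Under these constants $(1+\gamma+\tau)^2/\tau = 49/4 \le 13$, giving the coefficient for $q=1$; the analogous calculation for $q=2$ (with $\tau=0$, where the relevant SRE level drops to $c_0(1/2,0)=3$ but $\phi^2_{SRE}(s,3)\ge \phi_0^2$) yields $3\cdot(9/4)/(3/2) = 9/2 \le 5$.

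With these parameters the preceding lemma gives, with probability at least $1-\delta_0$, a bound of the form $\|\hblaslambda - \bbeta_0\|_q \le A_q\,\lambda s^{1/q}\,M(\delta_0) + D_q$, where $M(\delta_0) = \max\!\bigl(\log(1/\delta_0)/(s\log(1/\delta(\lambda))),\,1/\phi_0^2\bigr)$, $\log(1/\delta(\lambda))=\gamma^2\lambda^2 n/(1600\sigma^2)$, and $D_q$ is the deterministic $O(\sigma^2/(n\lambda))$ correction from the preceding lemma. Applying $(u+v)^k \le 2^{k-1}(u^k+v^k)$ once separates the stochastic $A_q\lambda s^{1/q}M(\delta_0)$ and deterministic $D_q$ pieces, producing the outer factor $2^{k-1}$ in the claimed bound and reducing matters to controlling $\E[M(\delta_0)^k]$ and $D_q^k$.

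The random expectation is then handled by the normalization argument of \cref{lem:lasso_consist_g}: setting $Y = s\log(1/\delta(\lambda))\cdot M(\delta_0)$, with probability $\ge 1-\delta_0$ one has $Y \le \max(\log(1/\delta_0),\,s\log(1/\delta(\lambda))/\phi_0^2)$, so $\Pr[Y>t] \le e^{-t}$ for all $t \ge T := s\log(1/\delta(\lambda))/\phi_0^2$. Integrating $kt^{k-1}$ over $(0,T]$ and $(T,\infty)$ gives $\E[Y^k] \le T^k$ plus an absolute $k(k-1)/2$-shaped remainder, exactly as in \cref{lem:lasso_consist_g}. Undoing the normalization, the $T^k$ piece produces the SRE-type contribution $(A_q\lambda s^{1/q}/\phi_0^2)^k$, and the remainder produces $(A_q\cdot 1600\sigma^2/(\gamma^2\lambda n s^{1-1/q}))^k \cdot k(k-1)/2$. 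The lower bound on $\lambda$ gives $\sigma^2/(n\lambda) \le \sigma/(80\sqrt{n\log(2ep/s)}) \le \sigma/(80\sqrt{n})$, which collapses the Gaussian-tail constant to the advertised $13\cdot 40\sigma/\sqrt{n}$ shape (and $13\cdot 40\sigma/\sqrt{ns}$ for $q=2$), and likewise rewrites $D_q^k$ as the deterministic $(250000/(n\lambda))^k$ (resp.\ $(25000/(n\lambda\sqrt{s}))^k$) remainder.

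The main obstacle is essentially constant bookkeeping rather than any conceptual difficulty: the $\max$ inside $M(\delta_0)$ forces the tail integral to be split at $T$, and constants must be propagated consistently through the $\ell_1$--$\ell_2$ interpolation that the preceding lemma has already performed, together with the one-shot substitution $\sigma^2/(n\lambda) \lesssim \sigma/\sqrt{n}$. No new probabilistic input is required beyond the preceding high-probability lemma and elementary power-mean inequalities.
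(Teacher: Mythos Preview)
Your proposal is correct and follows essentially the same route as the paper: instantiate the preceding sub-Gaussian high-probability lemma with $\gamma=1/2$, $\tau=1/4$ (so $c_0=7$ matches $\mE_n(s,7)$ and the threshold on $\lambda$ becomes $80\sigma\sqrt{\log(2ep/s)/n}$), separate the deterministic $O(\sigma^2/(n\lambda))$ correction from the stochastic part via $(u+v)^k\le 2^{k-1}(u^k+v^k)$, and then integrate the exponential tail of the normalized random variable split at $T=s\log(1/\delta(\lambda))/\phi_0^2$ exactly as in \cref{lem:lasso_consist_g}. The only cosmetic difference is that the paper applies the $\ell_1$ and $\ell_2$ bounds separately rather than through the interpolated form, and its remainder is $k(k-1)$ (probability $1-\delta_0$) rather than $k(k-1)/2$; your observation that the lower bound on $\lambda$ is what converts $\sigma^2/(n\lambda)$ into the stated $\sigma/\sqrt{n}$ shape is exactly right and is implicit in the paper's final simplification.
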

\begin{proof}
    We instantiate the result of Lemma \ref{lem:lasso_upper_sg} with $\gamma=1/2$ and $\tau=1/4$ in which case $c_0=7$, $(1+\gamma+\tau)^2=49/16$, $\frac{1+\gamma}{1-\gamma}=3$, $1+\gamma=3/2$. Defining $D(\delta_0, \lambda, s)=\left( \frac{\log(1/\delta_0)}{s \log(1/\delta(\lambda))}\vee \frac{1}{\phi_0^2}\right)$ and $\phi_0^2=\phi_0^2(s, 7)$ we have,
    \begin{align}
            \Vert \hblaslambda -\bbeta_0 \Vert_1 \leq 
            13 D(\delta_0, \lambda, s) \lambda s + \frac{250000 \sigma^2}{ n \lambda}
            \\
            \Vert \hblaslambda -\bbeta_0 \Vert_2 \leq 
            5 D(\delta_0, \lambda, s) \lambda \sqrt{s} + \frac{25000 \sigma^2}{\lambda \sqrt{s} n}
    \end{align}
    with probability $1-\delta_0$ where $\delta(\lambda) = \exp(-(\frac{\lambda \sqrt{n}}{80 \sigma}))$. Now, define $\delta_0^*$ as the smallest $\delta_0 \in (0,1)$ for which $\frac{1}{\phi_0^2} = \frac{\log(1/\delta_0)}{s \log(1/\delta(\lambda))}$, in which case $\delta_0^* = (\delta(\lambda))^{\frac{s}{\phi_0^2}}$. 
    
    Then, $Z_1 =  \frac{(\Vert \hblaslambda -\bbeta_0 \Vert_1 -\frac{250000 \sigma^2}{n \lambda})s \log(1/\delta(\lambda))}{13 \lambda s}\leq 
           \log(1/\delta_0)$ and $Z_2 = \frac{(\Vert \hat{\bbeta}_L -\bbeta_0 \Vert_2 -\frac{25000 \sigma^2}{n \lambda \sqrt{s}})s \log(1/\delta(\lambda))}{5 \lambda \sqrt{s}}$ with probability at least $1-\delta_0$, for all $\delta_0 \in (0, \delta_0^*]$. Equivalently, $\Pr[Z_q > t] \leq e^{-t}$ for all $t \geq T= \log(1/\delta_0^*)=\frac{s}{\phi_0^2} \log(1/\delta(\lambda))$ for $q \in \{1,2 \}$. As before,
    \begin{align}
        & \E[Z_q^k] \leq T^k + k(k-1).
    \end{align}
    Since $\E[\Vert \hblaslambda-\bbeta_0 \Vert_q^k] = \E[(\Vert \hblaslambda-\bbeta_0 \Vert_q-c+c)^k] \leq 2^{k-1} \left( \E[(\Vert \hblaslambda-\bbeta_0 \Vert_q-c)^k]+c^k \right)$, we conclude,
    \begin{align}
        & \E[\Vert \hblaslambda - \bbeta_0 \Vert_1^k] \leq 2^{k-1} \left(\left(13 \frac{T \lambda s}{s \log(1/\delta(\lambda))} \right)^k + \left(13 \frac{\lambda s}{s \log(1/\delta(\lambda))} \right)^k \frac{k(k-1)}{2} + (\frac{250000 \sigma^2}{n \lambda})^k \right) \leq \\
        & 2^{k-1} \left( \left(13 \frac{ \lambda s}{\phi_0^2}  \right)^k + \left(13 \frac{40\sigma}{\sqrt{n}} \right)^k \frac{k(k-1)}{2} + (\frac{250000 \sigma^2}{n \lambda})^k \right)
    \end{align}
    and 
    \begin{align}
        & \E[\Vert \hblaslambda - \bbeta_0 \Vert_2^k] \leq 2^{k-1} \left(\left(5 \frac{T \lambda \sqrt{s}}{s \log(1/\delta(\lambda))} \right)^k + \left(5 \frac{\lambda \sqrt{s}}{s \log(1/\delta(\lambda))} \right)^k \frac{k(k-1)}{2} + (\frac{25000 \sigma^2}{n \lambda \sqrt{s}})^k \right) \leq \\
        & 2^{k-1} \left( \left(5 \frac{ \lambda \sqrt{s}}{\phi_0^2}  \right)^k + \left(5 \frac{40\sigma}{\sqrt{ns}} \right)^k \frac{k(k-1)}{2} + (\frac{25000 \sigma^2}{n \lambda \sqrt{s}})^k \right)
    \end{align}
    where $\lambda \geq 80 \sigma \sqrt{\frac{\log(2ep/s)}{n}}$.
\end{proof}

The aforementioned results establish Lasso consistency (in expectation) conditioned on the event $\bX \in \mE_n(s, 7)$. Generalizing these results to an unconditional statement (on $\bX$) requires the following deterministic lemma to control the norm of the error vector $\norm{\hblaslambda-\bbeta_0}_1$ on the ``bad" events $\bX \notin \mE_n(s, 7)$ where we cannot guarantee a ``fast" rate for the Lasso.
\begin{lemma}
    Let $\hblaslambda$ be the solution of the Lagrangian lasso, then 
    \begin{align}
         \norm{\hblaslambda-\bbeta_0}_1 \leq \frac{1}{2n} \norm{\bepsilon}_2^2/\lambda + 2 \norm{\bbeta_0}_1.
    \end{align}
    \label{lem:bad_bound}
\end{lemma}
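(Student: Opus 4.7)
The plan is to derive the bound from the basic optimality inequality for the Lasso, without invoking any properties of the design matrix $\bX$. Since $\hblaslambda$ is a minimizer of $\bbeta \mapsto \frac{1}{2n}\normt{\by - \bX\bbeta}^2 + \lambda \normo{\bbeta}$, evaluating this objective at $\bbeta_0$ gives the inequality
\begin{align}
\frac{1}{2n}\normt{\by - \bX\hblaslambda}^2 + \lambda \normo{\hblaslambda} \leq \frac{1}{2n}\normt{\by - \bX\bbeta_0}^2 + \lambda \normo{\bbeta_0}.
\end{align}

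First I would use $\by = \bX\bbeta_0 + \bepsilon$ (from \cref{assump:well_spec}) to rewrite the right-hand side as $\frac{1}{2n}\normt{\bepsilon}^2 + \lambda \normo{\bbeta_0}$. Then I would simply drop the nonnegative term $\frac{1}{2n}\normt{\by - \bX\hblaslambda}^2$ on the left, yielding $\lambda \normo{\hblaslambda} \leq \frac{1}{2n}\normt{\bepsilon}^2 + \lambda \normo{\bbeta_0}$. Dividing by $\lambda > 0$ gives $\normo{\hblaslambda} \leq \frac{1}{2n\lambda}\normt{\bepsilon}^2 + \normo{\bbeta_0}$.

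The final step is the triangle inequality: $\normo{\hblaslambda - \bbeta_0} \leq \normo{\hblaslambda} + \normo{\bbeta_0} \leq \frac{1}{2n\lambda}\normt{\bepsilon}^2 + 2\normo{\bbeta_0}$, which is exactly the claimed bound. There is no main obstacle here---the lemma is a worst-case ``crude'' bound designed precisely to control $\normo{\hblaslambda - \bbeta_0}$ on the low-probability events where the design matrix fails to satisfy the SRE-type conditions used in \cref{lem:lasso_upper_sg}. The only subtlety is recognizing that no design assumption is needed: we throw away the prediction-error term instead of trying to lower bound it by a restricted eigenvalue, which is what allows the conclusion to hold deterministically for every $\bX$ and $\bepsilon$.
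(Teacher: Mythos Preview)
Your proof is correct and follows essentially the same approach as the paper: use the optimality inequality at $\bbeta_0$, drop the nonnegative prediction-error term, divide by $\lambda$, and apply the triangle inequality.
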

\begin{proof}
    By definition we have that,
    \begin{align}
        \frac{1}{2n} \norm{\by-\bX \hblaslambda}_2^2 + \lambda \norm{\hat{\bbeta}_L}_1 \leq  \frac{1}{2n} \norm{\bepsilon}_2^2 + \lambda \norm{\bbeta_0}_1 \implies 
        \norm{\hblaslambda}_1 \leq \frac{1}{2n} \norm{\bepsilon}_2^2/\lambda + \norm{\bbeta_0}_1
    \end{align}
    So by the triangle inequality we obtain that,
    \begin{align}
        \norm{\hblaslambda-\bbeta_0}_1 \leq \frac{1}{2n} \norm{\bepsilon}_2^2/\lambda + 2 \norm{\bbeta_0}_1.
    \end{align}
\end{proof}

With this result in hand we can combine our previous results to provide our final desired consistency result for the Lasso.

\begin{lemma}
    Let Assumptions \ref{assump:well_spec}, \ref{assump:cov}, \ref{assump:design}, \ref{assump:noise1} hold (with variance parameter renamed to $\sigma^2$). Then there exist absolute constants $c_1, c_2 > 0$ such that if $n \geq \frac{c_1(k) \kappa^4}{C_{\min}} s \log(2ep/s)$, and $\hblaslambda$ is a solution of the Lagrangian Lasso then for $q \in {1,2}$
    \begin{align}
        & \E_{\bX, \bepsilon} \left[\Vert \hblaslambda-\bbeta_0 \Vert_q^k \right] \leq \E_{\bX, \bepsilon} \left[\Vert \hblaslambda-\bbeta_0 \Vert_q^k \mone[\bX \in \mE_n(s, 7)] \right] + \left(\frac{\sigma^{2k}}{\lambda^{k}} + 2^{2k} \Vert \bbeta_0 \Vert_1^{k} \right)\left(2e^{-\frac{c_2}{2} n}\right)
    \end{align}
    where the first term can be bounded exactly as the conclusion of either Lemmas \ref{lem:lasso_consist_g} or \ref{lem:lasso_upper_sg} with appropriate choice of regularization parameter $\lambda_{\bbeta}$.
    \label{lem:uncond_exp}
\end{lemma}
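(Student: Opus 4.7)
The strategy is to decompose the expectation according to whether the random design $\bX$ lies in the ``good'' set $\mE_n(s, 7)$ on which the strong restricted eigenvalue condition and $1$-column normalization hold. On the good event the conditional bounds of \cref{lem:lasso_consist_g} or \cref{lem:lasso_upper_sg} apply directly, so the work reduces to controlling the contribution from $\{\bX \notin \mE_n(s,7)\}$, where no Lasso fast-rate guarantee is available.

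First I would write the trivial decomposition
\begin{align*}
\E[\Vert \hblaslambda - \bbeta_0 \Vert_q^k] = \E[\Vert \hblaslambda - \bbeta_0 \Vert_q^k \mone[\bX \in \mE_n(s,7)]] + \E[\Vert \hblaslambda - \bbeta_0 \Vert_q^k \mone[\bX \notin \mE_n(s,7)]],
\end{align*}
and observe that since $\Vert v \Vert_2 \leq \Vert v \Vert_1$, it suffices to bound the second term for $q = 1$. I would then apply the deterministic estimate $\Vert \hblaslambda - \bbeta_0 \Vert_1 \leq \frac{1}{2n\lambda}\Vert \bepsilon \Vert_2^2 + 2 \Vert \bbeta_0 \Vert_1$ from \cref{lem:bad_bound} together with the elementary inequality $(a+b)^k \leq 2^{k-1}(a^k + b^k)$. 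Since $\bepsilon$ is independent of $\bX$, the resulting expectation factorizes as
\begin{align*}
\E[\Vert \hblaslambda - \bbeta_0 \Vert_1^k \mone[\bX \notin \mE_n(s,7)]] \leq 2^{k-1}\left(\frac{\E[\Vert \bepsilon \Vert_2^{2k}]}{(2n\lambda)^k} + (2\Vert \bbeta_0 \Vert_1)^k\right) \Pr[\bX \notin \mE_n(s,7)].
\end{align*}

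To finish, I would control each of the two remaining ingredients. For the noise moment, power-mean gives $\Vert \bepsilon \Vert_2^{2k} \leq n^{k-1} \sum_{i=1}^n \epsilon_i^{2k}$, and sub-Gaussianity under \cref{assump:noise1} implies $\E[\epsilon_i^{2k}] \leq C_k \sigma^{2k}$ for a constant depending only on $k$, yielding $\E[\Vert \bepsilon \Vert_2^{2k}]/(2n\lambda)^k \leq C_k' \sigma^{2k}/\lambda^k$. For the probability of the bad event, \cref{thm:design_SRE} gives $\Pr[\bX \notin \mE_n(s,7)] \leq 3 \exp(-c n/\kappa^4)$ whenever $n \geq c_1'\kappa^4 s \log(2ep/s)/C_{\min}$ for a universal $c_1'$. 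Taking $c_1(k)$ large enough (and using $\kappa \geq 1$) to absorb the $k$-dependent factors $C_k'$ and $2^{k-1}$ into the exponential, I can bound the combined prefactor by $2\exp(-c_2 n/2)$, which matches the claimed form.

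There is no substantive analytic obstacle; the only bookkeeping point is that the constant $c_1(k)$ must be chosen $k$-dependent in order to fold $C_k'$ into the exponential decay rate $e^{-c_2 n/2}$, while the universal constant $c_2$ is inherited unchanged from \cref{thm:design_SRE}.
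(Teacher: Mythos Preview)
Your proposal is correct and reaches the stated bound, but it takes a slightly different route from the paper on the ``bad event'' term. The paper decouples the indicator from the Lasso error via Cauchy--Schwarz,
\[
\E\big[\Vert \hblaslambda-\bbeta_0 \Vert_q^k \,\mone[\bX\notin\mE_n(s,7)]\big] \leq \sqrt{\E\big[\Vert \hblaslambda-\bbeta_0 \Vert_q^{2k}\big]}\,\sqrt{\Pr[\bX\notin\mE_n(s,7)]},
\]
then bounds the $2k$-th moment using \cref{lem:bad_bound} and a sub-exponential tail integration for $\frac{1}{n}\Vert\bepsilon\Vert_2^2$. You instead observe that the right-hand side of \cref{lem:bad_bound} depends only on $\bepsilon$, so independence of $\bX$ and $\bepsilon$ lets you factor the expectation directly without Cauchy--Schwarz. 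This is more elementary: it needs only the $k$-th (not $2k$-th) noise moment, and your power-mean bound on $\E[\Vert\bepsilon\Vert_2^{2k}]$ is shorter than the paper's tail-integration argument. It also yields the full probability $\Pr[\bX\notin\mE_n(s,7)]$ rather than its square root, which is harmless here since both get absorbed into the exponential. Your bookkeeping remark about absorbing the $k$-dependent prefactors $C_k'$ and $2^{k-1}$ into the exponential via the $n\geq c_1(k)\cdot(\ldots)$ hypothesis is exactly how both arguments close; just note that the exponent inherited from \cref{thm:design_SRE} carries a $\kappa^{-4}$ that the paper suppresses in the final display.
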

\begin{proof}
Consider the event $\{ \bX \notin \mE_n(s, 7) \}$. For $q \in {1,2}$, we can split the desired expectation over the corresponding indicator r.v. giving,
\begin{align}
    \E_{\bX, \bepsilon}[\Vert \hblaslambda-\bbeta_0 \Vert_q^k] = \E_{\bX, \bepsilon} \left[\Vert \hblaslambda-\bbeta_0 \Vert_q^k \mone[\bX \in \mE_n(s, 7)] \right] + \E_{\bX, \bepsilon}\left[\Vert \hblaslambda-\bbeta_0 \Vert_q^k \mone[\bX \notin \mE_n(s, 7)] \right] \label{eq:decomp}
\end{align}
The first term can be bounded using independence of $\bX$ and $\bepsilon$ to integrate over $\bepsilon$ restricted to the set $\{\bX \notin \mE_n(s, 7)\}$ (by applying Lemmas \ref{lem:lasso_consist_g} and \ref{lem:lasso_upper_sg}). The second term can be bounded using Cauchy-Schwarz and Lemma \ref{lem:bad_bound} which provides a coarse bound on the Lasso performance which always holds,
\begin{align}
     \E_{\bX, \bepsilon}\left[\Vert \hblaslambda-\bbeta_0 \Vert_q^k \mone[\bX \notin \mE_n(s, 7)] \right]\leq \sqrt{\E_{\bX, \bepsilon}\left[\Vert \hblaslambda-\bbeta_0 \Vert_q^{2k}\right]} \sqrt{\Pr_{\bX}[\bX \notin \mE_n(s, 7))]}
      \label{eq:CS}
\end{align}
The hypotheses of Theorem \ref{thm:design_SRE} are satisfied, so  $\sqrt{\Pr_{\bX}[\bX \notin \mE_n(s, 7))]} \leq 2 e^{-\frac{c_{2}}{2} n}$. Using Lemma \ref{lem:bad_bound} along with the identity $(a+b)^k \leq 2^{k-1} (a^k+b^k)$ we have that, 
\begin{align}
    \E_{\bX, \bepsilon}[\Vert \hblaslambda-\bbeta_0 \Vert_q^{2k}] \leq \E_{\bX, \bepsilon}[\Vert \hblaslambda-\bbeta_0 \Vert_1^{2k}] \leq 2^{2k-1} \cdot \mathbb{E}_{\bepsilon} \left[\frac{(\sum_{i=1}^{n} \epsilon_i^2/n)^{2k}}{2^{2k} \lambda^{2k}} + 2^{2k} \norm{\bbeta_0}_1^{2k} \right] 
\end{align}
Since the $\epsilon_i \sim \sG(0, \sigma^2)$, $\epsilon_i^2 \sim \sE(8 \sigma^2, 8\sigma^2)$ by Lemma \ref{lem:prod_conc}, so $Z=\sum_{i=1}^{n} \epsilon_i^2/n \sim \sE(8 \sigma^2, 8\sigma^2)$ satisfies the tail bound $\Pr[Z-\E[Z] \geq t] \leq \exp(-n/2 \min(t^2/(8 \sigma^2)^2, t/(8 \sigma^2)))$ since the $\epsilon_i$ are independent. Defining $c=8\sigma^2$, we find by integrating the tail bound,
\begin{align}
    & \E[Z^{k}] = \int_0^{\E[Z]} k t^{k-1} + \int_{\E[Z]}^{c} \exp(-n/2 \cdot t^2/c^2) + \int_{c}^{\infty} \exp(-n/2 \cdot t/c) \leq \\
    & (\sigma^2)^k + \frac{k2^{k/2-1}c^k\Gamma(k/2)}{n^{k/2}} + \frac{k 2^k c^k \Gamma(k)}{n^k} \leq 2(\sigma^2)^k.
\end{align}
since $\E[Z] \leq \sigma^2$, and we choose $n^{2k/2} \geq 2 (2k) 2^{2k/2} (8^{2k}) (2k)!$. Assembling, we have the bound 
\begin{align}
    \E_{\bX, \bepsilon}[\Vert \hblaslambda-\bbeta_0 \Vert_q^{2k}] \leq \frac{\sigma^{4k}}{\lambda^{2k}} + 2^{4k} \Vert \bbeta_0 \Vert_1^{2k} \label{eq:loose_bound}
\end{align}

Inserting the coarse bound in \cref{eq:loose_bound} into \cref{eq:CS} and combining with \cref{eq:decomp} gives the result using subadditivity of $\sqrt{\cdot}$,
\begin{align}
     \E_{\bX, \bepsilon} \left[\norm{\hblaslambda-\bbeta_0}_q^k \right] \leq \E_{\bX, \bepsilon} \left[\Vert \hblaslambda-\bbeta_0 \Vert_q^k \mone[\bX \in \mE_n(s, 7)] \right]+\left( \frac{\sigma^{2k}}{\lambda^{k}} + 2^{2k} \Vert \bbeta_0 \Vert_1^{k} \right)\left(2e^{-\frac{c_2}{2} n}\right)
     \label{eq:expectation_final}
\end{align}
As previously noted the first term in Equation  \eqref{eq:expectation_final} is computed exactly as the final result of either Lemmas \ref{lem:lasso_consist_g} or \ref{lem:lasso_upper_sg}.
\end{proof}

\subsection{Random Design Matrices and Ridge Regression Consistency}
Here we collect several useful results we use to show consistency of the ridge regression estimator in the random design setting. There are several results showing risk bounds for ridge regression in the random design setting, see for example \citet{hsu2012random}. Such results make assumptions which do not match our setting and also do not immediately imply control over the higher moments of the $\ell_2$-error which are also needed in our setting. Accordingly, we use a similar approach to that used for the Lasso estimator to show appropriate non-asymptotic risk bounds (in expectation) for ridge regression.

To begin recall we define the ridge estimator $\hblambda = \arg \min_{\bbeta} \frac{1}{2} \left( \normt{\by-\bX \bbeta}^2 + \lambda \normt{\bbeta}^2 \right)$ which implies $\hblambda = (\bX^\top \bX + \lambda \bI_p)^{-1} \bX^\top \by$. Throughout we also use $\hbSigma = \frac{\bX^\top \bX}{n}$, $\hat{\bSigma}_{\lambda} = \frac{\bX^\top \bX}{n} + \frac{\lambda}{n} \bI_p$ and $\bPi = \bI_p-(\hat{\bSigma}_{\lambda})^{-1} \hbSigma$. Note that under \cref{assump:well_spec}, $\hat{\bbeta}_{\lambda}-\bbeta_0 = -\bPi \bbeta_0 + \hat{\bSigma}^{-1}_{\lambda} \bX^\top \bepsilon/n$, which can be thought of as a standard bias-variance decomposition for the ridge estimator.

We first introduce a standard sub-Gaussian concentration result providing control on the fluctuations of the spectral norm of the design matrix which follows immediately from \citet[Theorem 6.5]{wainwright2017highdim},
\begin{lemma}
Let $\bx_1, \hdots, \bx_n$ be i.i.d. random vectors satisfying \cref{assump:cov,,assump:design} with sample covariance $\hbSigma = \frac{1}{n} \bX^\top \bX$, then there exist universal constants $c_1, c_2, c_3$ such that for $n \geq c_1 \kappa^4 \cond^2 p$,
\begin{align}
     \normt{\hbSigma - \bSigma} \leq \frac{\Cmin}{2} 
\end{align}
with probability at least $1-c_2 e^{-c_3 n/(\kappa^4 \cond^2)}$.
\label{thm:design_ridge_eig}
\end{lemma}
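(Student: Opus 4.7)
The plan is to reduce to a concentration inequality for isotropic sub-Gaussian sample covariances (namely Wainwright's Theorem 6.5) by a standard whitening step, and then absorb the loss of conditioning through a multiplication by $\Cmax$.

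First I would introduce the whitened vectors $\tilde{\bx}_i = \bSigma^{-1/2} \bx_i$. By \cref{assump:design} each $\tilde{\bx}_i$ is sub-Gaussian with parameter $\kappa$, and by \cref{assump:cov} they are isotropic: $\E[\tilde{\bx}_i \tilde{\bx}_i^\top] = \bI_p$. Writing $\tilde{\bSigma} = \frac{1}{n} \sum_i \tilde{\bx}_i \tilde{\bx}_i^\top = \bSigma^{-1/2} \hbSigma \bSigma^{-1/2}$, Wainwright's Theorem 6.5 then yields universal constants $c, c'$ such that, for all $\delta \geq 0$,
\begin{align}
    \normt{\tilde{\bSigma} - \bI_p} \leq c\, \kappa^2 \Bigl(\sqrt{\tfrac{p}{n}} + \delta\Bigr)
\end{align}
with probability at least $1 - 2 e^{- c' n \delta^2/\kappa^4}$.

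Next, I would undo the whitening via the sandwich identity $\hbSigma - \bSigma = \bSigma^{1/2} (\tilde{\bSigma} - \bI_p) \bSigma^{1/2}$, which by sub-multiplicativity of the spectral norm and \cref{assump:cov} gives
\begin{align}
    \normt{\hbSigma - \bSigma} \leq \Cmax \normt{\tilde{\bSigma} - \bI_p} \leq c\, \Cmax \kappa^2 \Bigl(\sqrt{\tfrac{p}{n}} + \delta\Bigr).
\end{align}
To force the right-hand side below $\Cmin/2$, I would pick $\delta$ so that both terms are at most $\Cmin/(4 c\, \Cmax \kappa^2) \asymp 1/(\kappa^2 \cond)$; the term $\sqrt{p/n}$ meets this bound exactly when $n \gtrsim \kappa^4 \cond^2 p$, and the same choice of $\delta$ makes the failure probability at most $c_2 \exp(-c_3 n / (\kappa^4 \cond^2))$ for universal $c_2, c_3$. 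Adjusting absolute constants gives the stated form with $c_1 \kappa^4 \cond^2 p \leq n$.

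There is no real obstacle here: this is a textbook application of sub-Gaussian covariance concentration, and the only bookkeeping required is to track how the condition number $\cond = \Cmax/\Cmin$ enters when one translates a bound on the whitened covariance into a bound on $\hbSigma - \bSigma$ itself. The factor $\cond^2$ in the sample complexity is the price of turning a bound of order $\kappa^2 \sqrt{p/n}$ on the whitened deviation into a bound of order $\Cmin$ on the unwhitened deviation.
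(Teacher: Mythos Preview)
Your proposal is correct and matches the paper's approach: the paper simply states that the lemma ``follows immediately from \citet[Theorem 6.5]{wainwright2017highdim}'' without further detail, and your whitening-then-sandwich argument is exactly the standard way to unpack that citation under \cref{assump:design}, yielding the stated $\kappa^4\cond^2$ dependence in both the sample complexity and the exponent.
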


With this result we first provide a conditional (on $\bX$) risk bound for ridge regression. For convenience throughout this section we define the set of design matrices $\mE_n = \{ \bX : \forall \bv \text{ such that } \normt{\bv}=1, \bv^\top \hat{\bSigma} \bv \geq \frac{\Cmin}{2} \}$. 
\begin{lemma}
    Let \cref{assump:cov,,assump:noise1} hold (with variance parameter renamed to $\sigma^2$) and assume a deterministic design matrix $\bX \in \mE_n$ and that $n \geq p$. Then if $\hblambda$ denotes the solution to the ridge regression program, with $\lambda \leq \lambda_* = \arg \min_{\lambda} \left((\frac{\lambda/n}{\Cmin+\lambda/n})^4 \normt{\bbeta_0}^4 + \sigma^4 p^2/n^2 (\frac{\Cmax}{(\Cmin+\lambda/n)^2})^2 \right)$,
\begin{align}
    \left(\E \left[\normt{\hblambda-\bbeta_0}^4\right] \right)^{1/2} \leq O \left( \sigma^2 \frac{\cond}{\Cmin}\frac{p}{n} \right).
\end{align}
\label{lem:ridge_uncond}
\end{lemma}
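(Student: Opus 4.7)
The plan is to use the standard bias--variance decomposition for the ridge estimator together with a Hanson--Wright-style fourth-moment bound for sub-Gaussian quadratic forms, and then invoke the defining optimization for $\lambda_*$ to conclude.

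First I would write $\hblambda - \bbeta_0 = -\bPi \bbeta_0 + \hat{\bSigma}_{\lambda}^{-1} \bX^\top \bepsilon/n$ and use the vector inequality $\normt{u+v}^4 \le 8(\normt{u}^4+\normt{v}^4)$ to split the fourth-moment error into a deterministic bias term and a random variance term (since $\bX$ is deterministic here, the only randomness sits in $\bepsilon$). For the bias, the identity $\bPi = (\lambda/n)\hat{\bSigma}_{\lambda}^{-1}$ together with $\bX\in\mE_n$ (which forces every eigenvalue of $\hbSigma$ to be at least $\Cmin/2$) gives $\|\bPi\|_{\mathrm{op}} \leq (\lambda/n)/(\Cmin/2+\lambda/n) \leq 2(\lambda/n)/(\Cmin+\lambda/n)$, and hence $\normt{\bPi\bbeta_0}^4 \lesssim ((\lambda/n)/(\Cmin+\lambda/n))^4 \normt{\bbeta_0}^4$.

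For the variance term, writing $\normt{\hat{\bSigma}_{\lambda}^{-1}\bX^\top\bepsilon/n}^2 = \bepsilon^\top M\bepsilon$ with $M := \bX\hat{\bSigma}_{\lambda}^{-2}\bX^\top/n^2$ (a fixed p.s.d.\ matrix given $\bX$), I would invoke a Hanson--Wright-type fourth-moment bound for the sub-Gaussian quadratic form,
\begin{align*}
\E_{\bepsilon}[(\bepsilon^\top M\bepsilon)^2] \lesssim (\sigma^2\,\mathrm{tr}(M))^2 + \sigma^4\|M\|_F^2,
\end{align*}
obtained by integrating the standard Hanson--Wright tail inequality. On $\mE_n$, combining $\lambda_i(\hbSigma)\le \|\bSigma\|_{\mathrm{op}} + \Cmin/2 \le 3\Cmax/2$ with $\lambda_i(\hbSigma)+\lambda/n \ge (\Cmin+\lambda/n)/2$ yields $\mathrm{tr}(M)\lesssim p\Cmax/(n(\Cmin+\lambda/n)^2)$ and $\|M\|_F^2\le p\|M\|_{\mathrm{op}}^2\lesssim p\Cmax^2/(n^2(\Cmin+\lambda/n)^4)$, so (using $p\ge 1$ to absorb the Frobenius term into the trace-squared term) $\E_{\bepsilon}[(\bepsilon^\top M\bepsilon)^2] \lesssim \sigma^4 p^2\Cmax^2/(n^2(\Cmin+\lambda/n)^4)$.

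Putting the two pieces together gives
\begin{align*}
\E[\normt{\hblambda-\bbeta_0}^4] \lesssim \Big(\tfrac{\lambda/n}{\Cmin+\lambda/n}\Big)^4\normt{\bbeta_0}^4 + \sigma^4\,p^2\Cmax^2/(n^2(\Cmin+\lambda/n)^4),
\end{align*}
which (up to constants) is precisely the objective defining $\lambda_*$. For $\lambda\le\lambda_*$, the first term is monotone increasing in $\lambda$ and the second monotone decreasing: the first is therefore bounded by the value of the whole objective at $\lambda_*$, which in turn is bounded by the value at $\lambda=0$, while the second is directly bounded by its value at $\lambda=0$. Both evaluate to $\sigma^4 p^2\Cmax^2/(n^2\Cmin^4)$, so taking square roots yields $\E[\normt{\hblambda-\bbeta_0}^4]^{1/2} \lesssim \sigma^2\,p\Cmax/(n\Cmin^2) = O(\sigma^2\,\cond\,p/(\Cmin\, n))$. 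The only non-routine step is the sub-Gaussian fourth-moment bound on $\bepsilon^\top M\bepsilon$; this is standard (Hanson--Wright plus tail integration), and correctly tracking its constants with the sub-Gaussian parameter from \cref{assump:noise1} is the main technical care required.
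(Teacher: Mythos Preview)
Your proposal is correct and follows essentially the same approach as the paper: the same bias--variance decomposition, the same eigenvalue bounds on $\mE_n$, the same monotonicity argument in $\lambda$, and the same quadratic-form fourth-moment bound (the paper proves $\E[(\bepsilon^\top S\bepsilon)^2]\lesssim \sigma^4(\|S\|_F^2+\Tr[S]^2)$ by direct expansion rather than by invoking Hanson--Wright, but the resulting inequality is identical to yours). Your handling of the $\lambda\le\lambda_*$ step is in fact slightly more explicit than the paper's, which simply asserts that the upper bound is decreasing on $[0,\lambda_*]$.
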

\begin{proof}
    Recall the standard bias variance decomposition $\hblambda-\bbeta_0 = -\bPi \bbeta_0 + \hat{\bSigma}^{-1}_{\lambda} \bX^\top \bepsilon/n$. So $\normt{\hblambda-\bbeta_0}^4 \leq 64 \left( (\bbeta_0 \bPi^2 \bbeta_0)^2 + (\bepsilon^\top \bX \hat{\bSigma}^{-1}_{\lambda} \cdot \hat{\bSigma}^{-1}_{\lambda} \bX^\top \bepsilon/n^2)^2 \right)$. Using the SVD of $\bX/\sqrt{n} = \bU^\top \bLambda \bV$ we see that $\hbSigma = \bV^\top \bLambda^2 \bV = \bV^\top \bD \bV$. Further, on the event $\mE_n$ we have that $\frac{1}{2} \Cmin \leq d_i \leq \frac{3}{2} \Cmax$ for $i \in [p]$ where $d_i = \bD_{ii}$ by the Weyl inequalities. So on $\mE_n$, $\bbeta_0^\top \bPi^2 \bbeta_0 = \bbeta_0^\top \bV^\top (\diag(\frac{\lambda/n}{d_i+\lambda/n}))^2 \bV \bbeta_0 \leq O((\frac{\lambda/n}{\Cmin+\lambda/n})^2 \normt{\bbeta_0}^2)$. Define $\bS = \bepsilon^\top \bX \hat{\bSigma}^{-1}_{\lambda} \cdot \hat{\bSigma}^{-1}_{\lambda} \bX^\top \bepsilon/n$, we have that $\bS = \bU^\top \diag(\frac{z_i}{(z_i+\lambda/n)^2}) \bU \preceq O(\bU^\top \diag(\frac{\Cmax}{(\Cmin+\lambda/n)^2}) \bU)$ on $\mE_n$, which also has at most rank $p$ since $\bLambda$ has at most $p$ non-zero singular values. Hence applying \cref{lem:trace_square} we find that $\E[(\bepsilon^\top \bS \bepsilon)^2] \leq O(\sigma^4 p^2  (\frac{\Cmax}{(\Cmin+\lambda/n)^2})^2)$. Combining, gives that
    \begin{align}
        \E \left[\normt{\hblambda-\bbeta_0}^4 \right] \leq c_1 \left ((\frac{\lambda/n}{\Cmin+\lambda/n})^4 \normt{\bbeta_0}^4 + \sigma^4 p^2/n^2 (\frac{\Cmax}{(\Cmin+\lambda/n)^2})^2 \right).
    \end{align}
    for some universal constant $c_1$.
    Since by definition $\lambda_*$ minimizes the upper bound in the above expression it is upper bounded by setting $\lambda=0$ in the same expression so,
    \begin{align}
        \E \left[\normt{\hblambda-\bbeta_0}^4 \right] \leq O \left(\sigma^4 p^2/n^2 (\frac{\Cmax}{\Cmin^2})^2 \right).
    \end{align}
    We can further check that the upper bound is decreasing over the interval $[0, \lambda_*]$ and hence the conclusion follows.
    As an aside a short computation shows the optimal choice of $\lambda_*/p = (\cond \Cmax \frac{n}{p}  \frac{\sigma^4}{\normt{\bbeta_0}^4})^{1/3}$.
\end{proof}

We now prove a simple result which provides a crude bound on the error of the ridge regression estimate we deploy when $\bX \notin \mE_n$.
\begin{lemma}
    \label{lem:bad_ridge_bound}
      Let $\hblambda$ be the solution of the ridge regression program $\hblambda= \arg \min_{\bbeta} \normt{\by-\bX \bbeta}^2 + \lambda \normt{\bbeta}^2$, then 
    \begin{align}
         \normt{\hblambda-\bbeta_0}^2 \leq 4 \left( \norm{\bepsilon}_2^2/\lambda +  \norm{\bbeta_0}_2^2 \right).
    \end{align}
\end{lemma}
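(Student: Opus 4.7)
The plan is to mimic the short argument used for \cref{lem:bad_bound} in the Lasso setting. Specifically, I would exploit the optimality of $\hblambda$ against the particular feasible point $\bbeta_0$, extract a bound on $\normt{\hblambda}$, and then use the triangle inequality.

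First I would use that $\hblambda$ is the minimizer of the ridge objective to compare against $\bbeta_0$:
\begin{align*}
\normt{\by-\bX \hblambda}^2 + \lambda \normt{\hblambda}^2 \leq \normt{\by-\bX \bbeta_0}^2 + \lambda \normt{\bbeta_0}^2 = \normt{\bepsilon}^2 + \lambda \normt{\bbeta_0}^2,
\end{align*}
where the final equality uses $\by = \bX\bbeta_0 + \bepsilon$ under \cref{assump:well_spec}. Dropping the nonnegative data-fit term on the left yields $\lambda\normt{\hblambda}^2 \leq \normt{\bepsilon}^2 + \lambda\normt{\bbeta_0}^2$, so
\begin{align*}
\normt{\hblambda}^2 \leq \normt{\bepsilon}^2/\lambda + \normt{\bbeta_0}^2.
\end{align*}

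Next I would apply the elementary inequality $(a+b)^2 \leq 2(a^2+b^2)$ (or equivalently $\normt{\bu-\bv}^2 \leq 2\normt{\bu}^2 + 2\normt{\bv}^2$) to the difference $\hblambda - \bbeta_0$:
\begin{align*}
\normt{\hblambda-\bbeta_0}^2 \leq 2\normt{\hblambda}^2 + 2\normt{\bbeta_0}^2 \leq 2\normt{\bepsilon}^2/\lambda + 4\normt{\bbeta_0}^2 \leq 4\left(\normt{\bepsilon}^2/\lambda + \normt{\bbeta_0}^2\right),
\end{align*}
which is the desired bound. There is no real obstacle here; the only subtlety is ensuring the constant $4$ in the statement is reached, which the $2(a^2+b^2)$ expansion achieves with slack.
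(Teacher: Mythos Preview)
Your proof is correct and follows essentially the same approach as the paper: compare the ridge objective at $\hblambda$ against $\bbeta_0$, drop the nonnegative fit term to bound $\normt{\hblambda}^2$, and then apply $\normt{\bu-\bv}^2 \leq 2\normt{\bu}^2 + 2\normt{\bv}^2$.
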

\begin{proof}
    By definition we have that,
    \begin{align}
        \normt{\by-\bX \hblambda}^2 + \lambda \normt{\hblambda}^2 \leq  \normt{\bepsilon}^2 + \lambda \normt{\bbeta_0}^2 \implies 
        \norm{\hblambda}^2 \leq  \norm{\bepsilon}_2^2/\lambda + \norm{\bbeta_0}^2
    \end{align}
    So we obtain that,
    \begin{align}
        \normt{\hblambda-\bbeta_0}^2 \leq 2(\normt{\hblambda}^2+\normt{\bbeta_0})^2 \leq  4(\normt{\bepsilon}^2/\lambda+\normt{\bbeta_0}^2).
    \end{align}
\end{proof}
Finally, we prove the final result which will provide an unconditional risk bound in expectation for the ridge regression estimator,
\begin{lemma}
\label{lem:ridge_total_ub}
 Let Assumptions \ref{assump:well_spec}, \ref{assump:cov}, \ref{assump:design}, \ref{assump:noise1} hold (with variance parameter renamed to $\sigma^2$). Then there exist universal constants $c_1, c_2, c_3 > 0$ such that if $n \geq c_1 \kappa^4 \cond^2 p$, and $\hblambda$ a solution of the ridge regression program with $c_2 \frac{n^2 \Cmin}{p \cond} e^{-n c_3/\kappa^4 \cond^2} \leq \lambda \leq \lambda_* = \arg \min_{\lambda} \left((\frac{\lambda/n}{\Cmin+\lambda/n})^4 \normt{\bbeta_0}^4 + \sigma^4 p^2 (\frac{\Cmax}{(\Cmin+\lambda/n)^2})^2 \right) =  p (\cond \Cmax \frac{n}{p}  \frac{\sigma^4}{\normt{\bbeta_0}^4})^{1/3}$
    \begin{align}
        & \E_{\bX, \bepsilon} \left[\Vert \hblambda-\bbeta_0 \Vert_2^4\right] \leq \E_{\bX, \bepsilon} \left[\Vert \hblambda-\bbeta_0 \Vert_q^4 \mone[\bX \in \mE_n] \right] + O \left((\frac{n^2 \sigma^4}{\lambda^2} + \normt{\bbeta_0}^4) e^{-\frac{c_3}{\kappa^4 \cond^2}n} \right).
    \end{align}
    Moreover if $\normi{\bbeta_0}=O(1)$ then,
    \begin{align}
        \sqrt{\E_{\bX, \bepsilon} \left[ \normt{\hblambda-\bbeta_0}^4 \right]} \leq O(\frac{\sigma^2 \cond}{\Cmin} \frac{p}{n}).
    \end{align}
    where the $O$ hides universal constants in $\Cmax, \Cmin, \cond, \kappa$ in the final statement.
\end{lemma}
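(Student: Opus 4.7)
The plan is to mimic the structure of \cref{lem:uncond_exp}, splitting the target expectation over the event $\mE_n$ on which the design has a well-controlled spectrum:
\[
\E_{\bX,\bepsilon}[\normt{\hblambda-\bbeta_0}^4] = \E[\normt{\hblambda-\bbeta_0}^4\,\mone[\bX \in \mE_n]] + \E[\normt{\hblambda-\bbeta_0}^4\,\mone[\bX \notin \mE_n]].
\]
The first summand is already in the required form; everything reduces to controlling the bad-event term.

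For the bad-event term, apply Cauchy--Schwarz together with \cref{thm:design_ridge_eig}, which under $n \geq c_1 \kappa^4 \cond^2 p$ gives $\Pr[\bX \notin \mE_n] \leq c_2 e^{-c_3 n/(\kappa^4 \cond^2)}$, to bound it by $\sqrt{\E[\normt{\hblambda-\bbeta_0}^8]}\cdot \sqrt{c_2}\, e^{-c_3 n/(2\kappa^4 \cond^2)}$. To control $\E[\normt{\hblambda-\bbeta_0}^8]$, invoke the deterministic crude bound \cref{lem:bad_ridge_bound} to get $\normt{\hblambda-\bbeta_0}^8 \leq O(\normt{\bepsilon}^8/\lambda^4 + \normt{\bbeta_0}^8)$. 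Since each $\epsilon_i^2 \sim \sE(O(\sigma^2), O(\sigma^2))$ by \cref{lem:prod_conc}, one integrates the resulting sub-exponential tail as in the Lasso argument to obtain $\E[\normt{\bepsilon}^8] \lesssim (n\sigma^2)^4$. Subadditivity of $\sqrt{\cdot}$ then yields $\sqrt{\E[\normt{\hblambda-\bbeta_0}^8]} \lesssim n^2\sigma^4/\lambda^2 + \normt{\bbeta_0}^4$, and after relabeling the constant in the exponent, this proves the first claimed inequality.

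For the second (``moreover'') statement, apply \cref{lem:ridge_uncond} to the good-event expectation---the upper bound $\lambda \leq \lambda_*$ in the hypothesis is precisely what that lemma requires---to obtain a contribution of $O(\sigma^4(\cond/\Cmin)^2 (p/n)^2)$. Under $\normi{\bbeta_0} = O(1)$ one has $\normt{\bbeta_0}^4 \leq O(p^2)$, which is annihilated by $e^{-c_3 n/(\kappa^4 \cond^2)}$ whenever $n \gtrsim \kappa^4 \cond^2 \log p$ (implied by the sample-size hypothesis). The $n^2\sigma^4/\lambda^2$ piece is where the lower bound $\lambda \geq c_2 (n^2 \Cmin/(p\cond))\, e^{-c_3 n/(\kappa^4 \cond^2)}$ is used: inverting and squaring gives $n^2/\lambda^2 \leq O\bigl((p\cond/(n\Cmin))^2\bigr) e^{2c_3 n/(\kappa^4 \cond^2)}$, and provided the exponent constant inside the lower bound on $\lambda$ is chosen strictly smaller than half the exponent produced by \cref{thm:design_ridge_eig}, the net product is bounded by $O(\sigma^4 (\cond/\Cmin)^2 (p/n)^2)$, matching the good-event rate. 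Taking square roots delivers the claim.

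The main obstacle is this final constant matching: allowing $\lambda$ to be exponentially small in $n/(\kappa^4\cond^2)$ without inflating the $O((p/n)^2)$ target rate forces the exponent constant inside the $\lambda$ lower bound to be \emph{strictly smaller} than the exponent constant produced by the design concentration of \cref{thm:design_ridge_eig}. This delicate calibration is exactly what permits invoking the resulting ridge bound across the wide range of $\lambda$ required later in \cref{cor:oml_ub_ridge}, where both $\lambda_{\bbeta}$ and $\lambda_{\bg}$ are allowed to approach zero exponentially fast in $n$.
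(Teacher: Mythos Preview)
Your proposal is correct and follows essentially the same route as the paper's own proof: split over the event $\mE_n$, handle the bad-event term via Cauchy--Schwarz combined with the crude bound \cref{lem:bad_ridge_bound} and the design concentration \cref{thm:design_ridge_eig}, and handle the good-event term via \cref{lem:ridge_uncond}. Your discussion of the noise moment $\E[\normt{\bepsilon}^8]$ and the exponent-constant calibration in the $\lambda$ lower bound makes explicit what the paper leaves implicit, but the argument is the same.
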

\begin{proof}
    Decomposing as
    \begin{align}
        \E \left[\normt{\hblambda-\bbeta_0}^4 \right] = \E_{\bX, \bepsilon} \left[\Vert \hblambda-\bbeta_0 \Vert_2^4 \mone[\bX \in \mE_n] \right] + \E_{\bX, \bepsilon} \left[\Vert \hblambda-\bbeta_0 \Vert_q^4 \mone[\bX \notin \mE_n] \right]
    \end{align}
    We can bound the second term explicitly using the Cauchy-Schwarz inequality as, 
    \begin{align}
        \E[ \normt{\hblambda-\bbeta_0}^4  \mone[\bX \notin \mE_n]] \leq  \sqrt{\E_{\bX, \bepsilon}\left[\Vert \hblambda-\bbeta_0 \Vert_2^{8}\right]} \sqrt{\Pr_{\bX}[\bX \notin \mE_n]} \leq
        O(\frac{n^2 \sigma^4}{\lambda^2} + \normt{\bbeta_0}^4) e^{-\frac{c_3}{\kappa^4 \cond^2}n}
    \end{align}
    using the crude upper bound from \cref{lem:bad_ridge_bound} to upper bound the first term and \cref{thm:design_ridge_eig} to bound the probability in the second term.
    
    For the second statement note that we can bound the first term using the using the independence of $\bX, \bepsilon$ and \cref{lem:ridge_uncond}, to conclude,
    \begin{align}
        \sqrt{\E_{\bX, \bepsilon} \left[\Vert \hblambda-\bbeta_0 \Vert_2^4 \mone[\bX \in \mE_n] \right]} \leq O(\frac{\sigma^2 \cond}{\Cmin} \frac{p}{n}).
    \end{align}
    With the specific lower bound on $\lambda$ in the theorem statement, when $\normi{\bbeta_0}/\sigma_{\epsilon}=O(1)$ and $n \gtrsim \kappa^4 \cond^2 p$ we have, 
    \begin{align}
        \sqrt{O(\frac{n^2 \sigma^4}{\lambda^2} + \normt{\bbeta_0}^4) e^{-\frac{c_3}{\kappa^4 \cond^2}n}} \leq O(\frac{\sigma^2 \cond}{\Cmin} \frac{p}{n})
    \end{align}
\end{proof}
Finally, we prove a simple matrix expectation upper bound,
\begin{lemma}
    \label{lem:trace_square}
    Let $\bS \in \mR^{n \times n}$ be a (deterministic) p.s.d. matrix with rank at most $p$ satisfying $\normt{\bS} \leq z$, and let $\bepsilon \in \mR^n$ satisfy \cref{assump:noise1}. Then
\begin{align}
    \E \left[(\bepsilon^\top \bS \bepsilon)^2 \right] \leq  O(\sigma^4 z^2 p^2).
\end{align}
\end{lemma}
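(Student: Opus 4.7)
My plan is to prove the lemma by diagonalizing $\bS$ and reducing the problem to a fourth-moment bound for one-dimensional sub-Gaussian random variables.

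First, since $\bS$ is p.s.d.\ with rank at most $p$, I would write its spectral decomposition $\bS = \sum_{i=1}^{p} \lambda_i \bu_i \bu_i^\top$ where $\bu_1,\dots,\bu_p$ are orthonormal and $0 \leq \lambda_i \leq z$. Then
\baligns
\bepsilon^\top \bS \bepsilon = \sum_{i=1}^p \lambda_i (\bu_i^\top \bepsilon)^2.
\ealigns
Squaring and applying the Cauchy--Schwarz inequality to the $\lambda_i$'s (viewing this as $\langle \sqrt{\lambda}, \sqrt{\lambda}\cdot a \rangle^2$ with $a_i=(\bu_i^\top\bepsilon)^2$) yields
\baligns
(\bepsilon^\top \bS \bepsilon)^2 \;\leq\; \Bigl(\sum_{i=1}^p \lambda_i\Bigr)\sum_{i=1}^p \lambda_i (\bu_i^\top \bepsilon)^4 \;\leq\; zp \sum_{i=1}^p \lambda_i (\bu_i^\top \bepsilon)^4,
\ealigns
since $\mathrm{tr}(\bS)\leq zp$.

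Next I would take expectations and control each $\E[(\bu_i^\top\bepsilon)^4]$ individually. Under \cref{assump:noise1} the coordinates of $\bepsilon$ are independent sub-Gaussian with parameter $\sigma_\epsilon$, so for any unit vector $\bu_i$ the scalar $\bu_i^\top\bepsilon$ is sub-Gaussian with parameter $\sigma_\epsilon$ (variance proxies add with squared weights, and $\|\bu_i\|_2=1$). A standard integration of the sub-Gaussian tail bound (used repeatedly elsewhere in the appendix, cf.\ the argument following \cref{lem:prod_conc}) gives $\E[(\bu_i^\top\bepsilon)^4] \leq C\sigma_\epsilon^4$ for a universal constant $C$. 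Combining,
\baligns
\E[(\bepsilon^\top \bS \bepsilon)^2] \;\leq\; zp \sum_{i=1}^p \lambda_i \cdot C\sigma_\epsilon^4 \;\leq\; C\sigma_\epsilon^4 z^2 p^2,
\ealigns
which is the claimed bound.

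There is no real obstacle here: the key insight is simply that Cauchy--Schwarz converts the quadratic form into a weighted sum of fourth moments of one-dimensional sub-Gaussians, and the rank-$p$, operator-norm-$z$ assumptions on $\bS$ together imply $\mathrm{tr}(\bS)\leq zp$, which appears twice (once from Cauchy--Schwarz and once when summing $\lambda_i$). The only mild care needed is to note that the lemma's statement writes $\sigma$ instead of $\sigma_\epsilon$, but this is just renaming to match the caller's context (\cref{lem:ridge_uncond}), where the noise variance parameter is relabeled $\sigma^2$.
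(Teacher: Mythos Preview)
Your proof is correct but takes a different route from the paper's. The paper expands $(\bepsilon^\top \bS \bepsilon)^2$ directly in the standard coordinate basis: writing $\bepsilon^\top \bS \bepsilon = \sum_{i,j} S_{ij}\epsilon_i\epsilon_j$, squaring, and using independence of the $\epsilon_i$'s to kill mixed terms, it arrives at the bound $O(\sigma^4\|\bS\|_F^2 + \sigma^4\Tr[\bS]^2)$, and then applies $\|\bS\|_F^2 \leq pz^2$ and $\Tr[\bS]^2 \leq p^2z^2$. Your argument instead diagonalizes $\bS$, applies Cauchy--Schwarz across the eigenvalues, and reduces to bounding the fourth moment of a single one-dimensional sub-Gaussian projection $\bu_i^\top\bepsilon$. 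Both approaches land on the same $O(\sigma^4 z^2 p^2)$ bound. The paper's direct expansion is slightly more bookkeeping-heavy but uses only coordinate-wise sub-Gaussianity and independence; your diagonalization is cleaner but relies on the (standard, and valid here) fact that an arbitrary unit-norm linear combination of independent sub-Gaussians with parameter $\sigma_\epsilon$ is itself sub-Gaussian with the same parameter. Either is perfectly fine for this lemma.
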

\begin{proof}
    This follows by a straightforward computation using the sub-Gaussianity of each $\epsilon_i$:
    \begin{align}
        \E\left[(\bepsilon^\top \bS \bepsilon)^2 \right] \leq O(\sum_i S_{ii}^2 \E[\epsilon_i^4] + \sum_{i \neq j} S_{ij}^2 \E[\epsilon_i^2 \epsilon_j^2] + \sum_{i \neq j} S_{ii} S_{jj} \E[\epsilon_i^2 \epsilon_j^2] ) \leq O(\sigma^4 \norm{S}_F^2 + \sigma^4 \Tr[S]^2) \leq O(\sigma^4 p^2 z^2).
    \end{align}
\end{proof}

\section{Experimental Details}\label{sec:expts}
\subsection{Implementation Details}
All algorithms were implemented in Python (with source code to be released to be upon publication). The open-source library scikit-learn was used to fit the Lasso estimator, the cross-validated Lasso estimators, and the random forest regression models used in the synthetic/real data experiments. The convex program for the JM-style estimator was solved using the open-source library CVXPY equipped with the MOSEK solver \citep{cvxpy}. 

Note the debiased estimators presented require either refitting the auxiliary regression for $\bg(\cdot)$ (i.e. the Lasso estimator or a random forest) in the case of the OM estimators, or resolving the convex program in \eq{jm_program} for each new test point $\xstar$. Although this presents a computational overhead in both our synthetic and real-data experiments, such computations are trivially parallelizable across the test points $\xstar$. As such, we used the open-source library Ray to parallelize training of the aforementioned models \citep{moritz2018ray}. All experiments were run on 48-core instances with 256 GB of RAM.

\subsection{Data Preprocessing and Cross-Validation Details}
In all of the experiments (both synthetic and real data) the training covariates (in the design $\bX$) was first centered and scaled to have features with mean zero and unit variance. Subsequently the vector of $\by$ values was also centered by subtracting its mean; that is $\by \to \by - \bar{\by}$. After any given model was fit the mean $\bar{\by}$ was added back to the (y-centered) prediction $\theta$ of the model. On account of this centering, the Lasso estimators were not explicitly fit with an intercept term (we found the performance was unchanged by not performing the demeaning and instead explicitly fitting the intercept for the Lasso baseline). In each case the cross-validated Lasso estimator was fit, the regularization parameter was selected by cross validation over a logarithmically spaced grid containing a 100 values spaced between $10^{-6}$ and $10^{1}$. The cross-validated ridge estimator was fit by using leave-one-out cross-validation to select the regularization parameter over a logarithmically spaced grid containing a 100 values spaced between $10^{-2}$ and $10^{6}$ for the synthetic experiments, while a range of $10^{-6}$ and $10^{1}$ was used for the real data. The $\ell_1$ and $\ell_1/\ell_2$ ratio parameter for the elastic net were also set using cross-validation by letting the $\ell_1$ regularization parameter over a logarithmically spaced grid containing a 100 values spaced between $10^{-6}$ and $10^{1}$, while the $\ell_1/\ell_2$ ratio parameter was allowed to range over $[.1, .5, .7, .9, .95, .99, 1]$. In the case of the real data experiments the random forest regressors (RF) used in the $\bg(\cdot)$ models were fit using a default value of 50 estimators in each RF.

\subsection{JM-style Estimator Details}

Note that $\lambda_{\bw}$ was chosen for the JM-style estimator using the heuristic to search for the smallest $\lambda_{\bw}$ in a set for which the convex program in \eq{jm_program} is feasible. If no such value existed (i.e. all the programs were infeasible) we defaulted to simply predicting using the base Lasso regression in all cases (which is equivalent to using $\bw=0$).

\subsection{OM Estimators Details} \label{sec:expts_oml}

As described in the main text, the OM estimators use 2-fold data-splitting. Such a procedure can be sample-inefficient since only a fraction of the data is used in each stage of the procedure. For the OM methods used in the experiments we instead used a more general $K$-fold cross-fitting as described in \citep{chernozhukov2017double}, with $K=5$ and $K=10$.

The OM methods can be fit exactly as described in the paper with the following modifications. First the original dataset is split into $K$ equally-sized folds we denote as $(\bX_{\mI_1}, \by_{\mI_1}), \hdots, (\bX_{\mI_K}, \by_{\mI_K})$; here the index sets range over the datapoints as $\mI_1 = \{ 1, \hdots, \frac{n}{K} \}, \mI_2 = \{ \frac{n}{K}+1, \hdots, \frac{2n}{K} \}$ etc... We also use $(\bX_{\mI_{-i}}, \by_{\mI_{-i}})$ to describe $K$-leave-one-out subsets of the original folds which contain the union of datapoints in all \textit{but} the $\mI_i$th fold of data.

Then, $K$ sets of first-stage regressions are trained on the $K$-leave-one-out subsets to produce $(\bbf^{-1}, \bg^{-1}), \hdots, (\bbf^{-K}, \bg^{-K})$; explicitly the pair $(\bbf^{-i}, \bg^{-i})$ is fit on $(\bX_{\mI_{-i}}, \by_{\mI_{-i}})$. Finally the empirical moment equations can be solved for $\yom$ by summing over the entire dataset, but evaluating the $(\bbf^{-i}, \bg^{-i})$ model on only the $i$th fold:
\begin{talign}
    \sum_{i \in K} \sum_{j \in \mI_{i}} m(t_j, y_j, \yom, \bz_{j}^\top \bbf^{-i}, \bg^{-i}(\bz_j)) = 0.
\end{talign}
The estimator for the variance $\mu_2$ can also be computed in an analogous fashion, $\sum_{i \in K} \sum_{j \in \mI_{i}} t_j(t_j - \bg^{-i}(\bz_j)$. More details on this procedure can be found in \citet{chernozhukov2017double} and \citet{mackey2017orthogonal}. Note that since $K$ is chosen to be constant, our theoretical guarantees also apply to this estimator up to constant factors.

Also though the thresholding step (with the parameter $\tau$) is used in our theoretical analysis to control against the denominator $\mu_2$ being too small, we found in practice the estimate of $\mu_2$ concentrated quickly and was quite stable. Hence we found explicitly implementing the thresholding step was unnecessary and we did not include this in our implementation.

\subsubsection{OM $q$ moments}
In \cref{sec:om_ub} we focus our analysis on the OM $f$ moments but also introduce the first-order orthogonal $q$ moments, whose practical efficacy we explore in our real data experiments. For completeness we include the details of the algorithm to predict with $q$-moments here. The primary difference with respect to the $f$-moments is with respect to how the $\bq$ or $\bbf$ regression is fit, the $\bg$ regression is handled identically. For simplicity, we present the algorithm in parallel to how the $\bbf$ moments are introduced in the main text (without the $K$-fold cross-fitting), although $K$-fold cross-fitting is used in practice exactly as described above.

After the data reparametrization we have $\bx_i' = [t_i, \bz_i] =  (\bU^{-1})^\top \bx_i$. In the reparametrized basis, the linear model becomes, 
\begin{talign}
    y_i = \theta t_i + \bz_i^\top \bbf_0 + \epsilon_i \quad \quad t_i = \bg_0(\bz_i) + \eta_i
\end{talign}
where $\bq_0(\bz_i) = \theta \bg_0(\bz_i) + \bz_i^\top \bbf_0$. 

\begin{itemize}[leftmargin=.5cm]
    \item The first fold $(\Xo, \yo)$ is used to run two \textit{first-stage} regressions. We estimate
    $\bq_0$ using a linear estimator (such as the Lasso) by directly regressing $\yo$ onto $\zo$ to produce the vector $\btq$. Second we estimate $\bg_0(\cdot)$ by regressing $\tone$ onto $\zo$ to produce a regression model $\btg(\cdot) : \mR^{p-1} \to \mR$.
\item Then, we estimate $\E[\eta_1^2]$ as $\mu_2 = \frac{1}{n/2} \sum_{i=n/2+1}^{n} (t_i-\btg(\bz_i))^2$ where the sum is taken over the second fold of data; crucially $(t_i, \bz)$ are (statistically) independent of $\btg(\cdot)$ in this expression. 

\item If $\mu_2 \leq \tau$ for a threshold $T$ we simply output $\yom = \xstar^\top \hat{\bbeta}$. If $\mu_2 \geq \tau$ we estimate $\theta$ by solving the empirical moment equation:
\begin{talign}
    \sum_{i=n/2+1}^{n} m(t_i, y_i, \yom, \bz_i^\top \btq, \btg(\bz_i)) =  0 \implies \yom = \frac{\frac{1}{n/2} \sum_{i=n/2+1}^{n} (y_i - \bz^\top_i \btq)(t_i-\btg(\bz_i))}{\mu_2} 
\end{talign}
where the sum is taken over the second fold of data and $m$ is defined in \cref{eq:first_f_moment}.
\end{itemize}

\subsubsection{Synthetic Data Experiment Details}

The experiments on synthetic data were conducted as described in the main text in \mysec{experiments}. In each case for the JM-style estimator the base regression was fit using the cross-validated Lasso, while the auxiliary parameter for the regression was chosen to be the smaller of $\sqrt{\log p/n}$ and $0.01 \sqrt{\log p/n}$ for which the convex program in \eq{jm_program} was feasible. The OM $f$ moments were fit as described above using $5$-fold cross-fitting with the Lasso estimator (with either theoretically-calibrated values for the hyperparameters or hyperparameters chosen by cross-validation) used for both the first-stage regressions.

In \cref{sec:excess_bias} all hyperparameters wer set to their theoretically-motivated values: $\lambda_{\bbeta} = \lambda_{\bg} = 4 \sqrt{\log p/n}$ for the Lasso regressions, and, inspired by the feasibility heuristic of \citep{javanmard2014confidence}, we set $\lambda_{\bw}$ to the smallest value between $\sqrt{\log p/n}$ and $.01 \sqrt{\log p/n}$ for which the JM-style program \cref{eq:jm_program} was feasible. The RMSRE in each experiment was computed over 500 test datapoints (i.e., 500 independent $\xstar$'s) generated from the training distribution; each experiment was repeated 20 times, and the average RMSRE is reported.

\subsubsection{Real Data Experiment Details}

For the base regression procedures five-fold CV was used to select hyperparameters for the Lasso and elastic net estimators, while leave-one-out CV was used for ridge regression.

\textbf{OM methods}
The OM $f$ and $q$ moments were implemented as above with $10$-fold cross-fitting. However to exploit the generality of the OM framework in addition to allowing $\btg(\cdot)$ to be estimated via the cross-validated Lasso estimator, we also allowed $\btg( \cdot )$ to be estimated via random forest regression, and a $\bg=0$ baseline. However, note that $\bbtf$ and $\btq$ were \textit{always} fit with the cross-validated Lasso (a linear estimator) since our primary purpose is to investigate the impacts of debiasing linear prediction with the $\bbtf$ and $\btq$ moments.

For each $\xstar$ we fit a cross-validated Lasso estimator, a random forest regressor, and a $\btg=0$ baseline on each of the $K$-leave-one-subsets of data. We adaptively chose between these models in a \textit{data-dependent} fashion by selecting the method that produced the minimal (estimated) variance for $\yom$. We used a plug-in estimate of the asymptotic variance which can be computed as,
\begin{talign}
    \text{q-var(method)} = \frac{\sum_{i \in K} \sum_{j \in \mI_{i}} (t_j-\bg_{\text{method}}^{-i}(\bz_j))^2 }{V}
\end{talign}
and
\begin{talign}
    \text{f-var(method)} = \frac{\sum_{i \in K} \sum_{j \in \mI_{i}} t_i (t_j-\bg_{\text{method}}^{-i}(\bz_j)) }{V}
\end{talign}
where $V_{\text{method}} = \sum_{i \in K} \sum_{j \in \mI_{i}}  (t_j-\bg_{\text{method}}^{-i}(\bz_j))^2 - (\sum_{i \in K} \sum_{j \in \mI_{i}}  (t_j-\bg_{\text{method}}^{-i}(\bz_j))^2$ for each method. These asymptotic variance expressions can be computed from a general formula for the asymptotic variance from \citet[Theorem 1]{mackey2017orthogonal}. Upon selecting the appropriate $\btg(\cdot)$ method for either the $f$ or $q$ moments the algorithm proceeds as previously described with the given choice of $\btg(\cdot)$.

\textbf{JM-style method}
For the real data experiments the $\lambda_{\bw}$ for the JM-style estimator was selected by constructing a logarithmically-spaced grid of $100$ values of $\lambda_{\bw}$ between $10^{-7}$ and $10^2$ and selecting the smallest value of $\lambda_{\bw}$ for which the convex program in \eq{jm_program} was feasible. 

\textbf{Datasets}
All regression datasets, 
in this paper were downloaded from the publicly available UCI dataset repository \citep{Dua:2019}. 
The triazines dataset was randomly split in an 80/20 train-test split and selected since $n_{train} \approx p$ for it. 
The other 4 datasets were selected due to the fact they can be naturally induced to have distributional shift. The Parkinsons and Wine datasets were selected exactly as in \citet{chen2016robust}.  The Parkinsons dataset, where the task is to predict a jitter index, was split into train and test as in \citet{chen2016robust}, by splitting on the "age" feature of patients: $\leq 60 \to$ train and $ > 60 \to$ test. The task for prediction in the Wine dataset, as in \citet{chen2016robust}, is to predict the acidity levels of wine but given training data comprised only of red wines with a test set comprised only of white wines. In the fertility dataset, where the task is to predict the fertility of a sample, we split into train and test by splitting upon the binary feature of whether patients were in the $18-36$ age group ($\to$ train) or not ($\to$ test). Finally, for the Forest Fires
dataset, where the task it to predict the burned area of forest fires that occurred in Portugal during a roughly year-long period, we split into train/test based on the "month" feature of the fire: those occurring before the month of September ($\to$ train) and those after the month of September ($\to$ test).

Note in all the cases the feature that was split upon was not used as a covariate in the prediction task. In \cref{table:2} we include further information these datasets,
\begin{table}[ht!]
\centering
\caption{Information on Real Datasets. 
}
\label{table:2}
\begin{tabular}{l c c c c c c c c} 
\toprule
 Dataset & $n_{train}$ & $n_{test}$ & $p$ & Distrib. Shift? & \\ [0.5ex] 
\midrule
Fertility & 69 & 31 & 8 & Yes &  \\ 
 Forest Fires & 320 & 197 & 10 & Yes & \\
 Parkinson & 1877 & 3998 & 17 & Yes &  \\
 Wine & 4898 & 1599 & 11 & Yes & \\
 Triazines & 139 & 47 & 60 & No &  \\ [1ex]
\bottomrule
\end{tabular}
\end{table}

\end{document}